\documentclass[11pt]{article}
\usepackage[utf8]{inputenc}
\usepackage[T1]{fontenc}
\usepackage{graphicx,amsmath,amsfonts,amssymb,bm,url,color,latexsym,amsthm}
\usepackage[numbers]{natbib}
\usepackage{wrapfig}
\usepackage[small,bf]{caption}
\usepackage[outdir=./]{epstopdf}
\usepackage{subcaption}
\usepackage{bbm}
\usepackage{microtype}
\usepackage[toc, page]{appendix}
\usepackage[ruled]{algorithm2e}
\usepackage{algpseudocode}
\usepackage{xr-hyper}
\usepackage{hyperref}
\usepackage[capitalize]{cleveref}
\usepackage{verbatim}
\usepackage{framed}
\usepackage{comment}
\usepackage{tikz}
\usepackage{fge}
\usepackage{mathdots}
\usepackage{enumitem}

% In your preamble

\makeatletter
\newcommand*{\addFileDependency}[1]{% argument=file name and extension
  \typeout{(#1)}
  \@addtofilelist{#1}
  \IfFileExists{#1}{}{\typeout{No file #1.}}
}
\makeatother

% \myexternaldocument{writeup/test.tex}

% pagebreak for long equations
%
\pagestyle{plain}
\allowdisplaybreaks
\numberwithin{equation}{section}
\hypersetup{
    colorlinks=true,%
    citecolor=blue,%
    filecolor=blue,%
    linkcolor=blue,%
    urlcolor=blue
}
%%%%%%%%%%%%%%

\setcounter{totalnumber}{50}
\setcounter{topnumber}{50}
\setcounter{bottomnumber}{50}
\setlength{\captionmargin}{30pt}

%%%%%%%%%%%%%%
%%% ====== Theorem Layout ====== %%%
%-Variants with theorem layout
\newtheorem{theorem}{Theorem}
\newtheorem{lemma}{Lemma}
\newtheorem{corollary}{Corollary}

\newtheorem{assumption}{Assumption}
\newtheorem{fact}{Fact}

\theoremstyle{remark}
\newtheorem{remark}{Remark}

%-Change proof to bold font
% \newenvironment{proof}{\noindent {\emph{Proof.}}}{\endprf\par}
% \def \endprf{\hfill {\vrule height6pt width6pt depth0pt}\medskip}
% \newcommand{\qed}{\unskip\nobreak\hfil\penalty50\hskip2em\vadjust{}\nobreak\hfil$\Box$\parfillskip=0pt\finalhyphendemerits=0\par}

%%% ====== Math Texts/Sets/Brackets ====== %%%
%-Texts
%\newcommand{\mb}{\boldsymbol}
\newcommand{\mb}{\mathbf}
\newcommand{\mc}{\mathcal}

\newcommand{\bb}{\mathbb}

%-Common symbols
\newcommand{\eps}{\varepsilon}

\newcommand{\E}{\bb E}

%-SBD opartors

%-SBD notations

%-Basic Operators
\newcommand{\norm}[2]{\left\| #1 \right\|_{#2}}
\newcommand{\abs}[1]{\left| #1 \right|}
\newcommand{\innerprod}[2]{\left\langle #1,  #2 \right\rangle}

\newcommand{\R}{\bb R}

\newcommand{\Z}{\bb Z}

\newcommand{\Sp}{\bb S}

\renewcommand{\P}{\mathbb{P}}
\newcommand{\EE}{\mathbb E}

\newcommand{\Brac}[1]{\left\{#1 \right\} }

\newcommand{\paren}[1]{ \left( #1 \right) }

% to declare new operator
% \DeclareMathOperator{\xxx}{xxx}
\DeclareMathOperator{\diag}{diag}

\DeclareMathOperator{\rank}{rank}

\DeclareMathOperator{\grad}{grad}

\DeclareMathOperator*{\argmin}{argmin}
\DeclareMathOperator*{\argmax}{argmax}

\DeclareFontFamily{U}{mathx}{\hyphenchar\font45}
\DeclareFontShape{U}{mathx}{m}{n}{
      <5> <6> <7> <8> <9> <10>
      <10.95> <12> <14.4> <17.28> <20.74> <24.88>
      mathx10
      }{}
\DeclareSymbolFont{mathx}{U}{mathx}{m}{n}
\DeclareFontSubstitution{U}{mathx}{m}{n}
\DeclareMathAccent{\widecheck}{0}{mathx}{"71}

%% Other definitions
\newcommand{\wh}{\widehat}
\newcommand{\wc}{\widecheck}
\newcommand{\wt}{\widetilde}

\newcommand{\bY}{\mb Y}
\newcommand{\bX}{\mb X}
\newcommand{\bA}{\mb A}
\newcommand{\bq}{q}

\newcommand{\bN}{\mb N}
\newcommand{\bI}{\mathbf{I}}
\newcommand{\bP}{\mb P}

\newcommand{\bD}{\mb D}
\newcommand{\bU}{\mb U}
\newcommand{\bV}{\mb V}
\newcommand{\bE}{\mb E}
\newcommand{\bL}{\mb L}

\newcommand{\bW}{\mb W}
\newcommand{\bZ}{\mb Z}
\newcommand{\bS}{\mb S}

\newcommand{\bzeta}{\mb \zeta}
\newcommand{\bLambda}{\mb \Lambda}
\newcommand{\bSigma}{\mb\Sigma}

\newcommand{\bB}{\mb B}
\newcommand{\bC}{\mb C}
\newcommand{\bR}{\mb R}
\newcommand{\bDelta}{\mb \Delta}
\newcommand{\bM}{\mb M}
\newcommand{\bQ}{\mb Q}
\newcommand{\bG}{\mb G}
\newcommand{\bOmega}{\mb \Omega}
\newcommand{\bdelta}{\mb \delta}
\newcommand{\bH}{\mb H}
\newcommand{\bO}{\mb O}

\newcommand{\cE}{\mathcal E}
\newcommand{\cN}{\mathcal N}
\newcommand{\cA}{\mathcal A}

\newcommand{\0}{\mb 0}

\newcommand{\cO}{\mathcal{O}}
\newcommand{\bbO}{\mathbb{O}}

\newcommand{\T}{\top}

\newcommand{\tr}{{\rm tr}}
\newcommand{\op}{{\rm op}}
\newcommand{\Cov}{{\rm Cov}}
\newcommand{\Var}{{\rm Var}}
\newcommand{\rI}{{\rm I}}
\newcommand{\rII}{{\rm II}}
\newcommand{\rIII}{{\rm III}}
\newcommand{\rIV}{{\rm IV}}
\newcommand{\init}{{\rm init}}
\newcommand{\rvec}{{\rm vec}}

\makeatletter
\newcommand*{\rom}[1]{\expandafter\@slowromancap\romannumeral #1@}
\makeatother
\usepackage{fullpage}
\usepackage{authblk}
\begin{document}
	
	\title{Optimal vintage factor analysis with deflation varimax}

% \author{Xin Bing \\  \href{mailto:xin.bing@utoronto.ca}{xin.bing@utoronto.ca}
% \and Dian Jin \\ \href{mailto:dj370@scarletmail.rutgers.edu}{dj370@scarletmail.rutgers.edu}
%        \and Yuqian Zhang\\  \href{mailto:yqz.zhang@rutgers.edu}{yqz.zhang@rutgers.edu}}
\author[1]{Xin Bing\thanks{xin.bing@utoronto.ca}}
\author[2]{Xin He\thanks{he.xin17@mail.shufe.edu.cn}}
\author[3]{Dian Jin\thanks{dj370@scarletmail.rutgers.edu}}
\author[3]{Yuqian Zhang\thanks{yqz.zhang@rutgers.edu}}
\affil[1]{Department of Statistical Sciences, University of Toronto}

\affil[2]{School of Statistics and Data Science, Shanghai University of Finance and Economics}
\affil[3]{Department of Electrical and Computer Engineering, Rutgers University, New Brunswick}

\maketitle
\begin{abstract}
    Vintage factor analysis is one important type of factor analysis that aims to first find a low-dimensional representation of the original data, and then to seek a rotation such that the rotated low-dimensional representation is scientifically meaningful. Perhaps the most widely used vintage factor analysis is the Principal Component Analysis (PCA) followed by the varimax rotation. Despite its popularity, little theoretical guarantee can be provided mainly because varimax rotation requires to solve a non-convex optimization over the set of orthogonal matrices.  

    In this paper, we propose a deflation varimax procedure that solves each row of an orthogonal matrix sequentially. In addition to its net computational gain and flexibility, we are able to fully establish theoretical guarantees for the proposed procedure in a broad context.
    Adopting this new varimax approach as the second step after PCA, we 
    further analyze this two step procedure under a general class of factor models. Our results show that it estimates the factor loading matrix in the optimal rate when the signal-to-noise-ratio (SNR) is moderate or large. In the low SNR regime, we offer possible improvement over using PCA and the deflation procedure when the additive noise under the factor model is structured. The modified procedure is shown to be optimal in all SNR regimes.  Our theory is valid for finite sample and allows the number of the latent factors to grow with the sample size as well as the ambient dimension to grow with, or even exceed, the sample size. 
    Extensive simulation and real data analysis further corroborate our theoretical findings. 
\end{abstract}
{\em Keywords:} Factor analysis, varimax rotation, minimax optimal estimation, non-convex optimization, Independent Component Analysis, sparse dictionary learning. 
	%%%%%%%%%%%%%%%%%%%%%%%%%%%%%%%%%%%%%%%%%%%%%%
	%% Please use \tableofcontents for articles %%
	%% with 50 pages and more                   %%
	%%%%%%%%%%%%%%%%%%%%%%%%%%%%%%%%%%%%%%%%%%%%%%
	%\tableofcontents
	
	%%%%%%%%%%%%%%%%%%%%%%%%%%%%%%%%%%%%%%%%%%%%%%
	%%%% Main text entry area:
	
	% \title{Optimal vintage factor analysis with deflation varimax}
	
	% \maketitle
	
	% {\em Keywords:} Factor analysis, varimax rotation, minimax optimal estimation, non-convex optimization, Independent Component Analysis, sparse dictionary learning. 

	\section{Introduction}
	\label{sec:intro}
	Vintage factor analysis is one important type of classical factor analysis, first developed by \cite{Thurstone1935,Thurstone1947} and coined by \cite{kaiser1958varimax}. It aims to first find a low-dimensional representation of the original data, and then to find a rotation such that the rotated low-dimensional representation is ``scientifically meaningful'' \citep{Thurstone1935}. One widely used vintage factor analysis is the Principal Component Analysis (PCA) followed by the varimax rotation, first proposed by \cite{kaiser1958varimax}. In this paper, we propose a variant of such two-step approach and analyze it under the following latent factor model 
	\begin{equation}\label{model_X}
		X = \bLambda Z + E
	\end{equation}
	where $X\in \R^p$ is an observable $p$-dimensional random vector, $Z\in\R^r$ is an unobserved, latent factor with some $r\le p$ and $E\in \R^p$ is an additive error that is independent of $Z$. The loading matrix  $\bLambda\in \R^{p\times r}$ is treated as deterministic and is the main quantity of our interest.  
	
	Suppose the observed $p\times n$ data  matrix $\bX = (X_1,\ldots, X_n)$ consists of $n$ i.i.d. samples of $X$ according to \eqref{model_X}. The  first $r$ Principal Components (PCs)  of $\bX$ obtained from PCA are used as the low-dimensional representation of $\bZ = (Z_1,\ldots, Z_n)$. Collect these PCs in the $r\times n$ matrix $\bU_{(r)}$ subjecting to $\bU_{(r)} \bU_{(r)}^\T = n~ \bI_r$. Varimax rotation then seeks an $r\times r$ orthogonal matrix $\bQ$ by solving the following optimization problem\footnote{The second term in the parenthesis does not affect  the maximizer due to orthonormal rows of $\bU_{(r)}$.}
	\begin{equation}\label{obj_varimax}
		\begin{split}
			&\argmax_{\bQ\in\bbO_{r\times r}} ~ 
			\sum_{k=1}^r  \sum_{i=1}^n \left(
			[\bQ\bU_{(r)}]_{ki}^4 - \left({1\over n}\sum_{i=1}^n  [\bQ\bU_{(r)}]_{ki}^2\right)^2
			\right) %=  \argmax_{\bQ\in\bbO_{r\times r}} \sum_{k=1}^r   \bigl\|\bU_{(r)}^\T Q_{k\cdot}\bigr\|_4^4.
		\end{split}
	\end{equation} 
	We use the notion $\bbO_{p\times r}$ to denote the set of $p\times r$ matrices with orthonormal columns. 
	Despite being used widely,  there has been long-standing controversy on the meaning and interpretation of varimax rotation. Recently, \cite{rohe2020vintage} shows that the global solution to the varimax criterion in \eqref{obj_varimax} indeed makes statistical inference when the factor $Z$ contains independent components, and each of which has a leptokurtic distribution, an assumption we will also adopt in this paper (see, \cref{ass_Z} of \cref{sec_assumption} for details).
	However,  optimizing over the Stiefel manifold $\bbO_{r\times r}$ renders \eqref{obj_varimax} difficult to solve computationally. Furthermore, \eqref{obj_varimax} is highly non-convex and its optimization landscape is  notoriously difficult to analyze due to existence of various local solutions and spurious stationary points. Consequently, little theoretical guarantee to date can be made for any of the existing algorithms to solve \eqref{obj_varimax}.
	
	In this paper, we propose a new deflation varimax approach that computes each row of an orthogonal matrix in a sequential manner by solving a variant of \eqref{obj_varimax}.  Comparing to  solving an $r\times r$ orthogonal matrix simultaneously from \eqref{obj_varimax}, computing each row one at a time is in general more computationally efficient, and is also more flexible in practice as one can stop after obtaining  certain number of  rows.  In addition to its net computational gain and flexibility, we are able to provide complete theoretical guarantees on the obtained rotation matrix in a broader context. Under model \eqref{model_X}, we further adopt this deflation varimax procedure together with PCA to estimate the loading matrix $\bLambda$. Our theory shows that this approach estimates the loading matrix $\bLambda$ optimally in the minimax sense. %in the optimal rate when the signal-to-noise ratio is moderate or high. In the low signal regime, we offer possible improvements to remedy the sub-optimality.   

	 \subsection{Related literature}\label{sec_literature} 
	 We begin by discussing related literature.\\ % of the problem we study in this paper. 
	 
	 {\em Factor analysis and factor rotation.} 
	Model \eqref{model_X} has been extensively studied in the literature of classical factor analysis where the primary interest is typically on estimation of the column space of the loading matrix $\bLambda$, rather than $\bLambda$ itself.  See, for instance, \cite{hotelling1933analysis,lawley1962factor,anderson1956statistical}.   Although there exists various identification conditions on $\bLambda$ and the covariance matrices of $Z$ and $E$ under which $\bLambda$ becomes identifiable and can be consistently estimated \citep{BaiLi2012}, most of them are ``more or less arbitrary ways of determining the factor loadings uniquely. They do not correspond to any theoretical considerations of psychology; there is no inherent meaning on them.'' \citep{anderson1956statistical}.  For the same reason, a large body of the literature focus on recovering the latent space of  the  factor $Z$ under model \eqref{model_X}. PCA is arguably one of the most popular methods for predicting low-dimensional spaces, and a few leading PCs are typically used  for predicting $\bZ$ up to  orthogonal transformations of its rows. Analysis of the PCs for such purpose have been studied in existing literature   \cite{Bai-factor-model-03,bai2008large,SW2012,fan2013large,bing2020prediction}, just to name a few. 
	 
	 A different line of factor analysis that aims to identify and estimate the loading matrix $\bLambda$ is studied recently in \cite{bunea2020model,bing2020clustering,bing2023detecting} where $\bLambda$ is assumed to have multiple rows that are parallel to the canonical basis in $\R^r$ and the noise $E$ needs to have independent components. In this paper we do not resort to these conditions and instead require components of $Z$ to be independent and leptokurtic.  Under this line of assumption, a recent work \cite{rohe2020vintage} studies the rotated PCs by using varimax rotation under a different class of latent factor model. Their results reveal that performing varimax rotation eliminates the indeterminacy due to orthogonal transformations  so that the rotated PCs consistently predict $\bZ$ only up to sign flip and permutation of its rows. 
%	  As mentioned earlier, \cite{rohe2020vintage} only studies the global solution $\bar\bQ$ to \eqref{obj_varimax} (which is not necessarily achievable) in a different context, and the main interest of their analysis is to bound from above the error of predicting $\bZ$ in terms of $\max_{i\in[n]}\|\bar\bQ^\T \wh Y_i -   Z_i\|_2$.
	 However, the  results in \cite{rohe2020vintage} only hold  for the global solution to \eqref{obj_varimax}.
  
     Finding meaningful rotation in factor analysis has been a traditional topic, dating back to 1940s, see, \cite{holzinger1941factor,mulaik2009foundations} for detailed reviews and other references in the more recent work \cite{rohe2020vintage}. Orthogonal rotation due to its analytic tractability has been firstly studied and widely used since then. The two most popular examples are the quartimax rotation \citep{NeuhausWrigley} (see, also its  variant but equivalent criteria in \cite{carroll1953analytical,saunders1953analytic,ferguson1954learning}) and the (raw) varimax rotation in \eqref{obj_varimax}. Criteria of these orthogonal rotations are mathematically formulated to explicate Thurstone's five formal rules for simple structure  \citep{Thurstone1947}. It is worth mentioning that when the rows of $\bU_{(r)}$ are orthonormal, both the quartimax rotation and the varimax rotation are equivalent. In \cite{kaiser1958varimax}, a variant of the varimax rotation based on normalizing the $\ell_2$-norm of each column of $\bU_{(r)}$ in \eqref{obj_varimax} is further advocated, but later in \cite{kaiser1974little} the authors suggested to remove such normalization. On the other hand, computing the optimal orthogonal rotation over the Stiefel manifold has been a longstanding problem. In \cite{kaiser1958varimax,Kaiser1959} a closed-form, necessary condition for solving a pair of orthonormal rows is derived and the author proposes to repeat such procedure for solving all distinct pairs of rows. This procedure becomes computationally expensive for moderate $r$ and large $n$, and is also found to accumulate errors rapidly. Later in \cite{horst1965factor} and \cite{sherin1966matrix} fixed-point iterative algorithms are proposed to solve \eqref{obj_varimax} for an orthogonal matrix simultaneously. Both algorithms are in vein similar to the projected gradient ascent algorithm which has been widely adopted for solving \eqref{obj_varimax}, and is still used to date in the \textsc{R} package \textsf{varimax}. Unfortunately, to the best of our knowledge, none of the existing algorithms has provable guarantee.\\

	 {\em Independent Component Analysis (ICA) and complete dictionary learning.} ICA is one particular factor analysis that aims to extract independent, non-Gaussian signals \citep{hyvarinen2000independent}. A popular ICA model assumes that the observable feature $X$ follows model \eqref{model_X} with $E=0$ and $p = r$.
	 %a particular structure of $\bOmega$ (see, \cref{rem_analysis}).  
	 In the ICA literature, one of the most widely used algorithms for estimating $\bLambda$ is the FastICA  which has two versions. The symmetric FastICA\footnote{FastICA has the flexibility of choosing different loss functions. In our context, we compare with the versions using kurtosis (the $\ell_4$-norm).} \citep{hyvarinen2000independent} is in vein similar to solving the varimax criterion in \eqref{obj_varimax} via a fixed-point algorithm while the deflation FastICA \citep{hyvarinen1997fast} is its sequential counterpart that aims to solve each row one at a time. Algorithmically, \cite{hyvarinen1997fast} shows that the symmetric FastICA  converges  to a stationary point in a cubic rate  while  \cite{Douglas2003}  characterizes the stationary points of deflation FastICA   at the population level when we have infinitely many samples. Although statistical properties of both versions of FastICA have been studied in various papers, for instance,  \cite{Hyvarinen97,OjaYuan06,tichavsky2006performance,ollila2009deflation,hyvarinen2000independent,nordhausen2018independent}, these results are all established for the global solution instead of the actual ones computed by FastICA. Algorithms with both polynomial running time and provable guarantees are recently studied in \cite{anandkumar2014tensor,anandkumar2014guaranteed}. Their procedure is based on tensor decomposition via power iterations. More recently, \cite{auddy2023} establishes the minimax optimal rate of convergence, and the optimal sample complexity,  of estimating $\bLambda$ under the ICA model.  Moreover,  inference of entries in $\bLambda$ is also studied in \cite{auddy2023}.
	 %As detailed in  \cref{rem_analysis} of \cref{sec_theory_alg_all_cols}, both our method and theory are applicable to estimation of $\bA$ in the setting of ICA.  

	  Progress on understanding the optimization landscape  and providing provable guarantees of solving \eqref{obj_varimax} in a deflation manner  has also been made in the past decade in a sub-model of ICA where entries of $Z$ are further assumed to be i.i.d. from a Bernoulli-Gaussian distribution \citep{spielman2013exact,sun2016complete,qu2020geometric,zhai2020complete, zhang2018structured,jin2021unique}. This sub-model finds its application in complete Dictionary Learning (DL), an important problem in the field of compressed sensing and signal recovery. In their language, columns of $\bLambda$ are called atoms and they form the dictionary which is the main quantity of  interest.  However, to the best of our knowledge, the existing results under this sub-model are only established  for the noise-free case, namely, $E = 0$ (see also \cref{rem_analysis} for detailed comparison).

	 Although some of the aforementioned results in the ICA and DL literature could potentially be helpful for solving and analyzing the varimax rotation, the matrix  $\bU_{(r)}$ in \eqref{obj_varimax} unfortunately  has both an additive error and an approximation error in addition to the signal that we want to recover. What's worse is that $\bU_{(r)}$  also depends on an  unknown  rotation matrix which itself is random and dependent on the data $\bX$ (see details in \cref{thm_PCs}). Dealing with such complication requires new procedures as well as new analysis. We defer to \cref{rem_analysis} of \cref{sec_theory_alg} for more detailed technical comparison with the existing results of ICA and DL.

	\subsection{Our contributions}\label{sec_contri}
	
%	We summarize our main contributions in this section.
	
	%\subsubsection{A new deflation varimax rotation with application to estimate the loading matrix}
	
	Our first contribution is to propose in  \cref{sec_method} a new approach, {\em deflation varimax rotation}, of computing an orthogonal rotation matrix row by row via solving the following variant of the varimax criterion in \eqref{obj_varimax},
	\begin{align}\label{obj_L4_intro}
		\min_{q\in \mathbb{S}^{r}}  - \bigl\|\bU_{(r)}^\T  ~ q \bigr\|_4^4.
	\end{align}
	Here $\mathbb{S}^{r} = \{v\in \R^r: \|v\|_2 = 1\}$ denotes the unit sphere in $\R^r$. To solve \eqref{obj_L4_intro}, we adopt the projected gradient descent (PGD) algorithm which is widely used for solving constrained optimization problems due to its simplicity and computational efficiency. 
	%In fact it is also used in the implementation of \textsf{varimax} in \textsf{R} \cite{rohe2020vintage}.  
	Specifically, for computing each row of the rotation matrix, we propose to solve  the  {\em same} optimization problem  \eqref{obj_L4_intro} via PGD  but from {\em different} initializations. Choosing initialization is crucial due to the non-convexity of \eqref{obj_L4_intro}. In \cref{sec_init} we offer and analyze two possible initialization schemes: one is based on random initialization, and the other one is based on  method of moments estimation which is of interest on its own. After computing all rows, a symmetric orthogonalization is performed to ensure that the final rotation matrix is orthogonal.  	In \cref{sec_method}, we further adopt the proposed deflation varimax rotation 
	in conjunction with PCA to estimate the loading  matrix $\bLambda$ under model \eqref{model_X}. This new procedure  is termed as  {\em PCA with deflation varimax}.

%	\subsubsection{A new decomposition of the principal components}

	To provide theoretical guarantees for PCA with deflation varimax, our second contribution is to establish a new decomposition for the  matrix $\bU_{(r)}$ of the PCs.  We write the singular value decomposition (SVD) of the loading matrix $\bLambda=\bL \bS\bA$ with $\bL\in\bbO_{p\times r}$ and $\bA \in \bbO_{r\times r}$. We prove a new composition in \cref{thm_PCs} of \cref{sec_theory}   that  
	\begin{equation}\label{decomp_Ur_intro}
		\bU_{(r)} =  \bR^\T  \left(\bA  \bZ +    \bN  +   \bOmega\right) \in \R^{r\times n},
	\end{equation}
	where $\bR$ is some $r\times r$ random orthogonal matrix, $\bN$ is an additive error  with  i.i.d. columns  and $\bOmega$ can be viewed as the approximation error of using PCA for estimating $\bL$ and  $\bS$ (see, the paragraph after \cref{thm_PCs} for detailed discussion). Our results in \cref{thm_PCs} also state the explicit structure of the additive error $\bN$, and quantify the approximation error $\bOmega$. 
%	{\mike In \cref{thm_PCs}, it is further shown that $\|\bDelta\|_\op = \cO_\P(\omega_n + \epsilon^2)$ with 
%	\[
%	\omega_n := \sqrt{r \log(n) \over  n}+\sqrt{\epsilon^2 p \log (n) \over  n}.
%	\]
%	The quantity $\epsilon^2$, defined in \cref{ass_E_general}, represents the reciprocal of the signal-to-noise-ratio (SNR) in our problem.    }
	As mentioned before, the existing analysis of the PCs either assumes additional identifiability conditions or  analyze $(\bU_{(r)} - \bH \bZ)$ for an $r\times r$ invertible matrix $\bH$ which is pre-specified mainly for mathematical convenience. Our new analysis of $\bU_{(r)}$ requires weaker conditions, and is more tailored for analyzing the subsequent varimax rotation. Indeed, the decomposition in \eqref{decomp_Ur_intro} suggests the deflation varimax rotation applied to $\bU_{(r)}$ estimate the orthogonal matrix $\bar{\bA} := \bR^\T\bA$.

%	\subsubsection{A full analysis of the proposed deflation varimax rotation in a broader context}
	
	Our third contribution is to  provide a complete analysis   of the proposed deflation varimax rotation in \cref{sec_theory}.  In \cref{thm_critical} of \cref{sec_theory_alg}
	we first analyze the optimization landscape of \eqref{obj_L4_intro} by providing explicit characterization of its  stationary points, in particular, the ones that are close to columns of $\bar{\bA}$. Such result is generic, and is independent of {\em any} optimization algorithm one might deploy to solve \eqref{obj_L4_intro}. 
    The proximity of the stationary points, also known as the statistical error, in \cref{thm_critical} depends on the interplay of the approximation error $\bOmega$, the covariance matrix of columns in the additive error $\bN$, the low-dimension $r$  as well as the sample size $n$. 
    % Our results also reveal the effect of both the additive error $\bN$ and the approximation error $\bOmega$ in \eqref{decomp_Ur_intro} on the estimation error of the stationary points. 
    Particularizing to PGD, we show in \cref{thm_one_col}  that any stationary point from using PGD to solve   \eqref{obj_L4_intro}  provably recovers one column of $\bar{\bA}$ provided that the initialization of PGD lies in the vicinity of that column.  Our results also capture such requirement of initialization and are valid  essentially for {\em any generic} initialization scheme. In \cref{sec_init} we discuss and analyze two specific initialization schemes which are new for their own interests.  %Establishing such result relies on a deterministic analysis of any sequence obtained from using PGD to solve \eqref{obj_L4_intro}.   
	In \cref{thm_RA} of \cref{sec_theory_alg}, we further establish theoretical guarantees for the whole deflation varimax rotation obtained by solving \eqref{obj_L4_intro} $r$ times and followed by a symmetric orthogonalization. Our results show that the deflation varimax estimates $\bar{\bA}$ up to an $r\times r$ signed permutation matrix  and its estimation error in the Frobenius norm is shown to only increase {\em linearly in the dimension $r$} comparing to that of estimating one column. %of $\bR^\T\bA$.  
	 Although our results in \cref{sec_theory_alg} are mainly stated for the deflation varimax rotation applied to the PCs, our analysis are in fact carried out under a broader context where both $\bN$ and $\bOmega$ in \eqref{decomp_Ur_intro} are allowed to be in general forms.  As a result, 
%	unstructthan \eqref{decomp_Ur_intro} where the deflation varimax rotation is applied to any $r\times n$ data matrix $\wh\bY$ satisfying 
%	\begin{equation}\label{model_Y_hat_intro}
%		\wh\bY = \bR^\T(\bA\bZ + \bN + \bOmega).
%	\end{equation}
%	Here $\bA$ is an $r\times r$ deterministic matrix with orthonormal columns, $\bZ$ is the same as in \eqref{model_X} and $\bN$ is some generic additive noise with i.i.d. columns. Both the rotation matrix $\bR \in \bbO_{r\times r}$  and the approximation error matrix $\bOmega\in \R^{r\times n}$ are allowed to be random, and are further allowed to correlate with each other as well as with $\bZ$ and $\bN$. 
	both our method and theory could be applied to other applications including both ICA and  complete DL. See, \cref{rem_analysis} for details.
	
%	{\mike Finally, since our results also reveal the effect of $\bSigma_N$ on the statistical error, we offer possible improvement in \cref{sec_theory_alg_ext} when $\bSigma_N$ can be consistently estimated. This leads to the improved procedure, mentioned above and to be further discussed below, for estimating the loading matrix $\bLambda$ in \cref{sec_ext} when the noise $E$ is more structured. 
%	As an application of the theory under model \eqref{model_Y_hat_intro}, we show in \cref{thm_UA} of \cref{sec_rotation_loading} that the proposed deflation varimax rotation applied to $\bU_{(r)}$ indeed estimates $\bR^\T \bA$, up to a signed permutation matrix. Moreover, its estimation error in Frobenius norm is of order $(\omega_n + \epsilon^2)\sqrt{r}$ in probability. To the best of our knowledge, this is the first analysis on a provable orthogonal matrix that solves the varimax criterion in \eqref{obj_varimax}. 

	%\subsubsection{Optimal estimation of the loading matrix}
	
	Our fourth contribution is to establish the minimax optimal rates of estimating the loading matrix $\bLambda$ under model \eqref{model_X}. In \cref{thm_A_general} of  \cref{sec_rotation_loading}, we 
	derive finite sample upper bounds of the estimation error in the Frobenius norm for the proposed estimator  using PCA with deflation varimax.
%	 satisfying
%	\[
%	\|\wh\bLambda - \bLambda \bP\|_F~  = \cO_\P\left(\sqrt{r^2\log(n) \over  n}+\sqrt{\epsilon^2 p r \log (n) \over  n}+ \epsilon^2\sqrt{r}\right),
%	\]
%	for an $r\times r$ signed permutation matrix $\bP$. The quantity $\epsilon^2$, defined in \cref{ass_E_general} of \cref{sec_assumption}, represents the reciprocal of the signal-to-noise-ratio (SNR) in our problem.    
	To benchmark the upper bound,  we further establish in \cref{thm_lowerbounds}  minimax lower bounds of estimating $\bLambda$ under  model \eqref{model_X}. As a result, PCA with deflation varimax is minimax optimal (up to a logarithmic factor of $n$) when the signal-to-noise-ratio (SNR) in our problem is moderate or large. 
	% in the precise sense that $\epsilon^2 \le \max\{p/n,  \sqrt{r/n}\}$. 
	In the weak signal  regime, our theory reveals that the sub-optimality of PCA with deflation varimax is caused by both the additive error $\bN$ via PGD  and the approximation error $\bOmega$ via PCA.  In \cref{sec_ext} we consider when the noise $E$ under model \eqref{model_X} is structured and offer two steps of improvement over using  PCA in the first step and deploying PGD to solve \eqref{obj_L4_intro} in the second step. Specializing to the case when the covariance matrix of $E$ is proportional to the identity matrix, we show in \cref{thm_A_corr}  that the sub-optimality of PCA with deflation varimax could be remedied and 
	the improved estimator  achieves the minimax optimal rate in all SNR regimes. \\
%	\[
%	\|\wt\bLambda - \bLambda \bP\|_F~  = \cO_\P\left(\omega_n\sqrt{r}\right).
%	\]

	This paper is organized as follows. In \cref{sec_method} we state both the deflation varimax rotation for computing the rotation matrix and the proposed PCA with deflation varimax for estimating the loading matrix. In \cref{sec_assumption} we state the main assumptions used in our analysis. Theoretical analysis of the decomposition of the PCs is presented in \cref{sec_PCs}. \cref{sec_theory_alg} contains a complete theory of analyzing the deflation varimax rotation. In \cref{sec_rotation_loading} we  analyze the proposed estimator of the loading matrix. Initialization schemes are discussed  in \cref{sec_init}. In \cref{sec_ext} we discuss possible improvements when the noise $E$ is structured. \cref{sec_real_data} contains our real data analysis  while simulation studies are presented in \cref{sec_sim}. All the proof is deferred to the appendix.

	\subsection{Notations}
	
	For any positive integer $d$, we write $[d] := \{1,\ldots, d\}$.
	For any vector $v$, we use $\|v\|_q$ to denote its $\ell_q$ norm for $0\le q\le \infty$.  
	We use uppercase bold fonts for matrices.  For any $\bM\in \R^{d\times q}$, we use $\sigma_1(\bM)\ge \sigma_2(\bM)\ge \cdots \ge \sigma_{\min(d,q)}(\bM)$ to denote its singular values. We define the operator norm $\|\bM\|_{\op}=\sigma_1(\bM)$ and the Frobenius norm $\|\bM\|_F^2 = \sum_{i,j}M_{ij}^2$. For any $i\in [d]$ and $j\in [q]$, we use $M_{i\cdot}$ and $M_j$ to denote the $i$th row and the $j$th column, respectively. 
	For a symmetric positive semi-definite matrix $\bS\in \R^{p\times p}$, we use $\lambda_1(\bS)\ge \lambda_2(\bS)\ge \cdots \ge \lambda_p(\bS)$ to denote its eigenvalues. 
	For any two sequences $a_n$ and $b_n$, we write $a_n\lesssim b_n$ if there exists some constant $C$ such that $a_n \le Cb_n$. 
	The notation $a_n\asymp b_n$ stands for $a_n \lesssim b_n$ and $b_n \lesssim a_n$. %For two numbers $a$ and $b$, we write  $a\vee b =\max\{a,b\}$. 
	We use $\bI_d$ to denote the $d\times d$ identity matrix and use $1$ ($0$) to denote the vector with all ones (zeroes) whose dimension depends on the context. For $d_1\ge d_2$, we use $\bbO_{d_1\times d_2}$ to denote the set of all $d_1\times d_2$ matrices with orthonormal columns. We use $\Sp^d$ to denote the unit sphere in $\R^d$.
	Lastly, we use $c,c',C,C'$ to denote positive absolute constants that unless otherwise indicated may vary from line to line.

	\section{Methodology}\label{sec_method}

	In this section, we state our procedure of estimating the loading matrix $\bLambda$ for a given low-dimension $r$. Selection of $r$ is  an important problem and we defer to  \cref{sec_select_r} for detailed discussion. As mentioned in the Introduction, we first apply PCA to the $p\times n$ data matrix $\bX$. Specifically, write the eigen-decomposition 
	\begin{equation}\label{eq_eigen_Y}
		{1\over n}\bX\bX^\T = \sum_{j=1}^r d_j  v_j v_j^\T + \sum_{j=r+1}^p d_j v_j v_j^\T 
		%:= \bV_{(r)}\bD_{(r)}\bV_{(r)}^\T +  \bV_{(-r)}\bD_{(-r)}\bV_{(-r)}^\T 
	\end{equation}
	with $\bD_{(r)}= \diag(d_1, \ldots, d_r)$ containing the largest $r$ non-zero eigenvalues in non-increasing order, and the corresponding eigenvectors $\bV_{(r)} = (v_1, \ldots, v_r)$. The first $r$ principal components of $\bX$ are 
	\begin{equation}\label{def_PCs}
		\bU_{(r)}  := \bD_{(r)}^{-1/2}\bV_{(r)}^\T \bX \in \R^{r\times n}.
	\end{equation}
	Due to $\bV_{(r)} \in \bbO_{p\times r}$, we have   $\bU_{(r)} \bU_{(r)}^\T = n\bI_r$. 
	
	% Recall that the varimax rotation seeks to solve 
	% \begin{equation}\label{obj_varimax}
		%     \begin{split}
			%         &\max_{\bQ\in\bbO_{r\times r}} \sum_{k=1}^r {1\over n} \sum_{i=1}^n \left(
			%         [\bQ\bU_{(r)}]_{ki}^4 - \left({1\over n}\sum_{i=1}^n  [\bQ\bU_{(r)}]_{ki}^2\right)^2
			%     \right) = \max_{\bQ\in\bbO_{r\times r}} \sum_{k=1}^r {1\over n} 
			%     \norm{\bU_{(r)}^\T Q_{k\cdot}}4^4
			%     \end{split}
		% \end{equation}
	% Optimizing over $\bbO_{r\times r}$ makes the problem \eqref{obj_varimax} difficult to solve computationally. Also due to this orthogonality constraint, the optimization landscape of \eqref{obj_varimax} including local solutions and stationary points is  notoriously difficult to analyze. Thus little theoretical guarantees can be made for the existing algorithms for solving \eqref{obj_varimax}.
	
	\subsection{Deflation varimax rotation}\label{sec_method_varimax}
	
	The second step after PCA is to compute an $r\times r$ orthogonal matrix based on the first $r$ PCs. 
	Instead of solving the varimax criterion in \eqref{obj_varimax}, we propose to compute its rows one at a time by solving the following optimization problem 
	\begin{align}\label{obj_L4}
		\min_{q\in \mathbb{S}^{r}}~ F\left(q; \bU_{(r)}\right),\qquad \text{with}\quad  F(q;\bU_{(r)}) =-\frac{1}{12 n}\bigl\|\bU_{(r)}^\T q\bigr\|_4^4.
	\end{align}
	To solve \eqref{obj_L4}, we adopt the projected gradient descent (PGD) algorithm due to its computational efficiency. We choose to use the Riemannian gradient over the classical gradient as the former is more suitable for solving constrained optimization problem  over manifolds. Specifically, let $\grad F$ be the Riemannian gradient of $F$ over the unit sphere $\Sp^r$, defined as 
	\begin{equation}\label{def_grad_F}
		\grad F\left(q; \bU_{(r)}\right) %:=  \grad_{q\in\Sp^r} F\left(q; \bU_{(r)}\right) 
		=  -{1\over 3  n} \bP_{q}^\perp \sum_{t=1}^n \left(q^\T  U_t\right)^3   U_t.
	\end{equation}
	Such basic definition is reviewed in \cref{app_pre}. Here we write $\bU_{(r)} = (U_1, \ldots, U_n)$. Let $P_{\Sp^r}$ be the projection of any $r$-dimensional vector to $\Sp^r$. 
	For each $k \in [r]$, we solve the $k$th row by iterating
	\begin{align}\label{iter_PGD}
		\wh  Q_k^{(\ell+1)}= P_{\Sp^r}\left(\wh  Q_k^{(\ell)} -\gamma\grad F\bigl(\wh  Q_k^{(\ell)}; \bU_{(r)}\bigr)\right),\quad \text{for }\ \ell = 0,1,2,\ldots
	\end{align} 
	until convergence. 
	Here $\gamma>0$ is  some step size and we defer its choice to \cref{thm_one_col} and \cref{rem_stepsize}. 
	The only difference across $k \in [r]$ lies in choosing the initializations in \eqref{iter_PGD} for $\ell = 0$. In \cref{sec_init} we offer  two initialization schemes.

	After computing  $\wh\bQ = (\wh  Q_1,\ldots, \wh  Q_r)$ from \eqref{iter_PGD},  we further orthogonalize it by solving
	\begin{equation}\label{def_wc_Q}
		\wc \bQ = \argmin_{\bQ\in\bbO_{r\times r}}\|\wh\bQ - \bQ\|_F.
	\end{equation}
	Problem \eqref{def_wc_Q} can be computed simply via the singular value decomposition of $\wh\bQ$. 
	
	Finally, the  rotation matrix $\wc\bQ$ can be used to estimate the loading matrix $\bLambda$ as
	\begin{equation}\label{def_Lambda_hat}
		\wh \bLambda = {\bV_{(r)} \bD_{(r)}^{1/2}\wc\bQ \over \|\bV_{(r)} \bD_{(r)}^{1/2}\wc\bQ\|_\op}.
	\end{equation} 
	The normalization is due to the scaling constraint $\sigma_1(\bLambda) =1$ in \cref{ass_A_general} of \cref{sec_assumption}.
	For the reader's convenience, we summarize in \cref{alg_rotate}  the deflation varimax rotation  as well as  in \cref{alg_A} the PCA with deflation varimax.	
	
	\begin{remark}[Comparison with the classical deflation procedure]\label{rem_alg}
		The commonly used strategy of sequentially recovering orthonormal vectors is to consider the constrained optimization problem based on previously recovered solutions. In our context, for computing $\wh Q_k$ with $2\le k\le r$, this corresponds to solving \eqref{obj_L4} over $\bq\in\Sp^r$ subject to the constraint $\bq^\T \wh Q_{i} = 0$ for all $1\le i\le k-1$. 
		Although this is a natural deflation procedure that produces an orthonormal $\wh\bQ$, it oftentimes has unstable practical performance as the resulting estimator could suffer from exploding error that accumulates exponentially as $r$ increases \citep{arora2012provable,vempala2011structure} (see, also  \cite{ollila2009deflation,arora2012provable,wei2015convergence} in other similar problems). In our context, we find that such sequentially {\em constrained} optimization problems are not needed. Instead, the same {\em unconstrained} optimization coupled with proper initializations suffices to guarantee optimal estimation (see, Theorems \ref{thm_RA}, \ref{thm_A_general} and \ref{thm_A_corr}). 
		%Our simple procedure is also computationally fast due to its unconstrained nature. 

	\end{remark}

	\begin{algorithm}[H]
		\caption{Deflation varimax rotation}\label{alg_rotate}
		\SetAlgoLined 
		\KwData{A matrix $\bU_{(r)}\in\R^{r\times n}$ and a positive integer $1\le s\le \min\{r,n\}$.}
		\KwResult{A matrix $\wc\bQ\in \bbO_{r\times s}$.}
		Set $\wh\bQ =\0$\;
		\For{$k = \{1,2,\ldots, s\}$}{
			%Initialize $\mb a^{(0)} = \bar\by$ with $\bar \by = n^{-1}\sum_{t=1}^n\bY_{\cdot t}$\;
			Obtain any stationary point $\wh  Q_k$ from solving 
			by  (\ref{iter_PGD})\;
			%Set $\wh Q_k = \wh  Q_k$\;
		}  
		$\textrm{SVD}(\wh\bQ) =  \bU \bSigma \bV^\T$ with $\bU\in\bbO_{r\times s}$ and $\bV\in \bbO_{s\times s}$\;
		Compute $\wc\bQ= \bU  \bV^\T$.
	\end{algorithm}

	\begin{algorithm}[H]
		\caption{Principal component analysis with deflation varimax}\label{alg_A}
		\SetAlgoLined
		\KwData{A matrix $\bX\in\R^{p\times n}$ and a positive integer $1\le r\le \min\{p,n\}$.}
		\KwResult{A matrix $\wh\bLambda\in \R^{p\times r}$ with $\sigma_1(\wh\bLambda) = 1$.}
		Compute $\bV_{(r)}$ and $\bD_{(r)}^{1/2}$ from \eqref{eq_eigen_Y}\;
		Compute $\bU_{(r)}$ from \eqref{def_PCs}\;  
		Apply \cref{alg_rotate} to $\bU_{(r)}$ with $s = r$ to obtain $\wc\bQ \in \bbO_{r\times r}$\;
		Compute $\wh\bLambda$ from \eqref{def_Lambda_hat}.
	\end{algorithm}

	\section{Assumptions}\label{sec_assumption}
	
	We state the main assumptions that will be used in our analysis. Notice from model \eqref{model_X} that the loading matrix $\bLambda$ cannot be identified even in the noiseless case $E = 0$. This can be seen by inserting any $r\times r$ invertible matrix $\bH$ such that 
	$\wt \bLambda = \bLambda \bH$ and $\wt Z = \bH^{-1} Z$ form the same model. We resort to the following assumptions on $\bLambda$ and $Z$.

	\begin{assumption}\label{ass_Z}
		Assume that $Z\in \R^r$ from model \eqref{model_X} has independent entries with zero mean, the second moment $\EE[Z_j^2] = \sigma^2>0$ and the excess kurtosis
		\[
		\kappa :=  {1\over 3}\left({\EE[Z_j^4]\over \sigma^4}- 3\right) > 0
		\]
		for all $j\in [r]$. 
		Further assume %$\kappa>c_\kappa$ for some absolute constant $c_\kappa>0$ and   
		each $Z_j/\sigma$ is sub-Gaussian\footnote{A random variable $W$ is said to be sub-Gaussian with sub-Gaussian constant $\gamma$ if for any $t\ge 0$, $\EE[\exp(tW)] \le \exp(t^2\gamma^2/2)$.} with some sub-Gaussian constant $\gamma_z<\infty$. 
	\end{assumption}
	
	There are two key ingredients in \cref{ass_Z}: (1) entries of $Z$ are independent, and (2) each entry $Z_j$ has strictly positive excess kurtosis, that is, has {\em leptokurtic} distribution. Similar assumption has been adopted in \cite{rohe2020vintage} to study the statistical inference of varimax rotation. As commented therein, a sufficient condition for $Z_j$ being leptokurtic is that $\P\{Z_j = 0\} > 5/6$, namely, realizations of $Z_j$ could be exactly sparse (see, also Appendix G.2 of \cite{rohe2020vintage}, for discussion when entries of $Z$ are approximately sparse). One such instance that is widely used in Dictionary Learning for modelling the sparsity of $Z_j$ is the Bernoulli-Gaussian distribution  \citep{spielman2013exact,sun2016complete,qu2020geometric,zhai2020complete, zhang2018structured,xue2021efficient}.  \cref{ass_Z} is also commonly used in the literature of Independent Component Analysis \citep{hyvarinen2000independent} since the excess kurtosis can be used to distinguish Gaussian from non-Gaussian distribution \citep{FioriZenga}.  Meanwhile we remark that $Z_j$ is not required to be identically distributed and the same second moment of $Z_j$ can be assumed without loss of generality. Assuming the same excess kurtosis across $Z_j$ is also made for simplicity, and under which the loading matrix $\bLambda$ can only be identified up to a signed permutation (see, \cref{thm_A_general}). In fact, if components of $Z$ have distinct kurtoses, one can achieve stronger identifiability of $\bLambda$ by identifying the permutation as well. Finally, the sub-Gaussian tail of $Z_j$ is not essential and can be replaced by sub-exponential tail or some moment conditions. 
	%For simplicity, we assume $\kappa$ is bounded away from zero while our analysis allows for $\kappa = \kappa(n) \to 0$ as $n\to \infty$.

	\begin{assumption}\label{ass_A_general}
		Assume the $p\times r$ loading matrix $\bLambda$ has rank $r$ with $\sigma_1(\bLambda) = 1$ and $\sigma_r(\bLambda) \ge c_\Lambda$ for some absolute constant $c_\Lambda\in (0,1]$.
	\end{assumption}
	
	Assuming $\sigma_1(\bLambda) = 1$ fixes the scale invariance between $\bLambda$ and $Z$, and can be made without loss of generality. The lower bound for $\sigma_r(\bLambda)$ is made for more transparent presentation and one can track its dependence throughout our analysis.

	Since our model \eqref{model_X} also contains the additive noise $E$, we need a condition on its distribution to separate it away from the signal $\bLambda Z$.

	\begin{assumption}\label{ass_E_general} 
		With $\sigma^2$ defined in \cref{ass_Z}, assume  $E \sim \cN_p(0,  \sigma^2 \epsilon^2 \bSigma_E)$ for some $\epsilon^2 \ge 0$ and a semi-positive definite $p\times p$ matrix $\bSigma_E$  
		with $\|\bSigma_E\|_\op = 1$.
		%and a deterministic sequence $\epsilon^2 \to 0$ as $n\to \infty.$ 
	\end{assumption} 
	The Gaussianity in \cref{ass_E_general} is only made for theoretical convenience and can be relaxed to any centered, sub-Gaussian distribution in most of our results, see  also \cref{rem_ext}.

	We allow correlated entries in $E$ by considering a general covariance matrix $\bSigma_E$. When $\bSigma_E = \bI_p$, the principal component analysis (PCA) yields the maximum likelihood estimator of $\bLambda\bZ$ (treating $\bZ$ as deterministic), see, for instance,  \cite{lawley1962factor, BaiLi2012}. However, even in this case, it is well-known that PCA is consistent only if the noise $\bE = (E_1, \ldots, E_n) \in \R^{p\times n}$ is dominated by the signal $\bLambda \bZ$ in the sense that 
	\[
	{\sigma_1(\bE\bE^\T) \over \sigma_r(\bLambda \bZ\bZ^\T \bLambda^\T)}  \to 0,\quad \text{in probability as }n\to \infty.
	\]
	In our setting at the population level, this gets translated to
	\begin{equation}\label{cond_snr}
		{\epsilon^2 \|\bSigma_E\|_\op   \over  \sigma_r^2(\bLambda)} \le  {\epsilon^2  \over c_\Lambda^2} \to 0, \quad \text{as }n\to \infty.
	\end{equation}
	Our theory in \cref{sec_theory} shows that \eqref{cond_snr} ensures the validity of using PCA for estimating $\bLambda$ not only for $\bSigma_E = \bI_p$ but also for a general $\bSigma_E$.    In \cref{sec_ext} we offer further improvement over using PCA in the first step when $\bSigma_E$ can be consistently estimated. In such case, the requirement \eqref{cond_snr} can be relaxed.

	\section{Theoretic guarantees}\label{sec_theory}

	We first state in \cref{sec_PCs} a decomposition of the principal components,  which is informative for analyzing the subsequent deflation varimax rotation. In \cref{sec_theory_alg} we provide analysis of the proposed deflation varimax rotation (\cref{alg_rotate}). We state theoretical guarantees on the proposed estimator of $\bLambda$ in \cref{sec_rotation_loading}.

	\subsection{Theoretical guarantees on the principal components}\label{sec_PCs}
	In this section we analyze the principal components $\bU_{(r)}$  given by \eqref{def_PCs}. For future reference, write the singular value decomposition (SVD) of the loading matrix $\bLambda$ as 
	\begin{equation}\label{svd_A}
		\bLambda = \bL \bS \bA
	\end{equation}
	with $\bL\in\bbO_{p\times r}$, $\bA\in \bbO_{r\times r}$ and $\bS = \diag(\sigma_1(\bLambda), \ldots, \sigma_r(\bLambda)) \in \R^{r\times r}$ containing the singular values in non-increasing order.    Define the deterministic sequence
	\begin{equation}\label{def_omega_n}
		\omega_n := \sqrt{r \log(n) \over  n}+\sqrt{\epsilon^2 p \log (n) \over  n}.
	\end{equation}
	The following theorem states a decomposition of $\bU_{(r)}$.
	%which is also informative for studying the deflation varimax rotation subsequently. 
	\begin{theorem}\label{thm_PCs}
		Under Assumptions \ref{ass_Z}, \ref{ass_A_general} and \ref{ass_E_general}, assume there exists some sufficiently small constant $c>0$ such that $\epsilon^2 \le c$, $r\log (n) \le c n$ and $\epsilon^2 p \log(n) \le c n$. Then there exists some  $\bR\in\bbO_{r\times r}$ such that,  with probability at least $1-n^{-1}$, one has 
		\begin{equation}\label{decom_Ur}
			\bU_{(r)} = %\left(\bI_r + \bDelta\right)\bar\bY = 
			\bR^\T  \left(\bA  \bZ / \sigma + \bN + \bOmega\right)
		\end{equation}
		where  $\bN  =  \bS^{-1}\bL^\T \bE / \sigma
		$ and $\bOmega =  \bDelta( \bZ / \sigma + \bN)$  with
		$
		\|\bDelta\|_\op    \lesssim    \omega_n +\epsilon^2.
		$
	\end{theorem}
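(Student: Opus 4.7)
The plan is to realize $\bU_{(r)}$ as a multiplicative perturbation of $\bR^\T(\bA\bZ/\sigma + \bN)$, by aligning the empirical eigen-decomposition with the rank-$r$ population target $\sigma^2\bL\bS^2\bL^\T$ and then reading off the rotation $\bR$ from a polar decomposition.

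\textbf{Step 1 (Concentration of $\tfrac{1}{n}\bX\bX^\T$).} I would write $\tfrac{1}{n}\bX\bX^\T = \sigma^2\bL\bS^2\bL^\T + \bH$, where the residual
\[
\bH := \bLambda\bigl(\tfrac{1}{n}\bZ\bZ^\T - \sigma^2\bI_r\bigr)\bLambda^\T + \tfrac{1}{n}\bigl(\bLambda\bZ\bE^\T + \bE\bZ^\T\bLambda^\T\bigr) + \tfrac{1}{n}\bE\bE^\T
\]
collects all four fluctuation/cross terms. Controlling each piece via sub-Gaussian and Hanson--Wright bounds, and using $\|\bLambda\|_\op=1$, $\|\bSigma_E\|_\op=1$, the three non-trivial scales $\sigma^2\sqrt{r\log(n)/n}$, $\sigma^2\sqrt{\epsilon^2 p\log(n)/n}$, and $\sigma^2\epsilon^2$ combine to $\|\bH\|_\op \lesssim \sigma^2(\omega_n + \epsilon^2)$ on an event of probability at least $1-n^{-1}$. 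Weyl's inequality then yields $\|\bD_{(r)} - \sigma^2\bS^2\|_\op \le \|\bH\|_\op$, and the smallness assumptions on $\omega_n,\epsilon^2$ combined with $\sigma_r(\bLambda)\ge c_\Lambda$ give $\lambda_r(\bD_{(r)}) \gtrsim \sigma^2 c_\Lambda^2$, so $\bD_{(r)}^{-1/2}$ is well defined and $\|\bD_{(r)}^{-1/2}\|_\op \asymp \sigma^{-1}$.

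\textbf{Step 2 (Rotation via polar decomposition).} Let $\bK := \bV_{(r)}^\T \bL \in \R^{r\times r}$ and $\bM := \sigma\bD_{(r)}^{-1/2}\bK\bS$. Since $\bLambda = \bL\bS\bA$, the key algebraic identity is $\bV_{(r)}^\T \bLambda = \bK\bS\bA$. The eigenvalue relation $\bD_{(r)} = \bV_{(r)}^\T(\tfrac{1}{n}\bX\bX^\T)\bV_{(r)} = \sigma^2\bK\bS^2\bK^\T + \bV_{(r)}^\T\bH\bV_{(r)}$ rearranges to $\bM\bM^\T = \bI_r - \bD_{(r)}^{-1/2}\bV_{(r)}^\T\bH\bV_{(r)}\bD_{(r)}^{-1/2}$, whose operator-norm deviation from $\bI_r$ is at most $\|\bH\|_\op / \lambda_r(\bD_{(r)}) \lesssim \omega_n+\epsilon^2$. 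The left polar decomposition of $\bM$ therefore yields $\bR \in \bbO_{r\times r}$ and a symmetric factor $\bI + \bDelta_1 := (\bM\bM^\T)^{1/2}$ satisfying $\bM = (\bI+\bDelta_1)\bR^\T$ and $\|\bDelta_1\|_\op \lesssim \omega_n + \epsilon^2$. This $\bR$ is the rotation matrix appearing in \eqref{decom_Ur}.

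\textbf{Step 3 (Assembly and main obstacle).} Splitting $\bV_{(r)}^\T = \bK\bL^\T + \bV_\perp^\T$ with $\bV_\perp := (\bI_p - \bL\bL^\T)\bV_{(r)}$ and observing $\bD_{(r)}^{-1/2}\bK\bL^\T\bE = \bM\bS^{-1}\bL^\T\bE/\sigma = \bM\bN$, I obtain
\[
\bU_{(r)} = \bM\bigl(\bA\bZ/\sigma + \bN\bigr) + \bD_{(r)}^{-1/2}\bV_\perp^\T\bE = \bR^\T\bigl(\bA\bZ/\sigma + \bN\bigr) + \bR^\T\bOmega,
\]
where $\bR^\T\bOmega = \bDelta_1\bR^\T(\bA\bZ/\sigma + \bN) + \bD_{(r)}^{-1/2}\bV_\perp^\T\bE$. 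The first summand already has the advertised multiplicative form with factor $\bR\bDelta_1\bR^\T$ of the right operator norm. The main obstacle is folding the perpendicular noise contribution $\bD_{(r)}^{-1/2}\bV_\perp^\T\bE$ into the product form $\bDelta(\bA\bZ/\sigma + \bN)$ with a single $\bDelta$ of operator norm $\lesssim \omega_n+\epsilon^2$: structurally $\bV_\perp^\T\bE$ probes components of $\bE$ orthogonal to $\col(\bL)$ that are \emph{not} visible in $\bN = \bS^{-1}\bL^\T\bE/\sigma$. To resolve this, I would combine the Davis--Kahan/Wedin estimate $\|\bV_\perp\|_\op = \|\sin\Theta(\bV_{(r)},\bL)\|_\op \lesssim \|\bH\|_\op/(\sigma^2 c_\Lambda^2) \lesssim \omega_n+\epsilon^2$ with the observation that $\bV_\perp$ is itself built from $\bX = \bLambda\bZ + \bE$; decoupling the two random sources (e.g.\ by conditioning on $\bL^\T\bE$ and a net argument on $\Sp^{r-1}$) repackages this term as a multiplicative perturbation of $(\bA\bZ/\sigma+\bN)$ whose matrix factor has the desired operator-norm scaling, completing the construction of $\bDelta$ and thereby the decomposition.
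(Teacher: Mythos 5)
Your Steps 1 and 2 run parallel to the paper's own argument: concentration of $\tfrac1n\bX\bX^\T$ (the paper centers at $\bSigma_X=\sigma^2\bLambda\bLambda^\T+\sigma^2\epsilon^2\bSigma_E$ in \cref{lem:cov_concentration} and pushes $\epsilon^2\sigma^2(\bSigma_E-\bI_p)$ into the Davis--Kahan step, which is the same bookkeeping as your residual $\bH$), Weyl's inequality to control $\bD_{(r)}$, and a multiplicative correction comparing $\bD_{(r)}^{-1/2}$ with $\sigma^{-1}\bS^{-1}$. Your extraction of $\bR$ from the polar decomposition of $\bM=\sigma\bD_{(r)}^{-1/2}\bV_{(r)}^\T\bL\bS$ is a legitimate variant of the paper's construction, which instead takes $\bR$ from the Davis--Kahan alignment $\|\bV_{(r)}-\bL\bR\|_\op\lesssim\omega_n+\epsilon^2$ (\cref{lem_Vr_VA}) and then shows $\sigma\bS^{1/2}\bR\bD_{(r)}^{-1/2}\bR^\T\bS^{1/2}$ is close to $\bI_r$; either choice of $\bR$ serves the statement.

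The genuine gap is your final step. The term $\bD_{(r)}^{-1/2}\bV_\perp^\T\bE$ with $\bV_\perp=(\bI_p-\bL\bL^\T)\bV_{(r)}$ cannot be ``repackaged'' as $\bDelta''(\bA\bZ/\sigma+\bN)$ for any matrix $\bDelta''$ at all: such a factorization would force its rows to lie in the $r$-dimensional row space of $\bA\bZ/\sigma+\bN$, i.e.\ of $\bL^\T\bX$, whereas $\bV_\perp^\T\bE=\bV_{(r)}^\T(\bI_p-\bL\bL^\T)\bE$ is driven by the noise component orthogonal to $\col(\bL)$, which conditionally on $(\bZ,\bL^\T\bE)$ is a fresh Gaussian source; almost surely this matrix has a nonzero component outside that row space, so no exact multiplicative representation exists and no conditioning or net argument can create one. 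The workable fixes are (i) to carry this contribution as an extra additive remainder --- it comes with the factor $\|\bV_\perp\|_\op\lesssim\omega_n+\epsilon^2$ from Davis--Kahan times the noise scale $\epsilon$, and in all downstream uses only the fourth-moment event $\cE_\Omega(\eta_n)$ on $\bOmega$ matters, so an additive term of this size is harmless --- or (ii) to follow the paper's bookkeeping, which never isolates $\bV_\perp$: it splits $\bV_{(r)}^\T=\bR^\T\bL^\T+(\bV_{(r)}-\bL\bR)^\T$ so that the operator-norm-small factor $(\bV_{(r)}-\bL\bR)^\T$ multiplies the full data matrix $\bX$, and then identifies the result with $\bDelta\bX'$, $\bX'=\bA\bZ/\sigma+\bN$. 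Be aware, though, that in this identification the paper effectively replaces $(\bV_{(r)}-\bL\bR)^\T\bX$ by $(\bV_{(r)}-\bL\bR)^\T\bL\bL^\T\bX$, i.e.\ the orthogonal-complement noise contribution you single out is precisely what gets glossed over there; so you located the crux correctly, but the way to finish is to bound that piece additively, not to force it into the multiplicative form.
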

	\begin{proof}
		Its proof appears in \cref{app_proof_thm_PCs}.
	\end{proof}

	\cref{thm_PCs} ensures that 
	$
	\bU_{(r)}$ approximates $\bR^\T \bA \bZ/\sigma$,
    up to an additive error $\bN$ and an approximation error $\bOmega$.
	%provided that $\epsilon^2)$ is small.  
	Condition  
	$r\log(n) \ll n$ requires that the number of latent factors is much smaller than the sample size. Condition $\epsilon^2p\log(n) \ll n$ holds in the low-dimensional case when $p \ll n$. In the high-dimensional case, it is commonly assumed in the existing literature of factor analysis \citep{SW2012,BaiLi2012,lam2012,fan2013large,bing2020prediction} that $\epsilon^2  =\cO(1/p)$ whence $\epsilon^2 p \log(n)/n =\cO(\log(n)/ n)$. For instance,  if a fixed proportion of rows in $\bLambda$ are i.i.d. from a $r$-dimensional distribution with its covariance matrix having bounded eigenvalues, then the largest $r$ eigenvalues of $\bLambda\bLambda^\T$ are all of order $p$. In this case, after transferring this scale to $\bZ$ under $\sigma_1(\bLambda) = 1$ and provided that $\bSigma_E$ has bounded operator norm, we have $\epsilon^2 =\cO(1/p)$. 
	Regardless, the condition on $\epsilon^2$ we require in \cref{thm_PCs} is much weaker. 
	
	In view of the decomposition in \eqref{decom_Ur}, both $\bN$ and $\bOmega$ affect the error of using $\bU_{(r)}$ to approximate  the span of $\bZ$.  Specifically, the term $\bN$ is the intrinsic bias of PCA (see, also, the discussion before display \eqref{cond_snr}), and is still additive due to the independence between $\bZ$ and $\bE$. On the other hand, the term $\bOmega$ contains two sources of estimation error: the error of the eigenvectors $\bV_{(r)}$ for estimating $\bL$ and the error of the eigenvalues $\bD_{(r)}$ for estimating $\sigma^2 \bS^2$. In cases when these two errors are expected to be large, one should seek more appropriate estimators of $\bL$ and $\sigma^2 \bS^2$. For instance, in \cref{sec_ext} we consider an improved estimator of $\sigma^2 \bS^2$ when $\bSigma_E$ can be  consistently estimated. If sparsity of the column space of $\bLambda$ is a reasonable assumption, one could use the sparse PCA to estimate  $\bL$ instead. 
	Nevertheless, our analysis of \cref{thm_PCs} remains  valid once  the same type of error bounds of new estimators can be derived.

	To investigate which quantity the subsequent deflation varimax rotation
 	estimates, one needs to analyze the optimization problem in \eqref{obj_L4}.  The decomposition  in \eqref{decom_Ur} provides a key starting point for this. In the next section, we  show that the deflation varimax in fact estimates the matrix
	$\bR^\T\bA$  in \eqref{decom_Ur} up to a signed permutation matrix.
	
%	provide a general analysis of \cref{alg_rotate} for solving \eqref{obj_L4}. As an application, we will see in \cref{thm_UA} of \cref{sec_rotation_loading} that  $\wc\bQ$ in fact estimates	$\bR^\T\bA$ in \eqref{decom_Ur} up to a signed permutation matrix.

	\subsection{Theoretical guarantees of the deflation varimax in \cref{alg_rotate}}\label{sec_theory_alg}

	To provide complete analysis of the  deflation varimax rotation $\wc\bQ$ obtained in \eqref{def_wc_Q}, we start by analyzing the solution to \eqref{obj_L4}. To this end, we introduce the oracle counterpart of \eqref{obj_L4}
	\begin{equation}\label{obj_L4_tilde_main}
		\min_{q\in\Sp^r}  F\left(q;  \bR \bU_{(r)}\right).
		%,\qquad \text{with }\quad \wt F(q) := -{1 \over 12 n}\bigl\|\bU_{(r)}^\T \bR^\T  q\bigr\|_4^4.
	\end{equation}
	We call \eqref{obj_L4_tilde_main} oracle as $\bR$ is not known in advance. Nevertheless, \cref{lem_connection} below provides the connection between \eqref{obj_L4} and \eqref{obj_L4_tilde_main} in terms of their PGD iterates in \eqref{iter_PGD}. 
	
	\begin{lemma}\label{lem_connection}
		Let $\{\wh\bq^{(\ell)}\}_{\ell\ge 0}$ and $\{\wt\bq^{(\ell)}\}_{\ell\ge 0}$  be any iterates obtained via the projected gradient descent  in \eqref{iter_PGD} for solving \eqref{obj_L4} and \eqref{obj_L4_tilde_main}, respectively. Then whenever
		$
		\wh\bq^{(0)} =  \bR^\T \wt\bq^{(0)}
		$
		holds, one has 
		\[
		\wh\bq^{(\ell)} =  \bR^\T \wt\bq^{(\ell)},\qquad \text{for all }\ell \ge 1.
		\]
	\end{lemma}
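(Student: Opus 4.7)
The plan is to proceed by induction on $\ell$, with the base case $\ell=0$ given by hypothesis. The essence is that the whole PGD dynamics in \eqref{iter_PGD} is equivariant under the orthogonal change of coordinates $q \mapsto \bR^\T q$, so the identity $\wh\bq^{(\ell)} = \bR^\T\wt\bq^{(\ell)}$ propagates.

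To make the induction go through, I would first establish the gradient equivariance
\[
\grad F\bigl(\bR^\T q;\bU_{(r)}\bigr) \;=\; \bR^\T\,\grad F\bigl(q;\bR\bU_{(r)}\bigr),\qquad \forall q\in\Sp^r.
\]
Starting from the definition in \eqref{def_grad_F}, I substitute $\bR^\T q$ for $q$ and use $(\bR^\T q)^\T U_t = q^\T \bR U_t$, which exposes the columns of $\bR\bU_{(r)}$. The projector $\bP_{\bR^\T q}^\perp = \bI - \bR^\T q q^\T \bR$ can be rewritten as $\bR^\T(\bI - qq^\T)\bR$ using $\bR\bR^\T = \bI$, and then the factor of $\bR$ on the right combines with $U_t$ to again produce $\bR U_t$. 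Pulling $\bR^\T$ out on the left gives exactly $\bR^\T\grad F(q;\bR\bU_{(r)})$.

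Second, I note that the projection onto $\Sp^r$ commutes with any orthogonal map: since $\bR^\T$ preserves the Euclidean norm, for any nonzero $v\in\R^r$,
\[
P_{\Sp^r}(\bR^\T v) \;=\; \frac{\bR^\T v}{\|\bR^\T v\|_2} \;=\; \bR^\T\,\frac{v}{\|v\|_2} \;=\; \bR^\T P_{\Sp^r}(v).
\]
With these two facts, the inductive step is routine: assuming $\wh\bq^{(\ell)} = \bR^\T\wt\bq^{(\ell)}$, I substitute into \eqref{iter_PGD}, apply the gradient equivariance to turn $\grad F(\wh\bq^{(\ell)};\bU_{(r)})$ into $\bR^\T \grad F(\wt\bq^{(\ell)};\bR\bU_{(r)})$, factor $\bR^\T$ out of the argument of $P_{\Sp^r}$, and conclude $\wh\bq^{(\ell+1)} = \bR^\T\wt\bq^{(\ell+1)}$.

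There is no serious obstacle here; the only place requiring care is the manipulation of the tangent-space projector $\bP_q^\perp$ under the change of variable, since one must be careful to keep the projector at the updated point $\bR^\T q$ rather than at $q$. Once the gradient equivariance is written down cleanly, the rest is a one-line computation plus invocation of the norm-preserving property of $\bR^\T$.
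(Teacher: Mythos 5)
Your proof is correct and follows essentially the same route as the paper: the paper likewise establishes $\grad F(\wh\bq^{(\ell)};\bU_{(r)})=\bR^\T\grad F(\wt\bq^{(\ell)};\bR\bU_{(r)})$ via the projector identity $\bR^\T\bP_{\wt\bq}^{\perp}\bR=\bP_{\wh\bq}^{\perp}$ and then uses norm-preservation of $\bR^\T$ (written there as the explicit normalization $\sqrt{1+\gamma^2\|\grad F\|_2^2}$, which is just your commutation of $P_{\Sp^r}$ with orthogonal maps). No gaps.
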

	\begin{proof}
		Its proof appears in \cref{app_proof_lem_connection}.
	\end{proof}
	
	In view of \cref{lem_connection}, we only need to analyze the matrix $\wt\bQ$ obtained from solving  \eqref{obj_L4_tilde_main} $r$ times. This brings great convenience to our analysis as  
	\begin{equation}\label{decom_RUr}
		\bR \bU_{(r)}   = \bA\bZ / \sigma+ \bN + \bOmega 
	\end{equation}
	has an additive decomposition in its right hand side, as opposed to $\bU_{(r)}$ in \eqref{decom_Ur}. 
%	In the next section we first analyze $\wt\bQ$ based on a generic initialization  scheme, and then provide complete analysis  in \cref{sec_init} for two particular random initialization schemes.
%	\subsubsection{Theoretical guarantees on solving \eqref{obj_L4_tilde_main} via the deflation varimax from a generic initialization}\label{sec_theory_alg_one_col}

	We proceed to analyze the solution obtained from solving \eqref{obj_L4_tilde_main} once (for $k=1$). 
	Specifically, we provide in the following theorem an optimization landscape analysis of its stationary points.  Such results are generic and independent of any particular optimization algorithm one might use to solve \eqref{obj_L4_tilde_main}. 
	%Recall from \cref{thm_PCs} that $\bU_{(r)}$ recovers $\bA \bZ/\sigma$ up to the additive error $\bN$, the estimation error $\bOmega$ and the (random) rotation $\bR$.  
	For simplicity and also without loss of generality, we assume $\sigma^2=1$ from now on. Denote by $\bSigma_N$  the covariance matrix of columns in $\bN$. For future reference, we note that under Assumptions \ref{ass_A_general} \& \ref{ass_E_general},
	\begin{equation}\label{eq_Sigma_N}
		\bSigma_N = \epsilon^2 \bS^{-1}\bL^\T \bSigma_E \bL \bS^{-1}\quad \text{and} \quad \|\bSigma_N\|_\op \le \epsilon^2/c_\Lambda^2.
	\end{equation}
	Also define $\sigma_N^2 := \sup_{q \in \Sp^r} \|\bP_{q}^{\perp} \bSigma_N q\|_2$ which quantifies the extent to which $\bSigma_N$ departs from a matrix that is proportional to $\bI_r$. Recall from  \cref{thm_PCs} that $\bOmega = \bDelta (\bZ + \bN)$.

	\begin{theorem}\label{thm_critical}
		Under Assumptions \ref{ass_Z}, \ref{ass_A_general} and \ref{ass_E_general},  assume $\kappa\ge c$ for some constant $c>0$ and 
		\begin{equation}\label{cond_n_r}
			\|\bDelta\|_\op +     \sigma_N^2 + \sqrt{r^2\log(n) \over n} ~ \le  ~ c'  
		\end{equation} 
		for some sufficiently small constant $c'>0$.
		Let $\bq$ be  any stationary point to \eqref{obj_L4_tilde_main} such that 
		\begin{equation}\label{cond_critical} 
			( A_i^\T\bq)^2 %= \|\bA^\T \bq\|_\infty^2  
			  \ge  1/3, ~ \quad   | A_i^\T\bq| - \max_{j\ne i}| A_j^\T \bq|   \ge C' \left(\|\bDelta\|_\op +    \sigma_N^2 + \sqrt{r^2\log(n) \over n} \right)
		\end{equation}
		holds for some arbitrary $i\in [r]$ and some positive constant $C'$. Then with probability at least $1-n^{-1}$, one has
		\begin{equation}\label{rate_critical}
			 \min\Bigl\{\|\bq -  A_i\|_2,~ \|\bq +  A_i\|_2\Bigr\}   ~  \lesssim  ~  \|\bDelta\|_\op   + \sigma_N^2 + \sqrt{r\log(n)\over n}.
		\end{equation}
	\end{theorem}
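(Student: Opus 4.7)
The plan is to translate the first-order stationarity condition on the sphere into a coordinate-wise perturbed cubic equation in the basis $\bA$, and then exploit the separation hypothesis to pin $q$ near a single column of $\bA$. Let $\beta = \bA^\T q \in \Sp^r$. The Riemannian gradient vanishing at $q$ gives $g(q) := \frac{1}{n}\sum_{t=1}^n (q^\T W_t)^3 W_t = \lambda q$, where $W_t$ denotes the $t$th column of $\bR\bU_{(r)}$ and $\lambda = q^\T g(q)$. Writing $W_t = \bA Z_t + N_t + \Omega_t$ (with $\Omega_t = \bDelta(Z_t + N_t)$), expanding, and taking expectation, the independence of coordinates of $Z_t$ under \cref{ass_Z} together with $\E[N_tN_t^\T] = \bSigma_N$ yields
\[
\E[g(q)] = 3\bA(\beta + \kappa \beta^{\circ 3}) + 3\bSigma_N q + (\text{terms involving }\bDelta),
\]
where $\sigma^2 = 1$ and $\beta^{\circ 3}$ is the entrywise cube. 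Using the identity $\bA^\T \bP_q^\perp \bA = \bI_r - \beta\beta^\T$ together with the sharp bound $\|\bP_q^\perp \bSigma_N q\|_2 \le \sigma_N^2$ (direct from the definition of $\sigma_N^2$), the stationarity condition $\bP_q^\perp g(q) = 0$ reduces to the system
\[
\kappa \beta_j (\beta_j^2 - \|\beta\|_4^4) = u_j, \qquad j \in [r],
\]
where the residual $u\in\R^r$ collects (a) the $\bSigma_N$-induced piece bounded by $\sigma_N^2$, (b) $\bDelta$-induced cross terms bounded by $\|\bDelta\|_\op$, and (c) empirical-vs-population concentration for fourth-order polynomials in $Z_t, N_t$, bounded by $\sqrt{r\log(n)/n}$ at this particular stationary point.

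With this equation in hand, the argument continues by counting roots of the cubic $x^3 - s x - u_j/\kappa = 0$ with $s := \|\beta\|_4^4$. The hypothesis $\beta_i^2 \ge 1/3$ forces $s \ge \beta_i^4 \ge 1/9$, so the unperturbed roots $\{-\sqrt s, 0, +\sqrt s\}$ are separated by a positive constant, and each shifts by at most $O(|u_j|/\kappa)$ under a residual of that size. After a global sign flip ensuring $\beta_i > 0$, the coordinate $\beta_i$ sits near $+\sqrt s$. Suppose toward a contradiction that for some $j\ne i$ the coordinate $\beta_j$ clustered near $\pm\sqrt s$; then $\bigl||\beta_i| - |\beta_j|\bigr| = O(\|u\|_\infty/\kappa)$, which is of order $\tau/\kappa$ by the residual bound, and this contradicts the separation hypothesis $|\beta_i| - \max_{j\ne i}|\beta_j| \ge C'\tau$ once $C'$ is taken sufficiently large compared to $1/\kappa$. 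Hence every $\beta_j$ with $j\ne i$ must cluster near $0$, and linearizing the cubic there gives $|\beta_j| \lesssim |u_j|/(\kappa s) \lesssim |u_j|$. Summing squares, $\sum_{j\ne i}\beta_j^2 \lesssim \|u\|_2^2$, and the identity
\[
\min\Bigl\{\|q - A_i\|_2^2,~ \|q + A_i\|_2^2\Bigr\} = 2 - 2|\beta_i| \le 2(1 - \beta_i^2) = 2\sum_{j\ne i}\beta_j^2
\]
delivers the rate \eqref{rate_critical}.

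The principal obstacle I expect is the uniform concentration step for the fourth-order empirical process $q \mapsto g(q) - \E g(q)$ over $\Sp^r$. A direct $\epsilon$-net/chaining argument with sub-Gaussian $Z$ and $N$ yields a crude uniform bound of order $\sqrt{r^2\log(n)/n}$ that matches the sufficient condition \eqref{cond_n_r}; this crude bound is what is needed to rule out the spurious stationary points via the separation step. To reach the sharper final rate $\sqrt{r\log(n)/n}$ in \eqref{rate_critical}, a bootstrap is required: the coarse localization of $q$ near $\pm A_i$ from the crude uniform bound is combined with a pointwise concentration at the fixed point $A_i$ (which costs only $r$, not $r^2$, log-factors) and a Lipschitz transfer of order $\|q - A_i\|_2$ back to $q$. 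Careful bookkeeping is also needed to ensure that the $\bN$-covariance enters only through $\sigma_N^2$ rather than the looser $\|\bSigma_N\|_\op$, and that the $\bOmega$-cross terms are controlled cleanly through the operator-norm bound on $\bDelta$ from \cref{thm_PCs}.
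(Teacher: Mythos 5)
Your proposal is correct and takes essentially the same route as the paper's proof: stationarity is converted into the coordinate-wise perturbed cubic in $\zeta=\bA^\T\bq$, the cubic root-localization plus the separation hypothesis rules out the spurious stationary points with two large coordinates, and the sharp rate is obtained exactly as you anticipate—uniform concentration at scale $\sqrt{r^2\log(n)/n}$ for the smallness condition, pointwise concentration at $A_i$ at scale $\sqrt{r\log(n)/n}$ plus a Lipschitz transfer proportional to $\|\bq-A_i\|_2$, and a self-bounding rearrangement. The only immaterial difference is the final algebraic step, where you use $1-|\zeta_i|\le\sum_{j\ne i}\zeta_j^2$ while the paper sandwiches $\zeta_i$ between bounds involving $\alpha=\|\zeta\|_4^4$; both give the same rate.
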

	\begin{proof}
		Its proof appears in \cref{app_proof_thm_critical}.
	\end{proof}

	% \cref{cor_landscape} relaxes the requirement in \cref{thm_landscape} of $\bar\bq$ being a {\em local solution}, to that of $\bar\bq$ being a {\em stationary point} satisfying \eqref{cond_critical}. This opens the door for using a first-order descent algorithm to solve \eqref{obj_L4}. Indeed, in \cref{sec_theory_one_col} we show that PGD in \eqref{iter_PGD} together with the initialization in \eqref{def_q_init_all_cols} provably recovers columns of $\bA$, while gains computational efficiency comparing to second-order algorithms. 
	
	Surprisingly, despite the non-convexity of \eqref{obj_L4_tilde_main}, \cref{thm_critical} shows that any stationary point $\bq$ satisfying  \eqref{cond_critical} consistently estimates one column of $\bA$ up to its sign. In particular, {\em any} first-order descent algorithm, including the projected gradient descent (PGD) in \eqref{iter_PGD},  can be used to estimate $\bA$ as long as the resulting stationary point satisfies \eqref{cond_critical}.  
	The first condition in \eqref{cond_critical} requires  $\bq$ to align well with one column of $\bA$  while the second one puts a lower bound on the gap between the largest two entries (in absolute value) of $\bA^\T \bq$. It is worth explaining the latter. Our proof of \cref{thm_critical} reveals that, in addition to the stationary points that are close to columns of $\bA$, there exist several ``bad'' stationary points, say $\bar\bq\in\Sp^r$, which  at the population level satisfy 
	\begin{equation}\label{bad_critical}
		\bA^\T \bar\bq = {1\over \sqrt k}\begin{bmatrix} 
			 1_{k} \\   0_{r-k}
		\end{bmatrix},\qquad\text{for any $2\le k\le r$,}
	\end{equation}
	up to sign and permutation. The second condition in \eqref{cond_critical}  excludes such bad stationary points.

	%\begin{remark}[Conditions in \cref{thm_critical}]
		%In \cref{sec_theory_alg_ext} we discuss another situation where using a suitable estimator of $\bSigma_N$ could improve the rate of $\sigma_N^2$. 
		A key step in our proof is to establish concentration inequalities between the objective function $F(\bq; \bR \bU_{(r)})$, its Riemannian gradient $\grad F(\bq; \bR  \bU_{(r)})$ and their population-level counterparts (see, 
		Lemmas \ref{lem_bd_grad_F_tilde}, \ref{lem_bd_grad_pointwise},  \ref{lem_HH} \& \ref{lem_grad_lip}). In addition to $\|\bDelta\|_\op$ and $\sigma_N^2$, both terms $r\log(n)/n$ and $r^2\log(n)/n$ in displays \eqref{cond_n_r} -- \eqref{rate_critical}  originate from these concentration inequalities, depending on whether the result needs to hold  uniformly over $\bq\in \Sp^r$ (see, displays \eqref{def_event_1}, \eqref{def_event_2} and \eqref{def_event_grad_a}, for explicit expressions).
		Condition  $\kappa > c$ is made for simplicity, and our analysis can be easily extended to allow  $\kappa = \kappa(n) \to 0$ as $n\to \infty$.  
		Finally, the constant $1/3$ in \eqref{cond_critical} is chosen for simplicity and we did not attempt to optimize such explicit constant.
	%\end{remark} 

	Recall the decomposition of $\bR\bU_{(r)}$ in \eqref{decom_Ur}.
	Our results in \cref{thm_critical} characterize both the effect of the approximation error $\bOmega$ and that of the additive error $\bN$ on estimating $\bA$ via  solving \eqref{obj_L4_tilde_main}. Specifically, the approximation error in $\bOmega$ affects the estimation error in terms of $\|\bDelta\|_\op$ which according to \cref{thm_PCs} is bounded from above by $\cO_\P(\omega_n+ \epsilon^2)$. On the other hand, the additive error  in $\bN$  affects the estimation error via the covariance matrix of its columns. When $\bSigma_N$ is proportional to the identity matrix, one has $\sigma_N^2 = 0$, leading to  a weaker condition  in \eqref{cond_critical} as well as a faster rate in \eqref{rate_critical}. For a general $\bSigma_N$ given by \eqref{eq_Sigma_N}, one always has $\sigma_N^2 \le \epsilon^2 / c_\Lambda^2$.  We opt to state \cref{thm_critical} in its current form to reveal both effects of $\bN$ and $\bOmega$ as they are informative for understanding the improvements discussed later in \cref{sec_ext}.  For now,  we substitute $(\omega_n + \epsilon^2)$ for $(\|\bDelta\|_\op + \sigma_N^2)$ to simplify the presentation.\\

	To make use of \cref{thm_critical} for the PGD iteration in \eqref{iter_PGD}, we need to verify \eqref{cond_critical} for the corresponding stationary point. Due to non-convexity of \eqref{obj_L4_tilde_main}, this naturally depends on the initialization in \eqref{iter_PGD}. The following theorem  provides  sufficient conditions under which   {\em any generic} initialization provably ensures condition \eqref{cond_critical}, hence recovers one column of $\bA$.   To formally state the result, let $\{\bq^{(\ell)}\}_{\ell\ge 0}$ be a generic sequence  of the PGD iterates in \eqref{iter_PGD}  from solving \eqref{obj_L4_tilde_main}. We introduce the following event on the initial point $q^{(0)}$: for any $i\in [r]$ and any numbers $0< \nu < \mu < 1$, 
	\begin{equation}\label{def_event_init}
		\cE^{(i)} _{\init}(\mu,\nu)= \left\{
				| A_i^\T \bq^{(0)}| \ge \mu,~   | A_i^\T \bq^{(0)}| - \max_{j\ne i}| A_j^\T \bq^{(0)}| \ge  \nu
		\right\}.
	\end{equation}  
%	Define the sequence
%	\begin{equation}\label{def_omega_n_prime}
%		\omega_n' = \omega_n + \epsilon^2   + \sqrt{r^2\log(n)/ n}.
%	\end{equation}

	\begin{theorem}\label{thm_one_col}
		Under Assumptions \ref{ass_Z}, \ref{ass_A_general} and \ref{ass_E_general},  assume $\kappa\ge c$ for some constant $c>0$. For any $i\in [r]$,  assume $q^{(0)}$ satisfies $\cE^{(i)} _{\init}(\mu,\nu)$ with some  $0< \nu < \mu < 1$ such that 
		\begin{equation}\label{cond_omega_n}
			      \sqrt{r^2\log(n)\over n}  + \sqrt{\epsilon^2p\log(n)\over n}+  \epsilon^2  ~ \le  c' \mu \nu^2
		\end{equation}
		holds for some sufficiently small constant $c'>0$. Then by choosing the step size $\gamma \le c''/\kappa$ for some small constant $c''>0$, 
%		\begin{equation}\label{cond_gamma}
%			\gamma \le  1/C
%		\end{equation} 
		with probability at least  $1-  2/n$,   any stationary point  $q$ of the PGD iterates $\{\bq^{(\ell)}\}_{\ell\ge 0}$ to  \eqref{obj_L4_tilde_main} satisfies  
		\begin{equation}\label{rate_one_col}
			 \min\Bigl\{\|\bq -  A_i\|_2,~ \|\bq +  A_i\|_2\Bigr\}   ~ \lesssim  ~   \omega_n + \epsilon^2.
		\end{equation}
		Moreover, with the same probability, \eqref{rate_one_col} is guaranteed to hold for any  $q^{(\ell)}$ as long as $\ell \ge  C \log(n/\max\{ r, \epsilon^2p\})$.
	\end{theorem}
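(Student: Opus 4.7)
The plan is to bootstrap from the landscape result in Theorem \ref{thm_critical} using a basin-of-attraction argument for the PGD iterates. By Lemma \ref{lem_connection}, it suffices to analyze the iteration for the oracle problem \eqref{obj_L4_tilde_main} directly. First observe that \eqref{cond_omega_n}, combined with $\|\bDelta\|_\op \lesssim \omega_n + \epsilon^2$ from Theorem \ref{thm_PCs} and $\sigma_N^2 \le \epsilon^2/c_\Lambda^2$ from \eqref{eq_Sigma_N}, implies the precondition \eqref{cond_n_r} of Theorem \ref{thm_critical}. Hence the task reduces to showing that the PGD iterates stay in a basin of attraction of $A_i$ on which any stationary point automatically satisfies the gap requirement \eqref{cond_critical}; once this is established, the rate \eqref{rate_one_col} follows from Theorem \ref{thm_critical}.

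I would decompose each iterate in the orthonormal basis $\bA$ as $q^{(\ell)} = \sum_j c_j^{(\ell)} A_j$ and track the evolution of the coordinates. A direct calculation using Assumption \ref{ass_Z} and $\|q\|_2 = 1$ gives the population-level Riemannian gradient
\[
-\bP_q^\perp\,\E\bigl[(q^\T \bA Z)^3 \bA Z\bigr] = -\kappa\, \bP_q^\perp \sum_j c_j^3 A_j,
\]
so a Riemannian step with $\gamma \lesssim 1/\kappa$ produces (modulo noise and higher-order terms) the coordinate updates $c_i^{(\ell+1)} \approx c_i^{(\ell)} + \gamma\kappa\, c_i^{(\ell)}\bigl(1-(c_i^{(\ell)})^2\bigr)$ and $c_j^{(\ell+1)} \approx c_j^{(\ell)} - \gamma\kappa\, (c_j^{(\ell)})^3$ for $j \ne i$, which actively amplifies the dominant coordinate and shrinks the others. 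The deviation of the finite-sample Riemannian gradient $\grad F(q; \bR\bU_{(r)})$ from its population version is controlled uniformly over $\Sp^r$ by the same concentration lemmas already used in the proof of Theorem \ref{thm_critical}, yielding a per-step coordinate perturbation of order $\omega_n + \sqrt{r^2\log n/n} + \epsilon^2$. When this is below $c'\mu\nu^2$ as assumed in \eqref{cond_omega_n}, an induction on $\ell$ shows that $(c_i^{(\ell)})^2 \ge 1/3$ and $|c_i^{(\ell)}| - \max_{j\ne i}|c_j^{(\ell)}| \gtrsim \nu$ for all $\ell \ge 0$, i.e.\ the basin is preserved.

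For the iteration-complexity claim, I would track the potential $\phi^{(\ell)} := 1 - (c_i^{(\ell)})^2 = \sum_{j\ne i} (c_j^{(\ell)})^2$. Within the basin, the population dynamics together with the uniform noise bound yield a one-step contraction $\phi^{(\ell+1)} \le (1 - c_1 \gamma\kappa)\, \phi^{(\ell)} + C(\omega_n + \epsilon^2)^2$, and iterating gives $\phi^{(\ell)} \lesssim (\omega_n + \epsilon^2)^2$ once $\ell \gtrsim \log(1/(\omega_n + \epsilon^2)) = \cO\bigl(\log(n/\max\{r, \epsilon^2 p\})\bigr)$. Every such iterate satisfies \eqref{cond_critical}, and Theorem \ref{thm_critical} delivers \eqref{rate_one_col}. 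The hardest step is the basin invariance: one must uniformly propagate both the alignment $|c_i^{(\ell)}|$ and the inter-coordinate gap under the non-identity noise covariance $\bSigma_N$ and the approximation error $\bOmega$. The $\mu\nu^2$ scaling in \eqref{cond_omega_n} (rather than the weaker $\mu\nu$ needed if only one-step preservation were required) arises because an extra factor of $\nu/\mu \le 1$ is consumed when comparing the perturbation of the gap to the gap itself through the cubic nonlinearity in the gradient, and cleanly propagating this through the $\cO(\log n)$ iterations is the main technical burden.
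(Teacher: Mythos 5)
Your proposal follows essentially the same route as the paper's proof: reduce to the oracle iterates, track the coordinates $\bA^\T \bq^{(\ell)}$, prove that the alignment-and-gap region is preserved while the dominant coordinate is amplified (this is exactly the content of Lemmas \ref{lem_iter_gap} and \ref{lem_iter_sup}), verify the stationary-point condition \eqref{cond_critical}, invoke \cref{thm_critical} for the rate \eqref{rate_one_col}, and use geometric contraction of $1-(A_i^\T\bq^{(\ell)})^2$ for the $\cO(\log n)$ iteration count; your accounting of the gradient concentration and of the $\mu\nu^2$ scaling also matches the paper's.

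One step as written would fail, though the fix is already contained in your own dynamics. You claim that an induction gives $(c_i^{(\ell)})^2\ge 1/3$ for all $\ell\ge 0$, but this is false at $\ell=0$ whenever $\mu<1/\sqrt{3}$, which is precisely the regime the theorem is built for (e.g.\ $\mu\asymp 1/\sqrt{r}$ under the random initialization of \cref{thm_RA_rand}); the base case of that induction has no support. The invariants that can be propagated are $|A_i^\T\bq^{(\ell)}|\ge\mu$ together with the gap $|A_i^\T\bq^{(\ell)}|-\max_{j\ne i}|A_j^\T\bq^{(\ell)}|\ge\min\{\nu,1/4\}$, plus the multiplicative growth $|A_i^\T\bq^{(\ell+1)}|\ge |A_i^\T\bq^{(\ell)}|\left(1+c\gamma\kappa\mu\nu\right)$ while the alignment stays in $[\mu,\sqrt{3}/2]$, and the contraction once it exceeds $\sqrt{3}/2$. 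The threshold $1/3$ in \eqref{cond_critical} is then recovered not along the whole trajectory but for stationary points (and for iterates after the growth phase), because a point whose alignment lies in $[\mu,\sqrt{3}/2]$ cannot be stationary given the strict per-step increase. With this two-phase restatement of the invariant, your argument coincides with the paper's.
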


	The proof  of \cref{thm_one_col} appears in \cref{app_proof_thm_one_col}.  The key to prove  \cref{thm_one_col} is to verify condition \eqref{cond_critical} for any obtained stationary point corresponding to the initialization $q^{(0)}$. To this end,  in Lemmas \ref{lem_iter_gap} \& \ref{lem_iter_sup} of \cref{app_sec_PGD}, we prove that the event $	\cE^{(i)} _{\init}(\mu,\nu)$, condition \eqref{cond_omega_n} and  $\gamma \le c''/\kappa$ ensure that, for any $\ell\ge 0$, 
	\begin{align}\label{lb_gap}
		&| A_i^\T \bq^{(\ell+1)}| - \max_{j\ne i}| A_j^\T \bq^{(\ell+1)}| \ge   \min\bigl\{\nu, ~ 1/4\bigr\}
	\end{align} 
	as well as 
	\begin{equation}\label{lb_sup}
		\begin{split}
				& | A_i^\T \bq^{(\ell+1)}| \ge	| A_i^\T \bq^{(\ell)}|  \left(1 + \gamma ~  \mu \nu \right),\hspace{3cm}  \text{ if } \mu \le | A_i^\T \bq^{(\ell)}|  \le \sqrt{3}/2;\\
				& | A_i^\T \bq^{(\ell+\ell')}|	\ge 1 -  (\omega_n+\epsilon^2)^2- \left(1 -  \gamma \right)^{\ell'},~    \forall~ \ell'\ge 1,\quad \text{ if } \sqrt{3}/2<| A_i^\T \bq^{(\ell)}| \le 1.
		\end{split} 
	\end{equation}
	Consequently, displays \eqref{lb_gap}  and  \eqref{lb_sup} together with condition \eqref{cond_omega_n}  ensure that any stationary point $q$  satisfies \eqref{cond_critical}, hence consistently estimates $A_i$ up to its sign. Moreover, by using the identity $\|A_i -v \|_2^2 = 2(1-A_i^\T v)$ for any $v\in \Sp^r$, it is easy to see that the contraction property in \eqref{lb_sup} guarantees \eqref{rate_one_col} after $\cO(\log n)$ iterations.

	\begin{remark}[Choice of the step size]\label{rem_stepsize}
			To ensure both the local contraction in \eqref{lb_sup} and algorithmic convergence, 
			 \cref{thm_one_col} requires the step size $\gamma$ can not be chosen too large. On the other hand,  although 
			 a too small choice of $\gamma$ will not affect the statistical accuracy ultimately, it  slows down the speed of algorithmic convergence, as seen from  \eqref{lb_sup}. In our extensive simulation, we found that $\gamma = 10^{-5}$ yields overall satisfactory results.  
	\end{remark}

	\begin{remark}[Effect of the initialization]\label{rem_effect_init}
		Thanks to \cref{thm_critical}, we are able to provide sufficient conditions in terms of $\mu$ and $\nu$  under which {\em any generic} initialization provably recovers one column of $\bA$. Furthermore, condition \eqref{cond_omega_n} in \cref{thm_one_col}  reveals that a better initialization, in the sense of larger $\mu$ and $\nu$, leads to a weaker requirement on both the sample size $n$ and the noise level $\epsilon^2$. 
		In \cref{sec_init}, we discuss two initialization schemes  and  establish their corresponding orders of $\mu$ and $\nu$ for which $\cE^{(i)}_{\init}(\mu,\nu)$  holds.
	\end{remark}

	Following \cref{thm_one_col}, we immediately have theoretical guarantees for the whole matrix $\wt \bQ = (\wt Q_1,\ldots,\wt Q_r)$ obtained from solving \eqref{obj_L4_tilde_main} $r$ times via PGD. A key observation for analyzing $\wt \bQ$ is that  the solutions $\wt Q_k$, for each $k\in [r]$, are obtained via the same PGD iterates but different initializations. Therefore, results in \eqref{lb_gap} and \eqref{lb_sup}
	as well as \cref{thm_one_col} are still applicable to analyze each $\wt Q_k$. This is summarized in the following corollary. 
	% summarizes this and shows that  $\wt\bQ$ estimates $\bA$ up to an $r\times r$ signed permutation matrix.
	%provides the rate of convergence of its estimation error in Frobenius norm. Further recall $\omega_n$ from \eqref{def_omega_n}.

	\begin{corollary}\label{cor_A}
		Grant the conditions in \cref{thm_one_col}. 
		%Fix any $0< \nu < \mu < 1$ such that \eqref{cond_omega_n} holds. 
		Assume there exists some permutation $\pi: [r] \to [r]$ such that  the initialization $\wt  Q_{\pi(k)}^{(0)}$ satisfies $\cE^{(k)}_{\init}(\mu,\nu)$ for all $k\in [r]$.  Then there exists some signed permutation matrix $\bP$ such that with probability at least $1-2/n$, 
		\begin{align*}
			  \|\wt \bQ - \bA \bP \|_{F}~  \lesssim ~   (\omega_n + \epsilon^2)\sqrt{r} . 
		\end{align*}
	\end{corollary}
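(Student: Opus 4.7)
The plan is to reduce the corollary to $r$ parallel applications of \cref{thm_one_col}, one for each column of $\wt\bQ$, and then assemble the resulting per-column bounds into a Frobenius-norm bound.

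First, I would fix the permutation $\pi$ furnished by the hypothesis and, for each $k\in[r]$, view $\wt  Q_{\pi(k)}$ as the stationary point produced by PGD on \eqref{obj_L4_tilde_main} initialized at $\wt  Q_{\pi(k)}^{(0)}$. Since this initialization lies in $\cE^{(k)}_{\init}(\mu,\nu)$, \cref{thm_one_col} applies and yields a sign $s_k\in\{-1,+1\}$ with
\[
\|\wt  Q_{\pi(k)} - s_k A_k\|_2 ~ \lesssim~ \omega_n+\epsilon^2.
\]
The key subtlety here is that the statement promises overall probability $1-2/n$, not $1-2r/n$, so I cannot simply union-bound. Instead I would note that the high-probability guarantees underpinning \cref{thm_critical} and \cref{thm_one_col} are driven by concentration inequalities that are uniform in $\bq\in\Sp^r$ (these are the events built in Lemmas \ref{lem_bd_grad_F_tilde}, \ref{lem_bd_grad_pointwise}, \ref{lem_HH}, \ref{lem_grad_lip}, and the monotonicity/gap controls of Lemmas \ref{lem_iter_gap} \& \ref{lem_iter_sup}). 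On that single event, the per-column bound above holds simultaneously for every $k\in[r]$, so no additional probability loss is incurred.

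Next I would construct the signed permutation matrix $\bP\in\R^{r\times r}$ by setting $P_{k,\pi(k)} = s_k$ and all other entries to zero. With this choice, the $\pi(k)$-th column of $\bA\bP$ equals $s_k A_k$, which matches the column of $\wt\bQ$ we just bounded. Summing the squared per-column errors gives
\begin{align*}
\|\wt\bQ - \bA\bP\|_F^2
~=~ \sum_{k=1}^r \bigl\|\wt  Q_{\pi(k)} - s_k A_k \bigr\|_2^2
~\lesssim~ r\,(\omega_n+\epsilon^2)^2,
\end{align*}
and taking square roots yields the claimed rate $(\omega_n+\epsilon^2)\sqrt{r}$.

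I do not expect a serious obstacle in the argument; the content is almost entirely bookkeeping, and the main things to verify carefully are (i) that the same realization of the sample $\bZ,\bE$ (and the same $\bR$ from \cref{thm_PCs}) lies in the concentration event for all $r$ optimizations, so that the $1-2/n$ probability is preserved, and (ii) that the identification of the sign $s_k$ (the ``$\pm$'' appearing in \eqref{rate_one_col}) is consistent across $k$, so the resulting $\bP$ is genuinely a signed permutation. Both follow from the construction once the uniform-over-$\Sp^r$ nature of the underlying concentration event is invoked.
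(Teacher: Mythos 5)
Your proposal is correct and follows essentially the same route as the paper, which treats the corollary as an immediate column-by-column application of \cref{thm_one_col} (the same PGD iterates with different initializations), followed by the Frobenius-norm bookkeeping you describe; your observation that the probability does not degrade in $r$ because the underlying concentration events are uniform over $\Sp^r$ is exactly the point the paper relies on.
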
  
	
	Note that $\wt Q_1^{(0)}, \ldots, \wt Q_r^{(0)}$  in \cref{cor_A} correspond to the initial points of the oracle problem in \eqref{obj_L4_tilde_main} whereas in practice  we only have access to initializations %$\wh Q_1^{(0)}, \ldots, \wh Q_r^{(0)}$ 
	to \eqref{obj_L4}. Nevertheless,   \cref{lem_connection} ensures that we only need to establish the order of $\mu$ and $\nu$ such that $\cE^{(k)}_{\init}(\mu, \nu)$ holds for  $\wt Q_k^{(0)} := \bR \wh Q_k^{(0)}$ for each $k\in [r]$, with   $\wh Q_1^{(0)}, \ldots, \wh Q_r^{(0)}$ being any initialization of the PGD iteration to solve \eqref{obj_L4}. The following theorem states the theoretical guarantees of the corresponding solution $\wc\bQ$ obtained from applying \cref{alg_rotate} to \eqref{obj_L4}. %, up to an $r\times r$ signed permutation matrix $\bP$.
	
	\begin{theorem}\label{thm_RA}
		Grant the conditions in \cref{thm_one_col}. 
		%Fix any $0< \nu < \mu < 1$ such that \eqref{cond_omega_n} holds and choose $\gamma \le c$ for some sufficiently small constant $c>0$.     
		Assume there exists some permutation $\pi: [r] \to [r]$ such that $\bR \wh  Q_{\pi(k)}^{(0)}$ satisfies $\cE^{(k)}_{\init}(\mu,\nu)$ for all $k\in [r]$.  Then there exists some signed permutation matrix $\bP$ such that with probability at least $1-2/n$, 
		\begin{align*}
			\|\wc \bQ - \bR^\T\bA \bP \|_{F}~  \lesssim ~    (\omega_n + \epsilon^2)\sqrt{r} .
		\end{align*}
	\end{theorem}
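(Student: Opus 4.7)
The plan is to reduce the bound on $\wc\bQ$ to the oracle bound already established in Corollary \ref{cor_A}, using Lemma \ref{lem_connection} to transfer between the actual and oracle PGD iterates, the invariance of the Frobenius norm under orthogonal transformations, and the projection property defining $\wc\bQ$ in \eqref{def_wc_Q}.

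First, I would set $\wt Q_k^{(0)} := \bR\wh Q_k^{(0)}$ for each $k \in [r]$ and consider the $r$ PGD iterations that solve the oracle problem \eqref{obj_L4_tilde_main} with these initializations. By hypothesis, $\wt Q_{\pi(k)}^{(0)} = \bR\wh Q_{\pi(k)}^{(0)}$ satisfies $\cE^{(k)}_{\init}(\mu,\nu)$ for all $k \in [r]$, so Corollary \ref{cor_A} applies and yields a signed permutation matrix $\bP$ such that, with probability at least $1-2/n$,
\[
\|\wt\bQ - \bA\bP\|_F ~\lesssim~ (\omega_n + \epsilon^2)\sqrt{r},
\]
where $\wt\bQ = (\wt Q_1,\ldots,\wt Q_r)$ collects the corresponding stationary points of \eqref{obj_L4_tilde_main}.

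Next I invoke Lemma \ref{lem_connection} column-by-column: since $\wh Q_k^{(0)} = \bR^\T \wt Q_k^{(0)}$ by construction, the lemma gives $\wh Q_k^{(\ell)} = \bR^\T \wt Q_k^{(\ell)}$ for all $\ell \ge 1$, and hence the stationary points satisfy $\wh Q_k = \bR^\T \wt Q_k$ for every $k$. Stacking columns yields $\wh\bQ = \bR^\T \wt\bQ$, so by orthogonal invariance of the Frobenius norm,
\[
\|\wh\bQ - \bR^\T\bA\bP\|_F = \|\bR^\T(\wt\bQ - \bA\bP)\|_F = \|\wt\bQ - \bA\bP\|_F \lesssim (\omega_n + \epsilon^2)\sqrt{r}.
\]

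Finally I pass from $\wh\bQ$ to $\wc\bQ$. Observe that $\bR^\T\bA\bP \in \bbO_{r\times r}$ since each of $\bR$, $\bA$, $\bP$ is orthogonal. By the defining property of $\wc\bQ$ in \eqref{def_wc_Q} as the Frobenius projection of $\wh\bQ$ onto $\bbO_{r\times r}$,
\[
\|\wc\bQ - \wh\bQ\|_F ~\le~ \|\bR^\T\bA\bP - \wh\bQ\|_F,
\]
so the triangle inequality gives $\|\wc\bQ - \bR^\T\bA\bP\|_F \le 2\|\wh\bQ - \bR^\T\bA\bP\|_F \lesssim (\omega_n + \epsilon^2)\sqrt{r}$, which is the claim. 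The only non-routine step is the invocation of Corollary \ref{cor_A}, whose applicability hinges precisely on verifying the initialization event for the rotated initializations $\bR\wh Q_k^{(0)}$ — but this is exactly the hypothesis of the theorem, so no further obstacle arises.
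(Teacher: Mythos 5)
Your proposal is correct and follows essentially the same route as the paper's proof: transfer to the oracle iterates via Lemma \ref{lem_connection}, use Frobenius-norm invariance under the rotation $\bR$, bound $\|\wc\bQ - \bR^\T\bA\bP\|_F \le 2\|\wh\bQ - \bR^\T\bA\bP\|_F$ via the projection property of \eqref{def_wc_Q}, and conclude with Corollary \ref{cor_A}. The only cosmetic difference is that the paper assumes $\bP = \bI_r$ without loss of generality at the outset, whereas you carry $\bP$ along explicitly.
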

	\begin{proof}
		Its proof appears in \cref{app_proof_thm_RA}.
	\end{proof}

	\begin{remark}[Rotated principal components from the deflation varimax]\label{rem_Z}
		On the one hand,  \cref{thm_RA} shows that the proposed deflation varimax rotation makes valid statistical inference, namely, it consistently estimates $\bR^\T \bA$ up to a signed permutation matrix. Similar results are proved in \cite{rohe2020vintage}  for the global solution to the varimax criterion in \eqref{obj_varimax} under a different model.  On the other hand, by recalling the decomposition in \eqref{decom_Ur} of \cref{thm_PCs}, we see that the rotated PCs of using the deflation varimax satisfy
		\[
		\wc \bQ^\T \bU_{(r)} \approx   \wc \bQ^\T 
		\bR^\T (\bA  \bZ  + \bN) \approx \bP^\T  (\bZ +  \bA^\T \bN).
		\]
		In words, the rotated PCs predict  the factor matrix $\bZ$, up to a scaling ($\sigma = 1$) and signed permutation of the rows as well as an additive error matrix $\bA^\T \bN$. Furthermore, by the structure of $\bSigma_N$ in \eqref{eq_Sigma_N},  columns of $\bA^\T   \bN$  are i.i.d.  Gaussian under \cref{ass_E_general} with zero mean and the covariance matrix bounded by $\epsilon^2/c_\Lambda^2$ in the operator norm.
	\end{remark}

		\begin{remark}[Analysis of the deflation varimax in a broader context]\label{rem_analysis}
			Although theoretical guarantees of the proposed deflation varimax  so far  are stated for the PCs under the decomposition  in \eqref{decom_Ur}, our analysis in fact is carried out under a more general setting. Concretely, suppose one has access to  an $r\times n$ matrix $\wh\bY$ that satisfies 
			\begin{equation}\label{model_Y_hat_main}
				\wh\bY = \bR^\T \left(\bA\bZ +  \bN  + \bOmega\right)
			\end{equation}
			where $\bA\in \bbO_{r\times r}$ is a deterministic  matrix while $\bZ$ and $\bN$ have  i.i.d. columns satisfying Assumptions \ref{ass_Z} \& \ref{ass_E_general}, respectively, with $\sigma^2=1$ and some general covariance matrix $\bSigma_N$.
			In \eqref{model_Y_hat_main} both the rotation matrix
			$\bR\in \bbO_{r\times r}$ and the approximation matrix $\bOmega \in \R^{r\times n}$ could be random, and are further allowed to correlate with each other, as well as with $\bZ$ and $\bN$. In \cref{app_sec_theory_varimax} we give detailed statements of our theoretical guarantees of the proposed deflation varimax under this general model. Clearly the decomposition of $\bU_{(r)}$ in \eqref{decom_Ur} is one instance of model \eqref{model_Y_hat_main}. Below we discuss another two important examples that are subcases of model \eqref{model_Y_hat_main}, and  to which both our procedure and our theory are applicable. 
			
			\begin{enumerate}
				\item[(a)] {\em Independent Component Analysis (ICA).} ICA	 often assumes the model $\bY = \bL \bZ$ for a deterministic, non-singular matrix $\bL \in \R^{r\times r}$ and a random matrix $\bZ \in \R^{r\times n}$ with independent, non-Gaussian entries that have zero mean and equal variance   $\sigma^2>0$. A common first step of various ICA approaches is to whiten the data by using the square root matrix of the sample covariance matrix $\wh \bSigma_Y = n^{-1}\bY\bY^\T$ \citep{hyvarinen2000independent}. Consequently, by writing $\bSigma_Y = \EE[\wh \bSigma_Y]$ and the SVD of $\bL = \bV_L \bD_L \bU_L^\T$, we have 
				\[
				\wh \bSigma_Y^{-1/2} \bY = \bV_L\bU_L^\T  \bZ / \sigma + \bigl(\wh \bSigma_Y^{-1/2} - \bSigma_Y^{-1/2}\bigr)\bY.
				\]
				This is a sub-model of \eqref{model_Y_hat_main} with $\bR = \bI_r$, $\bA = \bV_L\bU_L^\T$, $\bN = \0$ and $\bOmega = (\wh \bSigma_Y^{-1/2} - \bSigma_Y^{-1/2})\bY$. When entries of $\bZ$ have leptokurtic distributions,   our proposed deflation varimax can be applied to estimate $\bV_L\bU_L^\T$ and to further estimate the matrix $\bL$. Moreover, our theory developed in \cref{sec_theory_alg} provides provable guarantees for the resulting estimator. 
				
				We now compare with the existing provable ICA algorithms  from a technical perspective. Under model $\bY = \bL \bZ$ with an orthonormal $\bL \in \bbO_{r\times r}$, 	\cite{anandkumar2014tensor} proposes an algorithm for estimating $\bL$ via orthogonal tensor decomposition and power iterations in a deflation manner. This estimator, together with the subsequent analysis in \cite{anandkumar2015learning},  has the estimation error under the Frobenius norm of order $\cO_\P(\sqrt{r^2/n} + {r^{5/2}/n})$, up to some multiplicative polynomial factor of $\log(n)$, provided that $n\gg r^{5/2}$. In \cite{auddy2023}, an improved estimator which is obtained via fixed-point iterations and also in a deflation manner  is  shown to achieve the minimax optimal rate $\cO_\P(\sqrt{r^2/n})$ under the Frobenius norm. The required sample complexity is also reduced to $n \gg r^2$. To estimate a general non-singular matrix $\bL$,   \cite{anandkumar2014guaranteed} shows that a refined yet complicated procedure based on  tensor decomposition and clustering provably estimates $\bL$ in the same rate as $\cO_\P(\sqrt{r^2/n} + {r^{5/2}/n})$ under the Frobenius norm but with a slower column-wise $\ell_2$-norm. Differently,  \cite{auddy2023} adopts data splitting for whitening  and proves that the same optimal rate $\sqrt{r^2/n}$  can be achieved under the same sample complexity.  
				
				In our case of model \eqref{model_Y_hat_main}, we have to deal with both the additive error $\bN$ and the approximation error $\bOmega$ as well as the random rotation $\bR$. Consequently both the estimation error and the required sample complexity consist of other terms in addition to $\sqrt{r^2/n}$. On the other hand, our analysis in \cref{thm_one_col} is able to characterize the  effect of {\em any generic} initialization whereas both the analysis in \cite{anandkumar2014guaranteed,anandkumar2015learning} and \cite{auddy2023} is only valid for a sufficiently good initialization ($\mu$ in \eqref{def_event_init} needs to be close to $1$ whence $\nu$ is also of constant order).  Finally,  our method does not rely on either data splitting  or complicated tensor decomposition.  
				
				\item[(b)]  {\em Complete Dictionary Learning.}
				As mentioned in Section \ref{sec:intro}, another related problem is the complete dictionary learning where the input data $\wh\bY$ follows model \eqref{model_Y_hat_main} with $\bR = \bI_r$, $\bN = \bOmega = \0$ and entries of $\bZ$ are further assumed to be i.i.d. from a Bernoulli-Gaussian distribution. In this setting, a line of recent works \citep{spielman2013exact,sun2016complete,bai2018subgradient,qu2020geometric,zhai2020complete,zhai2020Understanding,jin2021unique} focus on designing provable algorithms to recover $\bA$. In particular, \cite{sun2016complete,bai2018subgradient,zhang2018structured,jin2021unique} also consider the optimization problem in  \eqref{obj_L4} and analyze its optimization landscape. 
				% a line of recent work has studied the optimization landscape of \eqref{obj_L4_hat} in the complete Dictionary Learning problem where  
				% %Since this is a sub-model of \eqref{model_Y_hat}, both our approach and theory can be applied to estimate the dictionary $\bA$.         
				% Although maximization of a $\ell_4$ objective over the unit sphere has appeared in several existing works , its benign optimization landscape has not been established when $\bX$ contains the additive error $\bN$. In \cite{zhai2020Understanding}, only a result similar to our \cref{lem:obj_exp} heuristically suggests the usage of \eqref{obj_L4} in the setting where $r = p$. Comparing to \cite{jin2021unique} which analyzes \eqref{obj_L4} when $p \ge r$ and $\bN= \0$, existence of the additive noise in $\bX$ brings extra difficulties for both characterizing the stationary points (see, Lemmas \ref{lem:sample_R2_nega_short} and \ref{lem:R1_three_case_short}) and establishing sharp concentration inequalities of the Riemannian gradient and Hessian of \eqref{obj_L4} between the population level and the sample level (see, Lemma \ref{lem_new_bd_grad} \& \ref{lem_new_bd_hess}).
				However, our results developed in this section are applicable in much broader settings.
			\end{enumerate} 
		\end{remark}

	\subsection{Theoretical guarantees on estimating the loading matrix}\label{sec_rotation_loading}

	Armed with the theoretical guarantees of the deflation varimax rotation in Theorems \ref{thm_RA_rand} and \ref{thm_RA_mrs}, we now establish finite sample upper bounds of the estimation error of $\wh \bLambda$ given by  \eqref{def_Lambda_hat} under the Frobenius norm.
	
	\begin{theorem}\label{thm_A_general}
		Grant  conditions in   \cref{thm_RA}  as well as its conclusion.  For the same signed permutation $\bP$ therein,  %with probability at least $1-n^{-1}$, 
		the estimator $\wh \bLambda$ in \eqref{def_Lambda_hat} satisfies 
		\begin{equation}\label{rate_Lambda}
			%\P\left\{
			\|\wh \bLambda - \bLambda\mb P\|_{F}~  = ~  \cO_\P\left(    \omega_n  \sqrt{r}  + \epsilon^2  \sqrt{r}  \right).
			%\right\} \ge 1 - 3n^{-1} - re^{-2r} - r^2\delta.
		\end{equation} 
	\end{theorem}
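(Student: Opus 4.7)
The plan is to identify $\bM := \bV_{(r)}\bD_{(r)}^{1/2}\wc\bQ$ with the target $\bLambda\bP = \bL\bS\bA\bP$ through the very same rotation $\bR$ appearing in \cref{thm_PCs}, and then handle the normalization factor separately. The key algebraic identity I would start from is the telescoping decomposition
\begin{equation*}
  \bM - \bLambda\bP = \bV_{(r)}\bD_{(r)}^{1/2}\bigl(\wc\bQ - \bR^\T\bA\bP\bigr) + \bigl(\bV_{(r)}\bD_{(r)}^{1/2}\bR^\T - \bL\bS\bigr)\bA\bP,
\end{equation*}
which reduces everything to (a) the rotation-matrix error already controlled by \cref{thm_RA} and (b) a spectral perturbation bound for $\bV_{(r)}\bD_{(r)}^{1/2}$ viewed as an estimator of $\bL\bS$.

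For step (b) I would show, using the same concentration bounds on $\bX\bX^\T/n$ that underlie the proof of \cref{thm_PCs}, that with high probability
\begin{equation*}
  \bigl\|\bV_{(r)}\bD_{(r)}^{1/2}\bR^\T - \bL\bS\bigr\|_F \lesssim \sqrt{r}\,(\omega_n+\epsilon^2).
\end{equation*}
This is essentially a Davis-Kahan bound on $\bV_{(r)}$ combined with Weyl's inequality on $\bD_{(r)}$, with $\bR$ chosen to align the empirical eigenvectors with the population ones (precisely the rotation that made the decomposition \eqref{decom_Ur} valid). Combining with \cref{thm_RA} and the trivial bound $\|\bV_{(r)}\bD_{(r)}^{1/2}\|_\op=\sqrt{d_1}\le 1+\cO(\omega_n+\epsilon^2)$ from Weyl yields $\|\bM-\bLambda\bP\|_F \lesssim \sqrt{r}(\omega_n+\epsilon^2)$.

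For the normalization, I would write
\begin{equation*}
  \wh\bLambda - \bLambda\bP = \frac{\bM-\bLambda\bP}{\|\bM\|_\op} + \bLambda\bP\cdot\frac{1-\|\bM\|_\op}{\|\bM\|_\op},
\end{equation*}
so that $\|\wh\bLambda-\bLambda\bP\|_F$ is controlled by $\|\bM-\bLambda\bP\|_F/\|\bM\|_\op + \|\bLambda\|_F\,|1-\|\bM\|_\op|/\|\bM\|_\op$. The crucial point for matching the claimed rate is that $|\|\bM\|_\op-1|$ carries no $\sqrt{r}$ factor: since $\bM^\T\bM = \wc\bQ^\T\bD_{(r)}\wc\bQ$ has the same eigenvalues as $\bD_{(r)}$, we have $\|\bM\|_\op=\sqrt{d_1}$, and Weyl gives $|d_1-\sigma_1^2(\bLambda)|\lesssim \omega_n+\epsilon^2$ directly at the operator-norm level. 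Using $\|\bLambda\|_F\le\sqrt{r}$ and $\|\bM\|_\op\ge 1/2$, the normalization term contributes at most $\cO(\sqrt{r}(\omega_n+\epsilon^2))$, which is absorbed into the claimed rate.

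The main obstacle is a bookkeeping one rather than a technical one: ensuring that the rotation $\bR$ produced inside the proof of \cref{thm_PCs} is literally the same rotation that both (i) aligns $\bV_{(r)}\bD_{(r)}^{1/2}$ with $\bL\bS$ at the operator-norm level and (ii) appears in the target $\bR^\T\bA\bP$ of \cref{thm_RA}. Once the construction of $\bR$ is extracted cleanly from \cref{thm_PCs} so that both perturbation bounds use the \emph{same} $\bR$, the rest of the argument is routine. A second, minor caveat is that the operator-norm control on $\|\bM\|_\op$ (as opposed to a Frobenius control) is what prevents the bound from degrading to $r(\omega_n+\epsilon^2)$; this is why the direct Weyl argument on $d_1$ rather than a crude $\|\cdot\|_\op\le\|\cdot\|_F$ step is essential.
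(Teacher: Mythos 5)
Your proposal is correct and takes essentially the same route as the paper's proof: the same telescoping into the rotation error controlled by \cref{thm_RA}, the spectral error of $\bV_{(r)}\bD_{(r)}^{1/2}$ against $\bL\bS$ through the same alignment matrix $\bR$ from \cref{thm_PCs}, and an operator-norm control of the normalization factor. The only differences are organizational: the paper splits your second term into separate eigenvector and eigenvalue pieces and bounds $|1-\|\wt\bLambda\|_\op|$ via the eigenvalues of $\sigma^{-2}\bS^{-1}\bR\bD_{(r)}\bR^\T\bS^{-1}$ (using the identity $\bD_{(r)}=\bV_{(r)}^\T\wh\bSigma_X\bV_{(r)}$ rather than eigenvalue-wise Weyl alone, since no eigengap within $\bS$ is assumed), whereas you use $\|\bM\|_\op=\sqrt{d_1}$ directly, and both yield the same rate.
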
 
	\begin{proof}
		 Its proof appears in \cref{app_proof_thm_A_general}.
	\end{proof}
	
	 \cref{thm_A_general} also implies  that the loading matrix $\bLambda$ is identifiable up to an $r\times r$ signed permutation matrix. As mentioned after \cref{ass_Z}, if all the coordinates of $Z$ in model \eqref{model_X} have distinct kurtoses, such permutation ambiguity can be removed. 
	
	\begin{remark}[Effects of the ambient dimension and the signal-to-noise ratio]\label{rem_optimality}
		We discuss how the ambient dimension $p$ and the reciprocal of the signal-to-noise ratio (SNR) $\epsilon^2$  affect the estimation of $\bLambda$. After ignoring the $\log(n)$ term, display \eqref{rate_Lambda} in  high-dimensional case ($p \ge n$) reduces to 
		\begin{equation}\label{rate_large_p}
			\|\wh \bLambda - \bLambda\mb P\|_F ~ = \cO_\P\left(
			\sqrt{r^2 \over n} + \sqrt{\epsilon^2 p r \over n}
			\right).
		\end{equation}
		In the low-dimensional case with  $n  \epsilon^2 \le p < n$, we have the same rate as above. \cref{thm_lowerbounds} below shows that the rate in \eqref{rate_large_p} is in fact minimax optimal  under model \eqref{model_X}. On the other hand, if $p  < n\epsilon^2$, then the rate in \eqref{rate_Lambda} becomes 
		\begin{equation}\label{rate_small_p}
			\|\wh \bLambda - \bLambda\mb P\|_F ~ =  
			\cO_\P\left(
			\sqrt{r^2 \over n} +  \epsilon^2 \sqrt{r}
			\right).
		\end{equation}
		If additionally $r \le n\epsilon^4$ holds, the second term becomes dominant. Therefore, the estimator $\wh\bLambda$ is sub-optimal when 
		the SNR is small in the precise sense that 
		$\epsilon^2 \ge  \min\{p/n, \sqrt{r/n}\}$.
		In \cref{sec_ext} we show that this sub-optimality can be remedied if the noise $E$ is structured.
	\end{remark}

	To benchmark the rate obtained in \cref{thm_A_general}, we establish the minimax lower bounds of estimating $\bLambda$ in the following theorem. Consider the following parameter space of $\bLambda$:
	\begin{equation*}
		\cA := \left\{
		\bLambda \in \R^{p\times r}: \bLambda \text{ satisfies \cref{ass_A_general}}
		\right\}.
	\end{equation*}  
	Our minimax lower bounds are established for a fixed distribution of $Z$ under \cref{ass_Z} and $E\sim \cN_p(0, \epsilon^2\sigma^2\bI_p)$. Let $\P_{\bLambda,\epsilon}$ denote the distribution of $\bX$ parametrized by both $\bLambda\in \cA$ and $\epsilon^2$, the reciprocal of the SNR.
	
	\begin{theorem}\label{thm_lowerbounds}
		Under model \eqref{model_X} and Assumptions \ref{ass_Z} \& \ref{ass_E_general} with $\bSigma_E = \bI_p$, assume $r(p-1) \ge 50$. 
		Then there exists some absolute constants $c>0$ and $c'\in (0,1)$ such that 
		\[
		\inf_{\wh\bLambda}\sup_{\bLambda\in \cA, \epsilon^2\ge 0} \P_{\bLambda,\epsilon}\left\{
		\|\wh\bLambda - \bLambda\|_F \ge c\sqrt{r^2\over n}   +c   \sqrt{\epsilon^2 pr \over n}
		\right\} \ge c'. 
		\]
		Here the infimum is taken over all estimators.
	\end{theorem}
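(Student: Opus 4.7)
The plan is to apply Fano's inequality with a single local packing on $\bbO_{p\times r}$ together with a KL upper bound derived via conditional Gaussianity, and then to extract both terms of the claimed lower bound by evaluating the supremum at two different values of $\epsilon^2$. The key analytic input is the conditional-Gaussianity KL bound: since $X \mid Z \sim \cN_p(\bLambda Z, \epsilon^2\sigma^2 \bI_p)$, the chain rule combined with the data-processing inequality yields
\[
\mathrm{KL}\!\bigl(\P_{\bLambda,\epsilon}^{\otimes n} \,\big\|\, \P_{\bLambda',\epsilon}^{\otimes n}\bigr) \;\le\; n\,\EE_Z\bigl[\mathrm{KL}\bigl(\cN_p(\bLambda Z, \epsilon^2\sigma^2 \bI_p) \,\big\|\, \cN_p(\bLambda' Z, \epsilon^2\sigma^2 \bI_p)\bigr)\bigr] \;=\; \frac{n\,\|\bLambda - \bLambda'\|_F^2}{2\epsilon^2},
\]
using $\EE[ZZ^\T] = \sigma^2 \bI_r$; note the bound is agnostic to the non-Gaussianity of $Z$ because only its second moment enters.

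For the packing, fix the anchor $\bL_0 = (\bI_r;\ \mathbf{0})^\T$ and parametrize $\bLambda = \bL \in \bbO_{p\times r}$, so that $\bS = \bA = \bI_r$, every singular value of $\bLambda$ equals $1$, and \cref{ass_A_general} is satisfied automatically with $c_\Lambda = 1$. The Stiefel manifold has local dimension $pr - r(r+1)/2 \ge r(p-1)/2$, so the hypothesis $r(p-1) \ge 50$ guarantees a local dimension of at least $25$. A standard Gilbert-Varshamov-type construction, implemented via a smooth retraction of tangent vectors parametrized by $\omega \in \{-1,+1\}^{d}$ with $d = \Omega(r(p-1))$, produces a packing $\{\bL_k\}_{k=1}^M$ with $\log M \ge c\,r(p-1)$ and pairwise Frobenius distances in $[c_1\delta, c_2\delta]$. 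Choosing $\delta = \delta(\epsilon^2) = c_0\sqrt{\epsilon^2 pr/n}$ with $c_0$ small enough makes the KL bound at most $\tfrac12 \log M$, and Fano's inequality then delivers $\P\{\|\wh\bLambda - \bLambda\|_F \ge c''\sqrt{\epsilon^2 pr/n}\} \ge c'$ at some $(\bLambda, \epsilon^2)$ in the packing.

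The two terms of the claimed lower bound are then obtained by evaluating the above guarantee at two representative points of the supremum: the choice $\epsilon^2 = r/p$ (which lies in $[0,\infty)$) converts it into $\P\{\|\wh\bLambda - \bLambda\|_F \ge c''\sqrt{r^2/n}\} \ge c'$, delivering the first term, while any larger $\epsilon^2$ yields the second term $\sqrt{\epsilon^2 pr/n}$. Since $c\sqrt{r^2/n} + c\sqrt{\epsilon^2 pr/n} \le 2c\,\max(\sqrt{r^2/n},\sqrt{\epsilon^2 pr/n})$, a sufficiently small absolute constant $c$ absorbs the factor of $2$ so that both cases are handled under a common $c''$. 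The main technical obstacle will be the packing construction itself, namely producing matched two-sided Frobenius distance bounds $c_1\delta \le \|\bL_k - \bL_{k'}\|_F \le c_2\delta$ on the curved Stiefel manifold rather than in the ambient space; this is standard via tangent-space retractions, but requires care to verify that the retraction is bi-Lipschitz at the scale $\delta = O(\sqrt{\epsilon^2 pr/n})$, a mild requirement in every regime covered by the theorem.
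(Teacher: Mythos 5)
Your treatment of the $\sqrt{\epsilon^2 pr/n}$ term is a genuinely different and essentially sound route compared with the paper. The paper fixes the column norms, reduces the problem to the class $\mc L$ of loading matrices with unit-norm columns, and invokes the local minimax bound of \cite{Jung2016}; you instead build a local packing on $\bbO_{p\times r}$ directly and control the divergence through conditional Gaussianity, $\mathrm{KL}\bigl(\P_{\bLambda,\epsilon}^{\otimes n}\,\big\|\,\P_{\bLambda',\epsilon}^{\otimes n}\bigr)\le n\|\bLambda-\bLambda'\|_F^2/(2\epsilon^2)$, which only uses $\EE[ZZ^\T]=\sigma^2\bI_r$ and hence is valid for any fixed law of $Z$ satisfying \cref{ass_Z}. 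Restricting the hypotheses to the Stiefel manifold keeps them inside $\cA$, the two-sided local packing with $\log M\gtrsim r(p-1)$ is standard, and your Fano calculation with $\delta \asymp \sqrt{\epsilon^2 pr/n}$ is correct for each fixed $\epsilon^2$ in the range where $\delta$ is below the bi-Lipschitz scale of the retraction. As a self-contained replacement for the paper's appeal to \cite{Jung2016} on that term, this is fine.

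The genuine gap is the $\sqrt{r^2/n}$ term. In the paper this term comes from the noiseless case: at $\epsilon^2=0$ the ICA lower bound of \cite{auddy2023} already forces error of order $\sqrt{r^2/n}$, an obstruction created by the finite-sample difficulty of identifying $\bLambda$ through the non-Gaussianity of $Z$, which is independent of the noise level. Your argument cannot capture this: the conditional-Gaussian KL bound blows up as $\epsilon^2\to 0$ (the hypotheses become mutually singular), so you instead manufacture the term by choosing $\epsilon^2=r/p$, which is just the second term evaluated at one particular noise level. Under the most literal reading of the display (a single supremum over $(\bLambda,\epsilon^2)$, with the threshold evaluated at the chosen $\epsilon^2$) this does yield the stated inequality, but it proves strictly less than the paper's proof and less than what the theorem is used for: the optimality claims in \cref{rem_optimality} require the $\sqrt{r^2/n}$ floor at a given small $\epsilon^2$ (e.g.\ $\epsilon^2\asymp 1/p$, or $\epsilon^2=0$), and for any $\epsilon^2<r/p$ your packing only gives $\sqrt{\epsilon^2 pr/n}\ll\sqrt{r^2/n}$. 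To close this you must either import the noiseless non-Gaussian lower bound of \cite{auddy2023} (and, if the per-$\epsilon^2$ statement is wanted, transfer it to $\epsilon^2>0$ by the observation that adding independent noise cannot reduce minimax risk), or construct hypotheses whose indistinguishability is carried by the law of $Z$ itself rather than by the Gaussian noise; a pure Gaussian-location Fano computation cannot produce a noise-independent $\sqrt{r^2/n}$ bound.
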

	\begin{proof}
		The proof appears in \cref{app_proof_thm_lowerbounds}.
	\end{proof}
	
%	\cref{thm_lowerbounds} on the one hand implies that the term $\sqrt{\epsilon^2pr/n}$ in \eqref{rate_large_p} can not be further improved. On the other hand, consider the case when $E =\0$ hence $\epsilon^2= 0$ and $\bX = \bLambda \bZ = \bL \bS \bA \bZ$  recalling that \eqref{svd_A}. The problem reduces to the classical ICA problem where one has access to $\bL^\T\bX = \bS \bA\bZ$  and the goal is to recover the $r\times r$ mixing matrix $\bS \bA$ as $\bL$ (or more precisely its column space) is known. In this case, the semi-parametric efficiency bound in \cite[Theorem 3.1]{ChenBickel} for estimating the unmixing matrix $\bA^\T \bS^{-1}$ is $r/\sqrt{n}$, corresponding to the first term of the rate in \eqref{rate_large_p}. Summarizing, we conclude that the rate in \eqref{rate_large_p} is optimal  up to a $\sqrt{\log(n)}$ factor, and cannot be further improved.
	\cref{thm_lowerbounds} implies that the rate in \eqref{rate_large_p} is minimax optimal, and cannot be further improved.
	
	Our lower bounds in \cref{thm_lowerbounds} are new in the literature. In the special case $\epsilon^2=0$ whence $\bX = \bLambda \bZ$, \cite{auddy2023} proves that the minimax lower bounds of estimating $\bLambda$ are of order $\sqrt{r^2/n}$ under Assumptions \ref{ass_Z} \& \ref{ass_A_general}. For $\epsilon^2 > 0 $,  minimax lower bounds of estimating $\bLambda$ when $\bLambda$ has orthonormal columns are established in \cite{VuLei2013}. Although $\bLambda \in \bbO_{p\times r}$ satisfies \cref{ass_A_general}, the results in \cite{VuLei2013} are proved under a class of Gaussian distributions of $X$ under model  \eqref{model_X}, hence are not applicable to our setting under \cref{ass_Z}.  For non-Gaussian factor $Z$ and some loading matrix $\bLambda$ with columns having the unit  $\ell_2$-norm, \cite{Jung2016} establishes a local minimax lower bound of estimating $\bLambda$ in the context of ICA. Our proof makes adaptation of their results to our context.

		\section{Initialization}\label{sec_init}
	
	Since we adopt the same PGD in \eqref{iter_PGD} multiple times, it is crucial to have proper initializations. Our analysis in  \cref{thm_one_col} of \cref{sec_theory} shows that the initial point  needs to satisfy $\cE^{(k)}_{\init}(\mu, \nu)$ in \eqref{def_event_init}.
	%	, namely,
	%	\begin{equation}\label{init_want}
		%		 |A_i^\T \wh Q_k^{(0)}| \ge \mu,\qquad |A_i^\T \wh Q_k^{(0)}| - \max_{j\in[r]\setminus \{i\}} |A_j^\T \wh Q_k^{(0)}| \ge \nu
		%	\end{equation}
	%	for some $i\in [r]$ and $0<\nu < \mu < 1$.  
	%	Briefly speaking, condition \eqref{init_want} requires $\wh Q_k^{(0)}$ to align better with one column in $\bA$ than the other columns.  
	We discuss below two possible initialization schemes that provably ensure such requirement.

	\subsection{Random Initialization}\label{sec_init_rand}
	%			\begin{itemize}
		%			\item {\bf Random Initialization (RI).} 
		The first initialization scheme is to randomly initialize $\wh Q_k^{(0)}$ in \eqref{iter_PGD}  via  standard normal entries. Specifically, for each $1\le k < r$, by writing $\wh  Q_1,\ldots,\wh  Q_k$ as the previously obtained solutions, we define $\wh \bP_0^\perp = \bI_r$ and 
		\begin{equation}\label{def_P_hat_k_comp}
			\wh \bP_{k}^\perp = \bI_r - \sum_{i=1}^{k} \wh  Q_i\wh Q_i^\T. %,\qquad \text{for all } 1\le k<r.
		\end{equation}
		Let the matrix $\wh\bV_{(-k)}\in \bbO_{r\times (r-k)}$ contain the leading $(r-k)$ eigenvectors of $\wh \bP_{k}^\perp $.  Further let $g_{(-k)} \in \R^{r-k}$ contain i.i.d. entries of $\cN(0,1)$. For solving \eqref{obj_L4} the $(k+1)$th time, we propose to intialize \eqref{iter_PGD} by 
		\begin{align}\label{def_q_init_all_cols}
			\wh  Q_{k+1}^{(0)}=  % \wh\bV_{(-k)}\frac{g_{(-k)}}{\norm{g_{(-k)}}2}. 
			P_{\Sp^r}\bigl ( \wh\bV_{(-k)}g_{(-k)}\bigr).
			%\qquad\text{with}\quad  g_{(-k)}\sim \cN_{r-k}\left(0,  \bI_{r-k}\right).
		\end{align}
		%As shown in \cref{thm_RA_rand} of \cref{sec_theory_alg}, this initialization provably guarantees \eqref{init_want} due to the anti-concentration property of the Gaussian entries in $g_{(-k)}$. Meanwhile using the matrix $\wh \bV_{(-k)}$ avoids the  new solution duplicating one of the previously obtained. 

		The following theorem establishes its corresponding order of $\mu$ and $\nu$ for which the event $\cE^{(k+1)}_{\init}(\mu, \nu)$ in \eqref{def_event_init} holds. 
		
		\begin{theorem}[Random Initialization]\label{thm_RA_rand} 
			Grant Assumptions \ref{ass_Z}, \ref{ass_A_general} and \ref{ass_E_general} as well as $\kappa\ge c$ for some constant $c>0$. Choose $\gamma \le c'/\kappa$ for some sufficiently small constant $c'>0$.
			Fix any $\delta \in (0, 1/r^2)$ and assume there exists some sufficiently small constant $c''>0$ such that 
			\begin{equation}\label{cond_omega_n_rand}
				r^{3/2} \left(
				\sqrt{r^2\log(n)\over n} + \sqrt{\epsilon^2 p \log (n) \over n}    +\epsilon^2
				\right) \le  c'' \delta^2
			\end{equation}
			holds. Then there exists a permutation $\pi:[r] \to [r]$ such that with probability at least $1-re^{-2r} - r^2\delta-n^{-1}$, $\bR \wh  Q_{\pi(k)}^{(0)}$ satisfies 
			$\cE^{(k)} _{\init}(\mu, \nu)$  for all $k\in [r]$ with  
			\begin{equation}\label{mu_nu_rand}
				\mu = {1 \over \sqrt{r}} ,\qquad \nu = {1\over 3\sqrt{e}}{\delta \over \sqrt r}.
			\end{equation}
			Consequently, conclusions in \cref{thm_RA} and \cref{thm_A_general} hold.
		\end{theorem}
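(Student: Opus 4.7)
The plan is an induction on $k$, simultaneously establishing (a) that after $k$ rounds of PGD there is a permutation $\pi$ such that $\wh Q_i \approx \bar s_i \bR^\T A_{\pi(i)}$ for $i \le k$ and some signs $\bar s_i \in \{\pm 1\}$, and (b) that the initialization produced for round $k+1$ via \eqref{def_q_init_all_cols} lies in $\cE^{(\pi(k+1))}_{\init}(1/\sqrt r,\, \delta/(3\sqrt e \sqrt r))$. The base case $k=0$ reduces entirely to the Gaussian calculation in steps (vi)--(vii) below with $\wh\bV_{(-0)}=\bI_r$ and $\bR\wh Q_1^{(0)}=P_{\Sp^r}(\bR g_{(0)})$, which again has the distribution of $P_{\Sp^r}(h)$ for $h\sim\cN(0,\bI_r)$.

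Assuming the induction hypothesis at step $k$, the inductive step proceeds as follows. \textbf{(i)} \cref{lem_connection} plus \cref{thm_one_col} applied to rounds $1,\ldots,k$ yields $\|\wh Q_i - \bar s_i \bR^\T A_{\pi(i)}\|_2 \lesssim \omega_n+\epsilon^2$ on a high-probability event, so by a rank-one perturbation identity $uu^\T-vv^\T = (u-v)u^\T + v(u-v)^\T$ we obtain
\[
\bigl\|\wh\bP_k^\perp - \bR^\T\!\!\sum_{j\notin S_k}\! A_j A_j^\T \bR\bigr\|_\op ~\lesssim~ k\,(\omega_n+\epsilon^2).
\]
Since the unperturbed matrix is a projection with spectrum $\{0,1\}$, its eigengap is $1$, so a Davis--Kahan/Wedin bound yields an orthogonal $\bW\in\bbO_{(r-k)\times(r-k)}$ with $\bR\wh\bV_{(-k)} = \bU_{(-k)}\bW + \bE_k$ and $\|\bE_k\|_\op \lesssim k(\omega_n+\epsilon^2)$, where $\bU_{(-k)}$ has columns $\{A_j\}_{j\notin S_k}$.

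\textbf{(ii)} The new random vector $g_{(-k)}$ is independent of everything seen so far, so $h:=\bW g_{(-k)}\sim \cN(0,\bI_{r-k})$ is standard Gaussian, independent of the data-dependent rotation $\bW$. Writing $\bR\wh Q_{k+1}^{(0)} = P_{\Sp^r}(\bU_{(-k)}h + \bE_k g_{(-k)})$ and using $\|\bE_k g_{(-k)}\|_2 \lesssim k(\omega_n+\epsilon^2)\sqrt r$ together with $\|\bU_{(-k)}h\|_2=\|h\|_2$ (orthonormal columns), one checks
\[
A_j^\T \bR\wh Q_{k+1}^{(0)} ~=~ \frac{\,h_{\tau(j)}\indicator{j\notin S_k} + \eta_j\,}{\|h\|_2 + \eta_0},
\]
where $\tau$ re-indexes $[r]\setminus S_k$ onto $[r-k]$ and $|\eta_j|,|\eta_0|\lesssim k(\omega_n+\epsilon^2)\sqrt r$.

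\textbf{(iii)} Pick $\pi(k+1)$ to be the preimage under $\tau$ of $\argmax_{\ell\in[r-k]}|h_\ell|$. Cauchy--Schwarz gives deterministically $|h_{(1)}| \ge \|h\|_2/\sqrt{r-k}$, hence $|A_{\pi(k+1)}^\T\bR\wh Q_{k+1}^{(0)}| \ge 1/\sqrt{r-k} - O(k(\omega_n+\epsilon^2)) \ge 1/\sqrt r$ under \eqref{cond_omega_n_rand}, establishing the $\mu$-condition. For the gap, on the event $\{\|h\|_2\le 3\sqrt e\sqrt r\}$ (which holds with probability $\ge 1-e^{-2r}$ by standard $\chi^2$ concentration), the separation contributed by the Gaussian part is $(|h_{(1)}|-|h_{(2)}|)/\|h\|_2$; an anti-concentration bound for the top two order statistics of $|h_1|,\ldots,|h_{r-k}|$ (using boundedness of the half-normal density and a union bound over pairs) gives $P(|h_{(1)}|-|h_{(2)}|\le s)\lesssim r s$, so taking $s\asymp\delta$ yields $|h_{(1)}|-|h_{(2)}|\ge \delta$ except with probability $\lesssim r\delta$. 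The indices $j\in S_k$ contribute only error terms of order $k(\omega_n+\epsilon^2)$, which by \eqref{cond_omega_n_rand} are dominated by $\delta/(3\sqrt e\sqrt r)$ with margin.

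\textbf{(iv)} Union-bounding the per-step failure probabilities ($e^{-2r}$ for $\|h\|_2$, $r\delta$ for the gap, $2/n$ for the event of \cref{thm_one_col}) over the $r$ rounds yields the claimed probability $1-re^{-2r}-r^2\delta-n^{-1}$, and the conclusions of \cref{thm_RA} and \cref{thm_A_general} then follow by applying those theorems with the initialization satisfying $\cE^{(k)}_{\init}(1/\sqrt r,\,\delta/(3\sqrt e\sqrt r))$ for all $k$.

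The main obstacle is the sequential coupling between rounds: the rotation $\bW$ at round $k+1$ depends on all previously recovered $\wh Q_i$, which themselves depend on earlier Gaussian draws; the crux is that the \emph{new} Gaussian $g_{(-k)}$ is drawn fresh and hence the composed vector $h=\bW g_{(-k)}$ remains standard Gaussian conditional on the past. A secondary technical point is tracking that the perturbation $k(\omega_n+\epsilon^2)\sqrt r$ from step (ii) stays small relative to both $\mu=1/\sqrt r$ and $\nu=\delta/(3\sqrt e\sqrt r)$, which is exactly what the scaling $r^{3/2}(\cdots)\le c''\delta^2$ in \eqref{cond_omega_n_rand} enforces once the $\sqrt{\log n}$ factors inside $\omega_n$ are absorbed.
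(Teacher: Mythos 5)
Your proposal follows essentially the same route as the paper's proof: induction over the $r$ rounds, using \cref{thm_one_col} (via \cref{lem_connection}) to get the event $\cE_A(k,\eps_n)$ on the previously recovered columns, a Davis--Kahan perturbation bound on $\wh\bP_k^\perp$ to align $\wh\bV_{(-k)}$ with the remaining columns of $\bR^\T\bA$ (the paper's \cref{lem:bound_V_A}), and then the fresh Gaussian draw's anti-concentration together with the $\chi^2$ bound on $\|g_{(-k)}\|_2$ to verify $\mu = 1/\sqrt r$ and $\nu \asymp \delta/\sqrt r$ (the paper's \cref{lem_sup_init,lem_gap_init}), finishing with a union bound over $k$. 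The only cosmetic difference is in the $\nu$-step: you bound the gap of the top two order statistics of $|h_\ell|$ (where the quoted ``union bound over pairs'' would give $\lesssim r^2 s$, so you need the standard conditioning-on-the-argmax argument to get the claimed $\lesssim r s$), whereas the paper applies anti-concentration to the pairwise contrasts $(\mb e_i - \mb e_j)^\T \bO_{(-k+1)} g_{(-k+1)} \sim \cN(0,2)$ with a union bound over $j\ne i$; both yield the same per-round failure probability of order $r\delta$.
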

		\begin{proof}
			The proof appears in \cref{app_proof_thm_RA_rand}. 
		\end{proof} 
		For the  random initialization in \eqref{def_q_init_all_cols}, the matrix  $\wh\bV_{(-k)}$  ensures that $\wh Q_{k+1}^{(0)}$ is not close to any of the previously recovered solutions. This is  crucial to establish the expression of $\mu$ in $\cE_{\init}^{(k)}$ for {\em all} $k\ge 2$.
		Regarding the expression of $\nu$, it originates from the anti-concentration of the standard normal  entries in $g_{(-k)}$.  The union bounds argument over $k'\in [r]\setminus\{k\}$  and $k\in [r]$ of  such anti-concentration  inequality leads to the term $r^2\delta$  in the tail probability of \cref{thm_one_col}.  
		%In case when the term $re^{-2r}$ does not vanish, it could be replaced by $n^{-r}$ provided that \eqref{cond_omega_n} holds with an additional multiplicative factor of $\log (n)$ on its left hand side. 
		If one adopts   the multiple random initialization scheme discussed in \cref{rem_mrand} below, it is possible to improve $r^2\delta$  to $\delta$ but with the price of employing an additional clustering step. % in the end. 
		
		Condition  \eqref{cond_omega_n_rand} corresponds to \eqref{cond_omega_n} with $\mu$ and $\nu$ specified in \eqref{mu_nu_rand}. It requires the lower dimension $r$ to be much smaller than the sample size $n$, namely, $r^\alpha \ll n$ for some $\alpha\in [5,13]$ depending on the choice of $\delta$.  It is worth mentioning that \eqref{cond_omega_n_rand} is only needed for the worst case when the initialization gets close to satisfy \eqref{bad_critical}, and mainly for a few initial iterates due to the contraction property in \eqref{lb_sup}.  As mentioned in \cite{anandkumar2014guaranteed}, the random initialization scheme   works very well empirically for learning  ICA model via tensor decomposition and deflation power iterations,  but its analysis is still an open problem even under the ICA model. To the best of our knowledge, \cref{thm_RA_rand} provides the first type of theoretical guarantees for using random initialization scheme in our general context.
		
		\begin{remark}[Multiple random initializations]\label{rem_mrand}
			Instead of using one realization of $g_{(-k)}$, one could draw $g_{(-k)}$ multiple times and compute the corresponding initializations as in \eqref{def_q_init_all_cols}. This will lead to many candidate solutions at each round $k\in [r]$. After all $r$ rounds, a final clustering step  is needed to select $r$ among all obtained candidates as the final estimates. This idea has been proposed in the context of tensor decomposition \citep{anandkumar2014guaranteed}. Though such procedure could improve the tail probability in \cref{thm_RA_rand} from $r^2\delta$ to $\delta$, thereby relaxing \eqref{cond_omega_n_rand}, empirically we do not observe any evident benefit  comparing to using a single random initialization.
		\end{remark}
		
		\subsection{Method of Moments based Initialization}\label{sec_init_momi}
		
		We discuss in this subsection another   initialization scheme  that is based on a Method of Moments  estimator of $\bA$. The empirical moments we use are once again based on randomly sampled standard normal variables. To explain its main idea, we start by stating a population-level quantity from which one can identify one column of $\bA$. 
		
		Let 
		$\bG$ be a random $r\times r$ matrix containing i.i.d. standard normal variables. Consider the following $r\times r$ matrix 
		\[
		\overline\bM(\bG)  =  \sum_{i=1}^r A_i A_i^\T \left(\kappa A_i^\T \bG A_i\right).
		\]
		Due to the orthogonality of columns in $\bA$, we have two key observations. \begin{enumerate}
			\item [(1)] If the two largest  singular values of $\overline\bM(\bG)$  have non-zero gap, then its leading left singular vector   identifies one column of $\bA$ up to sign. 
			\item[(2)] The singular values of $\overline\bM(\bG)$ must take values from the set $\{\kappa |A_1^\T \bG A_1|, \ldots, \kappa |A_r^\T \bG A_r|\}$.  
		\end{enumerate}
		The matrix $\overline\bM(\bG)$ can thus be treated as a population level moment from which we can identify one column of $\bA$.   However, when $\overline\bM(\bG)$ gets replaced by its sample estimate, the latter may not  have  large enough singular value gap to ensure consistency of its leading left singular vector.  To address this issue,  \cite{anandkumar2014guaranteed,anandkumar2015learning}  in the context of learning overcomplete latent models via tensor methods and 
		\cite{auddy2023} under the ICA model, consider a procedure called multiple random slicings which is based on computing $\overline\bM(\bG)$ multiple times from independent copies of $\bG$.
		
		Concretely, for some prespecified integer $N\in \mathbb{N}$, let $\bG_{(1)},\ldots, \bG_{(N)}$ be independent copies of $\bG$. For simplicity, denote 
		\[
		W_{ij} := A_j^\T \bG_{(i)} A_j,\quad \text{for all }i\in [N], j\in [r].
		\]
		By the orthogonality of columns in $\bA$, one can verify   that $\{W_{ij}\}_{i\in [N], j\in [r]}$ are independent standard normal variables (see, the proof of \cref{lem_M_op}). Therefore, for any given $j\in [r]$, concentration inequalities of the absolute value of the maximum of $N$ independent standard normal variables (see \cref{lem_one_sided_Gaussian}) gives  
		that for all $\delta \ge 0$,
		\begin{equation*}
			\P\left\{
			\max_{i\in [N]} |W_{ij}| \ge  \sqrt{2\log(N)} - {\log\log(N) + c \over 2\sqrt{2\log(N)}} - \sqrt{2\log(1/\delta)}
			\right\} \ge 1-\delta.
		\end{equation*}
		On the other hand, let $\bar i := \bar i(j):= \argmax_{i\in [N]} |W_{ij}|$. Since  $\bar i$ is only defined through $\{W_{1j}, \ldots, W_{Nj}\}$, it is independent of $W_{i\ell}$ for all $i\in [N]$ and $\ell \ne j$. This implies that  $\{W_{\bar i \ell }\}_{\ell\in [r]\setminus \{j\}}$  are $(r-1)$ independent standard normal variables, and using \cref{lem_one_sided_Gaussian} again yields that for all $\delta\ge 0$, 
		\begin{equation*} 
			\P\left\{
			\max_{\ell \in [r]\setminus \{j\}}  |W_{\bar i \ell }| ~ \le \sqrt{2\log(2r)}+ \sqrt{2\log(1/\delta)}
			\right\} \ge 1-\delta.
		\end{equation*}
		Fix any $\delta \ge 0$. 
		If we choose $N \ge 4r^2 /\delta^2$, combing the previous two displays together with the observation (2) ensures that with probability at least $1-\delta/r$,
		\begin{equation*}
			\sigma_1\bigl(\overline\bM(\bG_{(\bar i)})\bigr) - \sigma_2\bigl(\overline\bM(\bG_{(\bar i)})\bigr) =  |W_{\bar i j}| - \max_{\ell \in [r]\setminus \{j\}} |W_{\bar i \ell } | ~  \ge ~ c'\sqrt{\log (N)}.
		\end{equation*}
		Since $j\in [r]$ in the definition of $\bar i$ is chosen arbitrarily, we can choose the index 
		\[
			i_* := \argmax_{i\in [N]} 	\left[ \sigma_1\left(\overline\bM(\bG_{(i)})\right) - \sigma_2\left(\overline\bM(\bG_{(i)})\right)\right]
		\] 
		such that its corresponding matrix $\overline\bM(\bG_{(i_*)})$ has a gap between its largest two singular values no smaller than  $c'\sqrt{\log(N)}$. Thus, by observation (1),  the left leading singular vector of $\overline\bM(\bG_{(i_*)})$ could be used to estimate one column of $\bA$ if the index $i_*$ were known. 
		
		However, in practice we need to find both a suitable estimator of the index $i_*$ and an estimator of $\overline\bM(\bG_{(i_*)})$, which turns out to be a quite challenging task.  In \cite{anandkumar2014guaranteed,anandkumar2015learning} the authors did not consider to estimate such $i_*$ but instead employed an additional clustering step in the end to screen and group by all solutions obtained from multiple random slicings (see, also, \cref{rem_mrand}). 
		% as both the matrix $\overline{\bM}(\bG_{(i)})$ and  index $i_*$ need to be estimated. 
		In the following we describe a simple procedure that meets our goals, and does not require either computing more than $r$ candidate solutions or any additional clustering step. 
		
		For each $i\in [N]$, let the sample level counterpart of $\overline\bM(\bG_{(i)})$  be 
		\begin{equation}\label{def_M_hat_main}
			\wh \bM(\bG_{(i)}) = {1\over 3n}\sum_{t=1}^n U_t U_t^\T \left( U_t^\T \bG_{(i)} U_t \right) - \bG_{(i)} - \bG_{(i)}^\T,  
		\end{equation} 
		with its singular values $\wh \sigma_1^{(i)} \ge \wh \sigma_2^{(i)} \ge \cdots \ge  \wh  \sigma_{r}^{(i)}$ and the left leading singular vector $\wh u^{(i)}_1$. On the one hand, we  show in \cref{lem_M_op} that $\wh \bM(\bG_{(i)})$ consistently estimates $\overline\bM(\bG_{(i)})$ uniformly over $i\in [N]$.  On the other hand, we also show in the proof of \cref{thm_RA_mrs} that the sample analogue of $i_*$, defined as 
		\begin{equation*} 
			\wh i_* = \argmax_{i\in [N]}  \left( 
			\wh \sigma_{1}^{(i)}  -\wh  \sigma_{2}^{(i)} 
			\right),
		\end{equation*}
		ensures that $\sigma_1^{(\wh i_*)} - \sigma_2^{(\wh i_*)} \ge c''\sqrt{\log(N)}$ with high probability.  
		By the observation (1) above, we  thus propose to use the left leading singular vector of $\wh \bM(\bG_{(\wh i_*)})$ for initialization in \eqref{iter_PGD}, namely, 
		$
			\wh  Q_1^{(0)} = \wh u^{( \wh i_*)}_1.
		$
		\cref{thm_RA_mrs} below shows that in fact $\wh  Q_1^{(0)}$ is already a consistent estimator of one column of $\bA$, and could be further used as initialization to obtain faster rates. 
		
		For  solving \eqref{obj_L4} the $k$th round with each $2\le k\le r$, we can repeat the whole procedure above except for using  
		\begin{equation}\label{def_M_hat_k}
			\wh \bM_{(k)}(\bG_{(i)}) =   \wh\bP_{k-1}^\perp  \wh \bM(\bG_{(i)})  \wh\bP_{k-1}^\perp 
		\end{equation}
		in place of $\wh \bM(\bG_{(i)})$. Here the matrix $\wh \bP_{k-1}^\perp$ given by \eqref{def_P_hat_k_comp} prevents from estimating duplicate columns in $\bA$. More concretely, for each $i\in [N]$, let $ \wh \sigma^{(i,k)}_1 \ge  \wh \sigma^{(i,k)}_2 \ge \cdots \ge \wh \sigma^{(i,k)}_r$ and  $\wh u_{1}^{(i,k)}$ be the  singular values and  the leading left singular vector, respectively, of \eqref{def_M_hat_k}. With
		\begin{equation}\label{def_arg_index}
			\wh i_*  := 	\wh i_*(k):= \argmax_{i\in [N]}  \left( 
			\wh \sigma_{1}^{(i,k)}  -\wh  \sigma_{2}^{(i,k)} 
			\right),
		\end{equation}
		we propose to initialize \eqref{iter_PGD} by 
		\begin{equation}\label{def_init_msr}
				\wh  Q_k^{(0)} = \wh u^{( \wh i_*,k)}_1.
		\end{equation}
		
		In the following theorem  we show that the above method of moments based initialization scheme provably ensures  $\cE^{(k)}_{\init}$ for all $k\in [r]$, and  the established orders of $\mu$ and $\nu$   improve upon those for the random initialization.  Recall $\omega_n$ from \eqref{def_omega_n}.
		
		\begin{theorem}[Method of Moments based Initialization]\label{thm_RA_mrs} 
			Grant Assumptions \ref{ass_Z}, \ref{ass_A_general} and \ref{ass_E_general} as well as $\kappa\ge c$ for some absolute constant $c>0$. Choose $\gamma \le c'/\kappa$  for some sufficiently small constant $c'>0$. Further assume   
			\begin{equation}\label{cond_omega_n_msr}
				r \left(\omega_n   + \epsilon^2 \right)  \le  c''
			\end{equation}
			for some small constant $c''>0$. For any $\delta \ge 0$, there exists a permutation $\pi:[r] \to [r]$ such that with probability at least $1-n^{-C}-\delta$, the initialization $\wh  Q_{\pi(k)}^{(0)}$ in \eqref{def_init_msr} with $N \ge 4r^2/\delta^2$  satisfies 
			$\cE^{(k)} _{\init}(\mu, \nu)$  for all $k\in [r]$
			with 
			\begin{equation}\label{rate_mu_nv_mrs}
				\mu = 1 - C' ~ r \left(\omega_n + \epsilon^2 \right)  ,\qquad \nu =  \mu - \sqrt{1-\mu^2}.
			\end{equation}
			Consequently, conclusions in \cref{thm_RA} and \cref{thm_A_general} hold.
			%		Furthermore, 
			%		there exists a signed permutation matrix $\bP$ such that 
			%		\begin{align}\label{rate:thm:full_recov}
				%			\P\left\{ \|\wc \bQ - \bR^\T \bA\mb P\|_{F}~  \lesssim ~  \omega_n\sqrt{r} + \epsilon^2 \sqrt{r}
				%			\right\} \ge 1 - 2n^{-1} - re^{-2r} - r^2\delta.
				%		\end{align} 
		\end{theorem}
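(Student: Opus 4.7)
The plan is to combine three ingredients: (i) a uniform concentration bound of $\wh\bM(\bG_{(i)})$ around its population counterpart $\overline\bM(\bG_{(i)})$ over the $N$ random slicings, (ii) a quantitative lower bound on the singular-value gap of the population matrix at the chosen index, and (iii) a Wedin-type perturbation argument for the leading left singular vector. An induction on $k \in [r]$ then propagates the analysis through the deflation step $\wh\bP_{k-1}^\perp$ and yields the union-bounded guarantee.

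First I would invoke Lemma \ref{lem_M_op} (announced in the text) to get a uniform bound of the form $\max_{i\in[N]} \|\wh\bM(\bG_{(i)}) - \overline\bM(\bG_{(i)})\|_\op \lesssim (\omega_n + \epsilon^2)\sqrt{\log N}$ on a high-probability event. At the population level, since columns of $\bA$ are orthonormal, the singular values of $\overline\bM(\bG_{(i)})$ are exactly $\kappa|W_{ij}|$ for $j\in[r]$, with left singular vectors the corresponding columns $A_j$. The two-sided Gaussian maximum argument laid out in the paragraph preceding the theorem shows that, with probability at least $1-\delta/r$ after choosing $N\ge 4r^2/\delta^2$, one can find $i_*\in[N]$ and $j_*\in[r]$ with population gap $\sigma_1(\overline\bM(\bG_{(i_*)})) - \sigma_2(\overline\bM(\bG_{(i_*)})) \gtrsim \sqrt{\log N}$ and leading singular vector $A_{j_*}$. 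Weyl's inequality combined with step (i) transfers this gap to the sample level, so that the sample-level argmax $\wh i_*$ from \eqref{def_arg_index} also selects a slice whose underlying population gap is of this order once the perturbation has been absorbed.

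With a large population gap established for the chosen slice, Wedin's $\sin\Theta$ theorem yields that the leading left singular vector $\wh u_1^{(\wh i_*)}$ satisfies, up to sign, $\|\wh u_1^{(\wh i_*)} - A_{j_*}\|_2 \lesssim \|\wh\bM - \overline\bM\|_\op / \sqrt{\log N} \lesssim \omega_n + \epsilon^2$. Translating into the event $\cE^{(1)}_{\init}(\mu,\nu)$ via the identity $|A_{j_*}^\T v| = 1 - \tfrac{1}{2}\|v - A_{j_*}\|_2^2$ for $v\in\Sp^r$ produces $\mu = 1 - C'(\omega_n+\epsilon^2)$ for the first round, and the gap condition $\nu = \mu - \sqrt{1-\mu^2}$ follows immediately because any $A_\ell$ with $\ell\ne j_*$ lies in the orthogonal complement of $A_{j_*}$, so $|A_\ell^\T v| \le \sqrt{1-\mu^2}$. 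Of course, since Lemma \ref{lem_connection} reduces the analysis to the oracle problem \eqref{obj_L4_tilde_main}, everything above is carried out for $\bR \wh Q_1^{(0)}$ against the target columns $A_i$.

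For $k\ge 2$ I would argue by induction. Suppose $\wh Q_{\pi(1)},\ldots,\wh Q_{\pi(k-1)}$ are close, up to signs, to $A_{\pi(1)},\ldots,A_{\pi(k-1)}$ with accumulated error $O((k-1)(\omega_n+\epsilon^2))$; then $\wh\bP_{k-1}^\perp$ differs from the ideal projection $\bP_{k-1}^\perp := \bI_r - \sum_{j=1}^{k-1} A_{\pi(j)}A_{\pi(j)}^\T$ by $O(k(\omega_n+\epsilon^2))$ in operator norm. Consequently $\wh\bM_{(k)}(\bG_{(i)})$ approximates $\bP_{k-1}^\perp \overline\bM(\bG_{(i)}) \bP_{k-1}^\perp = \sum_{j\notin\{\pi(1),\dots,\pi(k-1)\}} A_j A_j^\T \kappa W_{ij}$, whose singular values are $\kappa|W_{ij}|$ restricted to the $r-k+1$ remaining indices. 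The same Gaussian argmax plus Wedin argument applied to the reduced index set produces an initialization close to some new $A_{\pi(k)}$, with the round-$k$ error now of order $k(\omega_n+\epsilon^2)$. Taking the union over $k\in[r]$ and $i\in[N]$, and solving for $k=r$, gives the claimed $\mu = 1 - C'r(\omega_n+\epsilon^2)$, and the corresponding $\nu = \mu - \sqrt{1-\mu^2}$ again by orthogonality.

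The main obstacle is the coupling in the deflation step: the projection $\wh\bP_{k-1}^\perp$ is random and statistically dependent both on the slices $\bG_{(i)}$ used in earlier rounds and on $\bU_{(r)}$, so the per-round errors do not cleanly add. One has to show that even after this compounding, the singular-value gap of the restricted matrix $\wh\bM_{(k)}(\bG_{(\wh i_*)})$ still exceeds the perturbation level, i.e.\ $\sqrt{\log N} \gg (\omega_n+\epsilon^2)\sqrt{\log N} + k(\omega_n+\epsilon^2)$, which is exactly what \eqref{cond_omega_n_msr} encodes. This is the step that forces the linear-in-$r$ dependence in the final rate and must be handled by carefully re-expressing $\wh\bM_{(k)}$ as a perturbation of $\bP_{k-1}^\perp \overline\bM \bP_{k-1}^\perp$ and tracking both the projection and the concentration errors on the same high-probability event.
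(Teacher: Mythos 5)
Your proposal is correct and follows essentially the same route as the paper's proof: a uniform operator-norm bound $\Delta_{M,k}$ on $\max_{i\in[N]}\|\wh\bM_{(i,k)}-\bM_{(i,k)}\|_\op$ (the paper's \cref{lem_eps_M} and event $\cE_{M,k}$), the two-sided Gaussian maximum argument plus Weyl's inequality to show the sample argmax $\wh i_*$ retains a population gap of order $\sqrt{\log N}$, a Davis--Kahan/Wedin bound for the leading left singular vector, the orthonormality identity to convert $\mu$ into $\nu=\mu-\sqrt{1-\mu^2}$, and an induction over $k$ in which the deflation error is controlled through the accuracy of the previously PGD-refined columns (the event $\cE_A(k,\eps_n)$ via \cref{thm_one_col}). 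The only difference is bookkeeping: because each refined $\wh Q_j$, $j<k$, is individually $(\omega_n+\epsilon^2)$-accurate, the paper bounds $\|\wh\bP_{k-1}^\perp-\bP_{k-1}^\perp\|_\op\lesssim \eps_n\sqrt{k-1}$ rather than your linear-in-$k$ accumulation, and the factor $r$ in $\mu$ comes mainly from the concentration and bias terms inside $\Delta_{M,k}$ (including the $\delta_i\bI_r$/$\bSigma_N$ correction), but both accountings yield the same final rate $r(\omega_n+\epsilon^2)$.
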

		\begin{proof}
			The proof appears in \cref{app_proof_thm_RA_mrs}. 
		\end{proof}
		
		Note that condition \eqref{cond_omega_n_msr} ensures $\mu \asymp \nu \asymp 1$.
		Comparing to the random initialization in \cref{thm_RA_rand}, due to the improved orders of $\mu$ and $\nu$, condition \eqref{cond_omega_n_msr} is weaker than \eqref{cond_omega_n_rand} in terms of dependence on the lower dimension $r$. Condition \eqref{cond_omega_n_msr} could also be  relaxed to $(\omega_n + \epsilon^2)\sqrt{r} \le c''$ by using a data splitting technique in \cite{auddy2023}. However, since splitting the data potentially reduces finite sample performance and the lower dimension $r$ in our context is typically small or moderate, we did not opt for such data splitting strategy. 
		
		\begin{remark}[Rates of the method of moments estimator]
			As a byproduct, our proof of \cref{thm_RA_mrs} also gives the convergence rate of the estimation error of the method of moments estimator. Specifically, let  $\wh \bA_{MoM} = (\wh Q_1^{(0)}, \ldots, \wh Q_r^{(0)})$ with $\wh Q_k^{(0)}$ given by \eqref{def_init_msr}. We prove that, up to a signed permutation matrix,
			\[
			\|\wh \bA_{MoM} - \bR^\T \bA\|_F = \cO_\P\left(
			\left(\omega_n   + \epsilon^2 \right)    r^{3/2}
			\right).
			\]
			Comparing to the estimator in \cref{thm_RA}, this rate is slower by a factor of $r$.
		\end{remark}

	\section{Improvement in the setting of structured noise}\label{sec_ext}

	In this section, we offer two steps of improvement over the procedure described in \cref{sec_method} when the covariance matrix of the additive noise $E$ under model \eqref{model_X} is structured. We will focus on the case when $E$ satisfies \cref{ass_E_general} with $\bSigma_E=  \bI_p$, as this is an important case in various applications \citep{lawley1962factor}. 
	Extension to other structures  is discussed in \cref{rem_ext}.   
	The first step of improvement advocates a different low-dimensional representation from the principal components, while the second step modifies the PGD iterate in \eqref{iter_PGD} when performing deflation varimax rotation.  Essentially these two improvements aim to reduce $\|\bDelta\|_\op$ and $\sigma_N^2$ in \cref{thm_critical}, respectively. 
	%For simplicity, we consider $\sigma^2=1$.

	\subsection{Improvement over using the principal components} 
	Recall the eigen-decomposition of $n^{-1}\bX\bX^\T$ in \eqref{eq_eigen_Y} and the SVD of $\bLambda = \bL \bS \bA$ in \eqref{svd_A}. Our proof of \cref{thm_PCs} reveals that the bias of $\bU_{(r)}$ for approximating $\bR^\T \bA \bZ/\sigma$ comes from using the eigenvalues $\bD_{(r)}=\diag(d_1,\ldots,d_r)$ to estimate $\sigma^2 \bS^2$. To reduce such bias, we propose to use 
	\begin{equation}\label{def_PCs_hat}
		\wh\bU_{(r)} = \wh \bD_{(r)}^{-1/2}\bV_{(r)}^\T \bX
	\end{equation}
	in place of $\bU_{(r)}$, where
	\begin{equation}\label{def_D_Lambda}
		\wh\bD_{(r)} =  \bD_{(r)} - \bV_{(r)}^\T \left(\wh{\epsilon^2\sigma^2} ~ \bI_p\right) \bV_{(r)} = \bD_{(r)} - \  \wh{\epsilon^2\sigma^2} ~ \bI_r 
	\end{equation}
	is a better estimator of $\sigma^2 \bS^2$ than $\bD_{(r)}$ with
	\begin{equation}\label{def_epssigma_hat}
		\wh{\epsilon^2\sigma^2} = {1\over p-r}\sum_{j>r}^p d_j.
	\end{equation}
	The quantity $\wh{\epsilon^2\sigma^2}\bI_p$ can be treated as an estimator of  $\Cov(E) = \epsilon^2\sigma^2\bI_p$. Indeed, we show in \cref{app_proof_thm_PCs_hat} that $|\wh{\epsilon^2\sigma^2} -\epsilon^2\sigma^2| = \cO_\P(\omega_n)$.  
	
	The following theorem states a decomposition of $\wh\bU_{(r)}$ similar to that of $\bU_{(r)}$ in \cref{thm_PCs} but with a smaller approximation error.  
	
	\begin{theorem}\label{thm_PCs_hat}
		Under Assumptions \ref{ass_Z}, \ref{ass_A_general} and \ref{ass_E_general}, assume $r\log(n) \le cn$ and $\epsilon^2p\log(n)\le cn$   for some sufficiently small constant $c>0$ and $\epsilon^2 \le 1$. Then there exists some  $\bR\in\bbO_{r\times r}$ such that,  with probability at least $1-n^{-1}$,  one has
		\begin{equation}\label{decom_Ur_hat}
			\wh \bU_{(r)} = %\left(\bI_r + \bDelta\right)\bar\bY = 
			\bR^\T \left(\bA  \bZ / \sigma + \bN + \bOmega\right)
		\end{equation}
		where  $\bN =  \bS^{-1}\bL^\T \bE / \sigma
		$ and $\bOmega = \bDelta (\bZ/\sigma + \bN)$ 
		with 
		$
		\|\bDelta\|_\op    \lesssim \omega_n.
		$
	\end{theorem}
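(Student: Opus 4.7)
The plan is to mirror the proof of Theorem \ref{thm_PCs} and track the single place where the $\epsilon^2$ term in $\|\bDelta\|_\op$ arose; that term will be exactly cancelled by the de-biasing $\wh\bD_{(r)}=\bD_{(r)}-\wh{\epsilon^2\sigma^2}\,\bI_r$. Recall that $\EE[n^{-1}\bX\bX^\T]=\sigma^2\bL\bS^2\bL^\T+\epsilon^2\sigma^2\bI_p$, so the diagonal matrix $\bD_{(r)}$ concentrates around $\sigma^2\bS^2+\epsilon^2\sigma^2\bI_r$ rather than around $\sigma^2\bS^2$. In the proof of Theorem \ref{thm_PCs}, this mismatch propagates through the factor $\bD_{(r)}^{-1/2}$ in the definition \eqref{def_PCs} and produces an $\epsilon^2$-sized contribution to $\|\bDelta\|_\op$ (since $(\bS^2+\epsilon^2\bI_r)^{-1/2}-\bS^{-1}$ is of order $\epsilon^2$ under $\sigma_r(\bLambda)\ge c_\Lambda$). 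Replacing $\bD_{(r)}$ by $\wh\bD_{(r)}$ removes exactly this bias and leaves only fluctuations of order $\omega_n$.

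Concretely, I would proceed in three steps. Step~1: establish $|\wh{\epsilon^2\sigma^2}-\epsilon^2\sigma^2|=\cO_\P(\omega_n)$. Since $(p-r)\wh{\epsilon^2\sigma^2}=\tr(n^{-1}\bX\bX^\T)-\sum_{j=1}^r d_j$, this reduces to concentration of the trace (handled by sub-exponential concentration of $n^{-1}\|\bZ\|_F^2$ and $n^{-1}\|\bE\|_F^2$, using Assumptions \ref{ass_Z} and \ref{ass_E_general}) together with the already-available $\cO_\P(\omega_n)$ bounds on the top-$r$ eigenvalues obtained in the proof of Theorem \ref{thm_PCs}. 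This yields $\|\wh\bD_{(r)}-\sigma^2\bS^2\|_\op\lesssim \omega_n$, so in particular $\|\wh\bD_{(r)}^{-1/2}-\sigma^{-1}\bS^{-1}\|_\op\lesssim \omega_n$ on the event where $\wh\bD_{(r)}$ is well-conditioned (which follows from $\sigma_r(\bLambda)\ge c_\Lambda$ and the smallness of $\omega_n$).

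Step~2: recover the rotation $\bR$ and the additive-noise piece $\bN$ exactly as in Theorem \ref{thm_PCs}. Writing $\bLambda=\bL\bS\bA$ and $\bV_{(r)}^\T\bL=\bR+\bDelta_1$, Davis--Kahan together with the operator-norm bound on $n^{-1}\bX\bX^\T-\EE[n^{-1}\bX\bX^\T]$ gives $\|\bDelta_1\|_\op\lesssim\omega_n$. Step~3: plug everything back into
\[
\wh\bU_{(r)}=\wh\bD_{(r)}^{-1/2}\bV_{(r)}^\T\bL\bS\bA\bZ+\wh\bD_{(r)}^{-1/2}\bV_{(r)}^\T\bE,
\]
expand $\wh\bD_{(r)}^{-1/2}=\sigma^{-1}\bS^{-1}+\bDelta_2$ with $\|\bDelta_2\|_\op\lesssim\omega_n$ by Step~1, factor out $\bR^\T$, and collect the leading term $\bR^\T(\bA\bZ/\sigma+\bN)$ with $\bN=\bS^{-1}\bL^\T\bE/\sigma$. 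Everything else can be written as $\bR^\T\bDelta(\bZ/\sigma+\bN)$ after multiplying on the left by $\bR\bR^\T$ and absorbing the various $\bDelta_1,\bDelta_2$ pieces; a short triangle-inequality calculation yields $\|\bDelta\|_\op\lesssim\omega_n$.

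The main obstacle will be Step~1, specifically the concentration of the average of the bottom $p-r$ eigenvalues. One must be careful that the fluctuation bound on $\wh{\epsilon^2\sigma^2}$ uses the trace (which concentrates at the parametric rate $1/\sqrt n$ plus a $\sqrt{p}/\sqrt{n}$ piece that gets averaged back down by the factor $1/(p-r)$) rather than the operator norm of $n^{-1}\bE\bE^\T$, which would only give $\sqrt{p/n}$ and ruin the rate. Once this is done cleanly, the rest of the argument is a direct transcription of the proof of Theorem \ref{thm_PCs}.
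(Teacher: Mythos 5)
Your high-level plan (de-bias the eigenvalues, reuse the Davis--Kahan step with $\bSigma_E=\bI_p$, redo the algebra) is the same as the paper's, and your diagnosis of where the $\epsilon^2$ came from is correct. But Step~3 as written has a genuine gap. If you expand $\wh\bD_{(r)}^{-1/2}=\sigma^{-1}\bS^{-1}+\bDelta_2$ and $\bV_{(r)}=\bL\bR+\bDelta_1$, the leading term of $\wh\bD_{(r)}^{-1/2}\bV_{(r)}^\T\bLambda\bZ$ is $\sigma^{-1}\bS^{-1}\bR^\T\bS\bA\bZ$, not $\sigma^{-1}\bR^\T\bA\bZ$; the mismatch is $\bR^\T\bigl(\bR\bS^{-1}\bR^\T\bS-\bI_r\bigr)\bA\bZ/\sigma$, and nothing in your bounds controls $\|\bS^{-1}\bR^\T\bS-\bR^\T\|_\op$: the Davis--Kahan rotation $\bR$ need not (even approximately) commute with $\bS$ when singular values of $\bLambda$ are close, so no "short triangle-inequality calculation" yields an $O(\omega_n)$ bound here. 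The paper's proof never compares $\wh\bD_{(r)}^{-1/2}$ with $\sigma^{-1}\bS^{-1}$ directly; instead it uses the exact identity $\wh\bD_{(r)}^{-1/2}=\sigma^{-1}\bR^\T\bS^{-1/2}\wh\bM\bS^{-1/2}\bR$ with $\wh\bM=\sigma\bS^{1/2}\bR\wh\bD_{(r)}^{-1/2}\bR^\T\bS^{1/2}$, proves $\|\wh\bM-\bI_r\|_\op\lesssim\omega_n$ by feeding the decomposition \eqref{eq_Dr} of $\bD_{(r)}=\bV_{(r)}^\T\wh\bSigma_X\bV_{(r)}$ into \eqref{eigen_lambda_hat}, and then all residuals appear sandwiched between $\bS^{\pm1/2}$ and only cost constant factors $\sigma_1(\bLambda)/\sigma_r(\bLambda)$, cf.\ \eqref{eq_Delta_DY_hat}. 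You need this device (or a substantially more delicate entrywise sin-theta/Sylvester-equation argument, which tends to lose factors of $r$) to close your Step~3.

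Your "main obstacle" in Step~1, on the other hand, is not an obstacle, and your fix introduces its own caveat. The paper handles $\wh{\epsilon^2\sigma^2}$ in one line: for $j>r$ the $j$th eigenvalue of $\bSigma_X$ equals $\epsilon^2\sigma^2$ exactly, so Weyl gives $|d_j-\epsilon^2\sigma^2|\le\|\wh\bSigma_X-\bSigma_X\|_\op\lesssim\sigma^2\omega_n$ and hence $|\wh{\epsilon^2\sigma^2}-\epsilon^2\sigma^2|\lesssim\sigma^2\omega_n$, see \eqref{rate_epsilon_sigma}. The operator-norm fluctuation of $n^{-1}\bE\bE^\T$ is of order $\epsilon^2\sigma^2\sqrt{p\log(n)/n}\le\sigma^2\sqrt{\epsilon^2p\log(n)/n}\le\sigma^2\omega_n$ (the noise variance is $\epsilon^2\sigma^2$, not $\sigma^2$), so it does not "ruin the rate": $\omega_n$-accuracy for $\wh{\epsilon^2\sigma^2}$ is all the target bound $\|\bDelta\|_\op\lesssim\omega_n$ requires. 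Your trace route is salvageable but note that removing $\sum_{j\le r}d_j$ costs $r\|\wh\bSigma_X-\bSigma_X\|_\op\lesssim r\sigma^2\omega_n$ by Weyl, and dividing by $p-r$ only brings this back to $O(\omega_n)$ when $p-r\gtrsim r$, an assumption the theorem does not make.
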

	\begin{proof}
		Its proof appears in \cref{app_proof_thm_PCs_hat}.
	\end{proof}
	
	Comparing to \cref{thm_PCs}, the bound of the approximation error $\|\bDelta\|_\op$ in  $\wh\bU_{(r)}$ is smaller due to using $\wh \bD_{(r)}$.  From \cref{thm_critical}, a smaller $\|\bDelta\|_\op$ also reduces the estimation error of the subsequent deflation varimax.  Deploying $\wh\bU_{(r)}$ in \eqref{def_PCs_hat} thus can be treated as the first step of improvement, after which one could continue solving \eqref{obj_L4} by using $\wh\bU_{(r)}$ in lieu of $\bU_{(r)}$, and estimating the loading matrix as  in \eqref{def_Lambda_hat} with $\bD_{(r)}$ replaced by $\wh\bD_{(r)}$. 
	In the following, we discuss a second step of improvement in solving \eqref{obj_L4} for performing deflation varimax.

	\subsection{Improvement of the deflation varimax rotation}\label{sec_theory_alg_ext}

	Recall from \cref{thm_critical} that the addtive error matrix $\bN$ affects the error of estimating $\bA$ through $\sigma_N^2 = \sup_{q \in \Sp^r}\|\bP_{q}^{\perp} \bSigma_N q\|_2$. In the discussion after  \cref{thm_critical}  we mentioned that $\sigma_N^2=0$ when $\bSigma_N$ is proportional to the identity matrix. In this section we  show that as long as $\bSigma_N$ can be  estimated consistently, we can remove   $\sigma_N^2$ in \cref{thm_critical} by the proposed modification.
	% over using the PGD update in \eqref{iter_PGD} for solving \eqref{obj_L4_hat}. 

	To explain the main idea,  our proof of \cref{thm_critical} reveals that $\sigma_N^2 $ originates from one term in the difference between the Riemannian gradient, $\grad F(\bq; \bR \bU_{(r)})$, and its population-level counterpart  (see, \cref{lem_bd_grad_F_tilde}, for details). This term has an explicit expression  and in fact can be viewed as the bias of estimating $\bA$ caused by the additive noise $\bN$ in \eqref{decom_Ur}.  Therefore, the idea is to subtract this term  by using its estimate during PGD.  
	
	Concretely, let $\wh\bSigma_N$ be an estimator of $\bR^\T  \bSigma_N \bR$ with  $\bR$ defined in \eqref{decom_Ur}. For solving \eqref{obj_L4} the $k$th time with each $ k\in [r]$,  we propose to iterate as follows until convergence:
	\begin{equation}\label{iter_PGD_corrected}
		\wh  Q_k^{(\ell+1)}= P_{\Sp^r}\left(\wh  Q_k^{(\ell)} -\gamma ~ \wh\grad   F\bigl(\wh  Q_k^{(\ell)}; \bU_{(r)} \bigr)\right),\quad \text{for }\ \ell = 0,1,2,\ldots
	\end{equation}
	where for any $\bq\in\Sp^r$,
	\begin{equation}\label{def_grad_F_hat}
		\wh\grad   F\bigl(\bq; \bU_{(r)} \bigr) := \grad  F\bigl(\bq; \bU_{(r)} \bigr) + \left(
		1 + \bq^\T  \wh\bSigma_N  \bq 
		\right)\bP_{\bq}^\perp \wh\bSigma_N \bq, 
	\end{equation} 
	%with $\grad  F(\cdot)$ being the Riemannian gradient of $ F(\cdot)$ in \eqref{obj_L4_hat}. 
	Comparing to the PGD in \eqref{iter_PGD}, the second term in \eqref{def_grad_F_hat} is the estimator of the bias term mentioned above. Regarding initialization in \eqref{iter_PGD_corrected}, we can adopt the same initialization schemes discussed in \cref{sec_init}.
	
	Regarding the estimation of $\bSigma_N$, using its structure in \eqref{eq_Sigma_N} together with $\bSigma_E = \bI_p$ gives 
	\[
	\bSigma_N	= \epsilon^2 \bS^{-1}\bL^\T \bSigma_E \bL\bS^{-1} = \epsilon^2 \bS^{-2}. 
	\]  
	Recall that $\wh\bD_{(r)}$ in \eqref{def_D_Lambda}  consistently estimates $\sigma^2 \bS^2$ and $\wh{\epsilon^2\sigma^2}$ in  \eqref{def_epssigma_hat} consistently estimates $\epsilon^2 \sigma^2$. We thus propose to estimate $\bR^\T \bSigma_N \bR$ by  
	\[
	\wh \bSigma_N := \wh{\epsilon^2\sigma^2} ~ \wh\bD_{(r)}^{-1}.
	\] 
	%	  and  to iterate as follows in place of \eqref{iter_PGD}:
	%	\begin{equation}\label{iter_PGD_corrected}
		%		\wh  Q_k^{(\ell+1)}= P_{\Sp^r}\left(\wh  Q_k^{(\ell)} -\gamma ~ \wh\grad F(\wh  Q_k^{(\ell)})\right),\quad \text{for }\ \ell = 0,1,2,\ldots
		%	\end{equation}
	%	where, for any $\bq\in\Sp^r$,
	%	\begin{equation}\label{def_grad_hat_F}
		%		\wh\grad F(\bq) = \grad F(\bq) + \left(
		%		1 + \bq^\T 	\wh \bSigma_N \bq 
		%		\right)	\bP_{\bq}^\perp \wh \bSigma_N \bq.
		%	\end{equation}
	%	Here $\grad F(\bq)$ is defined in \eqref{def_grad_F} and \eqref{def_grad_hat_F} is analogous to \eqref{def_grad_F_hat}.
	%We refer to \cref{rem_grad} for its discussion.  
	
	The following theorem provides rate of convergence of the estimation error of  the resulting rotation matrix $\wc\bQ$  obtained by applying \cref{alg_rotate} to $\wh\bU_{(r)}$ with \eqref{iter_PGD_corrected} in place of the original projected gradient descent iterations in \eqref{iter_PGD}.
	
	\begin{theorem}\label{thm_A_corr}
		Grant the conditions in \cref{thm_one_col} with condition \eqref{cond_omega_n} replaced by 
		\begin{equation}\label{cond_omega_n_general_hat}
			\sqrt{r^2\log(n)\over n} + \sqrt{\epsilon^2p\log(n)\over n} ~ \le  c' \mu \nu^2.
		\end{equation}
		Then there exists a signed permutation matrix $\bP$ such that with probability at least $1 - 3/n$, 
		\begin{align*} 
		  \|\wc \bQ - \bR^\T \bA\mb P\|_{F}~  \lesssim ~ \omega_n \sqrt{r},
		\end{align*} 
		and the estimator $\wh\bLambda$ in \eqref{def_Lambda_hat} with $\bD_{(r)}^{1/2}$ replaced by $\wh \bD_{(r)}^{1/2}$
		satisfies 
		\[ 
		\|\wh \bLambda - \bLambda\mb P\|_{F}~  \lesssim ~  \omega_n \sqrt{r}.
		\]
	\end{theorem}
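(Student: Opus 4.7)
The plan is to mirror the proof pipeline of Theorem~\ref{thm_RA} and Theorem~\ref{thm_A_general}, but with two key substitutions: (i) replace \cref{thm_PCs} by \cref{thm_PCs_hat}, which gives the decomposition $\wh\bU_{(r)} = \bR^\T(\bA \bZ/\sigma + \bN + \bOmega)$ with the sharper bound $\|\bDelta\|_\op \lesssim \omega_n$ (no additive $\epsilon^2$); and (ii) re-analyse the landscape of the oracle problem when the ordinary Riemannian gradient in \eqref{def_grad_F} is replaced by the bias-corrected gradient $\wh\grad F$ in \eqref{def_grad_F_hat}. First, I would establish the analog of \cref{lem_connection} for the modified iteration \eqref{iter_PGD_corrected}: because $\bR$ is orthogonal and both $\grad F(\cdot;\bR\wh\bU_{(r)})$ and the bias-correction term transform equivariantly (note $\bR\wh\bSigma_N\bR^\T$ would play the role of $\bSigma_N$ in the oracle problem, but here we exploit that $\wh\bSigma_N$ is already an estimator of $\bR^\T\bSigma_N\bR$, so the identity is exact up to the same rotation). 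Hence it suffices to analyse the oracle PGD with input $\bR\wh\bU_{(r)} = \bA\bZ/\sigma + \bN + \bOmega$.

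Next I would redo the landscape and contraction analysis underlying Theorems~\ref{thm_critical} and \ref{thm_one_col}. The key calculation is that at the population level, the bias term in the Riemannian gradient $\EE[\grad F(q;\bA\bZ + \bN)]$ along the direction $\bP_q^\perp$ produced by the additive noise $\bN$ is exactly $(1+q^\T\bSigma_N q)\bP_q^\perp \bSigma_N q$. Subtracting the plug-in estimate in \eqref{def_grad_F_hat} therefore cancels this term up to the error $\|\wh\bSigma_N - \bR^\T\bSigma_N\bR\|_\op$. Since $\wh{\epsilon^2\sigma^2}$ and $\wh\bD_{(r)}$ approximate $\epsilon^2\sigma^2$ and $\sigma^2\bS^2$ respectively each at rate $\cO_\P(\omega_n)$ (as shown in the proof of \cref{thm_PCs_hat}), a direct computation yields $\|\wh\bSigma_N - \bR^\T\bSigma_N\bR\|_\op \lesssim \omega_n$. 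Consequently, in the re-derived versions of Lemmas used in the proof of \cref{thm_critical}, every occurrence of $\sigma_N^2$ may be absorbed into $\|\bDelta\|_\op + \omega_n \lesssim \omega_n$. The concentration terms $r\log(n)/n$ and $r^2\log(n)/n$ from uniform deviations of $F$, $\grad F$ and their Lipschitz moduli are unchanged, since they come from the sub-Gaussian tails of $\bZ,\bE$ and were never tied to $\sigma_N^2$.

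With these modifications, the analog of \cref{thm_one_col} gives that any stationary point of \eqref{iter_PGD_corrected} satisfying the oracle initial event $\cE^{(i)}_{\init}(\mu,\nu)$ under the weakened condition \eqref{cond_omega_n_general_hat} obeys $\min\{\|q-A_i\|_2,\|q+A_i\|_2\} \lesssim \omega_n$; summing over the $r$ columns via Corollary~\ref{cor_A} and the Procrustes post-processing in \eqref{def_wc_Q} (whose error analysis is identical to \cref{thm_RA}) delivers the bound $\|\wc\bQ - \bR^\T\bA\bP\|_F \lesssim \omega_n\sqrt{r}$ for a signed permutation $\bP$. For the second bound, the proof of \cref{thm_A_general} proceeds by writing $\bV_{(r)}\wh\bD_{(r)}^{1/2}\wc\bQ$ as an approximation to $\bL\cdot(\sigma\bS)\cdot(\bR\bR^\T)\bA\bP = \sigma\bLambda\bP$, then controlling each factor: $\bV_{(r)}$ estimates $\bL$ up to $\bR$ at rate $\cO_\P(\omega_n)$ by Davis--Kahan; $\wh\bD_{(r)}^{1/2}$ estimates $\sigma\bS$ at rate $\cO_\P(\omega_n)$; and $\wc\bQ$ estimates $\bR^\T\bA\bP$ at the rate just obtained. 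A triangle-inequality argument, combined with the normalization by $\|\bV_{(r)}\wh\bD_{(r)}^{1/2}\wc\bQ\|_\op = \sigma\sigma_1(\bLambda) + \cO_\P(\omega_n) = \sigma + \cO_\P(\omega_n)$, yields $\|\wh\bLambda - \bLambda\bP\|_F \lesssim \omega_n\sqrt{r}$.

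The main obstacle I expect is the clean re-derivation in step two: one must verify that every place where $\sigma_N^2$ entered the proofs of Lemmas~\ref{lem_bd_grad_F_tilde}--\ref{lem_grad_lip} comes from exactly the quadratic-in-$\bN$ bias term that the plug-in correction removes, and that no new higher-order terms involving $\|\wh\bSigma_N - \bR^\T\bSigma_N\bR\|_\op$ destroy the contraction rate established in \eqref{lb_sup}. This requires writing $\wh\grad F = \grad F(\cdot; \text{signal}) + (\text{corrected noise contribution}) + (\text{residual})$ and bounding the residual uniformly on $\Sp^r$ by $\cO_\P(\omega_n)$, which in turn relies on the operator-norm bound for $\wh\bSigma_N$ and on the concentration of the cubic noise statistic $n^{-1}\sum_t (q^\T N_t)^2 N_t N_t^\T$ around $(1+q^\T\bSigma_N q)\bSigma_N q$, both of which are standard under \cref{ass_E_general}.
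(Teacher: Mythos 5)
Your proposal is correct and follows essentially the same route as the paper: the paper packages your middle steps (equivariance of the corrected iteration, cancellation of the population bias term $(1+\|\bq\|_{\bSigma_N}^2)\bP_{\bq}^\perp\bSigma_N\bq$ up to $\|\wh\bSigma_N-\bR^\T\bSigma_N\bR\|_\op$, and re-running the \cref{thm_critical}/\cref{thm_one_col}/\cref{thm_RA} arguments) as the general \cref{thm_RA_hat}, and then its proof of \cref{thm_A_corr} reduces to exactly your remaining ingredients, namely \cref{thm_PCs_hat} for $\|\bDelta\|_\op\lesssim\omega_n$ and the bound $\|\wh\bSigma_N-\bR^\T\bSigma_N\bR\|_\op\lesssim\omega_n$ via the rates of $\wh{\epsilon^2\sigma^2}$ and $\wh\bD_{(r)}$, followed by the \cref{thm_A_general}-style argument for $\wh\bLambda$. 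The only nitpick is a notational slip in your last paragraph (the cited cubic noise statistic is matrix-valued while its claimed limit is a vector), but no new concentration beyond the paper's existing gradient lemmas is in fact needed there.
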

	\begin{proof}
		Its proof appears in \cref{app_proof_thm_A_corr}.
	\end{proof}
	
	In \cref{app_sec_improv} we also state a general version of \cref{thm_A_corr} which is valid for any consistent estimator of $\bSigma_N$. Such result is useful for extension to other structures of $E$.
	
	Comparing to  \cref{thm_RA} and \cref{thm_A_general}, the rate in \cref{thm_A_corr} does not contain the term $\epsilon^2\sqrt{r}$, hence is faster when $\epsilon^2  \ge  \min\{p/n, \sqrt{r/n}\}$.  For the same reason, condition \eqref{cond_omega_n_general_hat} is also weaker than \eqref{cond_omega_n} in \cref{thm_one_col}.  
	In view of  \cref{thm_lowerbounds}, we conclude that the improved  rate in \cref{thm_A_corr} is minimax optimal across all regimes of the signal-to-noise ratio.
	
	\begin{remark}[Comparison with identification and estimation of the column space of the loading matrix]
		It is worth comparing our results of estimating the loading matrix $\bLambda$ with the guarantees of using PCA to estimate the column space of $\bLambda$. To make a fair comparison, we consider the case $\bLambda \in \bbO_{p\times r}$ and $\bSigma_E =  \bI_p$. In such case PCA is known to estimate $\bLambda$, up to $r\times r$ {\em orthogonal matrices}, in the minimax optimal rate $\cO_\P(\sqrt{\epsilon^2 p r / n})$ provided that $p\ge \max\{3r, r+4\}$ \citep[Theorem 3.1]{VuLei2013}. Since  comparison in the case  $\epsilon^2  \ll  r/p$ is less interesting as $\epsilon^2=0$ leads to perfect estimation for PCA, we focus our discussion on the case  $\epsilon^2 p \ge r$. It is interesting to note that the optimal rate of PCA  is the same (ignoring the logarithmic factor) as  that   obtained in \cref{thm_A_corr}  for estimating $\bLambda$, notably, only up to $r\times r$ {\em signed permutation matrices}.  It implies that from the statistical perspective, we do not lose any statistical accuracy for identifying and estimating $\bLambda$, up to  this strict subset of orthogonal matrices. The extra price we pay is the assumption of the factor $Z$ being  independent and leptokurtic.  From the computational perspective, the proposed minimax optimal estimator of $\bLambda$ requires to additionally compute the deflation varimax rotation. Since the latter can be  computed in low order polynomial running time (see, Theorems \ref{thm_one_col}, \ref{thm_RA_rand} and \ref{thm_RA_mrs}), we do not encounter any computational barrier either.
		%Therefore, we do not encounter any statistical and computational gap for estimating $\bLambda$ when we reduce the set of all orthogonal matrices to  its strictly subset, the set of all signed permutation matrices.
	\end{remark}

	\begin{remark}[Improvement on initialization]\label{rem_improve_init}
		Similar to \cref{thm_RA_rand} and \cref{thm_RA_mrs}, condition \eqref{cond_omega_n_general_hat} can be instantiated by  both   initialization  schemes in \cref{sec_init}.  Concretely, for the random initialization, the same expressions of $\mu$ and $\nu$ given by \eqref{mu_nu_rand} can be derived and condition \eqref{cond_omega_n_general_hat} becomes 
		\[
		r^{3/2} \left(
		\sqrt{r^2\log(n)/ n} + \sqrt{\epsilon^2 p \log (n) / n}     
		\right) \le  c'' \delta^2,
		\]
		which is  weaker than \eqref{cond_omega_n_rand} in \cref{thm_RA_rand}. 
		For the method of moments based initialization, by writing $\wh \bSigma_U := \bI_r + \wh\bSigma_N$ and by replacing \eqref{def_M_hat_main} with 
		\begin{align}\label{def_M_hat_impro}
		\wh \bM(\bG_{(i)}) = {1\over 3n}\sum_{t=1}^n U_t U_t^\T  U_t^\T \bG_{(i)} U_t -\wh \bSigma_U \left(\bG_{(i)} +\bG_{(i)}^\T\right)\wh \bSigma_U  - \tr\left(
		\bG_{(i)} \wh \bSigma_U 
		\right)\wh \bSigma_U,
		\end{align}
		one can prove similar results of \cref{thm_RA_mrs} with condition  \eqref{cond_omega_n_msr} weakened to 
		$
		r \omega_n  \le  c''
		$
		and the expressions of $\mu$ and $\nu$ improved to
		\[
		\mu = 1 - C'   r \omega_n,\quad \nu =  \mu - \sqrt{1-\mu^2}.
		\] 
	\end{remark} 
	
	\begin{remark}[Extension to other structured noise]\label{rem_ext}
		The correction in \eqref{def_grad_F_hat} uses the Gaussianity of the additive error $\bN$ through 
		$\EE[N_{ij}^4] = 3\EE[N_{ij}^2]$ for each $i\in [r]$ and $j\in [n]$. For non-Gaussian case, \eqref{def_grad_F_hat} could be modified provided that the fourth moment of $N_{ij}$ can be estimated consistently. 
		Although we focus on the case $\bSigma_E = \bI_p$, both steps of  improvement mentioned above can be applied to other structures of $\bSigma_E$. For example, if $\bSigma_E$ is diagonal  but with distinct diagonal elements, then instead of using PCA, one should resort to either the classical factor analysis (such as \cite{BaiLi2012}) or the heteroscedastic PCA in \cite{zhang2022heteroskedastic} to obtain estimators of $\bL$ and $\sigma^2\bS^2$ in lieu of $\bV_{(r)}$ and $\bD_{(r)}$, and to estimate the covariance matrix $\bSigma_E$. Once this is done, the second step of improvement could be applied analogously. For non-diagonal but sparse $\bSigma_E$, one might resort to the approaches of recovering an additive decomposition between a low-rank matrix and a sparse matrix \citep{Hsu2011,chandrasekaran2012latent,candes2011robust} to compute matrices similar to $\bV_{(r)}$ and $\bD_{(r)}$, and to estimate  $\bSigma_E$. We leave thorough analysis of these extensions to future research.
	\end{remark}

	 \section{Real data analysis}\label{sec_real_data}
	
	  In this section, we discuss  selection of the number of factors in \cref{sec_select_r} and demonstrate the efficacy of our proposed approach through two real data analyses in Sections \ref{sec_data_digits} and \ref{sec_data_imgs}.
	
	 \subsection{Selection of the number of factors}\label{sec_select_r}
	
	     Choosing the number of factors under model \eqref{model_X} has been one of the most fundamental problems in factor analysis. From the theoretical perspective, many criteria have been proposed and shown to be consistent under certain regimes. This includes information based criteria in \cite{BaiNg2002}, eigen-gap based criteria in \cite{Ahn-2013,lam2012,bing2020prediction} and parallel analysis \citep{PA}, just to name a few. However, on the practical side, the screeplot is still widely used  to determine the number of factors despite its subjectivity. 
	
	     For analyzing real data sets, a common empirical phenomenon is  that different choices of the number of factors all yield meaningful results. 
	     As commented in \cite{rohe2020vintage}, ``many times the factors have something resembling a hierarchical structure ... The {\em Cheshire Cat Rule} says that there is not a single correct answer for the choice of $r$, that answer depends upon where you want to go.'' In our real data analysis, we chose the number of factors that yields meaningful results. We also had similar findings across a range of different choices.

	 \subsection{Learning meaningful basis of handwritten digits}\label{sec_data_digits}
	
	 In digital image processing, an important task, also known as sparse coding, is to find a meaningful set of basis so that the original images can be represented as approximately sparse linear combinations of this basis  \citep{field1994goal,olshausen1996emergence}.  In this section, we demonstrate the efficacy of  our proposed algorithm for this purpose 
	 on a dataset consisting of $8000$ handwritten digit images obtained from the MNIST dataset \citep{Lecun1998Minst}.   
	 Each original image consists of $28\times28$ pixels, vectorized as a $784$-dimensional vector. By concatenating these vector-level pixels across all images, we obtain a  $p\times n$ data matrix $\bX$ with $p = 784$ and $n = 8000$.  Our primary objective is to apply the proposed PCA-dVarimax to learn a $p\times r$ matrix $\bLambda$ consisting of basis of all images, as well as an approximately sparse, low-dimensional representation $\bZ\in\R^{r\times n}$ of $\bX$ with $r =49$. 
	
	 For the proposed PCA with deflation varimax (PCA-dVarimax), in both real data analyses and simulations in \cref{sec_sim}, we chose the step size in \eqref{iter_PGD} %for the $\ell$-th iteration 
    as $\gamma = 10^{-5}$, and adopt the stopping criterion when either $\|\grad F(\wh Q_k^{(\ell)}; \bU_{(r)})\|_2 \le 10^{-6}$ or we reach $5000$ iterations. 
	
	 \cref{fig:learnedBasis} depicts the learned 49 basis of PCA-dVarimax, with each small panel corresponding to one column of $\wh\bLambda$, given by \eqref{def_Lambda_hat}. For comparison, we also include the basis learned from applying PCA only. 
	 It is clear that most of the learned basis from PCA-dVarimax represent certain meaningful patterns of handwritten digits whereas the basis of PCA provide little understanding of the underlying structure. 
	 In \cref{fig:app_spca} we further plot four pairs of the rotated Principal Components (PCs), that is, the rows of $\wc \bQ^\T\bU_{(r)}$ from PCA-dVarimax, and contrast them with the original, unrotated PCs. Evidently, the PCs after rotation tend to be  approximately more sparse, a phenomenon called {\em radial streaks} when the axes are aligned with the streaks, as introduced by \cite{Thurstone1947} (see, also, \cite{rohe2020vintage}). This finding  is in line of \cref{rem_Z}.

	 \begin{figure}[ht]
		 \centering
		 \includegraphics[width=.47\textwidth, height=.3\textheight]{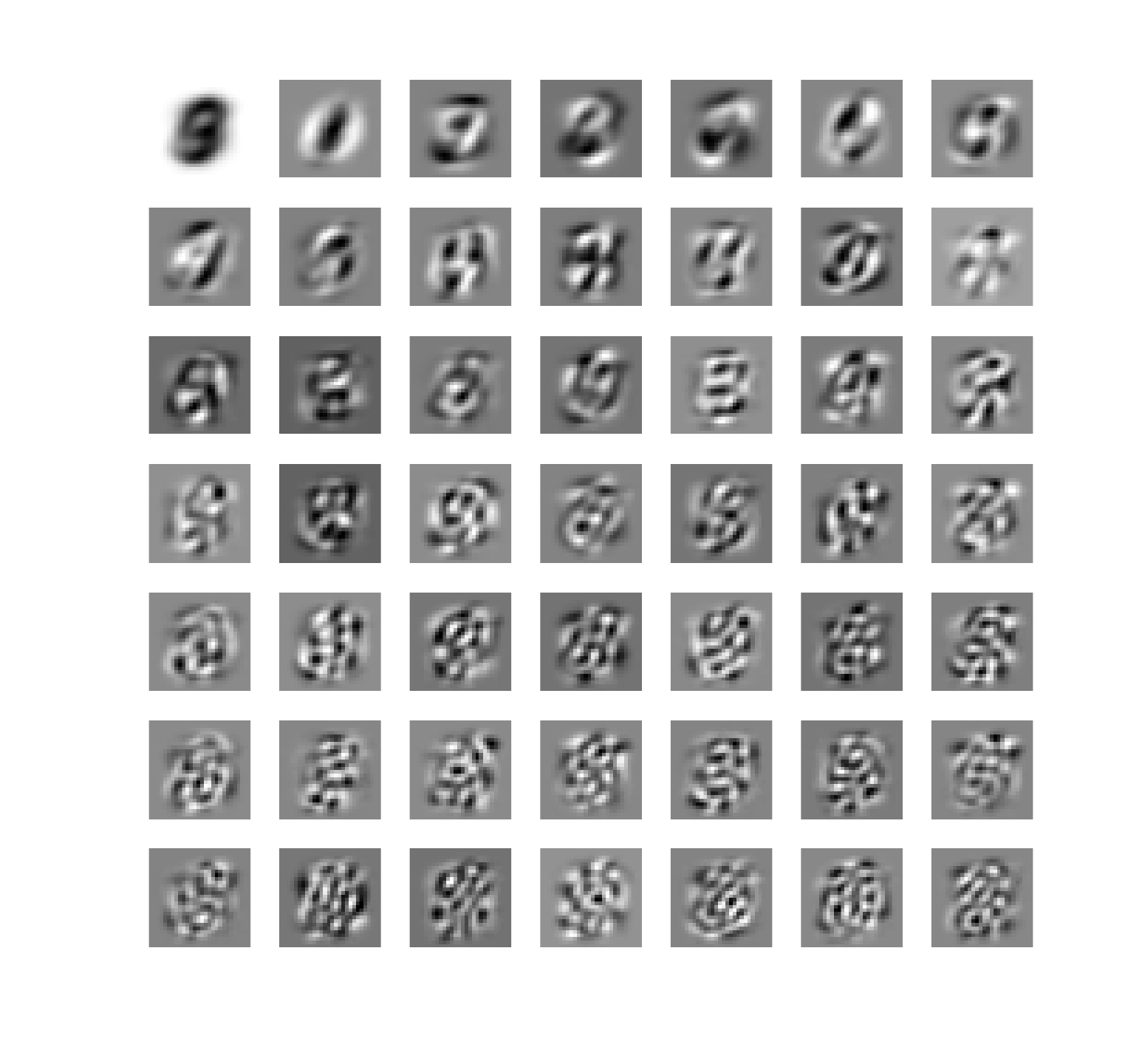}
		 \hspace{-5mm}
		 \includegraphics[width=.47\textwidth, height=.3\textheight]{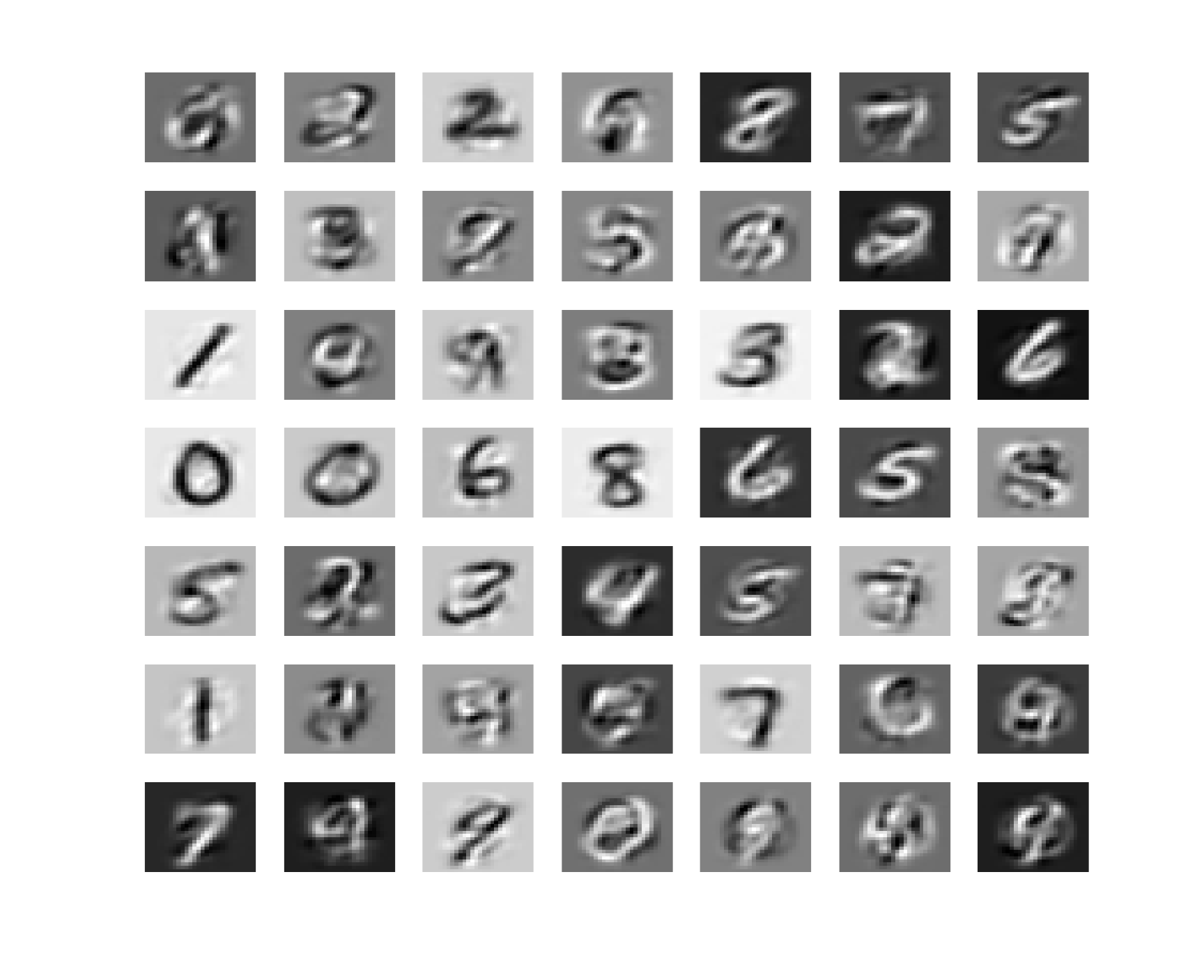}
		 \vspace{-5mm}
		     \caption{Learned 49 basis by PCA (left) and PCA-dVarimax (right).}
		     \label{fig:learnedBasis}
		 \end{figure}
	
	 \begin{figure}
		   \begin{subfigure}[h!]{0.5\textwidth}
			          \centering
			          \includegraphics[width=\textwidth]{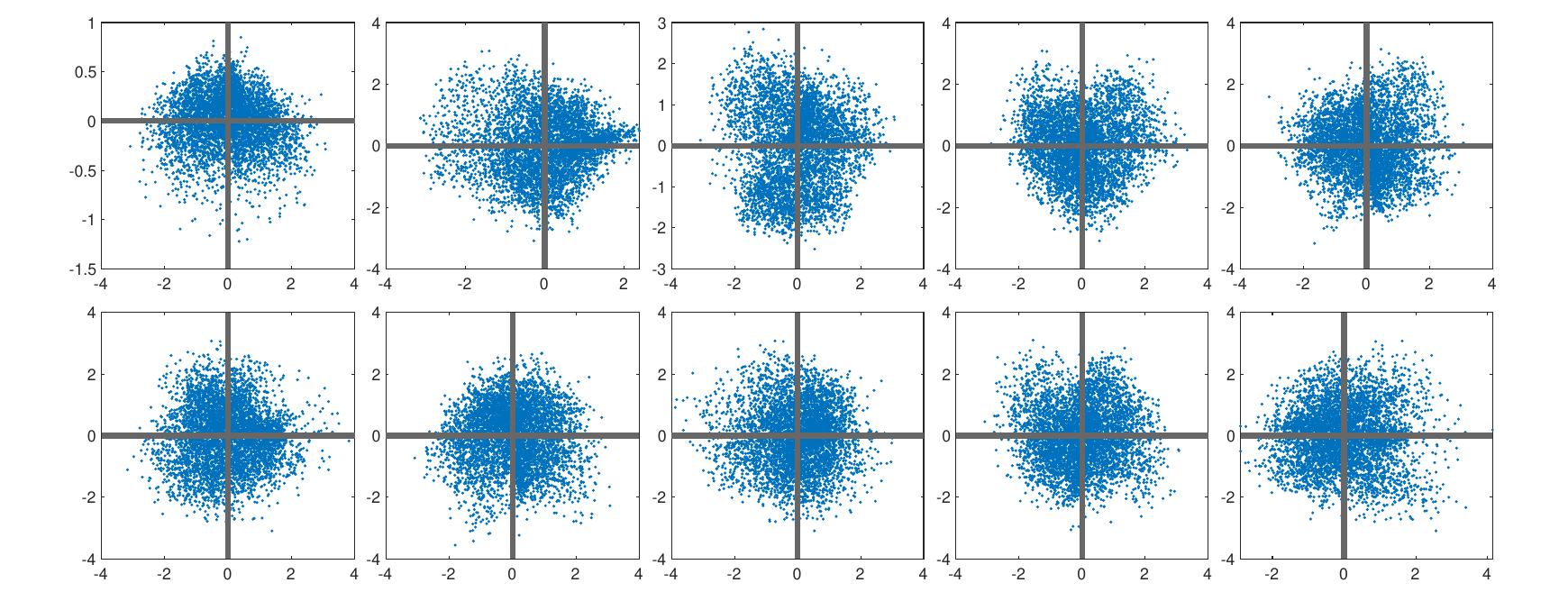}
			      \end{subfigure}
		      \hspace{-3mm}
		      \begin{subfigure}[h!]{0.5\textwidth}
			          \centering
			          \includegraphics[width=\textwidth]{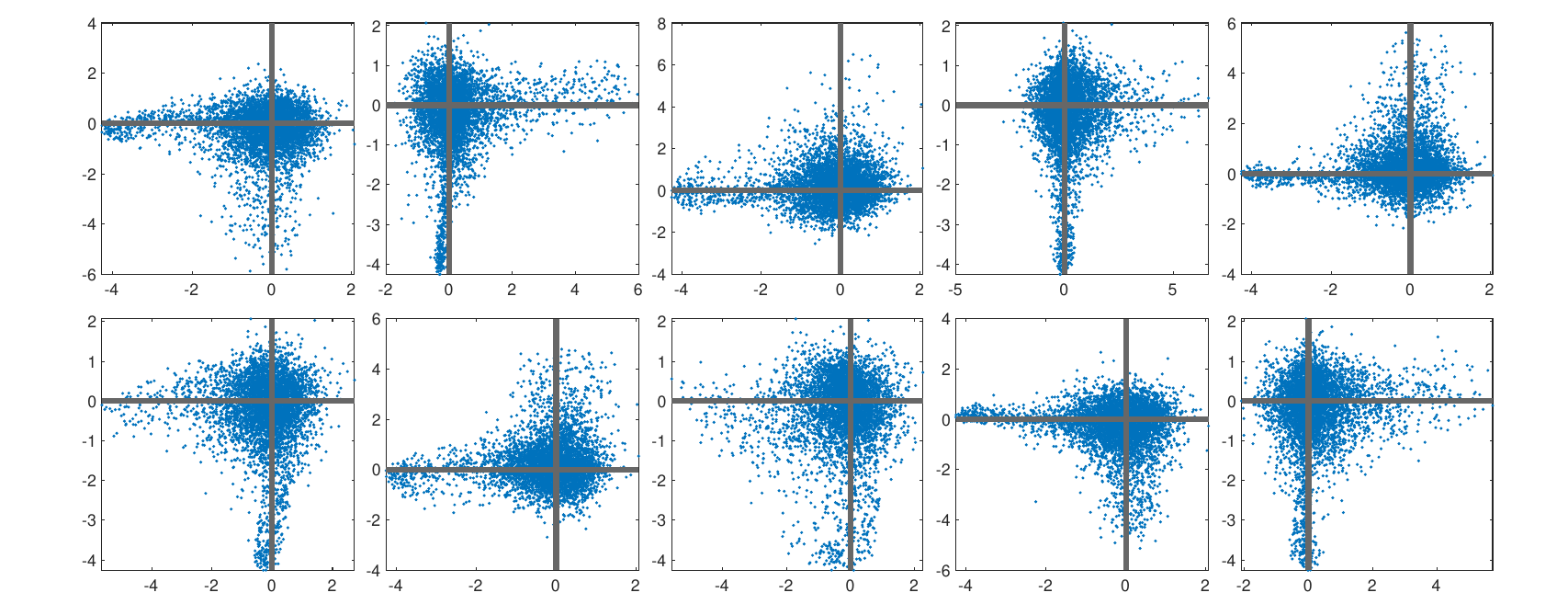}
			      \end{subfigure}
		 \vspace{-.1in}
		     \caption{Ten selected pairs of unrotated PCs (left) and rotated PCs (right).}
		     \label{fig:app_spca}
		 \end{figure}

	 \subsection{Recovering signals from mixed images}\label{sec_data_imgs}
	
	 In this section, we demonstrate the effectiveness of the proposed PCA-dVarimax for recovering signals from mixed images, a classical application of Independent Component Analysis (ICA) for feature extraction \citep{hyvarinen2000independent}. We start by selecting two grayscale images, and by generating three noise images using i.i.d  samples from $\cN(0, 80^2)$. The first row of  \cref{fig:RealBSS_image} shows these five original images. Each of these images is originally represented as a $171\times 171$ matrix, and then reshaped to a vector of length $n = 29241$. Stacking these vectors together yields the $r\times n$ factor matrix $\bZ$ (with $r = 5$)  in which the two rows corresponding to grayscale images are the signals that we would like to extract. 
	 Next we generate $p = 100$ linear combinations of these five images and mix them with additive Gaussian noise. Specifically, we generate the data matrix as $\bX=\bLambda\bZ+ \bE$  where $\bLambda\in \mathbb R^{p\times r}$ has i.i.d standard normal entries with normalized operator norm, and the additive noise matrix $\bE\in \mathbb R^{p\times n}$ is generated as in  \cref{sec_sim} with $\bSigma_E = \bI_p$ and $\eps=0.5$. The second row in \cref{fig:RealBSS_image} showcases five randomly selected rows of $\bX$.
	
	 To recover the two rows of $\bZ$ corresponding to the signal images, in view of \eqref{def_Lambda_hat} and \cref{rem_Z}, it is natural to predict $\bZ$ by the scaled rotated Principal Components (PCs),
	 \begin{equation}\label{def_Z_hat}
		    \wh \bZ = \wc\bQ^\T \bU_{(r)} ~ \|\bV_{(r)} \bD_{(r)}^{1/2}\wc\bQ\|_\op = \wc\bQ^\T \bU_{(r)} \|\bD_{(r)}^{1/2}\|_\op.
		 \end{equation}
	 The matrices $\bD_{(r)}$, $\bU_{(r)}$ and $\wc\bQ$ are defined in \eqref{eq_eigen_Y}, \eqref{def_PCs} and \eqref{def_wc_Q}, respectively.  \cref{fig:RealBSS_result} plots the resulting five rows of $\wh\bZ$ from PCA-dVarimax. For comparison, we also include the predicted rows of $\bZ$ from applying:  (1) PCA-dVarimax-2, the two-step improved PCA-dVarimax introduced in \cref{sec_ext}, also see details in \cref{subsec_exp_improv}, (2) Fast-ICA, the popular ICA algorithm   \citep{hyvarinen1997fast,hyvarinen2000independent}, (3)   Fast-ICA-Tensor, the same Fast-ICA algorithm except for using  initialization in \cite{auddy2023},
	 (4) PCA,  corresponding to $\wc\bQ = \bI_r$ in \eqref{def_Z_hat},  and (5) Varimax, with $\wc\bQ$ in \eqref{def_Z_hat} replaced by the rotation matrix obtained from solving the varimax criterion in \eqref{obj_varimax}. We use the function \textsf{rotatefactors} in MATLAB2023b  to compute the varimax rotation.\footnote{Code to reproduce numerical results is available in \url{https://github.com/Jindiande/optimal_deflation_varimax}.}
	
	 %Here we have flipped the gray-scale signals accordingly to ensure that the figures are comparable.
	 %In each row, we present the top two recovery results of the aforementioned methods, respectively. 
	 Notably, PCA-dVarimax and PCA-dVarimax-2 are the only two algorithms that effectively recover  both signal images (the first and the fifth rotated PCs). Moreover, PCA-dVarimax-2 recovers the second signal  better as seen from the fifth rotated PCs. This is as expected according to  our theoretical findings in \cref{sec_ext}. By contrast,  Fast-ICA, Fast-ICA-Tensor and Varimax only captured one signal image while PCA  failed to separate the original two signals from each other.  
	 
	 \begin{figure}[!htbp] 
		 \begin{subfigure}{\textwidth}
			         \centering
			 \includegraphics[width=\textwidth,height=5cm]{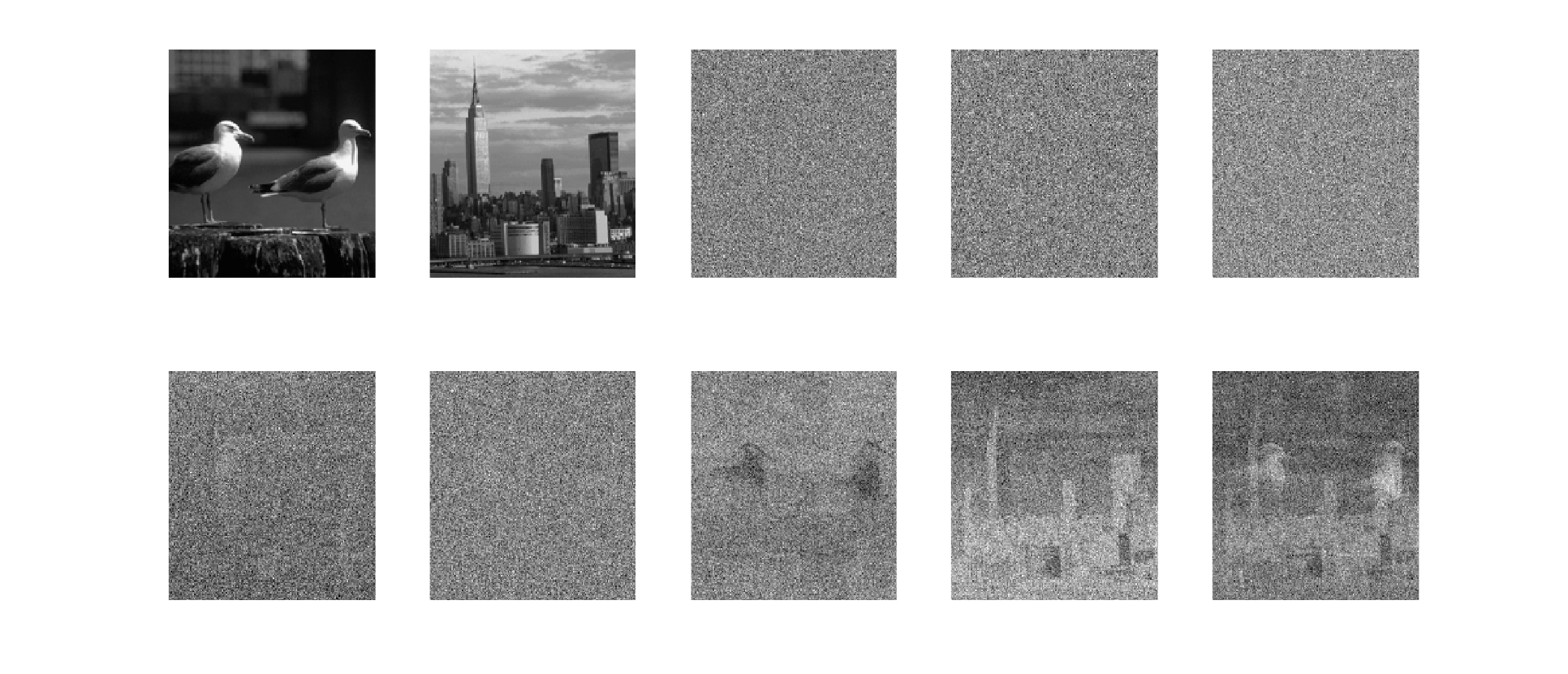}
			 \vspace{-.3in}
			 \caption{Input images. The first row contains five original images in $\bZ$ (2 signal images and 3 Gaussian-noise images). The second row contains five randomly selected mixed images in $\bX$.}
			 \label{fig:RealBSS_image}
			 \end{subfigure}
		 \vfill
		 \begin{subfigure}{\textwidth}
			         \centering
			 \includegraphics[width=\textwidth,height=15cm]{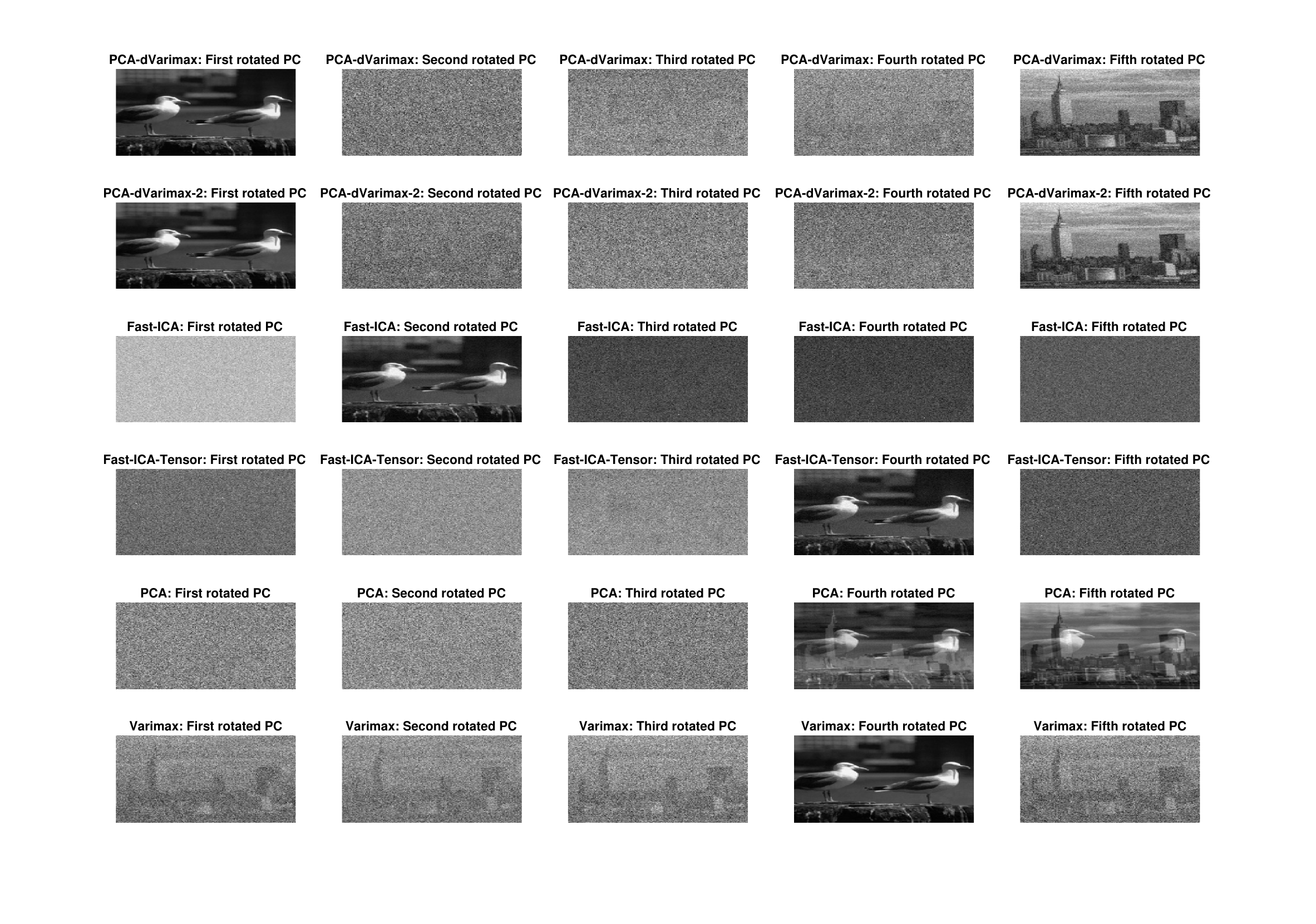}
			 \vspace{-.7in}
			 \caption{Recovered images from  PCA-dVarimax, PCA-dVarimax-2, Fast-ICA, Fast-ICA-Tensor, PCA, and Varimax.}
			 \label{fig:RealBSS_result}
			 \end{subfigure}
		 \end{figure}

{\small
    \setlength{\bibsep}{0.85pt}{
    \bibliographystyle{plainnat}
    \bibliography{ncvx}
    }
}
	
	\newpage

	\begin{appendix}

        \section{Simulation}\label{sec_sim}
	
	In this section, we conduct additional simulation studies  to verify our theoretical findings. In \cref{sec:loading_compare} we examine the performance of  our proposed PCA-dVarimax for estimating the loading matrix and compare its performance  with other approaches. In \cref{subsec_exp_improv} we verify the benefit of both steps of improvements as discussed in \cref{sec_ext}.
	 
	We start by describing our data generating mechanism. For the loading matrix $\bLambda$, we choose $\bLambda = \bar\bLambda \diag(d_1,\ldots,d_r) $ with entries of $\bar\bLambda\in\R^{p\times r}$ i.i.d.  from $\cN(0, 1)$ and $d_1, \ldots, d_r$ i.i.d. from Uniform$[0.5,1.5]$. The matrix $\bLambda$ is then normalized to the unit operator norm. 
	Entries of $\bZ$ are generated i.i.d. from the Bernoulli-Gaussian distribution, that is, $Z_{ij} = B_{ij}W_{ij}$ with $B_{ij} \sim \text{Bernoulli}(\theta=0.1)$  and $W_{ij}\sim \cN(0,1)$ for all $i\in[r]$ and $j\in [n]$. Finally, columns of the noise matrix $\bE$ are generated i.i.d. from $\cN_p(0, \eps^2 \bSigma_E/p)$ so that the reciprocal of signal-to-noise ratio (SNR), $\epsilon^2$, is proportional to $\eps^2/p$. Regarding the choice of $\bSigma_E$, we consider: (a) $\bSigma_E = \bI_p$;  (b) $\bSigma_E =   \diag(\gamma_1^2,\ldots, \gamma_p^2)$ with 
		$
		\gamma_j^2 = {p v_j^\alpha  /(\sum_{j=1}^p v_j^\alpha)} 
		$
		for all $ j\in[p]$,
		where $v_1,\ldots, v_p$ are i.i.d. from Uniform $[0, 1]$. The magnitude of $\alpha \ge 0$ controls the degree of heteroscedasticity, and is set to 0.1 in our simulation; and (c) $[\bSigma_E]_{ij} = (0.5)^{|i-j|}$ for all $i,j\in [p]$. 
	Since the results for all three choices are nearly identical, we only report the one of  $\bSigma_E=\mb I_p$. 
	% For the proposed PCA with deflation varimax (PCA-dVarimax), we chose the step size in \eqref{iter_PGD} for the $\ell$-th iteration as $\gamma = 10^{-5}$.
	% %\ell^{-0.8}$. 
	% We adopt the stopping criterion when either $\|\grad F(\wh Q_k^{(\ell)})\|_2$ is no greater than $10^{-6}$ or we reach $5000$ iterations.
	
 For a given estimator $\wh\bLambda$ of $\bLambda$, we use the following metric to evaluate its estimation error:
	\begin{align}\label{exp:error_def}
		\min_{\mb P} ~  & \|\wh\bLambda -\bLambda \mb P\|_F ,\quad \text{subject to} \quad \mb P \textrm{ is an $r\times r$ signed permutation matrix.}
	\end{align}
	We will compute this metric for different estimators in various situations  by varying $n$, $p$, $r$ and $\epsilon^2$ one at a time. For each individual setting, we report the averaged estimation error of each procedure over $100$ repetitions.

	\subsection{Performance on estimating the loading matrix}\label{sec:loading_compare}
% In this segment, we delve into numerical outcomes concerning the estimation of the loading matrix. Our exploration unfolds in two phases: firstly, we scrutinize the influence of four distinct initialization strategies on our proposed estimator, PCA-dVarimax, delineated in \eqref{def_Lambda_hat}. Subsequently, we juxtapose this estimator against its contemporaries.

In this section, we examine the performance of our proposed PCA-dVarimax for estimating the loading matrix $\bLambda$. In \cref{sec:sim_ini_compare} we first investigate the effect of  initialization scheme used in PCA-dVarimax for estimating $\bLambda$.  In \cref{sec:sim_methods_compare}, we then compare   PCA-dVarimax  with other approaches. 

\subsubsection{Effect  of different initialization schemes}\label{sec:sim_ini_compare}

As studied in \cref{sec_init}, PCA-dVarimax is suitable for a wide range of initialization schemes. We now verify this numerically   by considering  the following  initialization schemes coupled with the PCA-dVarimax approach. 

\begin{itemize}
    \item Random Initialization (RI): This is as discussed in Section~\ref{sec_init_rand}.
    
    \item Method of Moments-based Initialization (MoMI): This is as discussed in Section~\ref{sec_init_momi}.
    
    \item Multiple Random Initializations (MRI):  This initialization  scheme  is mentioned in Remark~\ref{rem_mrand} which uses multiple random draws of $g_{(-k)}$.  Specifically, for recovering the $(k+1)$th column with $0\le k < r$, let $g_{(-k)}(\ell)$, for $\ell=1,2,\ldots,L$, be $L$ independent draws as $g_{(-k)}$, and their  corresponding initialization
    $Q_{k+1}^{(0)}(\ell) $ given by \eqref{def_q_init_all_cols} with $g_{(-k)}$ replaced by $ g_{(-k)}(\ell)$.    A reasonable criterion of selecting one from these initializations is based on the loss function $F(\cdot; \bU_{(r)})$ given by \eqref{obj_L4}: 
    \begin{align*}
    	\ell^*=\argmin_{\ell \in [L]} F\Bigl( Q_{k+1}^{(0)}(\ell); \bU_{(r)} \Bigr).
    \end{align*}
	MRI then uses $Q_{k+1}^{(0)}(\ell^*)$ as the initialization. We set  $L=r^2$  in our simulation study. 
%It worth noting that Theorem~$3.1$ in \cite{auddy2023} justify the method for the population-level scenario, leveraging the independence of $\mb g_{r-k+1}^{(l)}$ across $l\in [L]$. However, a comprehensive theoretical investigation for the sample-level equivalent awaits further examination. We direct the reader to Remark~\ref{rem_mrand} for further discussion.

%        \item {\mike Remove this one?} Method of Moments-based Initialization with data-splitting (MoMI-split): \textcolor{red}{Here, we separate the sample $\mb U$ into two subsets, each occupying $n/2$ observations, denominated as $\mb U_1$ and $\mb U_2$. Correspondingly, $\overline\bM_{1}(\bG_{(i)})$ and $\overline\bM_{2}(\bG_{(i)})$ are obtained from these respective subsets respectively, as discussed in Section~\ref{sec_init_momi}. Subsequent to this, we project $\overline\bM_{1}(\bG_{(i)})$ onto the eigen-space of $\overline\bM_{2}(\bG_{(i)})$.}
\end{itemize}

% This systematic comparison aims to elucidate the comparative advantages and limitations of each initialization strategy within the context of our proposed PCA-dVarimax estimator.

	\cref{fig_all_errors_ini} shows the averaged estimation errors of PCA-dVarimax using different initialization schemes in various settings. We vary 
	$n\in \{50, 100, 300, 500, 700, 900\}$,  $p \in\{10,20,30,50,70,100,300,700\}$, $r \in \{3, 5, 10, 15, 20\}$ and $\eps^2 \in \{0.1, 0.3, 0.5, 0.7, 0.9\}$ one at a time, and   fix $n = p=300$, $r=5$ and $\eps^2=0.1$ whenever they are not varied. 
	It is clear that all initialization methods lead to similar estimation errors in all settings.   This  indicates that each of the initialization schemes could be used for finding the global solution in  \eqref{obj_L4}, hence could be paired with our proposed PCA-dVarimax approach. This is consistent with our theoretical guarantees in \cref{sec_init} which are established for any generic initialization. 
    
    In \cref{fig_all_errors_converg_ini}, we further examine the speed of convergence of different initialization schemes by comparing their estimation errors after certain number of iterations. We consider the case $n=900$, $p=300$, $r=5$ and $\eps^2=0.1$ with the iteration number varying within $\{1000, 1500, 2000, 2500, 3000, 3500, 4000, 4500, 5000\}$. It is evident to see that PCA-dVarimax-MoMI enjoys the fastest convergence while PCA-dVarimax-RI has the slowest convergence.  PCA-dVarimax-MRI improves upon PCA-dVarimax-RI is comparable to PCA-dVarimax-MoMI. %All initialization schemes converge to the optimal global solutions.
    In view of these findings, we only consider the PCA-dVarimax estimator  using MoMI in the rest of our simulation studies. 
	
	From \cref{fig_all_errors_ini}, we observe that PCA-dVarimax has better performance for larger values of $n$ or smaller values of $r$ and $\eps^2$.  As the ambient dimension $p$ increases, the performance of PCA-dVarimax first gets improved and then becomes stabilized. This is because the noise level $\epsilon^2$ in our simulation setting is of order $\eps^2/p$ whence the rate in \eqref{rate_Lambda} becomes  $\sqrt{r^2/n} + \sqrt{\eps^2 r / n} + (\eps^2/p)\sqrt{r}$. This rate gets faster as $p$ initially increases from a small value and stops changing once $p$ becomes large enough. All these findings are in line with our theory in \cref{sec_theory}.

	% 	\begin{figure}[ht] 
	% 	\centering
	% 	\begin{subfigure}{1.2\textwidth}
	% 		%			\caption{Settings with $p = 300$, $r = 5$, $\eps^2 = 0.1$ and $n\in \{50, 100, 300, 500, 700\}.$}\label{fig:varying_n_ini}
	% 		\includegraphics[width=\textwidth]{../Exp_results_journal_version_revise/compare_ini/compare_ini_varing_n-eps-converted-to.pdf}
	% 	\end{subfigure}
	% 	\begin{subfigure}{1.2\textwidth}
	% 		%			\caption{Settings with $n=300$, $r=5$, $\eps^2=0.1$ with $p \in\{30,50,100,300,700\}.$}\label{fig:varying_p_ini}
	% 		\includegraphics[width=\textwidth]{../Exp_results_journal_version_revise/compare_ini/compare_ini_varing_p-eps-converted-to.pdf}
	% 	\end{subfigure}
	% 	\begin{subfigure}{1.2\textwidth}
	% 		%			\caption{Settings with $n=300$, $p=300$, $\eps^2=0.1$, vary $r \in \{3, 5, 10, 15, 20\}$.}
	% 		%			\label{fig:varying_r}
	% 		\includegraphics[width=\textwidth]{../Exp_results_journal_version_revise/compare_ini/compare_ini_varing_r-eps-converted-to.pdf}
	% 	\end{subfigure}
	% 	\begin{subfigure}{1.2\textwidth}
	% 		%			\caption{Settings with $p=300$, $r=5$, $n=300$ and $\eps^2 \in \{0.1, 0.3, 0.5, 0.7, 0.9\}$.}\label{fig:varying_eps_ini}
	% 		\includegraphics[width=\textwidth]{../Exp_results_journal_version_revise/compare_ini/compare_ini_varing_eps-eps-converted-to.pdf}
	% 	\end{subfigure} 
	% 	\caption{The averaged estimation errors of PCA-dVarimax using different initialization schemes}
	% 	\label{fig_all_errors_ini}
	% \end{figure}
 \begin{figure}
     \centering
     \includegraphics[width=0.8\textwidth]{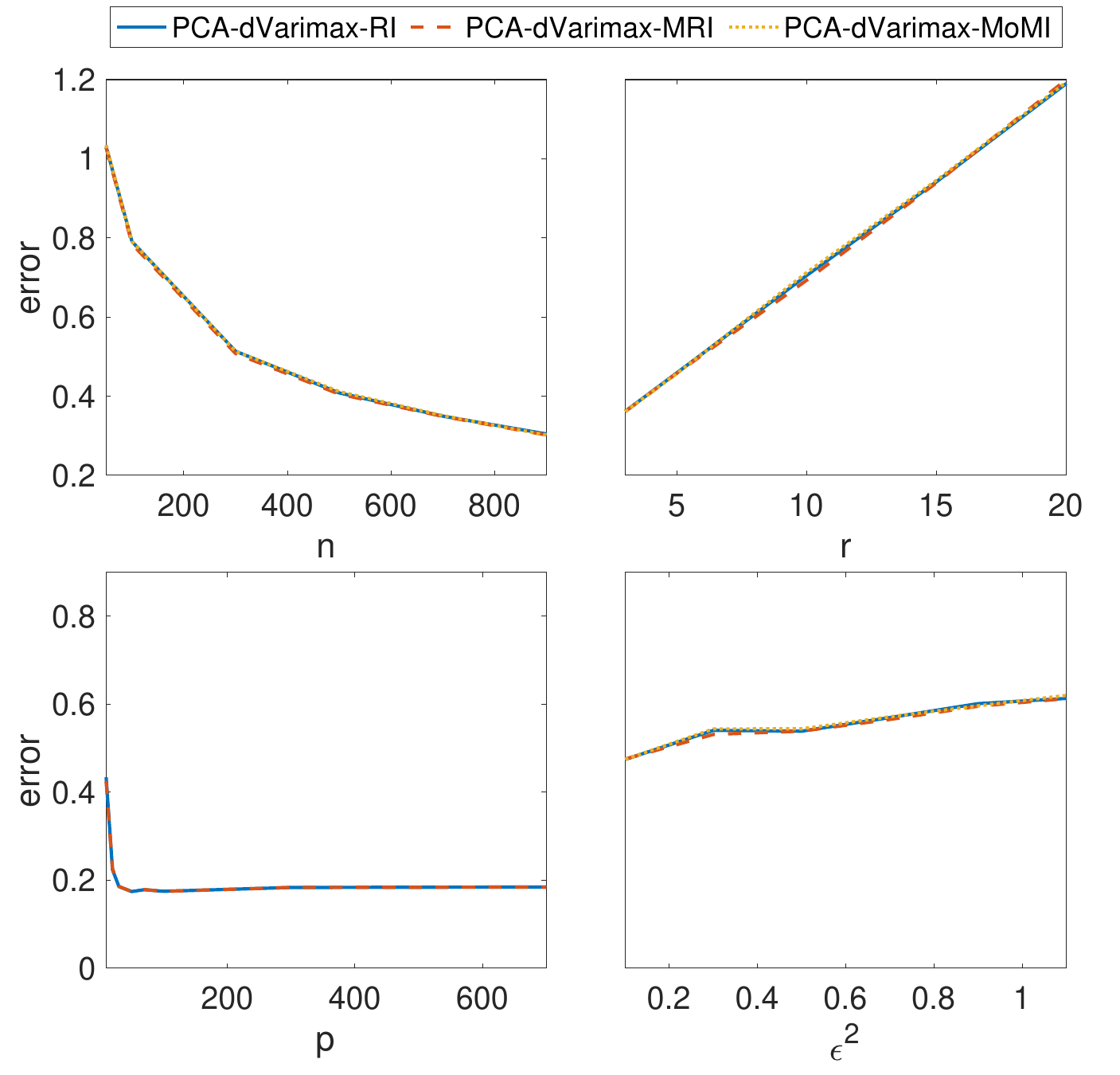}
     \caption{The averaged estimation errors of PCA-dVarimax coupled with different initialization schemes.}
\label{fig_all_errors_ini}
 \end{figure}
 \begin{figure}
     \centering
     \includegraphics[width=0.65\linewidth]{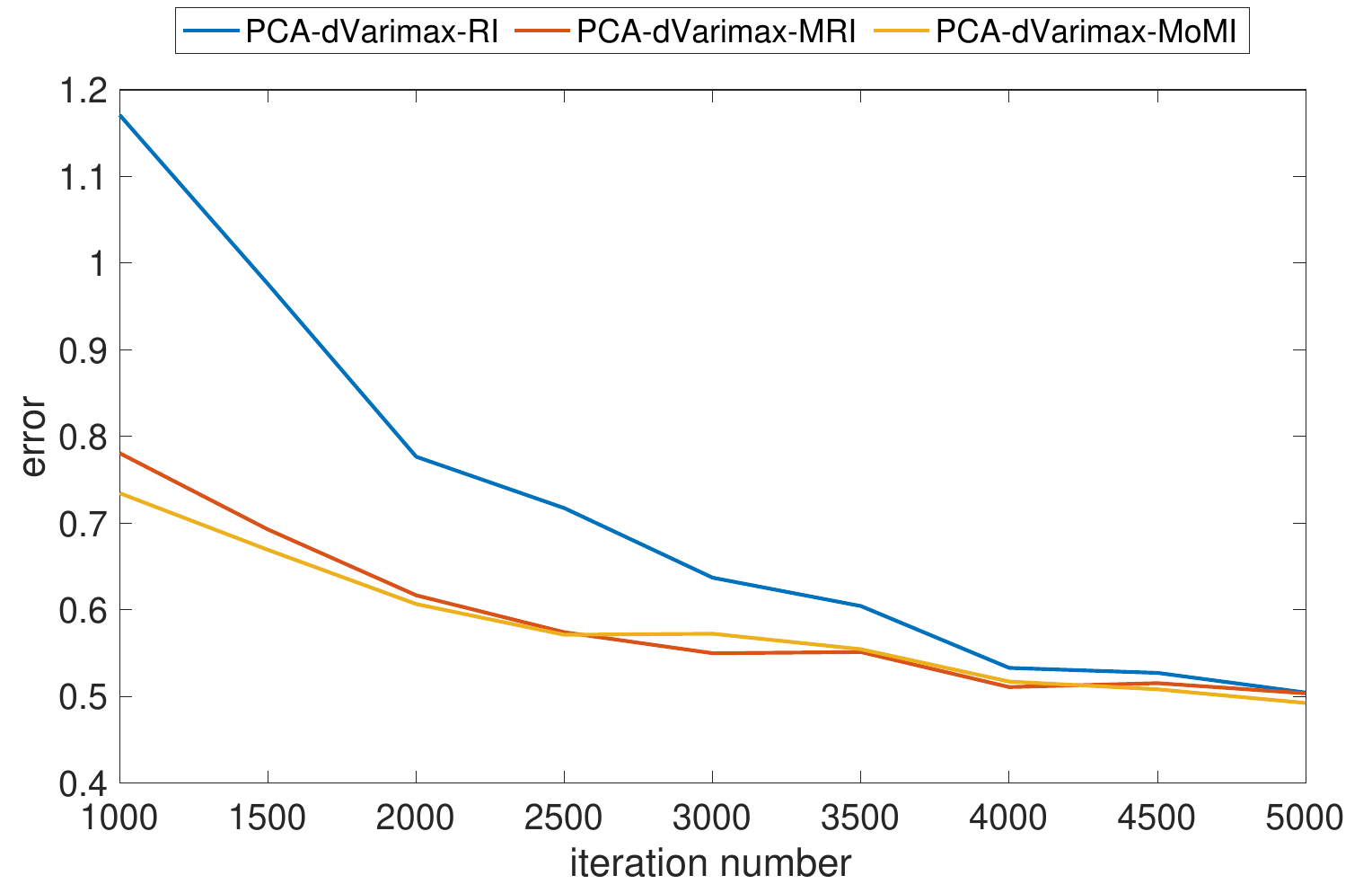}
     \caption{The averaged estimation errors of PCA-dVarimax with different initialization schemes at various numbers of iterations}
\label{fig_all_errors_converg_ini}
 \end{figure}
	
		\subsubsection{Comparison of PCA-dVarimax with other approaches}\label{sec:sim_methods_compare}
		
	Regarding on estimating  the loading matrix $\bLambda$, we compare PCA-dVarimax with  other approaches, including PCA, Fast-ICA, Fast-ICA-Tensor, Varimax and S-PCA. These methods except S-PCA are already introduced in \cref{sec_data_imgs}. 
		The S-PCA estimator refers to the sparse PCA method  \citep{Zou06spca,ShenHuangspca} defined as
		\begin{align}\label{obj_SPCA_est}
			\wh \bLambda_{\textrm{S-PCA}} = {\wh\bV \diag(\wh d_1, \ldots, \wh d_r) \over \max_{1\le i\le r}\wh d_i},\quad \text{with } ~ \wh d_i =  \|\wh U_{i\cdot}\|_2 \text{ for all }i\in [r],
		\end{align}
		where 
		\begin{align}\label{obj_SPCA}
			&(\wh\bV, \wh\bU) = \argmin_{\bV\in \bbO_{p\times r},\bU\in \mathbb{R}^{r\times n}} \left\|\bX-\bV\bU\right\|_F^2 +\lambda\sum_{i,j}|U_{ij}|.
		\end{align}
		In \eqref{obj_SPCA_est}, we modify the original sparse PCA procedure for  estimating the loading matrix $\bLambda$ in our context. 
		For solving \eqref{obj_SPCA}, we adopt the popular alternating direction algorithm  \citep{bertsekas2015parallel,tseng2001convergence} by updating either $\bV$ or $\bU$  alternatively until convergence.

	 \cref{fig_all_errors} depicts the averaged estimation errors of all approaches in various scenarios. We vary 
	 $n\in \{50, 100, 300, 500, 700, 900\}$,  $p \in\{10,20,30,50,70,100,300,700\}$, $r \in \{3, 5, 10, 15, 20\}$ and $\eps^2 \in \{0.1, 0.2, 0.3, 0.4, 0.5, 0.6, 0.7\}$ one at a time, and   fix $n =900$, $p=300$, $r=5$ and $\eps^2=0.1$ whenever they are not varied.  It is clear that  the proposed PCA-dVarimax has comparable performance with Varimax while outperforms all other estimators. Despite the competitive performance in our simulation settings, Varimax does not have any theoretical guarantee, hence might yield unsatisfactory result in practice, as shown in \cref{sec_data_imgs}.  
	 %The PCA-dVarimax demonstrates comparable performance to the Varimax estimator across varying values of $n$ and $p$, though a slight performance gap emerges as $r$ and $\eps^2$ increase. 
 It is also worth noting the two Fast-ICA algorithms suffer from moderate and high dimensionality and their performance deteriorate as $p$ increases. %This is possibly due to the challenges associated with high-dimensional optimization. 

	% \begin{figure} 
	% 	\centering
	% 	\begin{subfigure}{1.2\textwidth}
	% 		\caption{Settings with $p = 300$, $r = 5$, $\eps^2 = 0.1$ and $n\in \{50, 100, 300, 500, 700\}.$}\label{fig:varying_n}
	% 		\includegraphics[width=\textwidth]{../Exp_results_journal_version_revise/compare_method/comparemethod_varing_n-eps-converted-to.pdf}
	% 	\end{subfigure}
	% 	\begin{subfigure}{1.2\textwidth}
	% 		\caption{Settings with $n=900$, $r=5$, $\eps^2=0.1$ with $p \in\{30,50,100,150,200,300\}.$}\label{fig:varying_p}
	% 		\includegraphics[width=\textwidth]{../Exp_results_journal_version_revise/compare_method/comparemethod_varing_p-eps-converted-to.pdf}
	% 	\end{subfigure}
	% 	\begin{subfigure}{1.2\textwidth}
	% 		\caption{Settings with $n=900$, $p=300$, $\eps^2=0.1$, vary $r \in \{3, 5, 10, 15, 20\}$.}
	% 		\label{fig:varying_r}
	% 		\includegraphics[width=\textwidth]{../Exp_results_journal_version_revise/compare_method/comparemethod_varing_r-eps-converted-to.pdf}
	% 	\end{subfigure}
	% 	\begin{subfigure}{1.2\textwidth}
	% 		\caption{Settings with $p=300$, $r=5$, $n=900$ and $\eps^2 \in \{0.1, 0.2, 0.3, 0.4, 0.5, 0.6,0.7\}$.}\label{fig:varying_eps}
	% 		\includegraphics[width=\textwidth]{../Exp_results_journal_version_revise/compare_method/comparemethod_varing_eps-eps-converted-to.pdf}
	% 	\end{subfigure}
 %  \vspace{-.25in}
	% 	\caption{Averaged estimation errors of each procedure}
	% 	\label{fig_all_errors}
	% \end{figure}
 
	\begin{figure}[ht]
	    \centering
	    \includegraphics[width=0.95\textwidth]{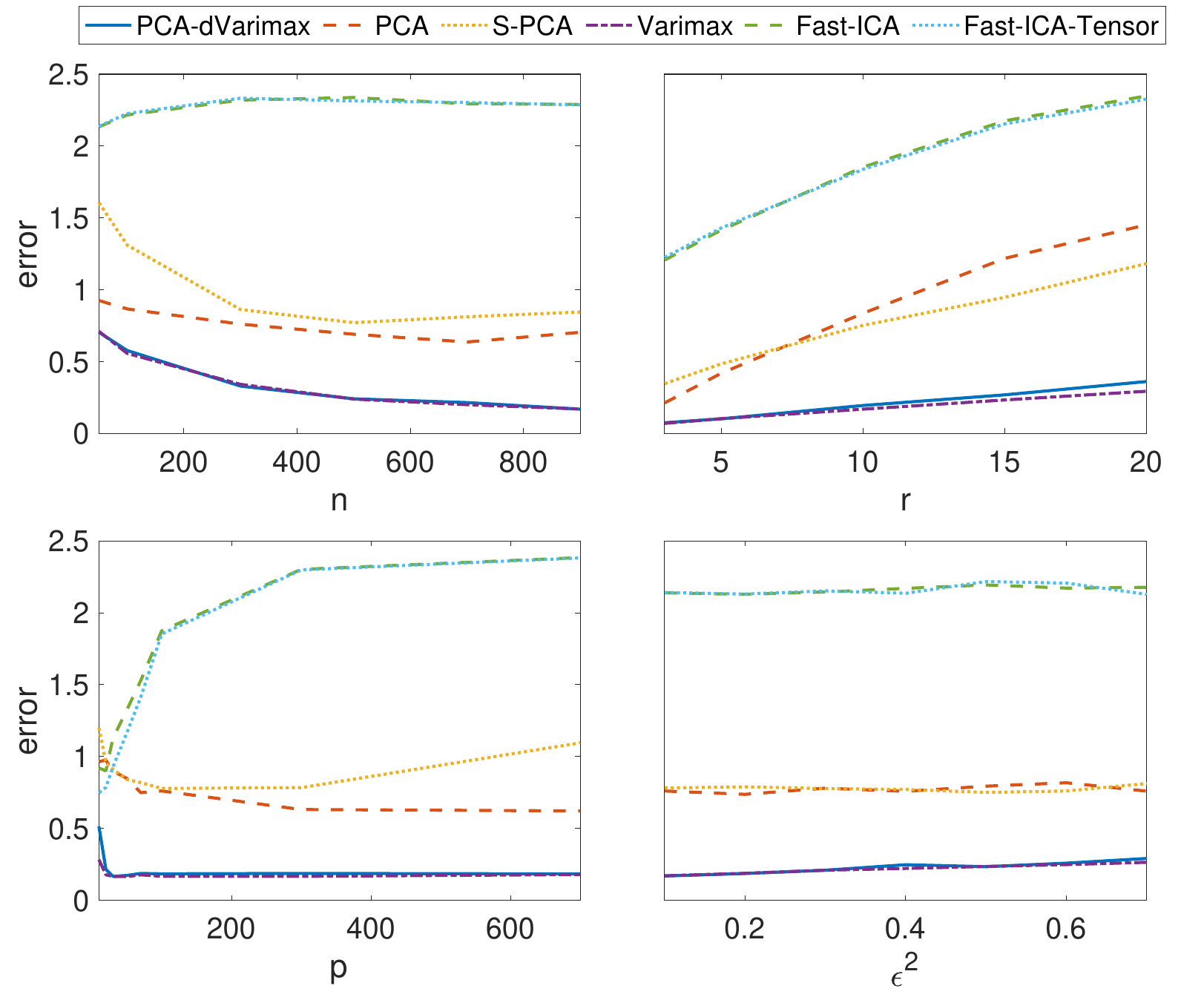}
	    \caption{The averaged estimation errors of each procedure.}
	    \label{fig_all_errors}
	\end{figure}
	
	\subsection{Improvement in the setting of structured noise}\label{subsec_exp_improv}

    In this section we verify the benefit of both steps of improvement proposed in \cref{sec_ext} when  $\bSigma_E = \bI_p$.  
	To separate the effect of each step, we consider two variants of PCA-dVarimax: (1)  PCA-dVarimax-1,  the estimator defined in \eqref{def_Lambda_hat} with $\bD_{(r)}$ replaced by $\wh\bD_{(r)}$ in \eqref{def_D_Lambda},
%	 and $\wc\bQ$ obtained from applying \cref{alg_rotate} to $\wh\bU_{(r)}$ with initialization as \eqref{def_M_hat_k}, 
	and (2) PCA-dVarimax-2, the estimator defined in \eqref{def_Lambda_hat} with $\bD_{(r)}$ replaced by $\wh\bD_{(r)}$ and $\wc\bQ$ obtained from applying \cref{alg_rotate} to $\wh\bU_{(r)}$ but with \eqref{iter_PGD_corrected} in place of \eqref{iter_PGD}.
	% with initialization 
	%\eqref{def_M_hat_k}.
	
% \begin{itemize}
% \item {\bf}: 
% \item {\bf PCA-dVarimax-2}: 
% \item {\bf PCA-dVarimax-3}: the estimator defined in \eqref{def_Lambda_hat} with $\bD_{(r)}$ replaced by $\wh\bD_{(r)}$ and $\wc\bQ$ obtained from applying \cref{alg_rotate} to $\wh\bU_{(r)}$ but with \eqref{iter_PGD_corrected} and initialization \eqref{def_M_hat_impro}
% \end{itemize}

	We set  $n = 500$, $r = 5$ and  vary $p \in\{ 20, 30, 50, 100, 200, 300\}$, $\eps^2 \in \{0, 0.2, 0.4, 0.6, 0.8, 1\}$ one at a time. \cref{fig:improvement} depicts the averaged estimation errors of PCA-dVarimax and its two variants. From the left panel, we see that the improvement over PCA-dVarimax becomes visible only when $p$ is small or moderate. This is in line with \cref{thm_A_corr} and \cref{rem_optimality} from which we expect improvement only when $rp^2=o(n\eps^4)$ and $p^2 = o(n \eps^2)$ by using $\epsilon^2 \asymp \eps^2/p$ in our simulation setting. In this regime, the right panel further corroborates that the improvement gets more visible as $\eps^2$ increases. In \cref{fig:improvement}, we also see the benefit of using two steps of improvement over that of using one step only. These findings are in line with our Theorems \ref{thm_PCs_hat} \& \ref{thm_A_corr}. 
%	\textcolor{red}{It is also worth noting that there is no significant improvement for PCA-dVarimax-3 comparing to PCA-dVarimax-2 which is in line with our observation in \cref{sec:sim_ini_compare}}.
	
	\begin{figure}[ht]
		\centering
		\includegraphics[width=\textwidth]{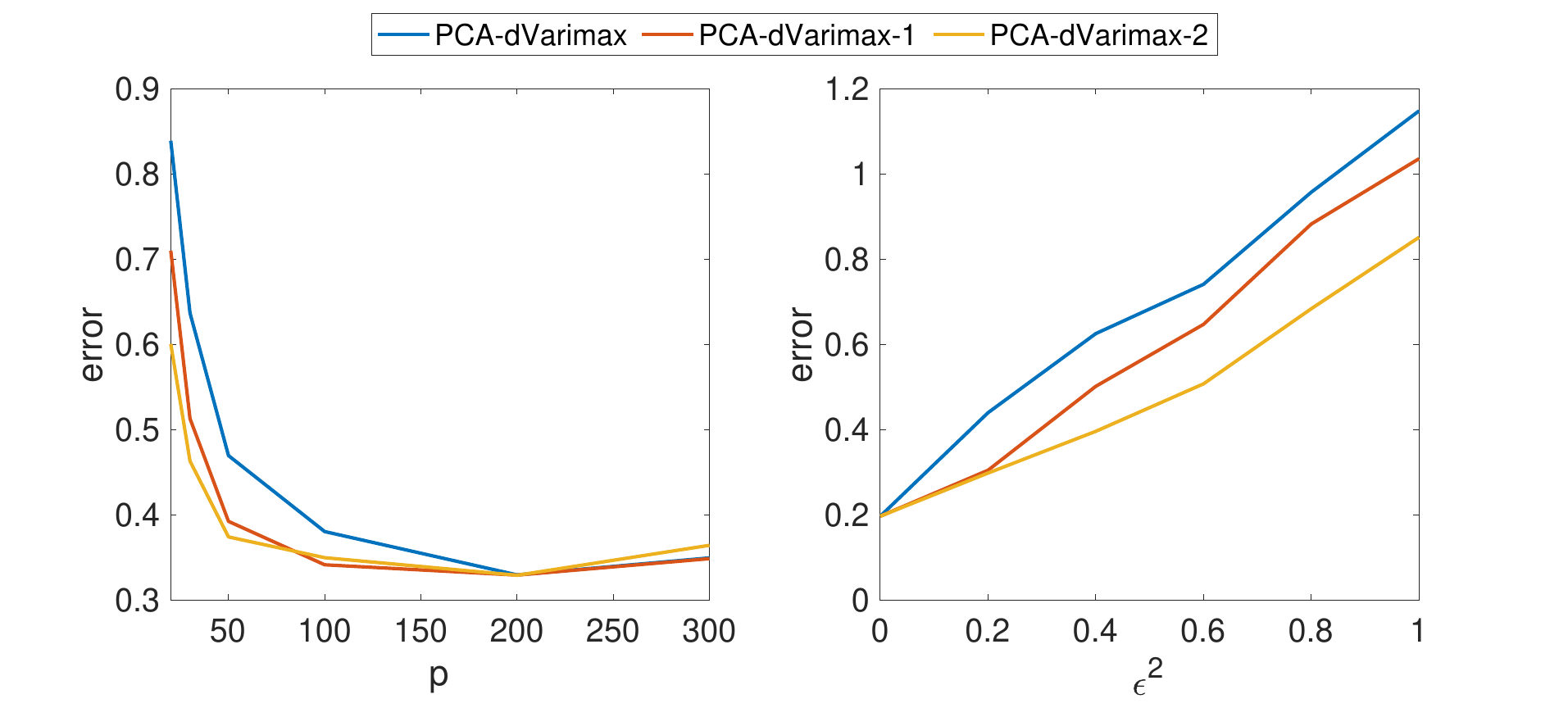}
		\caption{The averaged estimation errors of PCA-dVarimax, PCA-dVarimax-1, and PCA-dVarimax-2 with $p \in\{ 20, 30, 50, 100, 200, 300\}$ and  $\eps^2=0.6$  in the left panel, and $p=20$ and $\eps^2 \in \{0, 0.2, 0.4, 0.6, 0.8, 1\}$ in the right panel.}
		\label{fig:improvement}
	\end{figure}

		\section{Analysis of the deflation varimax procedure under a general setting}\label{app_sec_theory_varimax}
		
		In this section we analyze the proposed deflation varimax procedure under the following model: suppose one has access to the $r\times n$ matrix  $\wh \bY$ satisfies 
		\begin{equation}\label{model_Y_hat}
			\wh\bY = \bR^\T \left(\bY+ \bOmega\right)\qquad \text{with }\quad \bY  := \bA\bZ + \bN.
		\end{equation}
		where 
		\begin{itemize}
			\item $\bA$ is an $r\times r$ deterministic matrix with orthonormal columns;
			\item columns of $\bZ$ are i.i.d. and satisfy \cref{ass_Z} with $\sigma^2=1$;
			\item columns of $\bN$ are i.i.d. $\cN_r(0, \bSigma_N)$ with  $\|\bSigma_N\|_\op \le 1$;
			\item $\bR \in \bbO_{r\times r}$ and $\bOmega\in\R^{r\times n}$ are some generic matrices that could depend on $\bZ$ and $\bN$.
		\end{itemize}
		 As commented in \cref{rem_analysis}, the above model of $\wh\bY$  subsumes the decomposition of $\bU_{(r)}$ in \eqref{decom_Ur} as a special case.  Our goal in this section is to analyze the $r\times r$ rotation matrix  $\wc \bQ$, obtained by applying the deflation varimax procedure in \cref{alg_rotate} to $\wh\bY$ in lieu of $\bU_{(r)}$. 
		 
		 Formally,  we consider the following optimization problem analogous to \eqref{obj_L4},
		\begin{equation}\label{obj_L4_hat}
			\min_{\bq\in\Sp^r}  F(\bq; \wh \bY),\quad \text{with }\quad  F(\bq; \wh \bY) := -{1 \over 12 n}\|\wh\bY^\T \bq\|_4^4. 
		\end{equation} 
		Its oracle counterpart, similar to \eqref{obj_L4_tilde_main},  is
		\begin{equation}\label{obj_L4_tilde}
			\min_{q\in\Sp^r} F(\bq; \wt \bY)
			%,\qquad \text{with }\quad \wt F(q) := -{1 \over 12 n}\bigl\|\wt \bY^\T  q\bigr\|_4^4
		\end{equation} 
		where we write
		\begin{equation}\label{model_Y_tilde}
			\wt \bY := \bR \wh \bY =  \bY+\bOmega \in \R^{r\times n}.
		\end{equation}
		In view of  \eqref{model_Y_hat}, our results naturally depend on the approximation matrix $\bOmega$. To characterize its effect, let $\eta_n$ be any positive deterministic sequence and define the event 
		\begin{equation}\label{def_event_Omega}
			\cE_\Omega(\eta_n) := \left\{
			\sup_{\bq\in\Sp^r} {1\over n}\sum_{t=1}^n (\bq^\T\Omega_t)^4  \le ~ \eta_n^4
			\right\}.
		\end{equation}
		Define
		\begin{equation}\label{def_eta_n_prime}
			\eta_n' = C\left(\eta_n  +  \sigma_N^2+ \sqrt{r^2\log(n)\over n}\right),\quad 
            \bar\eta_n = C\left(\eta_n  +  \sigma_N^2+ \sqrt{r\log(n)\over n}\right)
		\end{equation} 
		for some large constant $C>0$, where 
		\begin{equation}\label{def_sigma2_N}
			\sigma_N^2	:=  \sup_{q \in \Sp^r} \|\bP_{q}^{\perp} \bSigma_N q\|_2.
		\end{equation}
		In this section we prove the following theorems.

		\begin{theorem}[\cref{thm_critical} under model \eqref{model_Y_hat}]\label{app_thm_critical}
					Grant model \eqref{model_Y_hat} and the event $\cE_\Omega(\eta_n)$. Assume $\kappa\ge c$ for some constant $c>0$ and  
					\begin{equation}\label{cond_n_r_app}
					\eta_n'  \le c'  
				\end{equation} 
				for some sufficiently small constant $c'>0$.
				Let $\bar\bq$ be  any stationary point to \eqref{obj_L4_tilde} such that 
				\begin{equation}\label{cond_critical_app} 
					(A_i^\T\bar\bq)^2 = \|\bA^\T \bar\bq\|_\infty^2  ~ \ge  1/3, \qquad | A_i^\T\bar\bq| - \max_{j\ne i}| A_j^\T \bar\bq| ~ \ge C'  \eta_n'
				\end{equation}
				holds for an arbitrary $i\in [r]$ and for some constant $C'>0$. Then with probability at least $1-n^{-1}$, one has
				\begin{equation}\label{rate_critical_app}
					 \min\Bigl\{\|\bar\bq -  A_i\|_2,\|\bar\bq +  A_i\|_2\Bigr\} ~   \lesssim  ~  \bar\eta_n.
				\end{equation}
			\end{theorem}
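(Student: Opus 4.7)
The plan is to turn $\grad F(\bar q; \wt\bY) = 0$ into an algebraic constraint for $c := \bA^\T \bar q \in \Sp^r$ and then use the hypotheses in \eqref{cond_critical_app} to force $c$ close to $\pm e_i$. First I would rewrite the Riemannian stationarity as the Euclidean relation $\nabla F(\bar q; \wt\bY) = \lambda \bar q$ with $\lambda = \bar q^\T \nabla F(\bar q; \wt\bY)$, substitute $\wt\bY = \bY + \bOmega$, and split the sample cubic into a pure-$\bY$ piece plus cross terms containing $\bOmega$. For the pure-$\bY$ piece, a direct calculation using independence of the coordinates of $Z$ (\cref{ass_Z}) and Gaussianity of $\bN$ yields
\begin{equation*}
\EE\bigl[(\bar q^\T \bY)^3 \bY\bigr] = 3(1 + \bar q^\T \bSigma_N \bar q)(\bar q + \bSigma_N \bar q) + 3\kappa\, \bA\, c^{\odot 3}, \qquad \EE\bigl[(\bar q^\T \bY)^4\bigr] = 3\kappa \|c\|_4^4 + 3(1 + \bar q^\T \bSigma_N \bar q)^2.
\end{equation*}
Sample-to-population deviations at the fixed $\bar q$ are controlled by pointwise versions of the concentration lemmas indicated in \cref{sec_theory_alg} (contributing the $\sqrt{r\log(n)/n}$ piece of $\bar\eta_n$), while the $\bOmega$ cross terms are bounded using the event $\cE_\Omega(\eta_n)$ together with H\"older (extracting the $\eta_n$ piece).

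Combining the two expansions and simplifying via the identity $(1+x)\bSigma_N \bar q - (1+x)^2 \bar q = -(1+x)\bP_{\bar q}^\perp \bSigma_N \bar q$ with $x = \bar q^\T \bSigma_N \bar q$, stationarity collapses to
\begin{equation*}
\kappa \bA\bigl(c^{\odot 3} - \|c\|_4^4\, c\bigr) = -(1 + \bar q^\T \bSigma_N \bar q)\, \bP_{\bar q}^\perp \bSigma_N \bar q + e,
\end{equation*}
with a remainder $e \in \R^r$ satisfying $\|e\|_2 \lesssim \bar\eta_n$. Left-multiplying by $\bA^\T$ (which preserves the $\ell_2$-norm), using the definition $\|\bP_{\bar q}^\perp \bSigma_N \bar q\|_2 \le \sigma_N^2$, and $\kappa \ge c$ then delivers the key algebraic estimate
\begin{equation*}
\bigl\|c^{\odot 3} - \|c\|_4^4\, c\bigr\|_2 ~\lesssim~ \sigma_N^2 + \bar\eta_n ~\lesssim~ \bar\eta_n.
\end{equation*}

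To conclude, I would analyse this constraint on $c \in \Sp^r$ under \eqref{cond_critical_app}. The $i$-th coordinate gives $|c_i|\,|c_i^2 - \|c\|_4^4| \lesssim \bar\eta_n$; since $c_i^2 \ge 1/3$ and $c_i^2 - \|c\|_4^4 = \sum_{j \ne i} c_j^2(c_i^2 - c_j^2) \ge 0$ (using $c_j^2 \le c_i^2$ for every $j$, which follows from the gap), combined with $c_i^2 - c_j^2 \ge C'\eta_n'(|c_i|+|c_j|) \ge C'\eta_n'/\sqrt{3}$, this forces the qualitative basin bound $\sum_{j\ne i} c_j^2 \lesssim 1/C'$. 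Choosing $C'$ large enough (which is what the smallness condition \eqref{cond_n_r_app} together with the prefactor in \eqref{cond_critical_app} arrange) localizes $c$ near $\pm e_i$ and rules out the ``bad'' population configurations \eqref{bad_critical} with $k \ge 2$. Bootstrapping from there: once $\|c\|_4^4 - c_j^2 \gtrsim 1$ for all $j \ne i$, the $j$-th scalar equation $\kappa|c_j|(\|c\|_4^4 - c_j^2) \le |A_j^\T e|$ gives $|c_j| \lesssim |A_j^\T e|$, and summing squares yields $\sum_{j\ne i} c_j^2 \lesssim \|\bA^\T e\|_2^2 = \|e\|_2^2 \lesssim \bar\eta_n^2$. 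Finally, up to a sign, $\|\bar q - A_i\|_2^2 = 2(1-|c_i|) \le 2(1-c_i^2) = 2 \sum_{j \ne i} c_j^2 \lesssim \bar\eta_n^2$, establishing \eqref{rate_critical_app}. The main obstacle is precisely this two-scale interplay: the qualitative basin argument must consume the uniform error scale $\eta_n'$ through the gap calibration, while the quantitative per-coordinate bound exploits only the sharper pointwise scale $\bar\eta_n$, and conditions \eqref{cond_n_r_app}--\eqref{cond_critical_app} are tuned so that both scales can be used in sequence.
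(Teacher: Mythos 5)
Your deterministic skeleton is sound and in fact runs parallel to the paper's: projecting the stationarity relation onto the columns of $\bA$ gives exactly the coordinate-wise constraints $\kappa\,\zeta_j(\zeta_j^2-\|\bzeta\|_4^4)=A_j^\T(\text{error})$ with $\bzeta=\bA^\T\bar\bq$, your identity $\zeta_i^2-\|\bzeta\|_4^4=\sum_{j\ne i}\zeta_j^2(\zeta_i^2-\zeta_j^2)$ together with the gap gives a clean basin bound that replaces the paper's cubic-root enumeration (\cref{lem:cubic_function} and the Case I/Case II exclusion), and your bootstrap to $\sum_{j\ne i}\zeta_j^2\lesssim\|e\|_2^2$ is the analogue of the paper's final chain. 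Your population moment computations also check out against \cref{lem:obj_exp}.

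The genuine gap is in the probabilistic input: you assert $\|e\|_2\lesssim\bar\eta_n$, i.e.\ that the sample-to-population gradient deviation at $\bar\bq$ is of the pointwise order $\sqrt{r\log(n)/n}$, justified by ``pointwise concentration at the fixed $\bar\bq$.'' But $\bar\bq$ is a stationary point of the \emph{empirical} objective, hence data-dependent, so pointwise concentration does not apply to it; what is legitimately available uniformly over $\Sp^r$ is only the $\sqrt{r^2\log(n)/n}$ scale carried by $\eta_n'$ (see \eqref{def_event_2} versus \eqref{def_event_grad_a}). With the honest uniform bound your argument delivers only $\min\{\|\bar\bq-A_i\|_2,\|\bar\bq+A_i\|_2\}\lesssim\eta_n'$, weaker than \eqref{rate_critical_app} by a factor $\sqrt r$ in the sampling term. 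The paper closes precisely this gap with three ingredients you omit: (i) the Lipschitz bound of \cref{lem_grad_lip}, anchored at the \emph{fixed} point $A_i$, which gives $\|\grad F(\bq;\bY)-\grad f(\bq)\|_2\lesssim\sqrt{r\log(n)/n}+\|\bq-A_i\|_2\sqrt{r^2\log(n)/n}$ uniformly in $\bq$; (ii) the tangent-space structure exploited in \cref{lem_betai}, namely that the error vector lies in $\bP_{\bar\bq}^\perp$, so $|A_j^\T(\grad F-\grad h)|\le\|\bar\bq\mp A_j\|_2\,\|\grad F-\grad h\|_2$, which makes the $i$-th error coordinate carry an extra factor $\|\bar\bq-A_i\|_2$; and (iii) a final self-bounding rearrangement under \eqref{cond_n_r_app} and $r^2\log(n)\le cn$ that absorbs all terms proportional to $\|\bar\bq-A_i\|_2$ and leaves the rate $\bar\eta_n$. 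Your closing remark about the ``two-scale interplay'' names the tension but does not supply a mechanism for it; as written, the step $\|e\|_2\lesssim\bar\eta_n$ is the missing idea, and without (i)--(iii) the claimed rate is not reached.
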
 
		
		\begin{theorem}[Other results in \cref{sec_theory_alg}  under model \eqref{model_Y_hat}]\label{app_thm_deflation}
			Grant model \eqref{model_Y_hat} and the event $\cE_\Omega(\eta_n)$.
			With condition \eqref{cond_omega_n} replaced by 
			\[
				\eta_n' \le c'\mu\nu^2,
			\]
			all the statements in \cref{thm_one_col}, \cref{cor_A} and \cref{thm_RA} of \cref{sec_theory_alg} continue to hold for the corresponding quantities of \cref{alg_rotate} when applied to $\wh \bY$ in lieu of $\bU_{(r)}$.
		\end{theorem}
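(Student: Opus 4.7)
The plan is to mirror the proof architecture of \cref{sec_theory_alg} verbatim, replacing the concrete quantity $\omega_n+\epsilon^2$ (which bounded $\|\bDelta\|_\op$ via \cref{thm_PCs}) by the generic $\eta_n$ supplied through the event $\cE_\Omega(\eta_n)$, and isolating the contributions of $\bN$ through $\sigma_N^2$. The starting point is the analog of \cref{lem_connection}: since $\bR \in \bbO_{r\times r}$, the identities $F(\bq; \wh\bY) = F(\bR\bq; \wt\bY)$ and $\grad F(\bq; \wh\bY) = \bR^\T \grad F(\bR\bq; \wt\bY)$ hold, and $P_{\Sp^r}$ commutes with orthogonal transformations, so the PGD iterates satisfy $\wh\bq^{(\ell)} = \bR^\T \wt\bq^{(\ell)}$ whenever this is true at $\ell=0$. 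It therefore suffices to analyze the iterates of the oracle objective $F(\cdot;\wt\bY)$, whose argument $\wt\bY = \bA\bZ + \bN + \bOmega$ enjoys an additive decomposition.

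The second step is to re-establish the key concentration inequalities for $F(\bq;\wt\bY)$ and $\grad F(\bq;\wt\bY)$ around their population counterparts, together with their Lipschitz-in-$\bq$ analogs. Expanding $(\bq^\T \wt Y_t)^4$ and its derivative in $\bq$, the pure $\bA\bZ$ contributions are exactly those already controlled in the proofs behind \cref{thm_critical}; the pure $\bN$ contributions are controlled through $\bSigma_N$ and $\sigma_N^2$ using Gaussianity of the columns of $\bN$; and every term carrying at least one factor of $\bOmega$ is absorbed, after Cauchy--Schwarz on $n^{-1}\sum_t$, into $(n^{-1}\sum_t(\bq^\T\Omega_t)^4)^{1/4}\le\eta_n$ times an analogous fourth-moment quantity of $\bA\bZ+\bN$. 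Union bounds over an $\eps$-net of $\Sp^r$, combined with the Lipschitz estimates, then produce uniform bounds of order $\eta_n'$ (resp.\ $\bar\eta_n$) governing the pointwise (resp.\ uniform) gradient error, in exact structural analogy to \eqref{def_event_1}--\eqref{def_event_grad_a}.

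With these concentration inequalities in hand, \cref{app_thm_critical} (already stated in the excerpt) is available, and one re-runs the induction underlying \cref{thm_one_col}: assuming the initialization event $\cE^{(i)}_{\init}(\mu,\nu)$ and the smallness condition $\eta_n' \le c'\mu\nu^2$, the analogs of \eqref{lb_gap} and \eqref{lb_sup} follow step by step, since those arguments only use an upper bound on the gradient perturbation. Any stationary point produced by PGD therefore satisfies \eqref{cond_critical_app}, and \cref{app_thm_critical} upgrades this to the error bound \eqref{rate_critical_app} governed by $\bar\eta_n$. The contraction rate $(1-\gamma)^{\ell'}$ yields convergence in $\cO(\log n)$ iterations exactly as before. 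The analog of \cref{cor_A} is then immediate: each row $\wt Q_k$ of the oracle solution is obtained by identical PGD runs on the same objective from the respective initializations $\wt Q_{\pi(k)}^{(0)}$, so the per-column guarantee applies, and $\sqrt{r}$ times the per-column rate controls $\|\wt\bQ-\bA\bP\|_F$.

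Finally, for the analog of \cref{thm_RA}, the connection identity $\wh Q_k = \bR^\T \wt Q_k$ gives $\|\wh\bQ - \bR^\T\bA\bP\|_F = \|\wt\bQ-\bA\bP\|_F$; the subsequent symmetric orthogonalization \eqref{def_wc_Q} is a purely deterministic perturbation-of-polar-factor step, which loses at most a constant factor in Frobenius norm whenever the unorthogonalized iterate is close to an orthogonal matrix (standard via the Wedin-style bound on the polar factor). Piecing these together gives the same estimation rate $\bar\eta_n\sqrt{r}$ for $\wc\bQ$. The main obstacle I anticipate is the bookkeeping in the second step: verifying that every $\bOmega$-dependent term arising in the fourth-order polynomial expansions of the objective and Riemannian gradient can be reduced, uniformly over $\Sp^r$ via Cauchy--Schwarz and $\eps$-net arguments, to the single quartic form controlled by $\cE_\Omega(\eta_n)$; once this is done, the rest of the proof transfers essentially verbatim from \cref{sec_theory_alg}.
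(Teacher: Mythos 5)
Your proposal is correct and follows essentially the same route as the paper: the appendix analysis (the connection lemma, the H\"older-type absorption of every $\bOmega$-dependent term into the quartic form controlled by $\cE_\Omega(\eta_n)$ via \cref{lem_bd_grad_F_tilde}, the deterministic PGD lemmas \cref{lem_iter_gap}--\cref{lem_iter_sup}, and \cref{app_thm_critical}) is already carried out under model \eqref{model_Y_hat}, so the statements of \cref{thm_one_col}, \cref{cor_A} and \cref{thm_RA} transfer verbatim once $\eta_n' \le c'\mu\nu^2$ replaces \eqref{cond_omega_n}. The only cosmetic difference is the final orthogonalization step, which the paper handles by the one-line minimizing-property/triangle-inequality bound $\|\wc\bQ - \bR^\T\bA\bP\|_F \le 2\|\wh\bQ - \bR^\T\bA\bP\|_F$ rather than a Wedin-type polar-factor perturbation argument.
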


		In particular, when $\wh\bY =  \bU_{(r)}$,  \cref{thm_PCs} shows that 
		  the approximation matrix satisfies
		$\bOmega = \bDelta (\bA\bZ+\bN) = \bDelta \bY$ and $\bSigma_N$ satisfies \eqref{eq_Sigma_N}.
		Moreover,  by invoking the deviation inequalities in \eqref{bd_obj_L4} and \eqref{def_event_1} of the objective function, we find that with probability at least $1-n^{-1}$, 
		\[
		\sup_{\bq\in\Sp^r} {1\over 3n}\sum_{t=1}^n (\bq^\T \Omega_t)^4  \le \|\bDelta\|_\op^4 \sup_{\bq\in\Sp^p} {1\over 3 n} \sum_{t=1}^n (\bq^\T   Y_t)^4 \lesssim \|\bDelta\|_\op^4.
		\]
		On the event $\cE_{\Omega}(\eta_n)$, the following thus holds with probability at least $1-n^{-1}$,
		$$ 
		\eta_n \lesssim \|\bDelta\|_\op,\quad 
		\eta_n' \lesssim  \|\bDelta\|_\op + \sigma_N^2  + \sqrt{r^2\log(n)\over n},\quad \bar\eta_n \lesssim  \|\bDelta\|_\op + \sigma_N^2  + \sqrt{r\log(n)\over n}.$$
	Recall that $\|\bDelta\|_\op \lesssim \omega_n +\epsilon^2$  from \cref{thm_PCs}  and $\sigma_N^2 \le \epsilon^2/c_\Lambda^2$ from \eqref{eq_Sigma_N}. The results in  \cref{sec_theory_alg} follow from \cref{app_thm_critical} and \cref{app_thm_deflation}.

	 	\subsection{Improvement of the deflation varimax when $\Sigma_N$ can be estimated consistently}\label{app_sec_improv}
		
		Let $\wc\bQ$ be the corresponding estimator obtained from applying \cref{alg_rotate} to $\wh\bY$ with \eqref{iter_PGD_corrected} in place of the original PGD in \eqref{iter_PGD} (of course, we also substitute $\wh \bY$ for $\bU_{(r)}$).  The following theorem states its rate of convergence for estimating $\bR^\T \bA$ in the Frobenius norm. The rate naturally depends on how well $\wh\bSigma_N$ estimates $\bR^\T \bSigma_N \bR$. 
		For some deterministic sequence $\alpha_n > 0$, we define the event
		\[
		\cE_{\bSigma_N}(\alpha_n) := \left\{
		 \|
		\wh\bSigma_N - \bR^\T  \bSigma_N \bR
		 \|_\op \le \alpha_n
		\right\}.
		\]

		\begin{theorem}\label{thm_RA_hat}
				Grant model \eqref{model_Y_hat} and the event $\cE_\Omega(\eta_n)\cap \cE_{\bSigma_N}(\alpha_n)$. Assume $\kappa\ge c$ for some constant $c>0$. 	Fix any $0< \nu < \mu < 1$ such that 
				\begin{equation}\label{cond_eta_n_alpha_n}
					\eta_n + \alpha_n   + \sqrt{r^2\log(n)/ n}  ~ \le  c' \mu \nu^2
				\end{equation}
				holds for some sufficiently small constant $c'>0$. Choose $\gamma \le c''/\kappa$ for some sufficiently small constant $c''>0$.     Assume there exists some permutation $\pi: [r] \to [r]$ such that  the initialization $\bR \wh  Q_{\pi(i)}^{(0)}$ satisfies $\cE^{(i)}_{\init}(\mu,\nu)$ for all $i\in [r]$.  Then there exists some signed permutation matrix $\bP$ such that   with probability at least $1-n^{-1}$, 
			\begin{align*}
				\|\wc \bQ - \bR^\T\bA\mb P\|_{F}~  \lesssim ~  \left(
				\eta_n + \alpha_n + \sqrt{r\log(n)\over n}
				\right)\sqrt{r} .
			\end{align*}
		\end{theorem}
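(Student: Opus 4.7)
The plan is to follow the same three-stage strategy used in the proofs of Theorems~\ref{app_thm_critical}, \ref{thm_one_col}, and \ref{thm_RA}, but with the bias-correction term in \eqref{def_grad_F_hat} taking over the role played by the nuisance quantity $\sigma_N^2$. The key observation is that, under the Gaussianity of $\bN$, the population-level bias of the Riemannian gradient of $F$ due to the additive noise is exactly of the form $(1 + \bq^\T \bSigma_N \bq)\bP_\bq^\perp \bSigma_N \bq$ (via Isserlis' formula for the fourth moments of $\bN$), so the correction with $\wh\bSigma_N$ cancels it at the population level up to an $\cO(\alpha_n)$ residual.

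First I would extend \cref{lem_connection} to the corrected iteration. Setting $\wt\bq^{(\ell)} = \bR \wh\bq^{(\ell)}$ and $\wt\bSigma_N := \bR \wh\bSigma_N \bR^\T$, the iterates of \eqref{iter_PGD_corrected} on $\wh\bY$ match projected gradient iterates on the oracle problem \eqref{obj_L4_tilde} whose Riemannian gradient is augmented by $(1 + \wt\bq^\T \wt\bSigma_N \wt\bq)\bP_{\wt\bq}^\perp \wt\bSigma_N \wt\bq$. On $\cE_{\bSigma_N}(\alpha_n)$ we have $\|\wt\bSigma_N - \bSigma_N\|_\op \le \alpha_n$, so comparing the corrected oracle gradient to its noise-free population counterpart at a point $\bq$ produces three separate contributions: (i) the approximation error from $\bOmega$, controlled by $\eta_n$ on $\cE_\Omega(\eta_n)$; (ii) the bias-correction residual, controlled by $\alpha_n$; and (iii) the empirical concentration error from $\bZ$ and $\bN$, already handled in the deviation lemmas underlying \cref{app_thm_critical} and yielding $\sqrt{r^2\log(n)/n}$ uniformly in $\bq$ and $\sqrt{r\log(n)/n}$ at a fixed $\bq$.

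Next I would redo the stationary-point analysis of \cref{app_thm_critical} with these modified deviation bounds. After the bias cancellation, what drives the iterate is essentially the noise-free signal gradient whose only attractors on $\Sp^r$ (outside the bad saddles \eqref{bad_critical}) are $\pm A_i$. The resulting analog of \cref{app_thm_critical} shows that any stationary point of the corrected oracle problem that meets \eqref{cond_critical_app} with $\eta_n + \alpha_n + \sqrt{r^2\log(n)/n}$ in place of $\eta_n'$ lies within $\cO(\eta_n + \alpha_n + \sqrt{r\log(n)/n})$ of $\pm A_i$. The contraction inequalities \eqref{lb_gap}--\eqref{lb_sup} then carry over verbatim after swapping $\omega_n + \epsilon^2$ for $\eta_n + \alpha_n$, so under the initialization event $\cE^{(i)}_{\init}(\mu,\nu)$ together with \eqref{cond_eta_n_alpha_n} and $\gamma \le c''/\kappa$, any stationary point of the corrected PGD recovers $\pm A_i$ at the claimed per-column rate.

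Assembling the $r$ per-column bounds via the symmetric orthogonalization step \eqref{def_wc_Q}, exactly as in the proof of \cref{thm_RA}, produces the extra $\sqrt{r}$ factor and yields the stated bound on $\|\wc\bQ - \bR^\T \bA \bP\|_F$. The main obstacle I anticipate is the careful tracking of the cross terms that arise when the estimated $\wh\bSigma_N$ replaces the true $\bSigma_N$ in the correction: expressions such as $(\bq^\T \wh\bSigma_N \bq)\bP_\bq^\perp(\wh\bSigma_N - \bSigma_N)\bq$ and $\bq^\T(\wh\bSigma_N - \bSigma_N)\bq \cdot \bP_\bq^\perp \bSigma_N \bq$ must each be shown to be $\cO(\alpha_n)$, which they are because $\|\wh\bSigma_N\|_\op \le \|\bSigma_N\|_\op + \alpha_n = \cO(1)$ on $\cE_{\bSigma_N}(\alpha_n)$ given $\|\bSigma_N\|_\op \le 1$ and $\alpha_n$ small by \eqref{cond_eta_n_alpha_n}. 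Once these cross terms are controlled and the population identity cancelling the noise bias is verified, the remainder of the argument is parallel to the $\sigma_N^2$-free discussion following \cref{thm_critical}.
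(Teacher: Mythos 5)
Your proposal is correct and follows essentially the same route as the paper's proof: extend the rotation-equivariance of the PGD iterates to the corrected iteration \eqref{iter_PGD_corrected}, show that the $\wh\bSigma_N$-correction cancels the population gradient bias so that $\|\wh\grad F(\bq;\wt\bY)-\grad h(\bq)\|_2\lesssim \eta_n+\alpha_n+\|\grad F(\bq;\bY)-\grad f(\bq)\|_2$ (with the cross terms handled exactly as you describe, via $\|\wh\bSigma_N\|_\op\le \|\bSigma_N\|_\op+\alpha_n\le 2$ on $\cE_{\bSigma_N}(\alpha_n)$), and then rerun the arguments of \cref{app_thm_critical}, \cref{thm_one_col} and \cref{thm_RA} with $\alpha_n$ replacing $\sigma_N^2$. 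No substantive gaps.
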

		\begin{proof}
			Its proof appears in \cref{app_proof_thm_RA_hat}.
		\end{proof}

		Essentially, we have the same result as in \cref{app_thm_critical} with $\sigma_N^2$ replaced by $\alpha_n$. Therefore, we expect improvement when $\alpha_n \ll \sigma_N^2$, reflecting the benefit of consistently estimating $\bSigma_N$.

		\subsection{Preliminaries to the proof of Theorems \ref{app_thm_critical} and \ref{app_thm_deflation}}\label{app_pre}
		
		\subsubsection{Expressions of the constrained $\ell_4$ optimization and its Riemannian gradient}
		
		For the following optimization problem 
		\begin{align}\label{obj_L4_Y}
			\min_{ q\in \mathbb{S}^{r}}~ F(\bq; \bY)
		\end{align}
		its Riemannian gradient over $\bq\in \Sp^r$  is
		\begin{align*}
			&\grad F(\bq; \bY) := \grad_{ q\in \Sp^r} F( q; \bY)= \bP_{\bq}^{\perp }\nabla F(\bq; \bY) 
		\end{align*}
		where $\nabla F(\bq; \bY)$ is the classical gradient of $F(\cdot; \bY)$ at any $\bq$. Further calculation gives that  
		\begin{align}\label{eq_grad_F}
			&\grad F(\bq; \bY) = -{1\over 3  n} \bP_{\bq}^\perp \sum_{t=1}^n \left(\bq^\T   Y_t\right)^3    Y_t.
		\end{align}
		To define the population level counterpart of $F(\cdot; \bY)$ and $\grad F(\cdot; \bY)$ under model \eqref{model_Y_hat}, using \cref{lem:obj_exp} gives that any $\bq\in \Sp^r$,
		\begin{align}\label{eq_f} \nonumber
				f(\bq)  := \EE[F(\bq; \bY)] &=   - {1\over 12}\EE\left[
				(\bq^\T Y_t)^4
				\right]\\
				&= -\frac14\left( \kappa  \|\bA^\T \bq \|_4^4 +   1  + 2   \norm{\bq}{\bSigma_N}^2 +   \norm{\bq}{\bSigma_N}^4\right)  
		\end{align}
		where we write  
		$
		\norm{\bq}{\bSigma_N}^2 := \bq^\T  \bSigma_N \bq.
		%= \bq^\T \left(\bSigma -  \bI_p\right) \bq  + 1  = \norm{\bq}{\bar\bSigma}^2 + 1.
		$
		The Riemannian gradient of $f( q)$ over $\bq\in \Sp^r$ is
		\begin{align}\label{eq_grad_f}
			\grad f( q)   &=  {1\over 3}\bP_{\bq}^\perp \EE[(\bq^\T  Y_t)^3   Y_t  ] = 
			\grad h(\bq) - \left( 1 +   \norm{\bq}{\bSigma_N}^2\right)    ~ \bP_{\bq}^\perp \bSigma_N  \bq 
		\end{align}
		with 
		\begin{align} \label{eq_grad_h}
			\grad h(\bq) &:= - \kappa ~   \bP_{\bq}^\perp  
			\bA (\bA ^\T \bq)^{\circ 3}.
		\end{align}
		Here for any $v\in \R^d$, $v^{\circ 3} = (v_1^3, \ldots, v_d^3)^\T$ represents the element-wise multiplication.
		As a result of \eqref{eq_grad_h}, we have the following identity that will be used later, 
		\begin{equation}\label{eq_grad_ai} 
			\innerprod{-\grad h( q)}{ A_i} =   \kappa   ( A_i^\T \bq) \left[( A_i^\T \bq)^2 -\|\bA^\T \bq\|_4^4\right],\quad \text{for any $i\in [r]$ and $q\in \Sp^r$}.
			%+  \epsilon^2    A_i^\T \mb P_{ q}^{\perp}   \bar\bSigma \bq \left(1  +   \epsilon^2 \norm{\bq}{\bSigma}^2\right).  
		\end{equation} 
		
		\subsubsection{Concentration inequalities of the objective function in \eqref{obj_L4_Y} and its Riemannian gradient}
		
		Our analysis will frequently use the following concentration inequalities between $F(\cdot ; \bY)$ and $f(\cdot )$ as well as those between $\grad F(\cdot ; \bY)$ and $\grad f(\cdot)$. 
		Consider the events
		\begin{align}\label{def_event_1}
			&\cE_F = \left\{
			\sup_{\bq\in \Sp^r} |F(\bq; \bY) - f(\bq)| ~ \lesssim \sqrt{r^2\log(n)\over n}
			\right\}\\\label{def_event_2}
			&\cE_{\grad} = \left\{
			\sup_{\bq\in \Sp^r} \|\grad F(\bq; \bY) - \grad f(\bq)\|_2 ~ \lesssim \sqrt{r^2\log(n)\over n}
			\right\}\\\label{def_event_grad_a}
			& \cE_{\grad,A} = \left\{\max_{k\in [r]}\|\grad F( A_k; \bY) - \grad f( A_k)\|_2 ~ \lesssim \sqrt{r\log(n)\over n}\right\}.
		\end{align}  
		By \eqref{eq_grad_F} and \eqref{eq_grad_f}, for any fixed $\bq \in \Sp^r$,   
		\begin{align*}
			\norm{\grad F(\bq; \bY) - \grad f(\bq)}2 &\le  {1\over   3n}\left\|\sum_{t=1}^n\left[ (\bq^\T  Y_t)^3   Y_t  - \EE[(\bq^\T  Y_t)^3   Y_t  ]\right] \right\|_2.
		\end{align*} 
		Invoking \cref{lem_grad_lip} (for $\cE_{\grad})$, \cref{lem_HH} (for $\cE_F$) and \cref{lem_bd_grad_pointwise} with $\bq = A_k$ (for $\cE_{\grad,A}$)  together with the union bounds over $k\in [r]$ ensures that there exists some large constant $C\ge 1$ such that 
		\[
			\P\left(
			\cE_F \cap \cE_{\grad}\cap \cE_{\grad,A}
			\right) \ge 1- n^{-C}
		\]
		holds under model \eqref{model_Y_hat} and $n\ge  r(r + \log n)$,

		\subsection{Proof of \cref{lem_connection}}\label{app_proof_lem_connection}
		
		We prove the results for  $\{\wh\bq^{(\ell)}\}_{\ell\ge 0}$ and  $\{\wt\bq^{(\ell)}\}_{\ell\ge 0}$ which are any PGD iterates  in \eqref{iter_PGD} for solving \eqref{obj_L4_hat} and \eqref{obj_L4_tilde}, respectively.
		
		\begin{proof}
			For any $\ell \ge 0$, note that
			\begin{align*}
				\wh\bq^{(\ell+1)} &= P_{\Sp^r}\left(
				\wh\bq^{(\ell)}  - \gamma\grad  F(\wh\bq^{(\ell)}; \wh\bY)
				\right)  =  {\wh\bq^{(\ell)} - \gamma\grad  F(\wh\bq^{(\ell)};\wh\bY)\over \sqrt{1 + \gamma^2 \|\grad  F(\wh\bq^{(\ell)};\wh\bY)\|_2^2}}  
			\end{align*}
			where the second equality uses 
			$$
				\|\wh\bq^{(\ell)} - \gamma\grad  F(\wh\bq^{(\ell)};\wh\bY)\|_2^2   = 1 + \gamma^2\|\grad   F(\wh\bq^{(\ell)};\wh\bY)\|_2^2.
			$$
			Similarly, 
			\begin{align*}
				\wt\bq^{(\ell+1)} 
				&= {\wt\bq^{(\ell)} - \gamma\grad   F(\wt\bq^{(\ell)};\wt\bY)\over \sqrt{1 + \gamma^2 \|\grad   F(\wt\bq^{(\ell)};\wt\bY)\|_2^2}}.
			\end{align*}
			Suppose $\wh\bq^{(\ell)} =  \bR^\T \wt\bq^{(\ell)}$ for any $\ell \ge 0$. It remains to prove 
			\[
			\grad  F(\wh\bq^{(\ell)};\wh\bY)  = \bR^\T \grad   F(\bq^{(\ell)};\wt\bY). 
			\]
			To this end, recalling  \eqref{model_Y_tilde} and \eqref{eq_grad_F}, we have that 
%			\begin{align}\nonumber
%				&\grad  F(\bq;\wh\bY) =  -{1\over 3  n} \bP_{\bq}^\perp \sum_{t=1}^n \left(\bq^\T  \wh Y_t\right)^3   \wh  Y_t,\\\label{eq_grad_F}
%				&\grad   F(\bq;\wt\bY) = -{1\over 3  n} \bP_{\bq}^\perp \sum_{t=1}^n \left(\bq^\T  \bR\wh  Y_t\right)^3   \bR \wh  Y_t. 
%			\end{align}
%			Therefore, 
			\begin{align*}
				\bR^\T \grad   F(\bq^{(\ell)};\wt\bY) &=  {1\over 3n} \sum_{t=1}^n \bR^\T \bP_{\wt\bq^{(\ell)}}^\perp  \bR \wh Y_t \left(\wt\bq^{(\ell) \T}  \bR \wh Y_t\right)^3  \\
				&=  {1\over 3n} \sum_{t=1}^n   \bP_{\wh\bq^{(\ell)}}^\perp   \wh Y_t \left(\wh\bq^{(\ell)\T}  \wh Y_t\right)^3\\
				&  = \grad  F(\wh\bq^{(\ell)};\wh\bY)
			\end{align*}
			where in the second line we used $\wh\bq^{(\ell)} =  \bR^\T \wt\bq^{(\ell)}$ and 
			\begin{align}\label{ident_P_perp}
				\bR^\T \bP_{\wt\bq^{(\ell)}}^\perp  \bR = \bI_r -  \bR^\T\wt\bq^{(\ell)} \wt\bq^{(\ell)\T} \bR = \bI_r - \wh\bq^{(\ell)}\wh\bq^{(\ell)\T} = \bP_{\wh\bq^{(\ell)}}^\perp.
			\end{align}
			The proof is complete.
		\end{proof}

		\subsection{Proof of \cref{thm_critical} \& \cref{app_thm_critical}: the optimization landscape analysis}\label{app_proof_thm_critical}

		Recall that we only need to prove \cref{app_thm_critical}  stated in \cref{app_sec_theory_varimax} which is more general than \cref{thm_critical}.

		\begin{proof}
			Recall  $\grad h(\cdot)$ and $\grad F(\cdot)$ from \eqref{eq_grad_h} and \eqref{eq_grad_F}, respectively.   For any stationary point $\bq$ to \eqref{obj_L4_tilde}, we must have  
			\begin{align*}
				0 = -\grad F(\bq; \wt \bY) = \grad h( q)-\grad F(\bq; \wt \bY)-\grad h( q).
			\end{align*} 
			Let $\bzeta = \bA^\T \bq$ and recall \eqref{eq_grad_ai}. Pick any  $j\in[r]$.  Multiplying both sides of the above identity by $ A_j$ yields
			\begin{align*}
				0 & ~ =~   \kappa ~ \zeta_j   \left(\zeta_j^2-\|\bzeta\|_4^4\right)      +  A_j^\T\left(\grad h( q) - \grad F(\bq; \wt \bY)\right)
			\end{align*}
			so that by rearranging terms we can write 
			\[
			\zeta_j^3- \alpha \zeta_j+ \beta_j=0
			\]
			where 			\begin{equation}\label{eq_alpha_beta_general}
				\alpha = \norm{\bzeta}4^4, \qquad \beta_j  = {1\over \kappa} A_j^\T\left(\grad h( q)-\grad F(\bq; \wt \bY)\right).
			\end{equation}  
			In the rest of the proof we work on the event 
			\begin{equation}\label{def_event_bar}
				\bar \cE = \cE_{\Omega}(\eta_n) \cap  \cE_F\cap \cE_{\grad}\cap \cE_{\grad,A}
			\end{equation}
			with $\cE_{\Omega}(\eta_n)$ given by \eqref{def_event_Omega}. Let $i\in [r]$ be the index given by \eqref{cond_critical}. Without loss of generality, we assume $\zeta_i > 0$ as we may replace $ A_i$ by $- A_i$ otherwise. For future reference, under the condition $r^2\log(n) \le n$, we have that for any $\bq\in \Sp^r$,
			\begin{align}\nonumber
				&\|\grad  F(\bq; \wt\bY) - \grad h(\bq)\|_2\\\nonumber   
                &~\lesssim   \eta_n  +     \sigma_N^2 + \|\grad F(\bq; \bY) - \grad f(\bq)\|_2 &&\text{by \cref{lem_bd_grad_F_tilde} and $\cE_F$}\\\label{def_event_tilde}
				&~\le  \bar \eta_n + C \|\bq-A_i\|_2 \sqrt{r^2\log(n)\over n}  &&\text{by \cref{lem_grad_lip}  and \eqref{def_eta_n_prime}}\\ \label{def_event_tilde_new} 
				&~ \le \eta_n'.
			\end{align}  
			Furthermore, the event \eqref{event_grad} in \cref{lem_betai} holds under  condition $\eta_n' \le c \kappa$ for some sufficiently small constant $c>0$.

			The first inequality in \eqref{cond_critical} 
			ensures that  
			\begin{equation}\label{lb_alpha_general}
				\alpha  \ge \|\bzeta\|_\infty^4 = \zeta_i^4 \ge {1\over 9} 
			\end{equation}
            with $i$ defined in \eqref{cond_critical}. We also assume without loss of generality that $\zeta_i>0$. On the other hand, \cref{lem_betai} gives
			\begin{equation}\label{ub_beta_general}
				4|\beta_j|   <  \|\bzeta\|_4^6 =  \alpha^{3/2}
			\end{equation}
			as well as 
			\begin{align}\nonumber
				|\beta_j| &\le {1\over \kappa} \|\bq -  A_j\|_2 \|\grad F(\bq; \wt \bY) - \grad h(\bq)\|_2\\\label{ub_beta}
				&\lesssim {1\over \kappa}\|\bq -  A_j\|_2 \left( \bar\eta_n  +  \|\bq-A_i\|_2 \sqrt{r^2\log(n)\over n} \right) &&\text{by \eqref{def_event_tilde}}\\\label{ub_beta_1}
				&\le {2\over \kappa}\eta_n' &&\text{by \eqref{def_event_tilde_new}}.
			\end{align}
			Further note that the preceding displays and the condition 
			$\eta_n' \le   c\kappa$
			imply
			\begin{equation}\label{bd_beta_alpha_ratio}
			{2|\beta_j| \over  \alpha} \lesssim  {\eta_n' \over \kappa \alpha} < {1\over 3} \le  \sqrt{\alpha}.
			\end{equation}
			In conjunction with \eqref{lb_alpha_general} and by invoking \cref{lem:cubic_function}, one can deduce that the stationary point $q$ and its corresponding $\zeta$ must belong to one of the following two cases.
			\begin{itemize}
				\item \textbf{Case \rom{1}}: For the $i\in [r]$ defined in \eqref{cond_critical}, 
				\[
				    |\zeta_i| \in \left[\sqrt{\alpha} - {2|\beta_i|\over \alpha},\; \sqrt{\alpha} + {2|\beta_i|\over \alpha}\right] ,~ \quad |\zeta_j| 
				<  {2|\beta_j|\over \alpha},\quad \forall j\in [r]\setminus \{i\};
				\]
				\item \textbf{Case \rom{2}}: There exists at least one $j\in [r]\setminus\{i\}$ such that \label{cas:finite_case_3}
				\begin{equation*}
					|\zeta_i| \in \left[\sqrt{\alpha} - {2|\beta_i|\over \alpha},\; \sqrt{\alpha} + {2|\beta_i|\over \alpha}\right] ,~ \quad|\zeta_j| \in \left[\sqrt{\alpha} - {2|\beta_j|\over \alpha},\; \sqrt{\alpha} + {2|\beta_j|\over \alpha}\right].
				\end{equation*}
			\end{itemize}
			In fact we argue $\bq$ can only belong to \textbf{Case \rom{1}}. Indeed, suppose that $\bq$ belongs to \textbf{Case \rom{2}}, then using the definition of   \textbf{Case \rom{2}} and \eqref{bd_beta_alpha_ratio} yields
			\begin{align*}
				|\zeta_i|-|\zeta_j|  &\le {2|\beta_i| \over \alpha} + {2|\beta_j| \over \alpha} \le {C''\over  \kappa} \eta_n',
			\end{align*}
			which contradicts with the second display in \eqref{cond_critical} whenever $C'> C''/\kappa$.  We proceed to derive the upper bound of $\|\bq- A_i\|_2$ for \textbf{Case \rom{1}}.
			
			On the one hand, we bound $\zeta_i^4$ from below as 
			\begin{align}\label{ieq:zeta_inf_lower}
				\zeta_i^4= \norm{\mb \zeta}4^4-\sum_{j\neq i}\zeta_j^4
				&\geq\norm{\mb \zeta}4^4-\sum_{j\neq i}\zeta_j^2 \max_{j\neq i}\zeta_j^2\nonumber\\
				&\geq  \alpha -  \max_{j\ne i} {4\beta_j^2 \over \alpha^2} &&\text{by $\norm{\mb \zeta}2^2 = 1$ and definition of {\bf Case \rom{1}}}.
                % \nonumber\\
                % &\geq \alpha - {C \over \kappa^2}(\eta_n')^2 &&\text{by \eqref{bd_beta_alpha_ratio}}.
			\end{align}
			On the other hand, 
			\begin{align}\label{ieq:zeta_inf_upper}
				\zeta_i^2\leq \paren{\sqrt{\alpha}+\frac{2|\beta_i|}{\alpha}}^2
				= \alpha +\frac{4\beta_i^2}{\alpha^2} + {4|\beta_i| \over \sqrt \alpha} =  \alpha\left(
				1 + \frac{4\beta_i^2}{  \alpha^3} + {4|\beta_i| \over    \alpha^{3/2}}
				\right).
			\end{align}
			Combine (\ref{ieq:zeta_inf_lower}) and (\ref{ieq:zeta_inf_upper}) to obtain
			\begin{align*}
				\zeta_i^2=\frac{\zeta_i^4}{\zeta_i^2} &\geq {1-\max_{j\ne i}\frac{4\beta_j^2}{ \alpha^3}  \over 1 +\frac{4\beta_i^2}{  \alpha^3}  +{4|\beta_i| \over   \alpha^{3/2}}} =  1 - {  \max_{j\ne i}\frac{4\beta_j^2}{  \alpha^3}+\frac{4\beta_i^2}{  \alpha^3} + {4|\beta_i| \over    \alpha^{3/2}}  \over 1 +\frac{4\beta_i^2}{  \alpha^3}  +{4|\beta_i| \over    \alpha^{3/2}}} 
			\end{align*}
			which, together with    \eqref{lb_alpha_general}, 
			\eqref{ub_beta} and \eqref{ub_beta_1}, implies
			\begin{align*}
				\|q-A_i\|_2^2   &=2(1-\zeta_i)\\ &\leq   \frac{8\beta_i^2}{  \alpha^3} + \max_{j\ne i} \frac{8\beta_j^2}{  \alpha^3}  + {8|\beta_i| \over    \alpha^{3/2}}\\
				& \le \|q-A_i\|_2^2 ~   {C\over \kappa^2}\left(\eta_n' 
				\right)^2+ {C\over \kappa^2}\left( \bar \eta_n^2 + \|q-A_i\|_2^2 ~ {r^2\log(n)\over n}\right)\\
                &\quad +\|q-A_i\|_2 ~ {C\over \kappa}\bar\eta_n  + \|q-A_i\|_2^2 ~  {C\over \kappa}  \sqrt{r^2\log(n)\over n}.
			\end{align*}
		    Rearranging terms and using $\eta_n' \le c\kappa$ and $r^2\log(n) \le c n$ yield $\|q - A_i\|_2 \lesssim \bar \eta_n$, thereby completing  the proof. 
		\end{proof}

		\subsubsection{Lemmas used in the proof of \cref{app_thm_critical}}

		For any $\bq\in\Sp^r$, the following lemma bounds from above $\|\grad F(\bq; \wt \bY) - \grad h(\bq)\|_2$ by   quantities related with $|F(\bq;\bY)-f(\bq)|$ and $\|\grad F(\bq;\bY) - \grad f(\bq)\|_2$.

		\begin{lemma}\label{lem_bd_grad_F_tilde}
			Grant conditions in \cref{app_thm_critical}. On the event $\cE_{\Omega}(\eta_n)$ with some $\eta_n \le 1$,  for any $\bq\in\Sp^r$, one has
			\begin{align*}
				\|\grad F(\bq; \wt \bY) - \grad h(\bq)\|_2  ~ &\lesssim ~  \eta_n\left( 1 +  \sup_{\bq'\in \Sp^r} \left|F(\bq';\bY) - f(\bq')\right |\right) +   \sigma_N^2 \\
				&\qquad + \|\grad F(\bq;\bY) - \grad f(\bq)\|_2.
			\end{align*}
		\end{lemma}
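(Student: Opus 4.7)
The plan is to triangulate $\grad F(\bq; \wt\bY) - \grad h(\bq)$ through two intermediate quantities, namely
\[
    \grad F(\bq; \wt\bY) - \grad h(\bq) = \bigl[\grad F(\bq; \wt\bY) - \grad F(\bq; \bY)\bigr] + \bigl[\grad F(\bq; \bY) - \grad f(\bq)\bigr] + \bigl[\grad f(\bq) - \grad h(\bq)\bigr].
\]
The middle bracket is exactly the third term in the bound we want. The last bracket is clean: by \eqref{eq_grad_f} it equals $-(1 + \|\bq\|_{\bSigma_N}^2)\bP_\bq^\perp \bSigma_N \bq$, whose Euclidean norm is at most $(1 + \|\bSigma_N\|_\op)\sigma_N^2 \lesssim \sigma_N^2$ by the definition of $\sigma_N^2$ in \eqref{def_sigma2_N} and the assumption $\|\bSigma_N\|_\op \le 1$. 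So the real work is in the first bracket.

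For that bracket, I would write $\wt Y_t = Y_t + \Omega_t$ and set $a_t = \bq^\T Y_t$, $b_t = \bq^\T \Omega_t$. Expanding $(a_t+b_t)^3(Y_t+\Omega_t) - a_t^3 Y_t$ yields seven monomial terms of the form $a_t^{p_1}b_t^{p_2}\,V_t$ with $p_1+p_2 = 3$, $p_2\ge 1$ if $V_t = Y_t$ and $p_2\ge 0$ if $V_t = \Omega_t$. For any unit vector $u\in\Sp^r$, I bound $\frac{1}{n}|\sum_t a_t^{p_1} b_t^{p_2} (u^\T V_t)|$ by a four-factor Hölder inequality with exponents $(p_1/4, p_2/4, 1/4)$, producing a product of empirical $\ell^4$ moments of $a_t$, of $b_t$, and of $u^\T V_t$.

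Three ingredients then control those moments uniformly. First, on $\cE_\Omega(\eta_n)$ the empirical fourth moments $\frac{1}{n}\sum_t b_t^4$ and $\sup_u\frac{1}{n}\sum_t (u^\T \Omega_t)^4$ are both at most $\eta_n^4$. Second, using $F(\bq;\bY) = -\frac{1}{12n}\sum_t a_t^4$ and the explicit formula \eqref{eq_f} for $f$, we have $\frac{1}{n}\sum_t a_t^4 = -12 F(\bq;\bY) \le 12(|F(\bq;\bY)-f(\bq)| + |f(\bq)|) \lesssim 1 + \sup_{\bq'} |F(\bq';\bY) - f(\bq')|$, since $|f(\bq)| \lesssim 1$ by Assumptions \ref{ass_Z} and \ref{ass_E_general} (recall $\kappa$ is bounded and $\|\bSigma_N\|_\op \le 1$). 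Third, the same bound applies to $\sup_u\frac{1}{n}\sum_t (u^\T Y_t)^4$ via $F(u;\bY)$. Plugging these into each of the seven Hölder products, using $\eta_n \le 1$ to collapse $\eta_n^{p_2+1}$ and $\eta_n^{p_2}$ (with $p_2\ge 1$ in the $V_t = Y_t$ case) down to $\eta_n$, and noting that all exponents of $(1+\sup|F-f|)$ are at most $1$, every cross term is bounded by $\eta_n\,(1 + \sup_{\bq'}|F(\bq';\bY) - f(\bq')|)$. Taking the supremum over $u$ turns this into the corresponding bound on the Euclidean norm, and summing the seven contributions completes the proof.

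The main obstacle is the bookkeeping of the seven expansion terms, together with recognizing that the natural way to control the empirical fourth moments of $a_t$ and of $u^\T Y_t$ is through the deviation $\sup_{\bq}|F(\bq;\bY) - f(\bq)|$ itself rather than through a separate concentration argument; this is what introduces the multiplicative factor $(1 + \sup|F-f|)$ in the stated bound instead of a plain $\eta_n$.
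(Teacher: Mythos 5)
Your proposal is correct and follows essentially the same route as the paper: the same three-term triangulation (with $\grad f - \grad h$ bounded by $\sigma_N^2$ via \eqref{eq_grad_f}), a Hölder bound on the cross terms from expanding $\wt Y_t = Y_t + \Omega_t$, the event $\cE_\Omega(\eta_n)$ for the $\Omega$ moments, and the paper's own trick (display \eqref{bd_obj_L4}) of controlling the empirical fourth moment of $\bq^\T Y_t$ through $\sup_{\bq'}|F(\bq';\bY)-f(\bq')|$. The only cosmetic difference is that you expand into seven monomials while the paper groups them into three terms $\Gamma_1,\Gamma_2,\Gamma_3$.
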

		\begin{proof} 
			Pick any $\bq\in\Sp^r$. 
			By triangle inequality
			\begin{align*}
				\|\grad F(\bq; \wt \bY) - \grad h(\bq)\|_2   &\le  \|\grad F(\bq; \wt \bY) - \grad F(\bq;\bY)\|_2 \\
				&\quad  +  \|\grad   F(\bq;\bY) - \grad f(\bq)\|_2 +     \left(1 +        \|\bq\|_{\bSigma_N}^2   \right) \|\bP_{\bq}^{\perp}  \bSigma_N \bq\|_2.
			\end{align*}
			Since  $\|\bSigma_N\|_\op \le  1$, by recalling  \eqref{def_sigma2_N}, we have
			\[
			 \left(1 +      \|\bq\|_{\bSigma_N}^2   \right) \|\bP_{\bq}^{\perp}  \bSigma_N \bq\|_2 \le  2\sigma_N^2.
			\]
			% Since \cref{lem_new_bd_grad} under condition \eqref{cond_n_r} ensures that, with probability at least $1-n^{-1}$,
			% \[
			% \sup_{\bq\in\Sp^r}\|\grad   F(\bq) - \grad f(\bq)\|_2 ~ \lesssim \sqrt{r\log(n)\over n} + \sqrt{\eps^2 r \log(nr) \over n} \lesssim \sqrt{r\log(n)\over n},
			% \]
			It remains to bound from above $\|\grad F(\bq; \wt \bY) - \grad F(\bq;\bY)\|_2$ on the event $\cE_\Omega(\eta_n)$. Recalling \eqref{eq_grad_F} and 
			$
			\wt\bY = \bR \wh\bY  = \bY + \bOmega,
			$
			we find that
			\begin{align*}
				&\grad F(\bq; \wt \bY) - \grad F(\bq;\bY)\\ &=   -{1\over 3  n} \bP_{\bq}^\perp \sum_{t=1}^n (\bq^\T \wt Y_t)^3 \wt Y_t + {1\over 3 n} \bP_{\bq}^\perp \sum_{t=1}^n (\bq^\T  Y_t)^3  Y_t\\
				&=  -{1\over 3 n} \bP_{\bq}^\perp  \sum_{t=1}^n 
				\left\{
				 Y_t(\bq^\T\Omega_t)^3 +  3 Y_t(\bq^\T \Omega_t)^2 \bq^\T Y_t  +  3 Y_t(\bq^\T\Omega_t) (\bq^\T Y_t)^2 + \Omega_t (\bq^\T \wt Y_t)^3
				\right\}.
			\end{align*}
			It suffices to bound from above each term below:
			\begin{align*}
				\Gamma_1 &= {1\over 3 n} \sup_{\bq\in\Sp^r} \left\|
				\bP_{\bq}^\perp \sum_{t=1}^n   Y_t(\bq^\T\Omega_t)^3 \right\|_2,\\
				\Gamma_2 &= {1\over 3 n} \sup_{\bq\in\Sp^r} \left\|
				\bP_{\bq}^\perp \sum_{t=1}^n  Y_t(\bq^\T\Omega_t) (\bq^\T Y_t)^2 \right\|_2,\\
				\Gamma_3 &= {1\over 3 n} \sup_{\bq\in\Sp^r} \left\|
				\bP_{\bq}^\perp \sum_{t=1}^n  \Omega_t (\bq^\T \wt Y_t)^3\right\|_2.
			\end{align*} 
			Regarding $\Gamma_1$, we have that 
			\begin{align*}
				\Gamma_1 &= {1\over 3 n}\sup_{\bar\bq,\bq\in\Sp^r}  
				\sum_{t=1}^n  \bar\bq^\T \bP_{\bq}^\perp   Y_t(\bq^\T\Omega_t)^3 \\ 
				&\le  {1\over 3 n}\sup_{\bar\bq,\bq\in\Sp^r}  
				\sum_{t=1}^n  \bar\bq^\T  Y_t(\bq^\T\Omega_t)^3
				+ {1\over 3 n}\sup_{\bar\bq,\bq\in\Sp^r}  
				\sum_{t=1}^n  \bar\bq^\T \bq \bq^\T Y_t(\bq^\T\Omega_t)^3\\
				&\le {1\over 3 n}\sup_{\bar\bq,\bq\in\Sp^r}  
				\sum_{t=1}^n  \bar\bq^\T  Y_t(\bq^\T\Omega_t)^3
				+ {1\over 3 n}\sup_{\bar\bq,\bq\in\Sp^r}  
				\sum_{t=1}^n   \bq^\T Y_t(\bq^\T\Omega_t)^3\\
				&\le  {2\over 3 n}  \sup_{\bq\in\Sp^r} 
				\left(\sum_{t=1}^n    (\bq^\T  Y_t)^4\right)^{1/4} \sup_{\bq\in\Sp^r} \left(\sum_{t=1}^n (\bq^\T \Omega_t)^4\right)^{3/4} &&\text{by H\"{o}lder's inequality}\\
				&\le  2  \sup_{\bq\in\Sp^r} 
				\left({1\over 3 n}\sum_{t=1}^n    (\bq^\T  Y_t)^4\right)^{1/4}  \eta_n^3 &&\text{by $\cE_{\Omega}(\eta_n)$}.
			\end{align*}
			Regarding $\Gamma_2$, similar arguments yield
			\begin{align*}
				\Gamma_2 & = {1\over 3 n} \sup_{\bar\bq,\bq\in\Sp^r}  
				\sum_{t=1}^n \bar\bq^\T\bP_{\bq}^\perp   Y_t(\bq^\T\Omega_t) (\bq^\T Y_t)^2\\
				& \le {1\over 3 n}\left\{ \sup_{\bar\bq,\bq\in\Sp^r}  
				\sum_{t=1}^n \bar\bq^\T  Y_t(\bq^\T\Omega_t) (\bq^\T Y_t)^2 + \sup_{\bq\in\Sp^r}  
				\sum_{t=1}^n  (\bq^\T\Omega_t) (\bq^\T Y_t)^3\right\}\\
				&\le {1\over 3 n}\left\{\sup_{\bar\bq,\bq\in\Sp^r}  
				\left(\sum_{t=1}^n (\bar\bq^\T  Y_t)^{4/3}(\bq^\T Y_t)^{8/3}\right)^{3/4} + \sup_{\bq\in\Sp^r}  \left(\sum_{t=1}^n    (\bq^\T  Y_t)^4\right)^{3/4}\right\}\sup_{\bq\in\Sp^r} \left(\sum_{t=1}^n (\bq^\T \Omega_t)^4\right)^{1/4} \\
				&\le 2  \sup_{\bq\in\Sp^r} 
				\left({1\over 3 n}\sum_{t=1}^n    (\bq^\T  Y_t)^4\right)^{3/4} \eta_n.
			\end{align*}
			Moreover, we can bound from above $\Gamma_3$ by 
			\begin{align*}
				\Gamma_3 &= {1\over 3 n} \sup_{\bar\bq,\bq\in\Sp^r}  
				\sum_{t=1}^n \bar\bq^\T \bP_{\bq}^\perp  \Omega_t \left[\bq^\T ( Y_t + \Omega_t)\right]^3\\
				&\le {1\over 3 n} \left\{\sup_{\bar\bq,\bq\in\Sp^r}  
				\sum_{t=1}^n \bar\bq^\T \Omega_t \left[\bq^\T  ( Y_t + \Omega_t)\right]^3 +  \sup_{\bq\in\Sp^r} \sum_{t=1}^n  \bq^\T \Omega_t \left[\bq^\T  ( Y_t + \Omega_t)\right]^3\right\}\\
				&\lesssim {1\over 3 n} \left\{\sup_{\bar\bq,\bq\in\Sp^r}  
				\sum_{t=1}^n \bar\bq^\T \Omega_t \left[(\bq^\T   Y_t)^3 + (\bq^\T  \Omega_t)^3\right] +  \sup_{\bq\in\Sp^r} \sum_{t=1}^n  \bq^\T \Omega_t \left[(\bq^\T  Y_t)^3+ (\bq^\T\Omega_t)^3\right]\right\}\\
				&\lesssim {1\over 3 n} \left\{
				\sup_{\bq\in\Sp^r} \sum_{t=1}^n (\bq^\T \Omega_t)^4 + \sup_{\bq\in\Sp^r} \left(\sum_{t=1}^n (\bq^\T  Y_t)^4\right)^{3/4} \sup_{\bq\in\Sp^r} 
				\left(\sum_{t=1}^n    (\bq^\T \Omega_t)^4\right)^{1/4}
				\right\}\\
				&\lesssim  \eta_n^4 +   \sup_{\bq\in\Sp^r} 
				\left({1\over 3 n}\sum_{t=1}^n    (\bq^\T  Y_t)^4\right)^{3/4} \eta_n
			\end{align*}
			% Combining the above three bounds yields 
			% \begin{align*}
				%     \sup_{\bq\in \Sp^r}\|\grad F(\bq; \wt \bY) - \grad F(\bq)\|_2 \lesssim 
				%     \|\bDelta\|_\op \left(1 + \|\bDelta\|_\op\right) ~ {1\over 3 n}\sup_{\bq\in\Sp^r}  
				%     \sum_{t=1}^n   (\bq^\T   Y_t)^4.
				% \end{align*}
			Finally, recall from \eqref{eq_f} that, for all $\bq\in \Sp^r$,
			\begin{align*}
				4|f(\bq) | = {1\over 3}\EE\left[  (\bq^\T  Y_t)^4\right] &= 
				\kappa \|\bA^\T\bq\|_4^4 + 1  + 2   \norm{\bq}{\bSigma_N}^2  +   \norm{\bq}{\bSigma_N}^4 \le \kappa + 4.
			\end{align*} 
			It then follows by using $\eta_n \le 1$ and noting that  
			\begin{equation}\label{bd_obj_L4}
				\begin{split}
					{1\over 3 n} \sup_{\bq\in\Sp^r}  
					\sum_{t=1}^n    (\bq^\T  Y_t)^4 &\le \sup_{\bq\in\Sp^r} f(\bq)  + {1\over 3 n}  \sup_{\bq\in\Sp^r}  \abs{ \sum_{t=1}^n  \left[  (\bq^\T  Y_t)^4 - \EE[  (\bq^\T  Y_t)^4]\right]}\\
					&\le {\kappa\over 4}+1  + 4\sup_{\bq\in\Sp^r} |F(\bq;\bY) - f(\bq)|. 
				\end{split} 
			\end{equation} 
			This completes the proof.
		\end{proof}

		The following  lemma  states properties of  $\beta_j$ defined in  \eqref{eq_alpha_beta_general}.

		\begin{lemma}\label{lem_betai}
			Grant conditions in \cref{app_thm_critical}. 
			For any  $\bq\in \Sp^r$ satisfying $\|\bzeta\|_\infty^2 \ge 1/3$ with $\bzeta = \bA^\T \bq$, on the event
			\begin{equation}\label{event_grad}
				\left\{ \|\grad h\paren{ q}-\grad F(q; \wt \bY)\|_2   \le {\kappa\over 216}
				\right\},
			\end{equation}
			the quantity $\beta_j$ defined in \eqref{eq_alpha_beta_general} satisfies: for any $j\in [r]$, 
			\begin{itemize}
				\item[(1)] $4|\beta_j|  <  \|\bzeta\|_4^{6}$;
				\item[(2)] $|\beta_j|  ~ \le    \min\left\{\| A_j+\bq\|_2, \| A_j-\bq\|_2\right\}  {1\over \kappa}\|\grad h\paren{ q}-\grad F(q; \wt \bY)\|_2$.
			\end{itemize}
		\end{lemma}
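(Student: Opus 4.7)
The plan is to exploit the fact that both $\grad h(q)$ and $\grad F(q;\wt\bY)$ are Riemannian gradients on $\Sp^r$, hence both lie in the tangent space $T_q\Sp^r = \{v : v^\T q = 0\}$. This orthogonality to $q$ is the key structural property that makes part (2) work.

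For part (2), since $q^\T(\grad h(q) - \grad F(q;\wt\bY)) = 0$, I can add any scalar multiple of $q$ to $A_j$ without changing the inner product defining $\beta_j$:
\begin{equation*}
\kappa\beta_j \;=\; A_j^\T\bigl(\grad h(q) - \grad F(q;\wt\bY)\bigr) \;=\; (A_j \pm q)^\T\bigl(\grad h(q) - \grad F(q;\wt\bY)\bigr).
\end{equation*}
Applying Cauchy--Schwarz to each of the two choices of sign and taking the minimum yields (2) directly.

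For part (1), I would use the crude form of Cauchy--Schwarz together with $\|A_j\|_2=1$ and the event \eqref{event_grad} to obtain
\begin{equation*}
|\beta_j| \;\le\; \frac{1}{\kappa}\,\|A_j\|_2\,\|\grad h(q) - \grad F(q;\wt\bY)\|_2 \;\le\; \frac{1}{216},
\end{equation*}
so that $4|\beta_j| \le 1/54$. On the other side, using $\|\bzeta\|_4 \ge \|\bzeta\|_\infty$ and the assumption $\|\bzeta\|_\infty^2 \ge 1/3$,
\begin{equation*}
\|\bzeta\|_4^6 \;\ge\; \|\bzeta\|_\infty^6 \;=\; (\|\bzeta\|_\infty^2)^3 \;\ge\; \frac{1}{27}.
\end{equation*}
Since $1/54 < 1/27$, the strict inequality $4|\beta_j| < \|\bzeta\|_4^6$ follows. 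Both parts are essentially immediate once the tangent-space identity is observed; there is no real obstacle, though one must carefully verify that $\grad F(q;\wt\bY) = \bP_q^\perp \nabla F(q;\wt\bY)$ and $\grad h(q) = -\kappa\bP_q^\perp \bA(\bA^\T q)^{\circ 3}$ are both annihilated by $q^\T$, which is immediate from the explicit projector $\bP_q^\perp = \bI_r - qq^\T$.
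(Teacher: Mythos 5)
Your proposal is correct and follows essentially the same route as the paper: part (2) uses exactly the paper's tangent-space observation that both $\grad h(q)$ and $\grad F(q;\wt\bY)$ are annihilated by $q^\T$ (so $A_j$ may be replaced by $A_j\pm q$ before Cauchy--Schwarz), and part (1) is the same combination of Cauchy--Schwarz, the event bound $\kappa/216$, and $\|\bzeta\|_4^6\ge\|\bzeta\|_\infty^6\ge 1/27$, differing from the paper only in the trivial choice of using $\|A_j\|_2=1$ rather than $\|A_j-q\|_2\le 2$ in the numerical step.
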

		\begin{proof}
			Without loss of generality, assume $\zeta_j =  A_j^\T \bq > 0$.
			By the fact that  the gradients of both $\grad h(\bq)$ and $\grad F(\bq; \wt \bY)$ lie in $\mb P_{\bq}^\perp$, we find 
			\begin{equation}\label{disp_beta_upper_bound}
				\begin{split}
					|\beta_j| &= {1\over \kappa}\abs{ A_j^\T (\grad h\paren{ q}-\grad F(q; \wt \bY))}   \\
					&\le 
					{1\over \kappa}\| A_j - \bq\|_2  \|\grad h\paren{ q}-\grad F(q; \wt \bY)\|_2.
				\end{split}
			\end{equation}
			This proves the claim in part (2). For part (1),  by \eqref{lb_alpha_general} and \eqref{event_grad}, we find
			\[
			\frac{|\beta_j|}{\|\bzeta\|_4^{6}}  \leq  {54   \over   \kappa} \|\grad h\paren{ q}-\grad F(q; \wt \bY)\|_2   < {1\over 4},
			\]
			completing the proof. 
		\end{proof}

		\begin{lemma}[Lemma $B.3$, \cite{qu2020geometric}]\label{lem:cubic_function}
			Consider the cubic function
			\begin{align*}
				f(x)=x^3-\alpha x+\beta.
			\end{align*}
			When $\alpha > 0$ and $4|\beta| \leq \alpha^{3/2}$, the roots of $f(x)=0$ belong to the following set
			\begin{align*}
				\Brac{ |x|\leq \frac{2|\beta|}{\alpha}}\bigcup\Brac{ |x-\sqrt{\alpha}|\leq \frac{2|\beta|}{\alpha}}\bigcup\Brac{ |x+\sqrt{\alpha}|\leq \frac{2|\beta|}{\alpha}}.
			\end{align*}
		\end{lemma}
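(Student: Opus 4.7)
The plan is to locate each of the three roots of the cubic $f(x) = x^{3} - \alpha x + \beta$ inside one of the three claimed intervals around $-\sqrt{\alpha}$, $0$, and $\sqrt{\alpha}$, and then invoke the fact that a cubic has at most three roots. As a preliminary, I would observe that the hypothesis $4|\beta| \leq \alpha^{3/2}$, equivalently $16\beta^{2} \leq \alpha^{3}$, is strictly stronger than the classical depressed-cubic discriminant condition $27\beta^{2} \leq 4\alpha^{3}$, so $f$ has three real roots counted with multiplicity. Moreover, the three intervals are pairwise disjoint: each has half-width $2|\beta|/\alpha \leq \tfrac{1}{2}\sqrt{\alpha}$ while the centers are $\sqrt{\alpha}$ apart. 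Consequently, it suffices to produce one root in each interval.

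The substance of the argument is then three short sign computations followed by the intermediate value theorem. By the symmetry $x \mapsto -x$, $\beta \mapsto -\beta$, which preserves both $f$'s root structure and the three intervals, I may assume $\beta \geq 0$. For the interval around $0$, I would compute $f(\pm 2\beta/\alpha) = \pm\, 8\beta^{3}/\alpha^{3} \mp 2\beta + \beta$ and use $8\beta^{2}/\alpha^{3} \leq 1/2$ to deduce $f(2\beta/\alpha) \leq -\beta/2 \leq 0$ and $f(-2\beta/\alpha) \geq 5\beta/2 \geq 0$, which yields a sign change, hence a root. For the interval around $\sqrt{\alpha}$, I would Taylor-expand $f(\sqrt{\alpha}+t) = 2\alpha t + 3\sqrt{\alpha}\, t^{2} + t^{3} + \beta$, evaluate at $t = \pm 2\beta/\alpha$, and use $4\beta/\alpha^{3/2} \leq 1$ so that $12\beta^{2}/\alpha^{3/2} \leq 3\beta$; this controls the quadratic and cubic corrections and delivers the required sign pattern. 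The root near $-\sqrt{\alpha}$ then follows from the same expansion at $-\sqrt{\alpha}$, or from the $x \mapsto -x$ symmetry applied to the previous step.

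I expect no serious conceptual obstacle; the problem is one-dimensional and the cubic structure makes everything explicit. The only care needed is book-keeping of signs uniformly across the two cases $\beta \geq 0$ and $\beta \leq 0$; in the boundary situation $4|\beta| = \alpha^{3/2}$ one of the sign inequalities collapses to an equality, so the endpoint itself is the root, which is still consistent with the non-strict $\leq$'s in the statement. A cleaner but heavier alternative would be a contraction-mapping argument, writing each root equation as a fixed point $x = g(x)$ on the corresponding candidate interval and verifying that $g$ self-maps and contracts there; for this one-variable setting, however, the elementary IVT route is the most transparent, and I would go with it.
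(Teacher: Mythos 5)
Your proposal is correct, but note that the paper does not prove this statement at all: it is imported verbatim as Lemma B.3 of \cite{qu2020geometric}, so you have supplied a self-contained proof where the paper only cites. Your route (IVT plus a degree count) also differs from the standard direct argument behind the cited lemma, which works pointwise on an arbitrary root $x$: from $x(x^2-\alpha)=-\beta$ one has $|x|\,|x-\sqrt{\alpha}|\,|x+\sqrt{\alpha}|=|\beta|$, and a case analysis on which of $0,\pm\sqrt{\alpha}$ is nearest to $x$ (the other two factors then being at least of order $\sqrt{\alpha}$) localizes every root immediately, with no need to know how many real roots exist or whether they are distinct. Your approach instead exhibits one root in each window and invokes the degree bound, which is why you need the auxiliary observations that $16\beta^2\le\alpha^3$ forces three real roots and that the three windows do not overlap. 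Your sign computations are all correct (I checked $f(\pm 2\beta/\alpha)$ and the expansions at $\pm\sqrt{\alpha}$, including the $12\beta^2/\alpha^{3/2}\le 3\beta$ bound), so the only loose end is your claim that the intervals are \emph{pairwise disjoint}: at the boundary $4|\beta|=\alpha^{3/2}$ the half-width equals $\sqrt{\alpha}/2$ and adjacent intervals share an endpoint, so disjointness fails in the literal sense and "one root per interval" does not by itself give three distinct roots. The fix is already contained in your estimates: for $\beta\neq 0$ the six endpoint values are strictly nonzero (e.g.\ $f(2\beta/\alpha)\le-\beta/2<0$ and $f(\sqrt{\alpha}-2\beta/\alpha)\le-8\beta^3/\alpha^3<0$), so the roots you produce lie in the open intervals and are therefore distinct even when the closed intervals touch, while $\beta=0$ is trivial since the roots are exactly $0,\pm\sqrt{\alpha}$. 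With that one sentence added, your elementary argument is complete; the direct factorization argument is marginally more robust in that it never needs distinctness, but both establish the lemma.
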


		\subsection{A deterministic analysis of the PGD iterate for solving \eqref{obj_L4_tilde}}\label{app_sec_PGD}
		%The results of this section are proved under Assumptions \ref{ass_Z} \& \ref{ass_E_general} and $\bA\in\bbO_{r\times r}$. 
		We work on the event $\bar{\cE}$ given by \eqref{def_event_bar}, and once on this event, all the results in this section are deterministic.  	Recall $\grad F(\cdot; \wt\bY)$ from \eqref{eq_grad_F} and $\grad h(\cdot)$  from  \eqref{eq_grad_h}. 
		From \eqref{def_event_tilde}, we know that on the event $\bar \cE$, for any $\bq \in\Sp^r$
		\begin{equation}\label{bd_gradient_app}
			 \|\grad F(\bq; \wt \bY) - \grad h(\bq)\|_2  ~ \le   ~ \bar \eta_n + \min_{i\in [r]}\|\bq - A_i\|_2\sqrt{r^2\log(n)\over n} \le \eta_n' 
		\end{equation}
		with $\eta_n'$ and $\bar\eta_n$ given by \eqref{def_eta_n_prime}. 
		Let $\{\bq^{(\ell)}\}_{\ell \ge 0}$ be any PGD iterates in \eqref{iter_PGD} from solving \eqref{obj_L4_tilde}.

		\begin{lemma}\label{lem_iter_gap}
			Grant model \eqref{model_Y_hat}. For any $0<\nu < \mu < 1$, assume there exists some sufficiently small constant $c>0$ such that
			\begin{equation}\label{cond_omega_td_epsilon}
				\eta_n'   \le c \kappa ~ \mu \nu^2.
			\end{equation}
			Let $\gamma \le c/\kappa$. Then for any $i\in [r]$ and any $\ell \ge 0$, if 
			\begin{equation}\label{cond_init_app}
				\abs{ A_i^\T \bq^{(\ell)}} \ge  \mu,  \quad \abs{ A_i^\T \bq^{(\ell)}} - \max_{j\ne i}\abs{ A_j^\T \bq^{(\ell)}} \ge \min\left\{\nu, ~ {1\over 4}\right\}
			\end{equation}
			holds,
			then on the event $\bar{\cE}$,   we have 
			$$
			\abs{ A_i^\T \bq^{(\ell+1)}} - \max_{j\ne i}\abs{ A_i^\T \bq^{(\ell+1)}} \ge   	\min\left\{\nu, ~ {1\over 4}\right\}.
			$$
		\end{lemma}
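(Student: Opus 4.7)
The plan is to analyze the PGD iterate $\bq^{(\ell+1)} = v/\|v\|_2$ where $v := \bq^{(\ell)} - \gamma\,\grad F(\bq^{(\ell)}; \wt\bY)$. Since $\grad F$ carries the projector $\bP_{\bq^{(\ell)}}^\perp$, one has $\bq^{(\ell)} \perp \grad F$ and hence $\|v\|_2^2 = 1 + \gamma^2\|\grad F\|_2^2$. Set $\psi_k := A_k^\T\bq^{(\ell)}$, assume without loss of generality $\psi_i = M > 0$, and let $m := \max_{j\ne i}|\psi_j|$, $\alpha := \|\bA^\T\bq^{(\ell)}\|_4^4$, and $r_k := A_k^\T(\grad F - \grad h)(\bq^{(\ell)}; \wt\bY)$. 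By \eqref{eq_grad_ai},
\begin{equation*}
A_k^\T v = \psi_k\bigl[1 + \gamma\kappa(\psi_k^2 - \alpha)\bigr] - \gamma r_k,
\end{equation*}
and on the event $\bar\cE$ the bound \eqref{bd_gradient_app} gives $\max_k|r_k| \le \|\grad F - \grad h\|_2 \le \eta_n'$.

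The key algebraic step factorizes the noiseless gap. Because $\alpha \le M^2$, the factor $B_i := 1 + \gamma\kappa(M^2-\alpha) \ge 1$; for $\gamma\kappa \le c$ small, also $B_j := 1 + \gamma\kappa(\psi_j^2-\alpha) \ge 0$. Using the identity $M^3 - |\psi_j|^3 = (M - |\psi_j|)(M^2 + M|\psi_j| + \psi_j^2)$,
\begin{equation*}
|A_i^\T v_{\mathrm{ideal}}| - |A_j^\T v_{\mathrm{ideal}}| = (M - |\psi_j|)\bigl[1 + \gamma\kappa(M^2 + M|\psi_j| + \psi_j^2 - \alpha)\bigr].
\end{equation*}
The bracket is decreasing in $|\psi_j|$ on $[0,m]$ (its derivative is $-1 + \gamma\kappa(\alpha - 3|\psi_j|^2) < 0$ for $\gamma\kappa \le c$), so the worst case is $|\psi_j| = m$; combined with $\alpha \le M^4 + m^2(1-M^2)$, this yields the lower bound $(M - m)\bigl[1 + \gamma\kappa(M^2 - m^2)(1 - M^2)\bigr]$ on the ideal gap.

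From here I case-split on $M$. If $M \le \sqrt{3}/2$, then $1 - M^2 \ge 1/4$ and $M + m \ge M \ge \mu$, so the ideal gap is at least $(M-m)\bigl[1 + \gamma\kappa(M-m)\mu/4\bigr]$; incorporating the noise correction $-2\gamma\eta_n'$, using $(M-m) \ge \min\{\nu, 1/4\}$, and bounding $\|v\|_2 \le 1 + O(\gamma^2\kappa^2 M^4)$ via the sharper estimate $\|\grad h\|_2^2 \le \kappa^2\alpha(M^2-\alpha) \le \kappa^2 M^4/4$, the hypothesis $\eta_n' \le c\kappa\mu\nu^2$ together with $\gamma\kappa \le c$ will make the multiplicative improvement of order $\gamma\kappa\mu\nu^2$ absorb both the noise and the normalization defect after dividing by $\|v\|_2$. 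If instead $M > \sqrt{3}/2$, then $\sum_{j\ne i}\psi_j^2 = 1 - M^2 < 1/4$ forces $m < 1/2$, giving $M - m > \sqrt{3}/2 - 1/2 > 0.36$; this exceeds $1/4 \ge \min\{\nu, 1/4\}$ by a universal constant that comfortably absorbs the tiny corrections of order $\gamma\eta_n' + \gamma^2\kappa^2$.

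The main obstacle will be the first case, where the multiplicative gain of order $\gamma\kappa\mu\nu^2$, the noise $2\gamma\eta_n'$, and the normalization loss $\min\{\nu, 1/4\}\cdot(\|v\|_2 - 1) = O(\gamma^2\kappa^2\nu)$ are all of comparable scale and must be balanced simultaneously. The scaling $\eta_n' \le c\kappa\mu\nu^2$ in \eqref{cond_omega_td_epsilon}---stronger than the bare $\eta_n' \le c\kappa$ that would suffice to control the easy second case---is precisely calibrated so that this three-way balance holds for an absolute small constant $c$, with the specific powers of $\mu$ and $\nu$ chosen to match exactly the dominant terms produced by the factorization above.
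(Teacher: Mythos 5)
Your setup and factorization are sound, and your easy case $M>\sqrt{3}/2$ and your treatment of the noise term $2\gamma\eta_n'$ (absorbed by the gain $\gamma\kappa\mu\bar\nu^2/4$ with $\bar\nu=\min\{\nu,1/4\}$ under \eqref{cond_omega_td_epsilon}) are fine. The genuine gap is exactly at the point you flag as "the main obstacle": you claim the multiplicative improvement of order $\gamma\kappa\mu\nu^2$ absorbs the normalization defect $\bar\nu\,(\|v\|_2-1)=O(\gamma^2\kappa^2\nu M^4)$, and that \eqref{cond_omega_td_epsilon} is calibrated to make this work. It is not: the condition $\eta_n'\le c\kappa\mu\nu^2$ constrains only the noise term and has no bearing on the normalization defect, which contains neither $\eta_n'$ nor any factor of $\mu$ or $\nu^2$. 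Absorbing $O(\gamma^2\kappa^2\nu M^4)$ into $\gamma\kappa\mu\nu^2$ would require $\gamma\kappa\lesssim \mu\nu/M^4$, whereas the lemma only grants $\gamma\kappa\le c$ with an absolute constant; taking, say, $M\asymp 1/2$ and $\mu\asymp\nu\asymp r^{-1/2}$ (precisely the regime the lemma must cover for random initialization) breaks the inequality for any fixed $c$. So the three-way balance, as you have set it up after truncating the gain to $\gamma\kappa(M-m)^2\mu/4$, fails.

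The fix stays within your framework but requires comparing like with like: do not throw away the $M^2$-scale of the gain before handling the normalization. Your own bracket satisfies $M^2+Mm+m^2-\alpha\ge (1-M^2)(M^2-m^2)+Mm\ge M^2/4\ge M^4/4$ when $M\le\sqrt{3}/2$, so the untruncated gain $(M-m)\gamma\kappa(M^2+Mm+m^2-\alpha)$ dominates the defect $\bar\nu\cdot O(\gamma^2\kappa^2M^4)$ by a factor $O(\gamma\kappa)\le O(c)$, uniformly in $\mu,\nu$; only after this should you specialize the bracket to $(M-m)\mu/4$ (the analogue of \eqref{bd_sup_ell_4}) to beat the noise via \eqref{cond_omega_td_epsilon}. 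This is essentially what the paper does, in a slightly different packaging: it reduces the claim to $\gamma(A_j-A_i)^\T\grad F(\bq^{(\ell)};\wt\bY)\ge(\gamma^2/2)\|\grad F(\bq^{(\ell)};\wt\bY)\|_2^2\,(\zeta_i^{(\ell)}-\zeta_j^{(\ell)})$, and then uses \cref{lem_grad_F_norm}, $\|\grad F\|_2\le\kappa(\|\bzeta\|_\infty^2-\|\bzeta\|_4^4)+\eta_n'$, so that the squared gradient shares the common factor $(\zeta_i^2-\|\bzeta\|_4^4)$ with the linear gain and $\gamma\le c/\kappa$ alone controls their ratio, with \eqref{cond_omega_td_epsilon} needed only for the $\eta_n'$ pieces. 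Incidentally, this also shows the stronger conclusion that the gap $\zeta_i^{(\ell+1)}-\zeta_j^{(\ell+1)}$ does not decrease at all in this case, rather than merely staying above $\bar\nu$.
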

		\begin{proof}
			The proof appears in \cref{app_proof_lem_iter_gap}.
		\end{proof}

		\begin{lemma}\label{lem_iter_sup}
			Grant conditions in \cref{lem_iter_gap}. For any fixed iteration $\ell\ge 0$ and   $i\in [r]$ such that \eqref{cond_init_app} holds. Then  on the event $\bar\cE$, the following statements hold:
			\begin{enumerate}
				\item[(a)] if $\mu \le | A_i^\T\bq^{(\ell)}| \le \sqrt{3}/2$, then, for some constant $C>0$,
				\[
				{| A_i^\T\bq^{(\ell+1)}|  \over | A_i^\T\bq^{(\ell)}|} \ge  1  + C \gamma\kappa ~ \mu \nu.
				\]
				\item[(b)] if $\sqrt{3}/2 \le | A_i^\T\bq^{(\ell)}| \le 1$, 
%				then 
%				\[
%				| A_i^\T\bq^{(\ell+1)}| \ge 1/2.
%				\] 
					 then there exists some constant $C'>0$ such that, for any $\ell' \ge 1$,
					\[
					\min\left\{\bigl\|A_i -   \bq^{(\ell+\ell')}\bigr\|_2, ~ \bigl\|A_i +  \bq^{(\ell+\ell')}\bigr\|_2\right\} ~ \le  ~  C'{\bar\eta_n \over \kappa} + \left(
					1 - C' \kappa\gamma
					\right)^{\ell'/2}
					\] 
					and 
					 \[
						\bigl|A_i^\T q^{(\ell+\ell')}\bigr|  ~\ge ~ 1 -C'\left(\bar\eta_n\over \kappa\right)^2- \left(1 - C'\kappa \gamma \right)^{\ell'}.
					\] 
			\end{enumerate} 
		\end{lemma}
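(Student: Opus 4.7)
The plan is to derive a sharp one-step recursion for $\zeta_i^{(\ell)} := A_i^\T q^{(\ell)}$ from the explicit PGD update $q^{(\ell+1)} = (q^{(\ell)} - \gamma G^{(\ell)})/\sqrt{1 + \gamma^2\|G^{(\ell)}\|_2^2}$ with $G^{(\ell)} := \grad F(q^{(\ell)};\wt\bY)$, and then iterate. Without loss of generality take $\zeta_i^{(\ell)} > 0$; \cref{lem_iter_gap} will ensure the gap condition \eqref{cond_init_app} propagates, so $|\zeta_i|$ remains the strict maximum of $|\bA^\T q|$ at every step. The three principal analytic ingredients are: (i) the identity $-A_i^\T\grad h(q) = \kappa\zeta_i(\zeta_i^2 - \|\bzeta\|_4^4)$ from \eqref{eq_grad_ai}; (ii) the uniform-plus-Lipschitz bound $\|\grad F(q;\wt\bY) - \grad h(q)\|_2 \le \bar\eta_n + \|q - A_i\|_2 \sqrt{r^2\log(n)/n}$ from \eqref{bd_gradient_app}; and (iii) the elementary chain
\[
    \zeta_i^2 - \|\bzeta\|_4^4 \ge (1 - \zeta_i^2)\bigl(\zeta_i^2 - \max_{j\ne i}\zeta_j^2\bigr), \qquad \|\grad h(q)\|_2^2 \le 2\kappa^2(1 - \zeta_i^2),
\]
which follow from $\sum_{j\ne i}\zeta_j^4 \le \max_{j\ne i}\zeta_j^2 \cdot \sum_{j\ne i}\zeta_j^2$ and $\|\bzeta\|_2 = 1$.

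For part (a), the window $\mu \le \zeta_i \le \sqrt{3}/2$ forces $1-\zeta_i^2 \ge 1/4$; combined with the gap condition $\zeta_i - \max_{j\ne i}|\zeta_j| \ge \min\{\nu, 1/4\}$, ingredient (iii) gives $\zeta_i^2 - \|\bzeta\|_4^4 \gtrsim \mu\nu$, so $-\gamma A_i^\T\grad h(q) \gtrsim \gamma\kappa\mu\nu\,\zeta_i$. By \eqref{cond_omega_td_epsilon} the gradient-noise correction $\gamma\eta_n'$ is much smaller than this, and the denominator is $1 + O(\gamma^2\kappa^2)$ using $\|\grad h(q)\|_2 \le \kappa$. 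Choosing $\gamma\kappa$ sufficiently small absorbs the denominator into a factor $1 + C\gamma\kappa\mu\nu$, yielding the claim.

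For part (b), the window $\zeta_i \ge \sqrt{3}/2$ makes ingredient (iii) sharp: $\zeta_i^2 - \|\bzeta\|_4^4 \ge (1 - \zeta_i^2)/2$, so $-A_i^\T\grad h(q) \gtrsim \kappa(1 - \zeta_i^2)$. The decisive refinement is that both $G$ and $\grad h(q)$ lie in the range of $\bP_q^\perp$, whence $q^\T(G - \grad h(q)) = 0$ and therefore
\[
    \bigl|A_i^\T(G - \grad h(q))\bigr| = \bigl|(A_i - q)^\T(G - \grad h(q))\bigr| \le \|A_i - q\|_2\,\bar\eta_n + \|A_i - q\|_2^2\sqrt{r^2\log(n)/n}.
\]
Setting $u = 1 - \zeta_i$, applying Young's inequality $2\gamma\sqrt{2u}\,\bar\eta_n \le \gamma\kappa u/2 + C\gamma\bar\eta_n^2/\kappa$, bounding the denominator via $\|G\|_2^2 \le 2\kappa^2(1-\zeta_i^2) + O(\bar\eta_n^2)$, and absorbing the Lipschitz piece $\sqrt{r^2\log(n)/n}\cdot\|q-A_i\|_2$ through \eqref{cond_n_r_app}, one arrives at the linear one-step recursion
\[
    u^{(\ell+1)} \le (1 - C\gamma\kappa)\, u^{(\ell)} + C'\gamma\bar\eta_n^2/\kappa.
\]
Iterating for $\ell'$ steps yields $u^{(\ell+\ell')} \le (1 - C\gamma\kappa)^{\ell'} u^{(\ell)} + C''(\bar\eta_n/\kappa)^2$, which is the second claim; substituting into $\|A_i - q\|_2 = \sqrt{2u}$ and using $\sqrt{a+b} \le \sqrt a + \sqrt b$ gives the first.

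The principal obstacle is the refined estimate on $|A_i^\T(G - \grad h(q))|$ displayed above. Without exploiting Riemannian perpendicularity, one only gets the generic bound $\bar\eta_n + \|q-A_i\|_2\sqrt{r^2\log(n)/n}$, whose fixed point in the recursion is $u \sim \bar\eta_n/\kappa$ and hence $\|A_i - q\|_2 \sim \sqrt{\bar\eta_n/\kappa}$, strictly worse than the claimed $\bar\eta_n/\kappa$. A secondary care point is that the $\sqrt{r^2\log(n)/n}\cdot\|q-A_i\|_2$ part of the noise must be absorbed into the contraction factor rather than the error constant, which is exactly what condition \eqref{cond_n_r_app} permits via a second application of Young's inequality. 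Once these are handled, the iterated recursion and the conversion between $u$ and $\|A_i - q\|_2$ are routine.
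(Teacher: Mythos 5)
Your proposal is correct and follows essentially the same route as the paper: the explicit PGD update with $\sqrt{1+x}-1\le x/2$, the population identity \eqref{eq_grad_ai}, the crucial perpendicularity bound $|A_i^\T(\grad F-\grad h)|\le \|A_i-q\|_2\,\|\grad F-\grad h\|_2$ (the paper's display \eqref{disp_beta_upper_bound}), the lower bound $\zeta_i^2-\|\bzeta\|_4^4\gtrsim(1-\zeta_i^2)(\zeta_i^2-\max_{j\ne i}\zeta_j^2)$, and the resulting contraction recursion for $1-\zeta_i$. The only cosmetic difference is that you linearize the recursion via Young's inequality before iterating, whereas the paper keeps the cross term $\bar\eta_n H_\ell$ and runs the induction directly on $H_\ell=\|A_i-q^{(\ell)}\|_2$; both give the stated fixed point $\bar\eta_n/\kappa$ and geometric decay.
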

		\begin{proof}
			The proof appears in \cref{app_proof_lem_iter_sup}.
		\end{proof}

		The following lemma bounds from above the $\ell_2$-norm of $\grad F(\bq; \wt \bY)$ uniformly over $\bq\in \Sp^r$.

		\begin{lemma}\label{lem_grad_F_norm}
			Grant model \eqref{model_Y_hat}. On the event $\bar\cE$, the following holds uniformly over $\bq\in \Sp^r$,
			\begin{align*}
			 \|\grad F(\bq; \wt \bY) \|_2  &\le  \kappa \left(\|\bA^\T \bq\|_\infty^2  - \|\bA^\T \bq\|_4^4 \right)  + \bar\eta_n + \min_{i\in [r]}\|\bq - A_i\|_2\sqrt{r^2\log(n)\over n}\\
                &\le  \kappa \left(\|\bA^\T \bq\|_\infty^2  - \|\bA^\T \bq\|_4^4 \right)  + \eta_n'.
			\end{align*} 
		\end{lemma}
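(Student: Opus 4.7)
The plan is to apply the triangle inequality
$$\|\grad F(\bq;\wt\bY)\|_2 \le \|\grad h(\bq)\|_2 + \bigl\|\grad F(\bq;\wt\bY) - \grad h(\bq)\bigr\|_2$$
and to dispatch the two terms separately. The deviation $\grad F(\bq;\wt\bY) - \grad h(\bq)$ has already been controlled on the event $\bar\cE$: the chain of estimates in \eqref{def_event_tilde} (itself an immediate consequence of \cref{lem_bd_grad_F_tilde} combined with the uniform concentration in \cref{lem_grad_lip}) yields precisely $\bar\eta_n + \min_{i}\|\bq - A_i\|_2\sqrt{r^2\log(n)/n}$ uniformly in $\bq$, and then \eqref{def_event_tilde_new} collapses that to $\eta_n'$. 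So the concentration side is essentially already assembled and gives the second summand of the first bound and the second summand of the second bound for free.

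The core work is the deterministic inequality $\|\grad h(\bq)\|_2 \le \kappa(\|\bA^\T\bq\|_\infty^2 - \|\bA^\T\bq\|_4^4)$. Setting $\bzeta := \bA^\T\bq$ and using $\bq = \bA\bzeta$ (by orthonormality of $\bA$), a direct computation from \eqref{eq_grad_h} produces the closed form
$$\grad h(\bq) \;=\; -\kappa\,\bA\bigl(\bzeta^{\circ 3} - \|\bzeta\|_4^4\,\bzeta\bigr),$$
so
$$\|\grad h(\bq)\|_2 \;=\; \kappa\bigl\|\bzeta^{\circ 3} - \|\bzeta\|_4^4\bzeta\bigr\|_2 \;=\; \kappa\sqrt{\textstyle\sum_{i} \zeta_i^2 \bigl(\zeta_i^2 - \|\bzeta\|_4^4\bigr)^2}.$$
One now recognises the quantity under the square root as the $\{\zeta_i^2\}$-weighted variance of $\{\zeta_i^2\}$ around its weighted mean $b := \|\bzeta\|_4^4$, where the weights satisfy $\sum_i \zeta_i^2 = 1$ and each $\zeta_i^2 \in [0,\|\bzeta\|_\infty^2]$. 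The plan is to split the index $i^\star := \arg\max_i \zeta_i^2$ from the rest: the $i^\star$-term contributes $\zeta_{i^\star}^2(\|\bzeta\|_\infty^2 - b)^2$, while the remaining indices are handled using $\sum_{i \ne i^\star}\zeta_i^2 = 1 - \|\bzeta\|_\infty^2$ together with $\|\bzeta\|_\infty^2 \ge \|\bzeta\|_4^4$ (itself from $\sum_i \zeta_i^4 \le \|\bzeta\|_\infty^2 \sum_i \zeta_i^2$), so the pointwise deviations $|\zeta_i^2 - b|$ for $i \ne i^\star$ can be reabsorbed into $(\|\bzeta\|_\infty^2 - b)$.

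The main obstacle is exactly this last deterministic step: one must turn the weighted $\ell_2$-type quantity $\sum_i \zeta_i^2(\zeta_i^2-b)^2$ into the single $\ell_\infty$-gap $\kappa(\|\bzeta\|_\infty^2-\|\bzeta\|_4^4)$ without leaking extraneous factors of $\sqrt r$ or a square root, which is delicate in the regime where several $\zeta_i^2$ are comparable to $\|\bzeta\|_\infty^2$ (so that $b$ sits close to $\|\bzeta\|_\infty^2$ and the naïve maximum-deviation bound is insufficient). Once the deterministic inequality is secured, combining it with the concentration bound \eqref{def_event_tilde} gives the first claim, and substituting \eqref{def_event_tilde_new} gives the second, simpler claim with $\eta_n'$ on the right-hand side.
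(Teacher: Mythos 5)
Your outline (triangle inequality, then \cref{lem_bd_grad_F_tilde} combined with \cref{lem_grad_lip} for the deviation term, then a deterministic bound on $\|\grad h(\bq)\|_2$) is the same as the paper's, and the stochastic half of your argument is fine. The genuine gap is exactly the step you defer, and it is not merely delicate: the inequality you propose to prove is false. Your own reduction gives, using $\|\bzeta\|_2=1$,
\[
\|\grad h(\bq)\|_2^2=\kappa^2\sum_i\zeta_i^2\bigl(\zeta_i^2-\|\bzeta\|_4^4\bigr)^2=\kappa^2\bigl(\|\bzeta\|_6^6-\|\bzeta\|_4^8\bigr),
\]
and this can strictly exceed $\kappa^2\bigl(\|\bzeta\|_\infty^2-\|\bzeta\|_4^4\bigr)^2$: for $r=2$ with $\zeta_1^2=0.9$, $\zeta_2^2=0.1$ one has $\|\bzeta\|_4^4=0.82$, weighted variance $0.9(0.08)^2+0.1(0.72)^2=0.0576$, hence $\|\grad h(\bq)\|_2=0.24\,\kappa$, while $\kappa(\|\bzeta\|_\infty^2-\|\bzeta\|_4^4)=0.08\,\kappa$. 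The failure is structural rather than numerical: near a column, writing $1-\zeta_i^2=\epsilon$, the exact norm scales like $\kappa\sqrt{\epsilon}\asymp\kappa\|\bq-A_i\|_2$ (the Riemannian gradient vanishes only linearly in the distance to a nondegenerate critical point), whereas $\|\bzeta\|_\infty^2-\|\bzeta\|_4^4\asymp\epsilon$. So no redistribution of the off-maximal indices can ``reabsorb'' their deviations; the best deterministic bound your identity can yield is the square-root form $\|\grad h(\bq)\|_2\le\kappa\sqrt{\|\bzeta\|_4^4\,(\|\bzeta\|_\infty^2-\|\bzeta\|_4^4)}$, which follows from $\|\bzeta\|_6^6\le\|\bzeta\|_\infty^2\|\bzeta\|_4^4$.

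For comparison, the paper's proof never passes through the exact norm: it writes $\|\bP_{\bq}^{\perp}\bA\bzeta^{\circ3}\|_2$ as the difference $\|\bzeta^{\circ3}\|_2-\|\bzeta\|_4^4$ and then uses $\|\bzeta^{\circ3}\|_2\le\|\bzeta\|_\infty^2$. Your Pythagorean computation, $\|\bP_{\bq}^{\perp}v\|_2=\bigl(\|v\|_2^2-(\bq^\T v)^2\bigr)^{1/2}\ge\|v\|_2-|\bq^\T v|$, shows that this additive split is precisely what your route cannot deliver for the exact gradient norm, and the counterexample above shows the exact norm genuinely exceeds the linear-in-gap quantity. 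If you continue along your line you can only certify the square-root bound; that suffices for the coarse uses such as $\|\grad F(\bq;\wt\bY)\|_2\lesssim\kappa+\eta_n'$ in \eqref{eq_gradient_update}, but it does not reproduce the right-hand side displayed in the lemma, so the proposal as written cannot be completed.
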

		\begin{proof}
			Pick any $\bq\in\Sp^r$ and write $\bzeta = \bA^\T \bq$. On the event $\bar\cE$, 
			\begin{align*}
				&\|\grad F(\bq; \wt \bY)\|_2\\
                &\le \norm{\grad h(\bq)}2 +  \|\grad F(\bq; \wt \bY) - \grad h(\bq)\|_2   \\
				&\le \kappa\| \bP_{\bq}^{\perp} \bA (\bA^\T \bq)^{\circ3}\|_2    +  \bar \eta_n + \min_{i\in [r]}\|\bq - A_i\|_2\sqrt{r^2\log(n)\over n}  &&\text{by \eqref{eq_grad_h} and \eqref{bd_gradient_app}}.
			\end{align*}
			Since 
			\begin{align*}
				\| \bP_{\bq}^{\perp} \bA (\bA^\T \bq)^{\circ3}\|_2 & = \| \bA (\bA^\T \bq)^{\circ3}\|_2 - \| \bq \bq^\T \bA (\bA^\T \bq)^{\circ3}\|_2\\
				&= \| \bzeta^{\circ3}\|_2 -  \|\bzeta\|_4^4\\
				&\le \|\bzeta\|_\infty^2  - \|\bzeta\|_4^4 &&\text{by }\|\bzeta\|_2\le 1,
			\end{align*}		
			we conclude that
			\begin{align*} 
				\|\grad F(\bq; \wt \bY)\|_2& \leq   \kappa \left(\|\bzeta\|_\infty^2  - \|\bzeta\|_4^4 \right)  + \bar\eta_n + \min_{i\in [r]}\|\bq - A_i\|_2\sqrt{r^2\log(n)\over n}.
			\end{align*}
			Since the above holds uniformly over $\bq\in\Sp^r$, the proof is complete.
		\end{proof}

		\subsubsection{Proof of \cref{lem_iter_gap}}\label{app_proof_lem_iter_gap}
		\begin{proof}
			To simplify notation,  we write 
			$$
			\bzeta^{(\ell)} = \bA^\T \bq^{(\ell)},\quad \text{for all }\ell \ge 0.
			$$ 
			Fix any $\ell \ge 0$ and consider two cases depending on the value of $|\zeta_i^{(\ell)}|$.    
			Pick any $j\in [r] \setminus \{i\}$. 
			Without loss of generality, we may assume both $\zeta_i^{(\ell)}$ and $\zeta_j^{(\ell)} $ are positive.   %Let $0<\mu \le 1/2$ be some constant to be chosen later. 
			
			{\bf Case (a):}  When $\mu^2 \ge 3/4$, it suffices to consider
			\begin{align}\label{case:lem:grad_away:2}
				[\zeta_i^{(\ell)}]^2  ~\in~ \left[{3\over 4}, ~ 1\right].
			\end{align}
			The update in \eqref{iter_PGD} yields
			\begin{align}\label{eq_diff_zeta}\nonumber
				\zeta_i^{(\ell+1)} - \zeta_j^{(\ell+1)}   &=
				{  A_i^\T\bq^{(\ell)}- \gamma A_i^\T\grad  F(\bq^{(\ell)}; \wt \bY)   -   A_j^\T\bq^{(\ell)} + \gamma A_j^\T\grad  F(\bq^{(\ell)}; \wt \bY) 
					\over  \|\bq^{(\ell)} - \gamma\grad  F(\bq^{(\ell)}; \wt \bY)\|_2}\\
				&= {\zeta_i^{(\ell)}-\zeta_j^{(\ell)}- \gamma( A_i- A_j)^\T
					\grad  F(\bq^{(\ell)}; \wt \bY)  
					\over  \|\bq^{(\ell)} - \gamma\grad  F(\bq^{(\ell)}; \wt \bY)\|_2}.
			\end{align} 
			%Write $ \gamma = \gamma$ for short.  
			On the event $\bar\cE$,  observe that 
			\begin{align}\label{eq_gradient_update}\nonumber
				\|\bq^{(\ell)} - \gamma\grad  F(\bq^{(\ell)}; \wt \bY)\|_2^2  &= 1 + \gamma^2\|\grad  F(\bq^{(\ell)}; \wt \bY)\|_2^2 \\\nonumber
				& \le 1 + 2\gamma^2(\kappa^2 + (\eta_n')^2)&&\text{by \cref{lem_grad_F_norm}}\\
				& \le 1+c &&\text{by $\gamma < c/\kappa$ and \eqref{cond_omega_td_epsilon}}.
			\end{align}
			We obtain
			\begin{align*}
				\zeta_i^{(\ell+1)}-\zeta_j^{(\ell+1)}
				&\ge  {1\over \sqrt{1+c}} \left(\zeta_i^{(\ell)} -\zeta_j^{(\ell)}-2\gamma  ~ \|\grad  F(\bq^{(\ell)}; \wt \bY)\|_2\right)\nonumber\\
				&\geq  {1\over \sqrt{1+c}}\left({\sqrt{3}-1 \over 2}- 2\gamma(\kappa + \eta_n')  \right) &&\text{by \cref{lem_grad_F_norm}} \nonumber\\
				&\geq  {1\over 4} &&\text{by $\gamma < c/\kappa$ and \eqref{cond_omega_td_epsilon}}.
			\end{align*}
			Here the penultimate step also uses \eqref{case:lem:grad_away:2} and  
			$$
			\zeta_i^{(\ell)} -\zeta_j^{(\ell)}\geq  \zeta_i^{(\ell)} - \sqrt{\|\bzeta^{(\ell)}\|_2^2 - [\zeta_i^{(\ell)}]^2} \ge \zeta_i^{(\ell)} - \sqrt{1 - [\zeta_i^{(\ell)}]^2} \ge  {\sqrt{3}-1 \over 2}.
			$$
			\smallskip

			{\bf Case (b):} When $\mu^2 \le 3/4$, it suffices to consider
			\begin{align}\label{case:lem:grad_away:1}
				[\zeta_i^{(\ell)}]^2 ~\in~\left[   \mu^2 , ~   {3\over 4} \right].
			\end{align} 
			By writing 
			\[
			\bar{\nu} :=  \nu \wedge (1/4),
			\]
			we proceed to show that
			\begin{align}\label{def_beta_bar}
				\zeta_i^{(\ell+1)} - \zeta_j^{(\ell+1)} \geq \zeta_i^{(\ell)} - \zeta_j^{(\ell)} \overset{\eqref{cond_init_app}}{\ge}  \bar{\nu}.
			\end{align}
			From \eqref{eq_gradient_update}, since the right hand side of \eqref{eq_diff_zeta} equals 
			\begin{align*}
				&\zeta_i^{(\ell)}-\zeta_j^{(\ell)} - 
				{\left(\sqrt{1 + \gamma^2 \|\grad  F(\bq^{(\ell)}; \wt \bY)\|_2^2}-1\right)(\zeta_i^{(\ell)}-\zeta_j^{(\ell)})+ \gamma( A_i- A_j)^\T
					\grad  F(\bq^{(\ell)}; \wt \bY)  
					\over  \sqrt{1 + \gamma^2\|\grad  F(\bq^{(\ell)}; \wt \bY)\|_2^2}}\\
				&\ge \zeta_i^{(\ell)}-\zeta_j^{(\ell)} - 
				{(\gamma^2/2) \|\grad  F(\bq^{(\ell)}; \wt \bY)\|_2^2 (\zeta_i^{(\ell)}-\zeta_j^{(\ell)}) + \gamma( A_i- A_j)^\T
					\grad  F(\bq^{(\ell)}; \wt \bY)  
					\over  \sqrt{1 + \gamma^2\|\grad  F(\bq^{(\ell)}; \wt \bY)\|_2^2}}
			\end{align*}
			by the basic inequality $\sqrt{x + 1}-1\le x/2$ for all $x\ge 0$, it suffices to prove 
			\begin{equation}\label{ieq:lem:grad_away:sub1:step_size} 
				\gamma( A_j- A_i)^\T
				\grad  F(\bq^{(\ell)}; \wt \bY)  ~ \ge ~  (\gamma^2/2)\|\grad  F(\bq^{(\ell)}; \wt \bY)\|_2^2 ~  (\zeta_i^{(\ell)}-\zeta_j^{(\ell)}).
			\end{equation}
			Since \cref{lem_grad_F_norm} ensures that the right hand side is no greater than 
			\begin{equation}\label{bd_rhs}
				\gamma^2  (\zeta_i^{(\ell)}-\zeta_j^{(\ell)})\left[
				\kappa^2 \left( [\zeta_i^{(\ell)}]^2  - \|\bzeta^{(\ell)}\|_4^4 \right)^2 +   (\eta_n')^2
				\right].
			\end{equation}
			We proceed to bound from below
			\begin{align*}
    			&( A_j- A_i)^\T
    			\grad  F(\bq^{(\ell)}; \wt \bY)\\ &=  ( A_j- A_i)^\T
    			\grad h(\bq^{(\ell)}) - ( A_j- A_i)^\T
    			\left[\grad  F(\bq^{(\ell)}; \wt \bY) - \grad h(\bq^{(\ell)})\right].
			\end{align*}
			Recalling \eqref{eq_grad_ai}, we find that
			\begin{align}\label{lb_diff_ji_grad}\nonumber
				( A_j- A_i)^\T
				\grad h(\bq^{(\ell)}) & =  
				\kappa\left(
				[\zeta_i^{(\ell)}]^3 - \zeta_i^{(\ell)}\|\bzeta^{(\ell)}\|_4^4 - [\zeta_j^{(\ell)}]^3 + \zeta_j^{(\ell)}\|\bzeta^{(\ell)}\|_4^4\right)\\
				&\ge  \kappa\left(
				\zeta_i^{(\ell)} - \zeta_j^{(\ell)}
				\right)\left(
				[\zeta_i^{(\ell)}]^2 - \|\bzeta^{(\ell)}\|_4^4
				\right).
			\end{align}
			Since \eqref{cond_init_app} implies
			\begin{align}\label{lb_diff_zeta}
				\abs{\zeta_i^{(\ell)}}^2-\abs{\zeta_j^{(\ell)}}^2 = \left(\zeta_i^{(\ell)} -\zeta_j^{(\ell)} \right)\left(\zeta_i^{(\ell)}+ \zeta_j^{(\ell)}\right) \ge \mu \bar\nu,
			\end{align}
            by writing $\zeta_{(-i)}^{(\ell)}$ the sub-vector of $\zeta^{(\ell)}$ with its $i$th coordinate set to zero,
			we find that
			\begin{align}\label{bd_sup_ell_4}\nonumber
				[\zeta_i^{(\ell)}]^2- \|\mb \zeta^{(\ell)}\|_4^4 &  = \abs{\zeta_i^{(\ell)}}^2-\abs{\zeta_i^{(\ell)}}^4-\norm{\mb \zeta_{(-i)}^{(\ell)}}4^4\\\nonumber
				&\ge \abs{\zeta_i^{(\ell)}}^2-\abs{\zeta_i^{(\ell)}}^4 - \norm{\mb \zeta_{(-i)}^{(\ell)}}2^2\norm{\mb \zeta_{(-i)}^{(\ell)}}\infty^2\\\nonumber
				& = \abs{\zeta_i^{(\ell)}}^2-\abs{\zeta_i^{(\ell)}}^4 - \left(
				\norm{\bzeta^{(\ell)}}2^2 - \abs{\zeta_i^{(\ell)}}^2
				\right)\norm{\mb \zeta_{(-i)}^{(\ell)}}\infty^2\\\nonumber
				&=\left(\abs{\zeta_i^{(\ell)}}^2-\norm{\mb \zeta_{(-i)}^{(\ell)}}\infty^2\right)\left(1-\abs{\zeta_i^{(\ell)}}^2\right)\\\nonumber
				&\ge  {1\over 4}  \left( \zeta_i^{(\ell)} - \|\mb \zeta_{(-i)}^{(\ell)}\|_\infty \right) \zeta_i^{(\ell)} &&\text{by \eqref{case:lem:grad_away:1}}\\
				&\ge {\mu \bar\nu\over 4}&&\text{by \eqref{cond_init_app}}.
			\end{align}  
			Therefore,
			\begin{align*}
				&( A_j- A_i)^\T
				\left[\grad  F(\bq^{(\ell)}; \wt \bY) - \grad h(\bq^{(\ell)})\right]\\ &\le 2 \|\grad  F(\bq^{(\ell)}; \wt \bY) - \grad h(\bq^{(\ell)})\|_2\\
				&\le 2\eta_n'  &&\text{by $\bar\cE$ and \eqref{bd_gradient_app}}\\
				&\le  {\kappa\over 8} \mu \bar\nu^2 &&\text{by   \eqref{cond_omega_td_epsilon} and }\nu \le 4\bar \nu\\
				&\le {\kappa\over 2}\left(
				[\zeta_i^{(\ell)}]^2 - \|\bzeta^{(\ell)}\|_4^4
				\right)\left(\zeta_i^{(\ell)}-\zeta_j^{(\ell)}\right)&&\text{by  \eqref{bd_sup_ell_4}}.
			\end{align*}
			Together with \eqref{lb_diff_ji_grad}, we conclude that 
			\[
			( A_j- A_i)^\T
			\grad  F(\bq^{(\ell)}; \wt \bY)  \ge  {\kappa\over 2}\left(
			[\zeta_i^{(\ell)}]^2 - \|\bzeta^{(\ell)}\|_4^4
			\right)\left(\zeta_i^{(\ell)}-\zeta_j^{(\ell)}\right).
			\]
			In view of \eqref{bd_rhs}, the choice $\gamma \le c/\kappa$ ensures that \eqref{ieq:lem:grad_away:sub1:step_size} holds, completing the proof. 
		\end{proof}

		\subsubsection{Proof of \cref{lem_iter_sup}}\label{app_proof_lem_iter_sup}
		\begin{proof}  
			Recall that we write 
			$$
			\bzeta^{(\ell)} = \bA^\T \bq^{(\ell)},\quad \text{for all }\ell \ge 0.
			$$ 
			Without loss of generality, we assume $\mb \zeta_i^{(\ell)}\geq 0$ as otherwise we may replace $ A_i$ by $- A_i$.  By definition and \eqref{eq_gradient_update}, 
			\begin{align*}
				\zeta^{(\ell+1)}_i   =  A_i^\T \bq^{(\ell+1)} 
				& =  \frac{ A_i^\T\bq^{(\ell)}-\gamma A_i^\T\grad  F(\bq^{(\ell)}; \wt \bY)}{\|\bq^{(\ell)}-\gamma\grad  F(\bq^{(\ell)}; \wt \bY)\|_2} \\\nonumber
				&=\frac{1}{\sqrt{1+\gamma^2\|\grad  F(\bq^{(\ell)}; \wt \bY)\|_2^2}}\left(\zeta_i^{(\ell)}- \gamma ~  A_i^\T \grad  F(\bq^{(\ell)}; \wt \bY) \right)\\  
				& = \zeta_i^{(\ell)} - {\left(\sqrt{1 + \gamma^2\|\grad  F(\bq^{(\ell)}; \wt \bY)\|_2^2} - 1\right)\zeta_i^{(\ell)}+ \gamma ~   A_i^\T\grad  F(\bq^{(\ell)}; \wt \bY) \over  \sqrt{1 + \gamma^2\|\grad  F(\bq^{(\ell)}; \wt \bY)\|_2^2}}\\
				& \ge \zeta_i^{(\ell)} - {\gamma^2\|\grad  F(\bq^{(\ell)}; \wt \bY)\|_2^2 ~ \zeta_i^{(\ell)}/2 + \gamma ~  A_i^\T\grad  F(\bq^{(\ell)}; \wt \bY) \over  \sqrt{1 + \gamma^2\|\grad  F(\bq^{(\ell)}; \wt \bY)\|_2^2}}
			\end{align*}
			where the last step uses  $\sqrt{x + 1}-1\le x/2$ for all $x\ge 0$.  We work on the event $\bar\cE$ in the following. 
			
			On the one hand,  \cref{lem_grad_F_norm} ensures that 
			\begin{align*}
				&\|\grad  F(\bq^{(\ell)}; \wt \bY)\|_2^2\\  &\le   2\kappa^2\left([\zeta_i^{(\ell)}]^2 - \|\bzeta^{(\ell)}\|_4^4\right)^2 + 2 \left(\bar\eta_n +  \| A_i - \bq^{(\ell)}\|_2\sqrt{r^2\log(n)\over n}\right)^2\\
    &   \overset{\eqref{eq_gradient_update}}{\le} c^2\kappa^2. 
			\end{align*}
			On the other hand,   \eqref{eq_grad_ai}, \eqref{disp_beta_upper_bound} and $\bar\cE$ ensure that
			\begin{align*}
				 &A_i^\T\grad  F(\bq^{(\ell)}; \wt \bY)\\  & =  A_i^\T\grad h(\bq^{(\ell)}) +   A_i^\T \left(
				\grad  F(\bq^{(\ell)}; \wt \bY) - \grad h(\bq^{(\ell)})
				\right)\\ 
				&\le  - \kappa \zeta_i^{(\ell)} \left([\zeta_i^{(\ell)}]^2 - \|\bzeta^{(\ell)}\|_4^4\right) +  2 \left(\bar \eta_n   + \| A_i - \bq^{(\ell)}\|_2\sqrt{r^2\log(n)\over n} \right)\| A_i - \bq^{(\ell)}\|_2.
			\end{align*}
			By $\gamma \le c/\kappa$, it then follows that
			\begin{align}\label{contraction}\nonumber
				\zeta^{(\ell+1)}_i   & \ge \zeta^{(\ell)}_i
				\left[
				1 -     c \gamma^2    \kappa^2\left([\zeta_i^{(\ell)}]^2 - \|\bzeta^{(\ell)}\|_4^4\right)^2   + c  \gamma  \kappa  \left(
				[\zeta_i^{(\ell)}]^2 - \|\bzeta^{(\ell)}\|_4^4 \right)  
				\right]\\\nonumber
				& \qquad - c \gamma^2   \zeta^{(\ell)}_i  \bar\eta_n^2-  C \gamma  \bar\eta_n \| A_i - \bq^{(\ell)}\|_2 -C \gamma \sqrt{r^2\log(n)\over n} \| A_i - \bq^{(\ell)}\|_2^2\\
				&\ge  \zeta^{(\ell)}_i
				\left[1  +   c   \gamma  \kappa  \left([\zeta_i^{(\ell)}]^2 - \|\bzeta^{(\ell)}\|_4^4\right) - c  \gamma^2   \bar\eta_n^2
				\right]\\\nonumber
        & \qquad -    C \gamma  \bar\eta_n   \| A_i - \bq^{(\ell)}\|_2 -  C \gamma \sqrt{r^2\log(n)\over n} \| A_i - \bq^{(\ell)}\|_2^2.
			\end{align}
			We consider two cases in the sequel. Recall $\bar\eta_n\le \eta_n'$ from \eqref{def_eta_n_prime}.
			
			If $\mu \le  \zeta_i^{(\ell)} \le  \sqrt{3}/2$, then
			\begin{align*}
				\zeta^{(\ell+1)}_i   & \ge  \zeta^{(\ell)}_i
				\left[1  +   c \gamma \kappa \left([\zeta_i^{(\ell)}]^2 - \|\bzeta^{(\ell)}\|_4^4\right) -  c \gamma^2 \bar\eta_n^2
				\right] -   2C \gamma     \eta_n'\\
				& \ge \zeta^{(\ell)}_i
				\left(1  +    {c\over 4} \gamma \kappa  \mu \bar{\nu}  - c\gamma^2 \bar\eta_n^2
				\right) -    2C  \gamma  \eta_n' &&\text{by \eqref{bd_sup_ell_4}}\\
				&\ge \zeta^{(\ell)}_i\left( 1 +c''\gamma \kappa  \mu \bar{\nu}   \right) &&\text{by \eqref{cond_omega_td_epsilon} and $\gamma < c/\kappa$},
			\end{align*}
			proving the first claim. 
			 
%			 If $\sqrt{3}/2 \le  \zeta_i^{(\ell)}\le 1$, then using \eqref{cond_omega_td_epsilon} and $\gamma < c/\kappa$ gives
%			\begin{align*}
%				\zeta^{(\ell+1)}_i   & \ge  \zeta^{(\ell)}_i
%				\left(1   -    c' \gamma^2 (\eta_n')^2
%				\right) -   2C \gamma     \eta_n' \ge  {1\over 2}.
%			\end{align*}
%			The proof is complete.  
			 
				If $\sqrt{3}/2 \le \zeta_i^{(\ell)}  \le 1$, then it follows from \eqref{bd_sup_ell_4} that 
				\begin{align*}
					|\zeta_i^{(\ell)}|^2 - \|\bzeta^{(\ell)}\|_4^4 &\ge \left(\abs{\zeta_i^{(\ell)}}^2-\norm{\mb \zeta_{(-i)}^{(\ell)}}\infty^2\right)\left(1-\abs{\zeta_i^{(\ell)}}^2\right)\\
					&\ge \left(\abs{\zeta_i^{(\ell)}}^2-\norm{\mb \zeta_{(-i)}^{(\ell)}}\infty^2\right)\left(1-\zeta_i^{(\ell)} \right)\\\
					&\ge \left(2\abs{\zeta_i^{(\ell)}}^2-1\right)\left(1-\zeta_i^{(\ell)} \right)\\
					&\ge  {1\over 2}\left(1-\zeta_i^{(\ell)} \right).
				\end{align*}
				Define 
				\[
					H_\ell = \|A_i -  \bq^{(\ell)}\|_2,\qquad \forall ~ \ell \ge 0,
				\]
				and note that 
				\[
				H_\ell^2 = 2 - A_i^\T \bq^{(\ell)} = 2(1-  \zeta_i^{(\ell)}).
				\]
				It suffices to prove the upper bound of $H_{\ell + \ell'}$. 
				From \eqref{contraction}, after a bit algebra, one can deduce that 
				\begin{align}\label{chain_loss}
					H_{\ell+1}^2 \le \left(1  - C_1 \kappa \gamma  \right) H_\ell^2 + C_2  \gamma    \bar\eta_n H_\ell + C_2 \gamma^2  \bar\eta_n^2 .
				\end{align}
				The claim 
				\begin{equation}\label{induction}
					H_{\ell + \ell'} \le {2C_2\over C_1} {\bar\eta_n \over \kappa}+ \left(
					1 - {C_1 \over 2}\kappa \gamma
					\right)^{\ell' / 2} H_\ell
				\end{equation}
				follows by induction. Indeed, the case $\ell' = 0$ follows trivially. Suppose 
				\eqref{induction} holds for some $\ell' \ge 1$. Then 
				\begin{align*}
					 H_{\ell+\ell'+1}^2 
					&\overset{\eqref{chain_loss}}{\le}    H_{\ell+\ell'}\left[\left(1  -   C_1\kappa \gamma  \right) H_{\ell+\ell'}  +  C_2  \gamma    \bar\eta_n \right]  + C_2 \gamma^2      \bar\eta_n^2\\
					&~ \overset{(i)}{\le}  H_{\ell+\ell'} \left[
					\left(1  -   C_1 \kappa\gamma  \right) \left({2C_2 \over C_1}{\bar\eta_n\over \kappa} + \left(
					1 - {C_1 \over 2}\kappa \gamma
					\right)^{\ell'/ 2} H_\ell
					\right) + C_2  \gamma     \bar\eta_n
					\right]+ C_2 \gamma^2     \bar\eta_n^2\\
					&~ \le  H_{\ell+\ell'} 	\left(1  -   {C_1 \over 2}\kappa \gamma  \right) \left[
					 {2C_2 \over C_1}{\bar\eta_n\over \kappa}  + \left(
					1 - {C_1 \over 2}\kappa\gamma
					\right)^{\ell' / 2} H_\ell 
					\right]+ C_2 \gamma^2      \bar\eta_n^2\\
					&~ \overset{(i)}{\le} \left(
					{2C_2 \over C_1}{\bar\eta_n\over \kappa}  + \left(
					1 - {C_1 \over 2}\kappa\gamma
					\right)^{\ell' / 2} H_\ell 
					\right)\left(1  -   {C_1 \over 2}\kappa \gamma  \right)\left[
					 {2C_2 \over C_1} {\bar\eta_n\over \kappa}  + \left(
					1 - {C_1 \over 2}\kappa\gamma
					\right)^{\ell' / 2} H_\ell 
					\right]\\
					&\quad + C_2 \gamma^2     \bar\eta_n^2\\
					&~ \le  
					\left({2C_2 \over C_1} {\bar\eta_n\over \kappa} \right)^2     + \left(
					1 - {C_1 \over 2}\kappa\gamma
					\right)^{\ell'+ 1} H_\ell^2 +  {4C_2\over C_1} {\bar\eta_n\over \kappa}  \left(
					1 - {C_1 \over 2}\kappa\gamma
					\right)^{(\ell'+1)/2} H_\ell\\
					&~ = \left({2C_2 \over C_1} {\bar\eta_n\over \kappa}  +\left(
					1 - {C_1 \over 2}\kappa\gamma
					\right)^{(\ell'+ 1)/2} H_\ell  \right)^2
				\end{align*}
				as desired. Both steps $(i)$ use the induction hypothesis \eqref{induction}. The penultimate step holds provided that $\kappa\gamma \le {2C_2/C_1}$.   
				The proof is complete.  
		\end{proof}

		\subsection{Proof of \cref{thm_one_col}: one column recovery via PGD}\label{app_proof_thm_one_col}

		\begin{proof}
			We will work on the event $\bar\cE$ given by \eqref{def_event_bar}. Let $\{\bq^{(\ell)}\}_{\ell \ge 0}$ be the PGD iterates for solving \eqref{obj_L4_tilde} and let $\bq$ be its final solution.    
			By invoking  Lemmas \ref{lem_iter_gap}  \&  \ref{lem_iter_sup} in conjunction with the condition $\eta_n' \le c \mu \nu^2$ and $\cE_{\init}^{(i)}(\mu, \nu)$, we conclude that the solution $\bq$  must satisfy 
			\begin{equation*}
				\abs{ A_i^\T \bq} \ge  {1\over 2},\qquad \abs{ A_i^\T \bq} - \max_{j\ne i}\abs{ A_j^\T \bq} \ge  \min\left\{
				\nu, {1\over 4}
				\right\}.
			\end{equation*}
			Since condition \eqref{cond_omega_n} also ensures that 
			$
			(C/4) \eta_n' \le  \nu/4 \le \nu \wedge (1/4),
			$
			we conclude that $\bq$ satisfies \eqref{cond_critical} hence satisfies \eqref{rate_critical_app} according to \cref{app_thm_critical}.  
		\end{proof}
		
		\subsection{Proof of \cref{thm_RA}: guarantees of the deflation varimax rotation}\label{app_proof_thm_RA}
		
		\begin{proof}
			Without loss of generaility, we assume $\bP = \bI_r$.
			Recall from \eqref{def_wc_Q} that  
			\begin{align*}
				\wc \bQ=\argmin_{\bQ\in\bbO_{r\times r}}\norm{\wh\bQ - \bQ}{F}
			\end{align*}
			where the columns of $\wh\bQ = (\wh Q_1,\ldots, \wh Q_r)$ are obtained from \eqref{iter_PGD} for solving \eqref{obj_L4_hat} $r$ times.
			By the fact that $\bR^\T \bA \in \bbO_{r\times r}$, we have
			\begin{align*}
				\norm{ \wc\bQ-\bR^\T\bA}{F} &\leq \norm{\wc\bQ-\wh\bQ}{F}+\norm{\wh\bQ-\bR^\T\bA}{F}\\
				&\leq 2\norm{\wh\bQ-\bR^\T\bA }{F}\\
				& = 2\norm{\bR \wh\bQ- \bA }{F}\\
				& =  2\norm{\wt\bQ- \bA }{F}.
			\end{align*} 
			Here we invoke  \cref{lem_connection} for the oracle solution $\wt\bQ$ analyzed in \cref{cor_A}. The proof is completed by further invoking \cref{cor_A}.
		\end{proof}

		\subsection{Proof of the results using the random initialization scheme}\label{app_sec_init_rand}

		Before proving \cref{thm_RA_rand}, we first give two lemmas that are useful for establishing the orders of $\mu$ and $\nu$	in $\cE_{\init}^{(k)}(\mu, \nu)$.  Recall from \eqref{def_event_init} that $\cE_{\init}^{(k)}(\mu, \nu)$ is defined relative to 
		$$
			\wt Q_k^{(0)} :=  \bR \wh Q_k^{(0)}\quad  \text{ for each $k\in [r]$.}
		$$ 
		Here the orthogonal matrix $\bR$ is given in \eqref{decom_Ur}.
		
		For given $k\ge 2$, we define an event that quantifies the error of previously recovered $(k-1)$ columns.  To this end, recall that  $(\wh Q_1, \ldots, \wh Q_r)$ are the solutions obtained from \eqref{iter_PGD} for solving \eqref{obj_L4_hat}.  
		Let $\eps_n\ge 0$ be some generic, deterministic sequence.  Define $\P\{\cE_A(1,\eps_n)\} = 1$ and 
		\begin{equation}\label{def_event_E_k}
			\cE_A(k,\eps_n)  := \left\{
			\max_{1\le i\le k-1}\|
			  \wh Q_i -  \bR^\T A_i
			\|_2 ~ \le ~  \eps_n
			\right\},\quad \text{for all $2\le k \le r$}.
		\end{equation}
		The following lemmas provide two crucial properties  on $\bA^\T \wt Q_k^{(0)}$ with $\wt Q_k^{(0)}$ defined in \eqref{def_q_init_all_cols}. Their proofs are stated in \cref{app_proof_lem_init_rand}.

		\begin{lemma}[Expression of $\mu$]\label{lem_sup_init}
			Grant model \eqref{model_Y_hat}.
			For any $ k\in [r]$ and any $\eps_n\ge 0$ such that 
			\begin{equation}\label{cond_omega_n_bar_tilde}
				\eps_n\sqrt{r(k-1)}  \le c
			\end{equation}
			holds
			for some sufficiently small constant $c>0$, then on the event $\cE_A(k, \eps_n)$, 
			the initialization $\wt Q_k^{(0)}$ in \eqref{def_q_init_all_cols} satisfies 
			\[
			\max_{k\le i \le r}  \abs{ A_i^\T \wt Q_k^{(0)}} =  \|\bA^\T \wt Q_k^{(0)}\|_\infty \ge {1\over \sqrt{r}}.
			\]
			% Moreover, on the event $\cE_Y$, the initialization $\wh \ba^{(0)}$ in \eqref{def_q_init_all_cols} satisfies 
			% \[
			%     \max_{1\le i \le r}  \abs{ A_i^\T \wh\ba ^{(0)}} \ge {1\over \sqrt{r}}.
			% \]
			% Under Assumption  \ref{ass_Z}, \ref{ass_A_orth} and \ref{ass_E}, assume $\theta\in \left(0,1/10\right]$ and $\epsilon^2\in \left(0,1/10\right]$. If condition (\ref{ass:top_k_rec})  holds $\forall k\geq 2$, then with probability at least $1-cn^{-c'}$, the initialization $ q_{k}^{(0)}$ in (\ref{def_q_init_all_cols}) is not in region $R_{0}(c_\star, k)$ with $c_\star = 1/r$ under condition (\ref{cond:thm:full_recov:n}) $\forall k\in [r]$.   Here $c$, and $c'$ are some positive constants.
		\end{lemma}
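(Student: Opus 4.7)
The plan is to reduce the claim to a subspace perturbation argument for the matrix $\wh\bP_{k-1}^\perp$, then use the fact that $\wt Q_k^{(0)} = \bR \wh Q_k^{(0)}$ lies in the column space of $\bR\wh\bV_{(-(k-1))}$, which is close to $\mathrm{span}(A_k,\ldots,A_r)$. The case $k = 1$ is immediate: $\wh\bP_0^\perp = \bI_r$, so $\wt Q_1^{(0)} = \bR\, P_{\Sp^r}(g)$ is a unit vector and $\bA$ being orthogonal yields $\|\bA^\T \wt Q_1^{(0)}\|_\infty \ge 1/\sqrt{r}$ by the pigeonhole bound on coordinates of unit vectors. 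So I focus on $2 \le k \le r$ and work on the event $\cE_A(k,\epsilon_n)$.

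First, I will compare $\wh\bP_{k-1}^\perp$ with its oracle version. Writing $\bA_{k-1} = (A_1,\ldots,A_{k-1})$, $\wh\bA_{k-1} = (\wh Q_1,\ldots,\wh Q_{k-1})$ and $\bA_{-k+1} = (A_k,\ldots,A_r)$, the event $\cE_A(k,\epsilon_n)$ gives $\|\bR\wh\bA_{k-1} - \bA_{k-1}\|_F \le \sqrt{k-1}\,\epsilon_n$. Because $\bR\wh\bP_{k-1}^\perp \bR^\T = \bI_r - \bR\wh\bA_{k-1}\wh\bA_{k-1}^\T \bR^\T$ and, by orthogonality of $\bA$, $\bI_r - \bA_{k-1}\bA_{k-1}^\T = \bA_{-k+1}\bA_{-k+1}^\T$, an elementary expansion of the difference $\bA_{k-1}\bA_{k-1}^\T - \bR\wh\bA_{k-1}\wh\bA_{k-1}^\T\bR^\T$ together with $\|\wh\bA_{k-1}\|_\op \le 1 + C\sqrt{k-1}\,\epsilon_n$ yields
\begin{equation*}
\bigl\|\bR\wh\bP_{k-1}^\perp \bR^\T - \bA_{-k+1}\bA_{-k+1}^\T\bigr\|_\op \;\le\; C\sqrt{k-1}\,\epsilon_n.
\end{equation*}

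Second, since $\bA_{-k+1}\bA_{-k+1}^\T$ has top $r-k+1$ eigenvalues equal to $1$ and the remaining $k-1$ equal to $0$, Weyl's inequality together with $\epsilon_n\sqrt{r(k-1)} \le c$ (for small $c$) preserves an eigenvalue gap of at least $1/2$. Davis--Kahan's $\sin\Theta$ theorem then gives
\begin{equation*}
\bigl\|\bR\wh\bV_{(-(k-1))}\wh\bV_{(-(k-1))}^\T \bR^\T - \bA_{-k+1}\bA_{-k+1}^\T\bigr\|_\op \;\le\; C'\sqrt{k-1}\,\epsilon_n.
\end{equation*}
Since $\wt Q_k^{(0)} = \bR\wh Q_k^{(0)}$ is a unit vector in the column space of $\bR\wh\bV_{(-(k-1))}$, the projection identity gives $\|(\bI_r - \bA_{-k+1}\bA_{-k+1}^\T)\wt Q_k^{(0)}\|_2 \le C'\sqrt{k-1}\,\epsilon_n$. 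Using $\bA_{-k+1}\bA_{-k+1}^\T = \sum_{i \ge k} A_i A_i^\T$ and $\|\bA^\T \wt Q_k^{(0)}\|_2 = 1$, this yields
\begin{equation*}
\sum_{i<k}(A_i^\T \wt Q_k^{(0)})^2 \le C'^2(k-1)\epsilon_n^2, \qquad \sum_{i \ge k}(A_i^\T \wt Q_k^{(0)})^2 \ge 1 - C'^2(k-1)\epsilon_n^2.
\end{equation*}
The pigeonhole principle on the second sum gives $\max_{i \ge k}(A_i^\T \wt Q_k^{(0)})^2 \ge (1 - C'^2(k-1)\epsilon_n^2)/(r-k+1)$, which is at least $1/r$ provided $C'^2 r \epsilon_n^2 \le 1$; and $\max_{i < k}|A_i^\T \wt Q_k^{(0)}| \le C'\sqrt{k-1}\,\epsilon_n \le 1/\sqrt{r}$ from the first sum. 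Both inequalities are delivered simultaneously by choosing $c$ small enough in the hypothesis $\epsilon_n\sqrt{r(k-1)} \le c$, establishing both the claimed lower bound and the identification of the $\ell_\infty$-maximizer among indices $\{k, \ldots, r\}$.

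The main technical point will be confirming the eigenvalue gap survives the perturbation before invoking Davis--Kahan; this is what pins down the precise size of the constant $c$ and couples the two required conclusions (the lower bound $1/\sqrt{r}$ and the equality with the overall $\ell_\infty$ norm) into a single condition $\epsilon_n\sqrt{r(k-1)} \le c$.
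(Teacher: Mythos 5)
Your proposal is correct, and its first half is the same as the paper's: on $\cE_A(k,\eps_n)$ you bound $\|\bR\wh\bP_{k-1}^\perp\bR^\T - \bA_{(-k+1)}\bA_{(-k+1)}^\T\|_\op \lesssim \sqrt{k-1}\,\eps_n$ exactly as in the paper's auxiliary lemma (\cref{lem:bound_V_A}, display \eqref{bd_P_k_diff}) and then invoke Davis--Kahan to align the column space of $\wh\bV_{(-k+1)}$ with $\mathrm{span}(A_k,\ldots,A_r)$; your extra Weyl step to certify the perturbed gap is harmless but not needed, since the sin$\Theta$ theorem only uses the population gap, which equals $1$. Where you diverge is the last step: the paper keeps the Gaussian vector explicit and lower-bounds $\max_{k\le i\le r}|A_i^\T\wt Q_k^{(0)}|$ by $\|\bO_{(-k+1)}g_{(-k+1)}\|_\infty/\|g_{(-k+1)}\|_2 - \|\bDelta_k\|_\op \ge (r-k+1)^{-1/2} - 3\eps_n\sqrt{k-1}$, so the subspace error enters \emph{linearly}; you instead use only that $\wt Q_k^{(0)}$ is a unit vector in the estimated subspace, so its squared mass on $\mathrm{span}(A_k,\ldots,A_r)$ is at least $1 - C'^2(k-1)\eps_n^2$, and pigeonhole over the $r-k+1$ coordinates, so the error enters \emph{quadratically}. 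This buys you something concrete: your final inequality $\max_{k\le i\le r}(A_i^\T\wt Q_k^{(0)})^2 \ge (1-C'^2(k-1)\eps_n^2)/(r-k+1) \ge 1/r$ closes directly under the stated condition $\eps_n\sqrt{r(k-1)}\le c$, whereas the paper's linear accounting needs the slack $(r-k+1)^{-1/2} - r^{-1/2}$ to absorb $3\eps_n\sqrt{k-1}$, which is tighter for small $k$ (in the paper's applications the much stronger condition \eqref{cond_omega_n_rand} makes this immaterial). Your treatment of the indices $i<k$ (each $|A_i^\T\wt Q_k^{(0)}|\le C'\sqrt{k-1}\,\eps_n \le 1/\sqrt r$) matches the paper's and correctly yields the identification of the $\ell_\infty$-maximizer among $\{k,\ldots,r\}$.
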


		\begin{lemma}[Expression of $\nu$]\label{lem_gap_init}
			Grant model \eqref{model_Y_hat}.
			Fix any $1\le k\le r$, any $\eps_n\ge 0$ and any $\delta>0$. Assume that there exists some sufficiently small constant $c>0$ such that
			\begin{equation}\label{cond_omega_n_bar_tilde_prime}
				\eps_n\sqrt{r(k-1)}  \le c ~ \delta 
			\end{equation}
			holds. Then for any $k \le i\le r$ with
			\begin{equation}\label{def_i_argmax}
				\abs{ A_i^\T\wt  Q_k^{(0)}} = \max_{k\le j\le r}\abs{ A_j^\T\wt  Q_k^{(0)}},
			\end{equation} 
			the initialization $\wt  Q_k^{(0)}$ in \eqref{def_q_init_all_cols} satisfies 
			\begin{align*}
				\P\left\{ \left\{\abs{ A_i^\T\wt  Q_k^{(0)}} -\max_{j\in[r]\setminus \{i\}}\abs{ A_j^\T\wt  Q_k^{(0)}} \ge \sqrt{1\over 9e}{\delta  \over \sqrt r} \right\}\cap  \cE_A(k, \eps_n) \right\}\ge 1-\delta r - e^{-2r}
			\end{align*} 
			% Moreover, on the event $\cE_Y$, with probability at least $1-\delta r - r e^{-2r}$, the initialization $\wh \ba^{(0)}$ in \eqref{def_q_init_all_cols} satisfies 
			% \begin{align*}
				%       \abs{ A_i^\T\wh \ba^{(0)}} -\max_{j\in[r],j\ne i}\abs{ A_j^\T\wh \ba^{(0)}} \ge \sqrt{1\over 9e}{\delta  \over \sqrt r}.
				% \end{align*} 
		\end{lemma}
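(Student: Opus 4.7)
The plan is to decouple the fresh Gaussian draw $g := g_{(-(k-1))}$ from the past iterates that enter through $\wh\bV_{(-(k-1))}$, and then to transfer the anti-concentration of absolute-Gaussian order statistics through a Davis--Kahan approximation. Set $\bV' := \bR\,\wh\bV_{(-(k-1))} \in \bbO_{r\times(r-k+1)}$, so that $\wt Q_k^{(0)} = \bV' g / \|g\|_2$. On $\cE_A(k, \eps_n)$ the vectors $v_l := \bR \wh Q_l$ satisfy $\|v_l - A_l\|_2 \le \eps_n$ for $l<k$. Writing $\bU = [v_1,\ldots, v_{k-1}]$, $\bW = [A_1, \ldots, A_{k-1}]$ and expanding $\bU\bU^\top - \bW\bW^\top = \bW(\bU-\bW)^\top + (\bU-\bW)\bW^\top + (\bU-\bW)(\bU-\bW)^\top$ gives $\|\bU\bU^\top - \bW\bW^\top\|_{\op} \lesssim \sqrt{k-1}\,\eps_n$. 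Since the target $\bI_r - \bW\bW^\top$ has a unit eigengap between its top $(r-k+1)$-dimensional eigenspace (spanned by $\bA_{(k:r)} := (A_k,\ldots,A_r)$) and the rest, the Davis--Kahan $\sin\Theta$ theorem produces an orthogonal matrix $O \in \bbO_{(r-k+1)\times(r-k+1)}$, measurable with respect to the past, with $\|\bV' - \bA_{(k:r)} O\|_{\op} \lesssim \sqrt{k-1}\,\eps_n \le c\,\delta/\sqrt{r}$ under \eqref{cond_omega_n_bar_tilde_prime}.

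Define the proxy $h := O g$; conditional on $O$ it is $\cN(0, \bI_{r-k+1})$ with $\|h\|_2 = \|g\|_2$. A single Cauchy--Schwarz step applied to the identity $A_j^\top \wt Q_k^{(0)} - A_j^\top \bA_{(k:r)}\, h/\|h\|_2 = A_j^\top (\bV' - \bA_{(k:r)}O)\, g/\|g\|_2$ yields
\[
\bigl| A_j^\top \wt Q_k^{(0)} - (A_j^\top \bA_{(k:r)})\, h/\|h\|_2 \bigr| ~\le~ \|\bV' - \bA_{(k:r)}O\|_{\op} ~\le~ c\,\delta/\sqrt{r}, \quad \forall j \in [r].
\]
The leading term equals $|h_{j-k+1}|/\|h\|_2$ for $j\ge k$ and $0$ for $j<k$. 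If the runner-up in $\max_{j\ne i}|A_j^\top \wt Q_k^{(0)}|$ lies in $\{1,\ldots,k-1\}$, it is at most $c\delta/\sqrt{r}$ while $|A_i^\top \wt Q_k^{(0)}| \ge 1/\sqrt{r}$ by \cref{lem_sup_init}, producing a gap $\gtrsim 1/\sqrt{r}$ that far exceeds the target. Otherwise both $i$ and the runner-up lie in $\{k,\ldots,r\}$, and the gap of $|A_j^\top \wt Q_k^{(0)}|$ differs from $(|h|_{(1)} - |h|_{(2)})/\|h\|_2$ by at most $2c\delta/\sqrt{r}$.

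It remains to bound the gap and norm of $h$. Summing over the identity of the argmax coordinate and using that the half-normal density is bounded by $\sqrt{2/\pi}$,
\[
\P\bigl(|h|_{(1)} - |h|_{(2)} < \tau \bigr) ~\le~ \sqrt{2/\pi}\,(r-k+1)\,\tau, \qquad \tau > 0,
\]
while Laurent--Massart gives $\P(\|h\|_2 > 3\sqrt{r}) \le e^{-2r}$ since $\|h\|_2^2 \sim \chi^2_{r-k+1}$. Taking $\tau = \delta/\sqrt{e} + 6c\delta$ and using $r-k+1 \le r$, the first probability is at most $r\delta$ provided $c$ is small enough that $\sqrt{2/\pi}(1/\sqrt{e} + 6c) \le 1$. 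On the intersection of these events the gap of $|A_j^\top \wt Q_k^{(0)}|$ is at least
\[
\frac{\tau}{3\sqrt{r}} - \frac{2c\delta}{\sqrt{r}} ~=~ \frac{\delta}{3\sqrt{e\,r}} ~=~ \sqrt{\tfrac{1}{9e}}\,\frac{\delta}{\sqrt{r}},
\]
and a union bound delivers total probability at least $1 - \delta r - e^{-2r}$.

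The main obstacle is the coupling inside the Davis--Kahan step: the rotation $O$ is a deterministic function of $\bV'$ and hence of the past iterates, so one must argue carefully that $g$ is independent of $O$ and that $h = Og$ therefore remains $\cN(0,\bI_{r-k+1})$ conditional on the past, with its anti-concentration and norm tail transferring to the joint event with $\cE_A(k,\eps_n)$. The remaining work is to track constants: the factor $\sqrt{1/(9e)}$ emerges from combining the factor $3$ in $\|h\|_2 \le 3\sqrt{r}$ with the factor $1/\sqrt{e}$ in the anti-concentration threshold.
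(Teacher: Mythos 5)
Your proposal is correct, and its skeleton matches the paper's: on $\cE_A(k,\eps_n)$ you re-derive exactly the content of \cref{lem:bound_V_A} (the Davis--Kahan bound $\|\bR\wh\bV_{(-k+1)}-\bA_{(-k+1)}\bO_{(-k+1)}\|_\op\lesssim \eps_n\sqrt{k-1}$), you exploit the independence of the fresh draw $g_{(-k+1)}$ from the past so that $h=\bO_{(-k+1)} g_{(-k+1)}$ is conditionally standard Gaussian (the ``obstacle'' you flag is resolved by precisely the conditioning argument the paper itself uses), and you use the same norm bound $\|g_{(-k+1)}\|_2\le 3\sqrt r$ with probability $1-e^{-2r}$. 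Where you genuinely differ is the anti-concentration step: the paper lower-bounds each pairwise difference by $|(\mb e_i-\mb e_j)^\T\bO_{(-k+1)}g_{(-k+1)}|/\|g_{(-k+1)}\|_2-\sqrt2\,\|\bDelta_k\|_\op$, applies the one-dimensional anti-concentration of \cref{lem:Anti_concen_gauss} to this $\cN(0,2)$ (or $\cN(0,1)$ for $j<k$) variable, and unions over $j\in[r]\setminus\{i\}$; you instead control the gap $|h|_{(1)}-|h|_{(2)}$ of the order statistics of the i.i.d.\ vector $h$ by conditioning on the identity of the argmax coordinate and using the half-normal density bound, and then transfer it through the $O(\delta/\sqrt r)$ perturbation. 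Both routes deliver the same failure probability $r\delta+e^{-2r}$ and the same constant $\sqrt{1/(9e)}$, but your version has a real advantage: it handles the fact that the index $i$ in \eqref{def_i_argmax} is itself a function of $g_{(-k+1)}$, whereas the paper applies anti-concentration to the pair $(i,j)$ with $i$ the $g$-dependent argmax as if it were fixed (a fully literal fix of that route would union over all pairs). The price is that you must make the argmax-stability step explicit --- namely, if $\tau/(3\sqrt r)$ exceeds twice the perturbation then the maximizer of $|A_j^\T\wt Q_k^{(0)}|$ over $k\le j\le r$ coincides with that of $|h|$ and the gap drops by at most twice the perturbation --- and track the Davis--Kahan constant (the factor $3$ in \cref{lem:bound_V_A} only rescales your choice of $\tau$ and the required smallness of $c$); with those details written out, your argument is complete.
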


		\subsubsection{Proof of \cref{thm_RA_rand}: guarantees of the random initialization}\label{app_proof_thm_RA_rand}

		\begin{proof}		
			We work on the event $\bar\cE$ defined in \eqref{def_event_bar} which holds with probability at least $1-n^{-1}$.  Without loss of generality, we assume the identity permutation $\pi$ as well as $\bP = \bI_r$.

			In view of \cref{thm_RA}, it suffices to (i) establish the order of $\mu$ and $\nu$  to which $\cE^{(k)}_{\init}(\mu, \nu)$ holds for $\wt Q_{k}^{(0)}$ for each $k\in [r]$, and (ii) verify condition \eqref{cond_omega_n} for the established order of $\mu$ and $\nu$.  We prove this by induction.
			
			For $k = 1$,  notice that $\cE_A(1, \eps_n)$ defined above \eqref{def_event_E_k} holds with probability one, for any $\eps_n$.  Furthermore, conditions \eqref{cond_omega_n_bar_tilde} and \eqref{cond_omega_n_bar_tilde_prime} of  Lemmas \ref{lem_sup_init} \& \ref{lem_gap_init} hold automatically for $k=1$.  Invoking Lemmas \ref{lem_sup_init} \& \ref{lem_gap_init} thus gives that, with probability at least $1-r\delta -  e^{-2r}$,
			\[
			\abs{ A_1^\T \wt Q_1^{(0)}} =  \left\|\bA^\T \wt Q_1^{(0)}\right\|_\infty \ge  {1\over  \sqrt{r}}, \quad  \abs{ A_1^\T \wt Q_1^{(0)}} - \max_{j\in [r]\setminus\{1\}}\abs{ A_j^\T \wt Q_1^{(0)}} \ge  c_0{\delta \over  \sqrt r} 
			\]
			for $c_0 = \sqrt{1/(9e)}$.  This implies $\cE^{(1)}_{\init}(\mu, \nu)$ holds for $\wt Q_1^{(0)}$ with 
			\[
			\mu = {1\over  \sqrt{r}},\qquad \nu =  c_0{\delta \over  \sqrt r} .
			\]
			For the above $\mu$ and $\nu$,  condition \eqref{cond_omega_n} is ensured by   \eqref{cond_omega_n_rand}. We thus  have verified  (i) and (ii) for $k=1$. 
			
			Suppose (i) and (ii) hold for any $k \in [2, r-1]$.  By 
			invoking \cref{thm_one_col} with $\cE^{(i)}_{\init}(\mu, \nu)$ for all $i\le k$ and the identity permutation matrix, we have 
			\[
				\max_{1\le i\le k}\|\wh  Q_i -  \bR^\T A_i\|_2 \le C( \omega_n + \epsilon^2) := \eps_n,
			\]
			which further implies that the event 
			$\cE_A(k+1, \eps_n)$ in \eqref{def_event_E_k} holds.  
			It remains to prove that  on the event $\bar{\cE}\cap \cE_A(k+1, \eps_n)$, the event  $\cE^{(k+1)}_{\init}(\mu, \nu)$ holds for $\wt Q_{k+1}^{(0)}$ with probability at least $1-r\delta - e^{-2r}$.  To this end, by invoking Lemmas \ref{lem_sup_init} and  \ref{lem_gap_init} for $k+1$ and the identity permutation $\pi$, we have that
			with probability $1-r\delta -  e^{-2r}$, 
			\[
			\abs{A_{k+1}^\T \wt Q_{k+1}^{(0)}} \ge \mu,\quad   \abs{ A_{k+1}^\T \wt Q_{k+1}^{(0)}} - \max_{j\in [r]\setminus\{k+1\}}\abs{ A_j^\T \wt Q_{k+1}^{(0)}} \ge \nu
			\]
			with $\mu$ and $\nu$ specified above. This proves the claim. Finally, we complete the proof by taking the union bounds over $k\in [r]$. 
		\end{proof}

		\subsubsection{Proofs of \cref{lem_sup_init} \& \cref{lem_gap_init}}\label{app_proof_lem_init_rand}

		We first state a key lemma that both proofs of \cref{lem_sup_init} and \cref{lem_gap_init} use. For any $k\in [r]$, recall that  $\wh\bV_{(-k+1)}$ contains the first $(r-k+1)$ eigenvectors of 
		$$
			\wh \bP_{k-1}^\perp = \bI_r - \sum_{i=1}^{k-1} \wh  Q_i\wh Q_i^\T,\qquad \text{with }\quad \wh \bP_{0}^\perp = \bI_r.
		$$  
		Denote by $ \bA_{(-k+1)}$ the  matrix that contains the last $(r-k+1)$ columns of $\bA$. 
		The following lemma states upper bounds  of the operator norm of $\|\wh\bV_{(-k+1)} -  \bR^\T \bA_{(-k+1)} \bO_{(-k+1)}\|_\op$ for some $\bO_{(-k+1)} \in \bbO_{(r-k+1)\times (r-k+1)}$.

		\begin{lemma}\label{lem:bound_V_A}
			Grant model \eqref{model_Y_hat}.
			For any $1\le k\le r$,  on the event $\cE_A(k, \eps_n)$ with 
			$\eps_n  \sqrt{r} \le  1$,
			there exists $\bO_{(-k+1)} \in \bbO_{(r-k+1)\times (r-k+1)}$ such that  
			\begin{align*}
				\|\wh\bV_{(-k+1)} - \bR^\T\bA_{(-k+1)} \bO_{(-k+1)} \|_\op \le  ~ 3\eps_n \sqrt{k-1}.
			\end{align*}
			% Furthermore, on the event $\cE_Y$, there exists some $\bR\in\bbO_{r\times r}$ such that
			% \[
			%     \norm{\mb V_r -\bA \bR}{\op}\lesssim \omega_n.
			% \]
		\end{lemma}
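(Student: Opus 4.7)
The plan is to reduce the claim to a standard Davis--Kahan $\sin\Theta$ perturbation bound applied to the projection matrix $\wh\bP_{k-1}^\perp$. Introduce the population counterpart
\[
\bP_{k-1}^\perp \;:=\; \bI_r - \bR^\T \bA_{1:k-1}\bA_{1:k-1}^\T \bR,
\]
where $\bA_{1:k-1}$ denotes the first $k-1$ columns of $\bA$. Since $\bA\bA^\T = \bI_r$ and $\bR \in \bbO_{r\times r}$, the matrix $\bP_{k-1}^\perp$ is the orthogonal projection onto the span of $\bR^\T \bA_{(-k+1)}$; its top $r-k+1$ eigenvalues equal $1$, its bottom $k-1$ eigenvalues equal $0$, and the eigengap is therefore exactly $1$. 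The case $k=1$ is trivial: $\wh\bP_0^\perp = \bI_r$ and both sides of the claimed inequality vanish upon choosing $\bO_{(-0)} := \bA^\T \bR\,\wh\bV_{(-0)}$.

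For $k \ge 2$, set $\wh\bDelta := \wh\bQ_{1:k-1} - \bR^\T \bA_{1:k-1}$ with $\wh\bQ_{1:k-1} = (\wh Q_1,\ldots,\wh Q_{k-1})$. On $\cE_A(k,\eps_n)$, columnwise $\|\wh Q_i - \bR^\T A_i\|_2 \le \eps_n$ gives
\[
\|\wh\bDelta\|_\op \;\le\; \|\wh\bDelta\|_F \;\le\; \eps_n\sqrt{k-1},
\]
and together with $\|\bR^\T \bA_{1:k-1}\|_\op = 1$ this yields $\|\wh\bQ_{1:k-1}\|_\op \le 1 + \eps_n\sqrt{k-1} \le 2$ under the hypothesis $\eps_n\sqrt{r}\le 1$. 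Applying the identity $MM^\T - NN^\T = (M-N)M^\T + N(M-N)^\T$ with $M = \wh\bQ_{1:k-1}$ and $N = \bR^\T \bA_{1:k-1}$,
\[
\bigl\|\wh\bP_{k-1}^\perp - \bP_{k-1}^\perp\bigr\|_\op
\;\le\; \|\wh\bDelta\|_\op\bigl(\|\wh\bQ_{1:k-1}\|_\op + 1\bigr)
\;\le\; 3\,\eps_n\sqrt{k-1}.
\]

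Finally, since the population eigengap between the top $r-k+1$ eigenvalues (all equal to $1$) and the remaining $k-1$ eigenvalues (all equal to $0$) of $\bP_{k-1}^\perp$ equals $1$, the Davis--Kahan $\sin\Theta$ theorem yields
\[
\bigl\|\sin\Theta\bigl(\wh\bV_{(-k+1)},\,\bR^\T \bA_{(-k+1)}\bigr)\bigr\|_\op \;\le\; \bigl\|\wh\bP_{k-1}^\perp - \bP_{k-1}^\perp\bigr\|_\op \;\le\; 3\,\eps_n\sqrt{k-1}.
\]
The Procrustes step then picks $\bO_{(-k+1)} \in \bbO_{(r-k+1)\times(r-k+1)}$ from the SVD of $\bA_{(-k+1)}^\T \bR\,\wh\bV_{(-k+1)}$, and the standard relation between orthogonally aligned Frobenius/operator distance and $\sin\Theta$ converts the preceding display into the required estimate on $\|\wh\bV_{(-k+1)} - \bR^\T\bA_{(-k+1)} \bO_{(-k+1)}\|_\op$.

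The only technical point that requires care is matching the constant $3$ in the statement: the perturbation inequality above already reaches $3\eps_n\sqrt{k-1}$, so after Davis--Kahan one must either absorb the extra $\sqrt{2}$ factor into the eigengap argument (which here equals $1$, so is tight) or sharpen the perturbation bound by exploiting $\wh Q_i^\T(\wh Q_i - \bR^\T A_i) = -(\wh Q_i - \bR^\T A_i)^\T \bR^\T A_i$ to replace the factor $(\|\wh\bQ_{1:k-1}\|_\op + 1)$ by $2$. This bookkeeping on constants is the main, and only, obstacle; the structural argument (population projection $+$ Davis--Kahan $+$ Procrustes) is completely standard.
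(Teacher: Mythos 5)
Your proposal follows essentially the same route as the paper's own proof: handle $k=1$ trivially, bound $\|\wh\bP_{k-1}^\perp-\bP_{k-1}^\perp\|_\op\le 3\eps_n\sqrt{k-1}$ on $\cE_A(k,\eps_n)$ using the columnwise errors and $\eps_n\sqrt{r}\le 1$ (the paper uses the decomposition $2\|\wh\bQ_{(k-1)}-\bR^\T\bA_{(k-1)}\|_\op+\|\wh\bQ_{(k-1)}-\bR^\T\bA_{(k-1)}\|_\op^2$, you use the equivalent identity $MM^\T-NN^\T=(M-N)M^\T+N(M-N)^\T$), and then invoke Davis--Kahan with unit eigengap plus an orthogonal alignment. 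Your closing remark about absorbing the Davis--Kahan constant is fair, but the paper's proof is equally terse on this point, so there is no substantive difference in approach.
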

			\begin{proof}
			The result follows trivially for $k=1$. For $k\ge 2$, recall that 
			\begin{equation}\label{eq_P_perp_hat}
			\wh \bP_{k-1}^{\perp} = \bI_p - \sum_{i=1}^{k-1}\wh Q_i\wh Q_i^\T.
			\end{equation}
			Similarly, write 
			\begin{equation}\label{eq_P_perp}
			\bP_{k-1}^{\perp} = \bI_p - \sum_{i=1}^{k-1} \bR^\T A_i A_i^\T \bR. 
			\end{equation}
			Denoting $\wh \bQ_{(k-1)} = (\wh Q_1,\ldots,\wh Q_{k-1})$, we find on the event $\cE_A(k, \eps_n)$ that
			\begin{align}\label{bd_P_k_diff}\nonumber
				\left\|\wh \bP_{k-1}^\perp  - \bP_{k-1}^\perp   
				\right\|_\op &\le \left\|\wh  \bQ_{(k-1)}\wh  \bQ_{(k-1)}^\T  -  \bR^\T\bA_{(k-1)}  \bA_{(k-1)}^\T  \bR
				\right\|_\op\\\nonumber
				&\le 2\left\|\wh  \bQ_{(k-1)} - \bR^\T \bA_{(k-1)}\right\|_\op + \left\|\wh  \bQ_{(k-1)} -  \bR^\T \bA_{(k-1)}\right\|_\op^2\\\nonumber
				&\le 2\sqrt{(k-1)\eps_n^2} + (k-1)\eps_n^2\\
				&\le 3 \eps_n  \sqrt{k-1}.
			\end{align}
			In the last step we used $\eps_n\sqrt{r} \le 1$. 
			The result then follows by an application of the Davis-Kahan theorem in \cref{lem_sin_theta}.
		\end{proof}

		\begin{proof}[Proof of \cref{lem_sup_init}]
			Fix any $k \in [r]$ and we work on the event $\cE_A(k, \eps_n)$. According to \cref{lem:bound_V_A}, there exists  some $\bO_{(-k+1)} \in \bbO_{(r-k+1)\times (r-k+1)}$ such that
			\begin{equation}\label{def_event_A}
				\|\wh\bV_{(-k+1)} - \bR^\T \bA_{(-k+1)} \bO_{(-k+1)} \|_{\op} ~ \le  ~ 3\eps_n\sqrt{k-1}.
			\end{equation}
			Let us write for short
			\begin{equation}\label{def_Delta_k}
				\bDelta_k = \wh\bV_{(-k+1)} - \bR^\T\bA_{(-k+1)} \bO_{(-k+1)},\quad \text{for all }k\in [r].
			\end{equation}
			By definition in \eqref{def_q_init_all_cols},  
			\begin{align*}
				\max_{k\le i \le r}  \abs{ A_i^\T \wt Q_k^{(0)}}  &= \max_{k\le i \le r}  \abs{ A_i^\T \bR \wh Q_k^{(0)}}    \\
				& = \max_{k\le i \le r}\frac{\abs{ A_i^\T \bR \wh\bV_{(-k+1)}~g_{(-k+1)}}}{\norm{g_{(-k+1)}}2}\nonumber\\
				& \ge \max_{k\le i \le r}\left(\frac{\abs{ A_i^\T  \bA_{(-k+1)}\bO_{(-k+1)}~g_{(-k+1)}}}{\norm{g_{(-k+1)}}2}-\frac{\abs{ A_i^\T\bR \bDelta_k g_{(-k+1)}}}{\norm{g_{(-k+1)}}2}\right)\nonumber\\
				&\ge  \frac{\|\bO_{(-k+1)}~g_{(-k+1)}\|_\infty}{\norm{g_{(-k+1)}}2}-  \|\bDelta_k\|_\op.
			\end{align*}
			Since $\norm{g_{(-k+1)}}2 = \norm{\bO_{(-k+1)} g_{(-k+1)}}2 \le \sqrt{r-k+1}\|\bO_{(-k+1)} g_{(-k+1)}\|_\infty$, invoking \eqref{def_event_A} and condition \eqref{cond_omega_n_bar_tilde} yields  
			\begin{align*}
				\max_{k\le i \le r}  \abs{ A_i^\T \wh Q_k^{(0)}} \ge {1\over \sqrt{r-k+1}} - 3  \eps_n\sqrt{k-1}   \ge {1\over \sqrt{r}}.
			\end{align*}
			On the other hand, we find that
			\begin{align*}
				&\max_{1\le i\le k-1} \abs{ A_i^\T \wt Q_k^{(0)}}\\ &\le \max_{i\le k-1} {\abs{ A_i^\T \bR \wh\bV_{(-k+1)}g_{(-k+1)}}\over \norm{g_{(-k+1)}}{2}}\\
				&\le \max_{i\le k-1}\left( {\abs{ A_i^\T \bA_{(-k+1)} \bO_{(-k+1)}g_{(-k+1)}}\over \norm{g_{(-k+1)}}{2}} + {\abs{ A_i^\T\bR \bDelta_kg_{(-k+1)}}\over \norm{g_{(-k+1)}}{2}}\right)\\
				&\le \| \bDelta_k\|_\op &&\text{by } A_i^\T  \bA_{(-k+1)} = 0\\
				&\le 3 \eps_n\sqrt{k-1} &&\text{by \eqref{def_event_A}}\\
				&\le 1/\sqrt{r} &&\text{by condition \eqref{cond_omega_n_bar_tilde}}.
			\end{align*}
			This completes the proof. 
		\end{proof}

		\begin{proof}[Proof of \cref{lem_gap_init}]
			We work on the event  $\cE_A(k, \eps_n)$ under which \eqref{def_event_A} holds. Further recall the definition of $\bDelta_k$ from \eqref{def_Delta_k}.
			Fix any $k\le i\le r$ such that \eqref{def_i_argmax} holds. By \cref{lem_sup_init}, we also know that 
			\[
				|A_i^\T\wt Q_k^{(0)}| =\|\bA^\T\wt Q_k^{(0)}\|_\infty.
			\]
			
			Pick any $k\le j\le r$ with $j\ne i$. Without loss of generality, we assume both
			$ A_i^\T \wt Q_k^{(0)}$ and $ A_j^\T \wt Q_k^{(0)}$ are positive. 
			The definition of $\wh Q_k^{(0)}$ in \eqref{def_q_init_all_cols} gives
			\begin{align*}
				 A_i^\T\wt Q_k^{(0)}- A_j^\T \wt Q_k^{(0)} &= \abs{  A_i^\T\bR \wh Q_k^{(0)}- A_j^\T \bR  \wh Q_k^{(0)}}\\
				&=\frac{\abs{( A_i- A_j)^\T\bR  \wh\bV_{(-k+1)}g_{(-k+1)}}}{\norm{g_{(-k+1)}}{2}}\\
				&\geq \frac{\abs{( A_i- A_j)^\T  \bA_{(-k+1)} \bO_{(-k+1)}g_{(-k+1)}}}{\norm{g_{(-k+1)}}{2}} - \frac{\abs{( A_i- A_j)^\T \bR  \bDelta_k g_{(-k+1)}}}{\norm{g_{(-k+1)}}{2}}\\
				&\geq \frac{\abs{(\mb e_i-\mb e_j)^\T  \bO_{(-k+1)}g_{(-k+1)}}}{\norm{g_{(-k+1)}}{2}} -  \sqrt{2}\|\bDelta_k\|_\op .\label{ieq:bound_2:gamma_1_2}
			\end{align*}
			Since $g_{(-k+1)}$ is independent of $\bO_{(-k+1)}$, by recognizing that 
			$$(\mb e_i-\mb e_j)^\T\bO_{(-k+1)}g_{(-k+1)} \mid \bO_{(-k+1)} \sim \cN(0, 2),$$ \cref{lem:Anti_concen_gauss} gives that, for any $\delta>0$,
			\begin{equation}\label{eq_anti}
				\P\left\{
				\left|(\mb e_i-\mb e_j)^\T\bO_{(-k+1)}g_{(-k+1)}\right| \ge \delta\sqrt{2/ e} ~~ \big| ~ \bO_{(-k+1)}
				\right\} \ge 1-\delta,
			\end{equation}
			which also holds unconditionally. By invoking \cref{lem_quad}, we also have that
			\[
			\P\left\{
			\|g_{(-k+1)}\|_2 \le 
			3\sqrt{r} 
			\right\} \ge 1-  e^{-2r}.
			\]
			%We note that the above holds independent of the choice of $j \in [r]\setminus \{i\}$.
			Together with \eqref{def_event_A}, invoking condition \eqref{cond_omega_n_bar_tilde_prime} yields that, for any $k\le j\le r$, 
			\[
			\abs{  A_i^\T\wt Q_k^{(0)}- A_j^\T \wt Q_k^{(0)}} \ge \sqrt{2\over 9e}{\delta \over \sqrt r} - 3\eps_n\sqrt{k-1}   \ge \sqrt{1\over 9e}{\delta \over \sqrt r}
			\]
			holds with probability at least $1-\delta-e^{-2r}$. 
			
			For any $1\le j<k$, the same arguments can be applied except by using
			\[
			( A_i- A_j)^\T  \bA_{(-k+1)} \bO_{(-k+1)}g_{(-k+1)} = \mb e_i^\T\bO_{(-k+1)}g_{(-k+1)} \mid \bO_{(-k+1)} \sim \cN(0, 1).
			\]
			Finally, taking a union bound over $j\in [r]\setminus \{i\}$ completes  the proof.  
		\end{proof}

		\subsection{Proof of the results using the method of moments based initialization scheme}\label{app_proof_thm_RA_mrs}

		Our proof uses the following lemma which states concentration inequalities of the absolute value of the maximum of $N$ independent standard normal variables. 
		
		\begin{lemma}\label{lem_one_sided_Gaussian}
			Let $W_1,\ldots, W_N$ be i.i.d. from $\cN(0,1)$. Let $C>0$ be some absolute constant.  For any $\delta\ge 0$, one has 
			\begin{align*}
				&\P\left\{ 
				\max_{i\in [N]} |W_i| < \sqrt{2\log(N)} - {\log\log(N) + C \over 2\sqrt{2\log(N)}} - \sqrt{2\log(1/\delta)}
				\right\} \le \delta,\\
				&\P\left\{
				\max_{i\in [N]} |W_i| > \sqrt{2\log(2N)} + \sqrt{2\log(1/\delta)}
				\right\} \le \delta.
			\end{align*}  
		\end{lemma}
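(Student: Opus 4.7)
My plan is to prove the two inequalities separately; the upper bound is a one-line union bound, while the lower (anti-concentration) bound requires carefully matching Mill's ratio against the specific form of the threshold.

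\textbf{Upper bound.} I would start from the standard Gaussian tail estimate $\P\{|W_1|\ge t\}\le 2e^{-t^2/2}$ for $t\ge 0$, union bound to get $\P\{\max_{i\in[N]}|W_i|\ge t\}\le 2Ne^{-t^2/2}$, and set the right-hand side equal to $\delta$ to solve for $t=\sqrt{2\log(2N/\delta)}$. The claim follows from the elementary subadditivity $\sqrt{a+b}\le\sqrt a+\sqrt b$ applied with $a=2\log(2N)$ and $b=2\log(1/\delta)$.

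\textbf{Lower bound.} I would use the identity
\[
\P\bigl\{\max_{i\in[N]}|W_i|<t\bigr\}=\bigl[1-2\bar\Phi(t)\bigr]^N\le\exp\bigl(-2N\bar\Phi(t)\bigr),
\]
so it suffices to show $2N\bar\Phi(t)\ge \log(1/\delta)$ for $t=\alpha_N-s$, where $\alpha_N:=\sqrt{2\log N}-(\log\log N+C)/(2\sqrt{2\log N})$ and $s:=\sqrt{2\log(1/\delta)}$. If $s>\alpha_N$, the event is empty and the claim is trivial, so I may assume $t\ge 0$. For this regime I would invoke the Mill's ratio lower bound $\bar\Phi(t)\ge t\phi(t)/(t^2+1)$ together with the explicit expansion
\[
\frac{t^2}{2}=\log N-\frac{\log\log N+C}{2}-\alpha_N s+\frac{s^2}{2}+\text{lower order},
\]
which comes from $\alpha_N^2=2\log N-(\log\log N+C)+(\log\log N+C)^2/(8\log N)$. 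Substituting this into Mill's ratio yields
\[
2N\bar\Phi(t)\;\gtrsim\;\frac{\sqrt{\log N}}{t+1}\,e^{C/2}\,\delta\,e^{\alpha_N s}.
\]
Since $\alpha_N s\ge (\sqrt{2\log N}-1)\sqrt{2\log(1/\delta)}$, the exponential factor $\delta\,e^{\alpha_N s}$ grows much faster than $\log(1/\delta)$, giving the required lower bound once $C$ is fixed large enough to absorb the $(t+1)^{-1}$ denominator and the $(\log\log N+C)^2/(8\log N)$ correction.

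\textbf{Main obstacle.} The only non-routine step is bookkeeping the constants in the expansion of $\alpha_N^2$: the subtraction $\log\log N+C$ is exactly calibrated so that $e^{-\alpha_N^2/2}$ produces the prefactor $\sqrt{\log N}/N$, which after multiplication by $2N$ and division by $t\sim\sqrt{2\log N}$ leaves an $O(1)$ constant. If instead one naively bounded $\alpha_N\le\sqrt{2\log N}$ one would lose precisely the $\sqrt{\log N}$ factor needed to cancel the Mill's ratio denominator. I would therefore carry the correction term explicitly, and split into two cases ($t\le 1$ and $t>1$) so that in the small-$t$ regime one uses the cruder bound $\bar\Phi(t)\ge \bar\Phi(1)$ rather than the asymptotic Mill's ratio form.
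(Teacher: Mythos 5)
Your argument is correct in structure, and it is worth noting that the paper does not actually prove this lemma: its ``proof'' is a pointer to Lemma B.1 of Anandkumar et al.\ (2014), so your self-contained derivation is the genuinely informative object here. The route you take is the standard one and matches what the cited source does in spirit: for the upper tail, a union bound on $\P\{|W_1|\ge t\}\le 2e^{-t^2/2}$ plus $\sqrt{a+b}\le\sqrt a+\sqrt b$; for the lower tail, independence, $1-x\le e^{-x}$, and a Mill's-ratio lower bound $\bar\Phi(t)\ge t\phi(t)/(t^2+1)$ combined with the exact expansion of $\alpha_N^2$, whose $-(\log\log N+C)$ term is precisely what converts $e^{-\alpha_N^2/2}$ into $\sqrt{\log N}\,e^{C/2}/N$ and cancels the $1/t\asymp1/\sqrt{2\log N}$ from Mill's ratio. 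Your reduction to showing $2N\bar\Phi(t)\ge\log(1/\delta)$, the trivial disposal of $s>\alpha_N$, and the split into $t\le1$ versus $t>1$ are all sound; in the regime $t>1$ one indeed has $\alpha_N s-s^2/2\ge s$ and $e^s\ge (e^2/2)\,s^2/2$, so a fixed moderate constant closes the bound.

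Two caveats on the bookkeeping, neither fatal. First, your closing remark that one fixes ``$C$ large enough'' is slightly off: the correction term $(\log\log N+C)^2/(8\log N)$ in $\alpha_N^2$ grows with $C$, so for small $N$ the factor $e^{C/2-(\log\log N+C)^2/(16\log N)}$ actually deteriorates as $C\to\infty$; the constant must be chosen moderately (e.g.\ $C$ around $2$ works uniformly for $N\ge2$ in the chain of bounds you set up), with very small $N$ handled separately or excluded. Second, in the $t\le1$ case the crude comparison $2N\bar\Phi(1)\ge\log N$ fails marginally at $N=2$; you need the sharper observation that in this regime $\log(1/\delta)=s^2/2\le\alpha_N^2/2$, which is strictly below $\log N$ by roughly $C/2$, after which the bound $2N\bar\Phi(1)\ge\alpha_N^2/2$ holds comfortably. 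With those adjustments your proof is complete.
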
 
		\begin{proof}
			Its proof can be found in the proof of   \citet[Lemma B.1]{anandkumar2014guaranteed}.
		\end{proof}

		\begin{proof}[Proof of \cref{thm_RA_mrs}]
			 Without loss of generality, we assume the identity permutation $\pi$ and $\bP = \bI_r$. 
			In view of  \cref{thm_RA}, we need to prove that with the desired probability,  for each $k\in [r]$,   the event $\cE_{\init}^{(k)}(\mu, \nu)$ holds for $\bR  \wh Q_k^{(0)}$ with 
			$\mu$ and $\nu$ given in \eqref{rate_mu_nv_mrs}.  We prove this by induction. 
			
			Fix arbitrary $1\le k\le r$ and suppose that $\cE_{\init}^{(\ell)}(\mu, \nu)$ holds for $\bR  \wh Q_\ell^{(0)}$ for all $1\le \ell \le k-1$ whenever $k \ge 2$. Recall from \eqref{def_event_E_k} that $\cE_A(k, \eps_n)$ quantifies the  error of the previous $(k-1)$ estimated columns.  By \cref{thm_one_col}, we know that 
			\begin{equation}\label{prob_event_A}
				\P\left\{
					\cE_A(k, \eps_n) 
				\right\} \ge 1-(k-1)n^{-C},\qquad \text{with}\quad \eps_n = C'(\omega_n + \epsilon^2).
			\end{equation}
			The sequence $\omega_n$ is defined in \eqref{def_omega_n}.
			Our goal is to show that  
			\begin{equation}\label{goal_mrs}
					A_k^\T \bR \wh Q_k^{(0)} \ge 1 -  C''r(\omega_n + \epsilon^2),
			\end{equation}
			which also implies that 
			\begin{align*}
				&A_k^\T \bR \wh Q_k^{(0)} - \max_{\ell \in [r]\setminus \{k\}} A_\ell^\T\bR \wh Q_k^{(0)}\\  &\ge  A_k^\T \bR \wh Q_k^{(0)}  -\sqrt{ \|\bA^\T \bR \wh Q_k^{(0)}\|_2^2- (A_k^\T \bR \wh Q_k^{(0)})^2}\\
				& =   A_k^\T \bR \wh Q_k^{(0)}  -\sqrt{ 1- (A_k^\T \bR \wh Q_k^{(0)})^2}
				%\\				&\ge 1 - 2 C''r(\omega_n + \epsilon^2),
			\end{align*}  
			as desired. 
			
			 Recall $ \wh \bM_{(k)}(\bG_{(i)})$ from \eqref{def_M_hat_k} for any $i\in [N]$
			with singular values $ \wh \sigma^{(i,k)}_1 \ge  \wh \sigma^{(i,k)}_2 \ge \cdots \ge \wh \sigma^{(i,k)}_r$  and  the leading left singular vector  $\wh u_{1}^{(i,k)}$. In particular, with 
			\begin{equation*} 
				\wh i_*  := 	\wh i_*(k):= \argmax_{i\in [N]}  \left( 
				\wh \sigma_{1}^{(i,k)}  -\wh  \sigma_{2}^{(i,k)} 
				\right),
			\end{equation*}
			the initialization for the $k$th round is 
			$$
			\wh Q_k^{(0)} = \wh u_1^{(\wh i_*,k)}.
			$$ 
			We write $\wh \bM_{(i, k)} := \wh \bM_{(k)}(\bG_{(i)})$ for short in the sequel.
			For notational simplicity, we write 
			\begin{equation}\label{def_bar_A}
				\bar\bA = \bR^\T \bA.	
			\end{equation}
			With $ \bP_{k-1}^{\perp} $ given by  \eqref{eq_P_perp}, we define the population level counterpart of $\wh \bM_{(i, k)}$ as 
			\begin{align}\label{eq_M_ik}\nonumber
				\bM_{(i, k)} &=   \bP_{k-1}^{\perp}   \left( \kappa \sum_{j=1}^r  \bar A_j \bar A_j^\T \bar A_j^\T  \bG_{(i)} \bar A_j  +  \delta_i \bI_r \right) \bP_{k-1}^{\perp} \\    
				& =  \sum_{j=k}^r \bar A_j \bar A_j^\T  \left( \kappa\bar A_j^\T \bG_{(i)} \bar A_j+ \delta_i\right)
			\end{align} 
			with singular values $\sigma^{(i,k)}_1 \ge  \sigma^{(i,k)}_2 \ge \cdots\ge \sigma^{(i,k)}_r$ and the left leading singular vector $u_{1}^{(i,k)}$.
			Here the scalar $\delta_i$ is defined as     
			\begin{equation}\label{def_delta_i}
				\delta_i = {1\over 3} \tr\left(\bR  \bG_{(i)} \bR^\T (\bI_r +  \bSigma_N )\right).
			\end{equation} 
			 Note that $u_1^{(\wh i_*,k)}$ can be treated as the   population level counterpart of $ \wh u_1^{(\wh i_*,k)}$.  The following error  between $\wh \bM_{(i, k)}$ and $\bM_{(i, k)}$ is crucial in our analysis:			 \begin{equation}\label{def_Delta_M}
			 	\Delta_{M,k} :=  \max_{i\in [N]}  ~ \|\wh \bM_{(i, k)} - \bM_{(i, k)}\|_\op.
		 	\end{equation}
	 		For some sufficiently small constant $c>0$, define the following event
	 		\begin{equation*}
	 			\cE_{M,k} = \left\{ \Delta_{M,k} \le  C r(\omega_n + \epsilon^2)\sqrt{\log(N)}  \right\}.
	 		\end{equation*} 
 			For future observation, we note that condition \eqref{cond_omega_n_msr} and $\cE_{M,k}$ ensures 
			\begin{equation}\label{bd_Delta_M}
 				\Delta_{M,k} \le c\sqrt{\log(N)}.
			\end{equation}
	 		Our proof of \eqref{goal_mrs} consists of three main parts.	
	 		\begin{enumerate}
	 			\item[(i)] We argue that on the event $\cE_{M,k}$, $u_{1}^{(\wh i_*,k)}$ must correspond to one column of $\{\bar A_k, \ldots, \bar A_r\}$. Since $\{\bar A_k, \ldots, \bar A_r\}$ are orthonormal, this requires to establish a lower bound on the gap 
	 			\begin{equation}\label{eigen_gap}
	 				\sigma^{(\wh i_*,k)}_1 - \sigma^{(\wh i_*,k)}_2 > 0.
	 			\end{equation}
 				\item[(ii)] We apply a variant of the Davis-Kahan theorem to show \eqref{goal_mrs} on the event $\cE_{M,k}$.
 				\item[(iii)] We establish the upper bound of $\Delta_{M,k}$ and  show that $\cE_{M,k}$ holds  with high probability.
	 		\end{enumerate}

			\noindent{\bf Proof of step (i).} To prove \eqref{eigen_gap}, define the population-level counterpart of $\wh i_*$ as 
			\begin{equation}\label{def_i_bar}   
				i_* := 	i_*(k) = \argmax_{i\in [N]} \left( \sigma_1^{(i,k)} - \sigma_2^{(i,k)}\right).
				%				j_0 = \argmax_{k\le j\le r}   ~ A_j^\T \bG_{(i_j)} A_j,\qquad i_0 := i_{j_0}. %= \argmax_{i\in [N]} ~ A_{j_0}^\T \bG_{(i)} A_{j_0}.
			\end{equation}
			By using Weyl's inequality in \cref{lem_weyl} twice, we find  that 
			\begin{align}\label{eigen_gap_init_lb}\nonumber
				\sigma^{(\wh i_*,k)}_1 - \sigma^{(\wh i_*,k)}_2  &\ge \wh \sigma^{(\wh i_*,k)}_1 - \wh \sigma^{(\wh i_*,k)}_2 - 2\Delta_{M,k} \\\nonumber
				&\ge  \wh \sigma^{(i_*,k)}_1 - \wh \sigma^{(i_*, k)}_2 - 2\Delta_{M,k} &&\text{by definition of $\wh i_*$ in \eqref{def_arg_index}} \\
				&\ge \sigma^{(i_*, k)}_1 -  \sigma^{(i_*, k)}_2 - 4\Delta_{M,k}.
			\end{align}
			To further bound from below the right hand side, we  introduce   
			 \begin{equation}\label{def_i_j}
			 	i_j = \argmax_{i\in [N]} ~ 	\bar A_j^\T \bG_{(i)} \bar A_j,\quad \text{	for all $ j \in [r]$}.
			 \end{equation}
		 	Fix any $j\in [r]$ and condition on $\bar A_j = \bR^\T  A_j$.
			On the one hand, since $\bar A_j^\T \bG_{(i)} \bar A_j $ for $ i\in [N]$  are independent standard normal variables (see the argument in \eqref{Gauss_AGA}),  invoking \cref{lem_one_sided_Gaussian} gives that for any $\delta \ge 0$,
			\begin{equation}\label{lowbound_first}
				\P\left\{
				 \max_{i\in [N]} |\bar A_j^\T \bG_{(i)} \bar A_j|  < \sqrt{2\log(N)} - {\log\log(N) + C \over 2\sqrt{2\log(N)}} - \sqrt{2\log(1/\delta)}
				\right\} \le \delta.
			\end{equation}
			On the other hand, for any $i\in [N]$, since the index $i_j$ in \eqref{def_i_j} is only defined through $\bar A_j^\T \bG_{(i)} \bar A_j$, it is independent of $\bar A_\ell^\T \bG_{(i)} \bar A_\ell$, for all  $\ell \ne j$, due to the orthogonality of columns in $\bar \bA$. Thus $\bar A_\ell^\T \bG_{(i_j)} \bar A_\ell$, for $k\le   \ell \le r$ with $\ell \ne j$,  are $(r-k)$ independent standard normal variables.  Invoking \cref{lem_one_sided_Gaussian} again yields that for all $\delta\ge 0$,
			\begin{equation}\label{upbound_rest}
				\P\left\{
				\max_{\ell \in [r]\setminus \{j\}}  |\bar A_\ell^\T  \bG_{(i_j)} \bar A_\ell| \ge \sqrt{2\log(2r-2k)} + \sqrt{2\log(1/\delta)}
				\right\} \le \delta.
			\end{equation}
			Fix any $\delta \ge 0$. By choosing $N \ge 4r(r-k) / \delta^2$, combing \eqref{lowbound_first} and \eqref{upbound_rest} gives that  with probability at least $1-\delta/r$, 
			\begin{equation}\label{eigen_gap_lb_draft}
					|\bar A_j^\T \bG_{(i_j)} \bar A_j|   - \max_{\ell \in [r]\setminus \{j\}}   |\bar A_\ell^\T  \bG_{(i_j)} \bar A_\ell|  ~ \ge ~ c\sqrt{\log (N)}.
			\end{equation} 
			From \eqref{eq_M_ik} and by the orthogonality of $\bar \bA$, we observe   that for each $i\in [N]$, the  singular values $\sigma^{(i, k)}_1$ and $\sigma^{(i, k)}_2$ must take values  from $\{\kappa|\bar A_{k}^\T \bG_{(i)} \bar A_{k}|+ \delta_i, \ldots, \kappa|\bar A_{r}^\T \bG_{(i)} \bar A_{r}|+\delta_i\}$.  Invoking the definition of $i_*$ in \eqref{def_i_bar} thus gives that with probability at least $1-\delta/r$,
			\begin{align}\label{eigen_gap_lb}\nonumber
					\sigma^{(i_*, k)}_1 -  \sigma^{(i_*,k)}_2 &\ge   \sigma^{(i_j, k)}_1 -  \sigma^{(i_j,k)}_2\\\nonumber
					& = |\bar A_j^\T \bG_{(i_j)} \bar A_j| - \max_{k\le \ell \le r, \ell \ne j}  |\bar A_\ell^\T  \bG_{(i_j)} \bar A_\ell| &&\text{by \eqref{eigen_gap_lb_draft}}\\ 
					&\ge c\sqrt{\log (N)} &&\text{by \eqref{eigen_gap_lb_draft}}.
			\end{align}
			Since the index $j$ above is chosen arbitrarily, in conjunction with \eqref{eigen_gap_init_lb}, we conclude that with probability at least $1-\delta/r$,
			\begin{equation}\label{eigen_gap_strong}
				\sigma^{(\wh i_*,k)}_1 - \sigma^{(\wh i_*,k)}_2 \ge c\sqrt{\log(N)} - 4\Delta_{M,k} \overset{\eqref{bd_Delta_M}}{\ge}  c'\sqrt{\log(N)},
			\end{equation} 
			which proves \eqref{eigen_gap}.\\
			
			\noindent{\bf Proof of step (ii).} 
			From step (i), we may take $u_{1}^{(\wh i_*,k)} = \bar A_k$ as we have already assumed $\bP = \bI_r$ without loss of generality. Also recall 
			that $\wh u_1^{(\wh i_*,k)} = \wh Q_k^{(0)}$. 
			An application of the Davis-Kahan theorem for rectangle matrices in \cref{lem_sin_theta_asym}  yields that on the event $\cE_{M,k}$, the following holds with probability at least $1-\delta/r$,   
			\begin{align}\label{bd_DK_M}\nonumber
				\sin \Theta\left(
				\wh Q_k^{(0)}, \bar A_k\right) 
				& = 
				 \sin \Theta\left(\wh u_1^{(\wh i_*,k)} , u_1^{(\wh i_*,k)}\right)\\\nonumber %\min\left\{	\left\|\wh u_1^{(i_*,k)} -  \bar A_k \right\|_2,  ~  \left\|\wh u_1^{(i_*,k)}  +  \bar A_k \right\|_2	\right\}\\ \nonumber
				  &\le    {2\bigl(2\sigma^{(\wh i_*,k)}_1+ \Delta_{M,k}\bigr) \Delta_{M,k} \over 
				 (	\sigma^{(\wh i_*,k)}_1 + \sigma^{(\wh i_*,k)}_2)(\sigma^{(\wh i_*,k)}_1 - \sigma^{(\wh i_*,k)}_2)
				}&&\text{by \eqref{def_Delta_M}}\\\nonumber
			&\le  {6\Delta_{M,k} \over 
			 	 \wh \sigma^{(\wh i_*,k)}_1 -\wh  \sigma^{(\wh i_*,k)}_2 - 2\Delta_{M,k}  }  &&\text{by \eqref{eigen_gap_strong} and Weyl's inequality}\\\nonumber
			&\le   {6\Delta_{M,k}  \over 
			\sigma^{(i_*, k)}_1 -  \sigma^{(i_*,k)}_2 - 4\Delta_{M,k} } &&\text{by \eqref{eigen_gap_init_lb}}\\
		&\le    {6\Delta_{M,k} \over c'\sqrt{\log (N)} }&&\text{by \eqref{eigen_gap_lb} and \eqref{bd_Delta_M}}.
			\end{align} 
		The notation $\sin \Theta(u,v)$ above operates $\sin$ to the acute angle between any vectors $u \in \Sp^r$ and $v\in \Sp^r$. By further using the identity 
			\begin{align*}
				\left[\sin\Theta(u, v)\right]^2 = 1 - (u^\T v)^2 = {1\over 2}\|u - v\|_2^2
			\end{align*}
			whenever $u^\T v > 0$, we conclude that with probability at least $1-\delta/r$,
			\begin{align*}
					\|\wh  Q_k^{(0)}- \bar A_k\|_2 & \le \sqrt{ 2}\sin \Theta\left(\wh  Q_k^{(0)} , \bar A_k\right) \le  {6\sqrt{2} \over c'} {\Delta_{M,k} \over \sqrt{\log (N)}}
%					\\ 
%					&\lesssim  r\left(\eta_n + \sigma_N^2 + \sqrt{r\log(n) \over n}\right)+\|\bSigma_N\|_\op\sqrt{r}
			\end{align*} 
			up to a sign, as well as 
			\begin{align*}
				A_k^\T \bR \wh Q_k^{(0)}    \overset{\eqref{def_bar_A}}{=}    \bar A_k^\T \wh  Q_k^{(0)}
				&  \ge  \sqrt{1 - \sin \Theta\left(\wh  Q_k^{(0)}, \bar A_k\right)}\\
				& \ge \sqrt{1 - {6\Delta_{M,k} \over c'\sqrt{\log(N)}}}\\
				&\ge 1 - {C\Delta_{M,k} \over \sqrt{\log(N)}} &&\text{by \eqref{bd_Delta_M}}.
			\end{align*} 
			This in conjunction with $\cE_{M,k}$ implies  \eqref{goal_mrs}.\\  
			%By  choosing $\delta = 1/N$ in \eqref{lowbound_first} and \eqref{upbound_rest} and taking $N \ge r^3$, we conclude that \eqref{goal_mrs} holds on the event $\cE_M$ with probability at least $1-2r^{-3}$.\\

			\noindent{\bf Proof of step (iii).} In the third step we prove that $\cE_{M,k}$ holds with high probability. This is technically involved and deferred to   \cref{lem_eps_M} of \cref{app_sec_lem_eps_M}  in which we prove that  
			\[
				\P\Bigl\{
					\cE_{M,k} \cap \cE_A(k, \eps_n)
				\Bigr\} \ge 1-n^{-C}-N^{-C}.
			\]
			\smallskip 
			
			Finally, by collecting the results of  steps (i) -- (iii) as well as \eqref{prob_event_A}, we conclude that \eqref{goal_mrs} holds with probability at least 
			\[
				 1- kn^{-C} - N^{-C} - \delta/r.
			\]
			Finally, taking the union bounds over $k\in [r]$ completes the proof. 
		\end{proof}
		
		\begin{remark}
			Similar results of \cref{thm_RA_mrs} can be proved under the general setting in \cref{app_sec_theory_varimax} with $\bU_{(r)}$ replaced by $\wh \bY$  that satisfies model \eqref{model_Y_hat}. Specifically,  consider the event $\cE_{\Omega}(\eta_n)$ given by \eqref{def_event_Omega}. For each $k\in [r]$, \cref{app_thm_critical} ensures that 
			\[
				\P\left\{\cE_A(k, \eps_n) \cap \cE_{\Omega}(\eta_n)\right\} \ge 1-n^{-C},\quad \text{with} ~ ~
				\eps_n = C\left(
			\eta_n + \sigma_N^2 + \sqrt{r\log(n) \over n}
			\right).
			\]
			By repeating the same arguments in the proof of \cref{thm_RA_mrs} together with \cref{lem_M_op}, we can verify that 
			the event $\cE_{\init}^{(k)}$ holds for $\bR  \wh Q_k^{(0)}$ with 
			\[
				  \mu = 1 - C \left(
				\eta_n +\|\bSigma_N\|_\op+ \sqrt{r\log(n) \over n}
				\right)r,\qquad \nu =2\mu - 1 
			\]  
			provided that condition \eqref{cond_omega_n_msr} is replaced by $1-\mu \le c'$ for some sufficiently small constant $c'>0$.
		\end{remark}

		\subsubsection{The technical lemma that bounds $\Delta_{M,k}$}\label{app_sec_lem_eps_M}

		By replacing $U_t$ in \eqref{def_M_hat_k} by $\wh Y_t$, we have  for each $i\in [N]$ and $k\in [r]$,
		\begin{equation}\label{eq_hat_M_ik}
			\wh \bM_{(i, k)} =  \wh \bP_{k-1}^{\perp}  \left( {1\over 3n}\sum_{t=1}^n \wh Y_t \wh Y_t^\T  \wh Y_t^\T \bG_{(i)} \wh Y_t   - \bG_{(i)} - \bG_{(i)}^\T \right) \wh \bP_{k-1}^{\perp}. 
		\end{equation}
		With $\bM_{(i, k)}$ defined in \eqref{eq_M_ik}, the following lemma establishes the rate of $	\|\wh  \bM_{(i, k)} - \bM_{(i, k)}\|_\op$ under the general setting in \cref{app_sec_theory_varimax}.

		\begin{lemma}\label{lem_eps_M}
			Fix any $k\in [r]$. Let $\wh \bM_{(i, k)}$ and $\bM_{(i, k)}$ be defined in \eqref{eq_hat_M_ik} and \eqref{eq_M_ik}, respectively. On the event $\cE_A(k, \eps_n) \cap \cE_{\Omega}(\eta_n)$ with any $\eps_n \le 1/\sqrt{r}$, the following holds with probability at least $1- n^{-C} - N^{-C}$ 
			\[ 
			\|\wh  \bM_{(i, k)} - \bM_{(i, k)}\|_\op \lesssim \left\{  r\left( \eta_n + \sqrt{r\log(n) \over n} \right) +  \sqrt{r}\left( \eps_n \sqrt{k-1} + \|\bSigma_N\|_\op\right)\right\} \sqrt{\log(N)}.
			\]
		\end{lemma}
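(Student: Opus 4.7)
The plan is a triangle-inequality decomposition followed by control of each piece. Introduce the shorthand $\bP := \bP_{k-1}^\perp$, $\wh\bP := \wh\bP_{k-1}^\perp$,
\[
\wh M := \frac{1}{3n}\sum_{t=1}^n \wh Y_t\wh Y_t^\T \bigl(\wh Y_t^\T\bG_{(i)}\wh Y_t\bigr) - \bG_{(i)} - \bG_{(i)}^\T,\qquad \bM^* := \kappa\sum_{j=1}^r \bar A_j\bar A_j^\T\bigl(\bar A_j^\T\bG_{(i)}\bar A_j\bigr) + \delta_i\bI_r,
\]
so that $\wh\bM_{(i,k)} = \wh\bP\wh M\wh\bP$ and $\bM_{(i,k)} = \bP\bM^*\bP$. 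The triangle inequality gives
\[
\|\wh\bM_{(i,k)} - \bM_{(i,k)}\|_\op \le 2\|\wh\bP - \bP\|_\op\bigl(\|\bM^*\|_\op + \|\wh M - \bM^*\|_\op\bigr) + \|\wh M - \bM^*\|_\op.
\]
On $\cE_A(k,\eps_n)$, the argument in the proof of \cref{lem:bound_V_A} yields $\|\wh\bP - \bP\|_\op \le 3\eps_n\sqrt{k-1}$. Standard Gaussian-matrix tail bounds combined with a union bound over $i\in[N]$ give $\max_i\|\bG_{(i)}\|_\op \lesssim \sqrt{r} + \sqrt{\log N}$ and (using $\|\bSigma_N\|_\op\le 1$) $\max_i|\delta_i|\lesssim \sqrt{r\log N}$ with probability at least $1 - N^{-C}$, so that $\max_i\|\bM^*\|_\op \lesssim \sqrt{r\log N}$. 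The projection-mismatch contribution is thus absorbed into the $\sqrt{r}\eps_n\sqrt{k-1}\sqrt{\log N}$ term of the target rate.

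The heart of the argument is the uniform-in-$i$ control of $\|\wh M - \bM^*\|_\op$. I split $\wh Y_t = \bR^\T Y_t + \bR^\T\Omega_t$ and expand the quartic $\wh Y_t\wh Y_t^\T(\wh Y_t^\T\bG_{(i)}\wh Y_t)$ into a clean part depending only on $Y_t$ plus a remainder $R_i$ whose every summand carries at least one factor of $\Omega_t$. The remainder is handled by a H\"older-type argument as in the proof of \cref{lem_bd_grad_F_tilde}: on $\cE_\Omega(\eta_n)$, together with the a priori bound $\sup_{\bq\in\Sp^r}\tfrac{1}{n}\sum_t(\bq^\T Y_t)^4 \lesssim 1$ deduced from \cref{lem_HH} and \eqref{bd_obj_L4}, each mixed term in $R_i$ has operator norm at most $\|\bG_{(i)}\|_\op\cdot\eta_n$ up to constants, so the union bound over $i\in[N]$ contributes at most of order $r\eta_n\sqrt{\log N}$.

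For the clean part, let $M^0_i := (3n)^{-1}\sum_t \bR^\T Y_tY_t^\T\bR(Y_t^\T\bR\bG_{(i)}\bR^\T Y_t) - \bG_{(i)} - \bG_{(i)}^\T$. The moment identity $\EE[YY^\T(Y^\T\bar\bG Y)] = C\tr(\bar\bG C) + C(\bar\bG+\bar\bG^\T)C + 3\kappa\sum_m A_m A_m^\T(A_m^\T\bar\bG A_m)$, with $C = \bI_r + \bSigma_N$ and $\bar\bG = \bR\bG_{(i)}\bR^\T$, shows that after the choice $\delta_i = \tfrac{1}{3}\tr(\bG_{(i)} \bR^\T(\bI_r+\bSigma_N)\bR)$ the residual pieces of $\EE[M^0_i] - \bM^*$ all carry at least one factor of $\bSigma_N^R := \bR^\T\bSigma_N\bR$, so the bias is bounded by $\|\bSigma_N\|_\op(\|\bG_{(i)}\|_\op + |\delta_i|) \lesssim \sqrt{r}\|\bSigma_N\|_\op\sqrt{\log N}$ after the union bound. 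For the stochastic fluctuation, I exploit the linearity of $M^0_i$ in $\bG_{(i)}$: write $M^0_i - \EE[M^0_i]$ as a contraction of the fluctuation of the empirical fourth-moment tensor $\wh T_4 := (1/n)\sum_t(\bR^\T Y_t)^{\otimes 4}$ against $\bG_{(i)}$. A Hanson--Wright / matrix-Bernstein bound on $\wh T_4 - \EE[\wh T_4]$ (using sub-Gaussianity of $Y_t = \bar\bA Z_t + \bR^\T N_t$) delivers an operator-norm rate of order $r^{3/2}\sqrt{\log n/n}$, and a Gaussian maximal inequality over the $N$ matrices $\{\bG_{(i)}\}$ adds only a $\sqrt{\log N}$ factor, producing the $r\sqrt{r\log n/n}\cdot\sqrt{\log N}$ term.

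The principal obstacle is obtaining the sharp uniform-in-$N$ concentration without inflating the sample-size rate by an extra $\sqrt{\log N}$ from a naive chaining argument over $\Sp^r$. The remedy, sketched above, is to decouple the sample randomness (captured once by the operator-norm bound on $\wh T_4 - \EE[\wh T_4]$) from the Gaussian randomness in $\bG_{(i)}$ (captured only through a Gaussian maximum over $r^2 N$ scalar variables), so that the two sources of randomness contribute independently; combining the four pieces and absorbing constants yields the stated bound.
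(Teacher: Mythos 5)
Your proposal follows essentially the same route as the paper's proof: the same split into a projection-mismatch term (controlled by $\|\wh\bP_{k-1}^\perp-\bP_{k-1}^\perp\|_\op\le 3\eps_n\sqrt{k-1}$ on $\cE_A(k,\eps_n)$) plus an inner-matrix error, the same H\"older treatment of the $\Omega_t$-remainder on $\cE_\Omega(\eta_n)$, the same use of the fourth-moment identity and the choice of $\delta_i$ to reduce the bias to terms carrying $\bSigma_N$, and the same decoupling of the sample randomness from the Gaussian randomness in $\bG_{(i)}$ (the paper packages this as a matrix Gaussian series conditional on the data, via \cref{lem_diff_YGY} and \cref{lem_bd_YY_pointwise}), followed by a union bound over $i\in[N]$.

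One caution on the remainder step: bounding each mixed term through $\|\bG_{(i)}\|_\op\lesssim\sqrt{r}+\sqrt{\log N}$ is cruder than what the stated rate tolerates, since in the regime $r\gg\log N$ it leaks an extra $\sqrt{r}$ factor beyond $r\,\eta_n\sqrt{\log N}$ (and your intermediate claim ``$\|\bG_{(i)}\|_\op\cdot\eta_n$'' also drops the factor $r$ that arises when passing from directional fourth moments to $\|Y_t\|_2\|\Omega_t\|_2$). The paper avoids this by exploiting that, conditionally on the data, the scalar forms $Y_t^\T\bR\bG_{(i)}\bR^\T\Omega_t$ and $\Omega_t^\T\bR\bG_{(i)}\bR^\T\Omega_t$ are Gaussian with variance $\|Y_t\|_2^2\|\Omega_t\|_2^2$ (resp. $\|\Omega_t\|_2^4$), so only scalar Gaussian tails enter and the union over $i\in[N]$ costs just $\sqrt{\log N}$, as in \cref{lem_diff_YGY_tilde}; substituting that argument for the operator-norm bound closes the gap and your sketch then delivers the lemma as stated.
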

		\begin{proof}
			Pick any $k\in [r]$ and $i\in [N]$.  Start with \[
			\|\wh  \bM_{(i, k)} - \bM_{(i, k)}\|_\op  \le \|\wh \bM_{(i, k)} - \overline \bM_{(i, k)}\|_\op + \|\overline \bM_{(i, k)} - \bM_{(i, k)}\|_\op
			\]
			where  we introduce the intermediate quantity
			\[
			\overline \bM_{(i, k)} = \wh \bP_{k-1}^{\perp}   \left( \kappa \sum_{j=1}^r \bar A_j \bar A_j^\T \bar A_j^\T \bG_{(i)} \bar A_j  + \delta_i \bI_r\right) \wh \bP_{k-1}^{\perp}
			\]			
			with $\delta_i$ given by \eqref{def_delta_i}.
			To bound $\|\wh \bM_{(i, k)} - \overline \bM_{(i, k)}\|_\op$, recall that $\wh Y_t = \bR^\T \wt Y_t$ and $\bar\bA = \bR^\T \bA$.  By writing 
			\[
			\bar \bG_{(i)} = \bR \bG_{(i)} \bR^\T,
			\]
			the term $\|\wh \bM_{(i, k)} - \overline \bM_{(i, k)}\|_\op $ is bounded from above by 
			\begin{align*}
				& \left\|
				{1\over 3n}\sum_{t=1}^n \wh Y_t \wh Y_t^\T  \wh Y_t^\T \bG_{(i)} \wh Y_t  - \bG_{(i)} - \bG_{(i)}^\T - \kappa \sum_{j=1}^r  \bar A_j \bar A_j^\T \bar A_j^\T  \bG_{(i)} \bar A_j   - \delta_i \bI_r
				\right\|_\op \\
				&= \left\|
				{1\over 3n}\sum_{t=1}^n  \wt Y_t \wt Y_t^\T  \wt Y_t^\T \bar \bG_{(i)}  \wt Y_t - \bar \bG_{(i)}  - \bar \bG_{(i)}^\T - \kappa \sum_{j=1}^r    A_j   A_j^\T   A_j^\T \bar \bG_{(i)} A_j   -\delta_i \bI_r
				\right\|_\op\\
				&\le   \rI + \rII 
			\end{align*}   
			where 
			\begin{align*}
				&\rI   ={1\over 3n}\left\|
				\sum_{t=1}^n  \left( \wt Y_t \wt Y_t^\T  \wt Y_t^\T\bar  \bG_{(i)}  \wt Y_t  -   Y_t   Y_t^\T    Y_t^\T \bar \bG_{(i)}   Y_t    \right)
				\right\|_\op \\
				&\rII  = \left\|
				{1\over 3n}\sum_{t=1}^n  Y_t   Y_t^\T    Y_t^\T \bar \bG_{(i)}   Y_t  -\bar \bG_{(i)}-\bar \bG_{(i)}^\T  - \kappa \sum_{j=1}^r    A_j   A_j^\T   A_j^\T \bar \bG_{(i)}^\T  A_j    -\delta_i \bI_r
				\right\|_\op.
			\end{align*}
			Invoking \cref{lem_diff_YGY_tilde} together with \eqref{def_event_Omega} and \eqref{def_event_1} gives that for all $t\ge 0$, 
			\begin{equation}\label{bd_I_epsM}
				\P\Bigl(
				\bigl\{
				\rI \lesssim r t \eta_n 
				\bigr\} \cap \cE_{\Omega}(\eta_n) \cap \cE_F
				\Bigr) \ge 1- 6e^{-t^2}.
			\end{equation}
			To bound from above $\rII $, let $\otimes$ denote the Kronecker product. In  \cref{fact_k_prod} we review some of the basic properties of the Kronecker product. 
			By using the notion
			\[
			Y_t   Y_t^\T    Y_t^\T \bar \bG_{(i)}   Y_t =  Y_t   Y_t^\T  (Y_t\otimes Y_t)^\T  \rvec(\bar \bG_{(i)})
			\]
			from part (4) of \cref{fact_k_prod}, 
			we decompose it into two terms $\rII  \le \rII_1 + \rII_2$ where 
			\begin{align*}
				\rII_1 	 &=  \left\|
				{1\over 3n}\sum_{t=1}^n \left( Y_t   Y_t^\T   (Y_t\otimes Y_t)^\T\rvec(\bar \bG_{(i)})- 
				\EE\left[Y_t   Y_t^\T   (Y_t\otimes Y_t)^\T\right]\rvec(\bar \bG_{(i)})
				\right)\right\|_\op   \\
				\rII_2 	&=  \left\|
				{1\over 3}\EE\left[Y_t   Y_t^\T   (Y_t\otimes Y_t)^\T\right]  \rvec(\bar \bG_{(i)})
				-\bar \bG_{(i)}-\bar \bG_{(i)}^\T  -  \kappa  \sum_{j=1}^r     A_j   A_j^\T   A_j^\T \bar \bG_{(i)}^\T  A_j   - \delta_i \bI_r 
				\right\|_\op.
			\end{align*} 
			Invoking \cref{lem_diff_YGY} gives that for all $t\ge 0$, 
			\begin{equation}\label{bd_II_1_epsM}
				\P\left\{
				\rII_1 \ge   C t \sqrt{r^3\log(n)\over n} 
				\right\} \le   \exp\left\{
				-t^2 +\log(r)
				\right\} - r^2 n^{-C}.
			\end{equation}
			For $\rII_2$, by  using \cref{lem:obj_exp} with $\bM = \bar \bG_{(i)}$, it is easy to verify that 
			\begin{align*}
				\EE\left[Y_t   Y_t^\T   (Y_t\otimes Y_t)^\T\right]   \rvec(\bar \bG_{(i)}) &  = 3\kappa  \sum_{i=1}^r   A_i  A_i^\T   A_i^\T  \bar \bG_{(i)}  A_i   +   \tr\left(\bar \bG_{(i)}(\bI_r +  \bSigma_N )\right) \left(\bI_r +  \bSigma_N\right)   \\
				&\quad     +   \left(\bI_r +  \bSigma_N\right)    \left( \bar \bG_{(i)}  +  \bar \bG_{(i)}^\T\right) \left(\bI_r +  \bSigma_N\right) 
			\end{align*}
			so that, by \eqref{def_delta_i}, 
			\begin{align*}
				\rII_2  &\le{1\over 3} \left\|
				\tr\left(\bar \bG_{(i)}(\bI_r +   \bSigma_N )\right)   \bSigma_N \right\|_\op     +{1\over 3} \left\|  \bSigma_N    \left( \bar \bG_{(i)}  +  \bar \bG_{(i)}^\T\right) \bSigma_N  
				\right\|_\op\\
				&\quad  + {2\over 3} \left\|  \bSigma_N    \left( \bar \bG_{(i)}  +  \bar \bG_{(i)}^\T\right) 
				\right\|_\op\\
				&\le {\|\bSigma_N\|_\op \over 3}\left|\tr(\bar \bG_{(i)})  +  \tr( \bar \bG_{(i)}\bSigma_N)\right| +  2\|\bSigma_N\|_\op \|\bar \bG_{(i)}\|_\op &&\text{by $\|\bSigma_N\|_\op \le 1$}\\
				&=  {\|\bSigma_N\|_\op \over 3}\left|\tr( \bG_{(i)})  +  \tr( \bG_{(i)}\bR^\T \bSigma_N\bR )\right| +  2\|\bSigma_N\|_\op \|\bG_{(i)}\|_\op.
			\end{align*}
			Classical concentration inequalities of i.i.d. standard normal variables ensure that for all $t\ge 0$,
			\begin{align}\label{dev_G_op}
				&\P\left\{
				\|\bG_{(i)}\|_\op \ge 2 \sqrt{r} + \sqrt{2t}
				\right\} \le 2e^{-t},\\\label{dev_G_tr}
				&\P\left\{
				\tr(\bG_{(i)})  \ge \sqrt{2rt} 
				\right\} \le e^{-t};\\\label{dev_G_Sigma_tr}
				&\P\left\{
				\tr(\bG_{(i)} \bR^\T \bSigma_N \bR) = \sum_{a,b=1}^r [\bG_{(i)}]_{ab} [ \bR^\T \bSigma_N \bR]_{ab} \ge \sqrt{2 t \|\bR^\T \bSigma_N \bR\|_F^2}
				\right\}\le e^{-t}.
			\end{align}
			By the fact that $\|\bR^\T \bSigma_N \bR\|_F^2 = \|\bSigma_N\|_F^2 \le r \|\bSigma_N\|_\op^2 \le r$, we conclude that for all $t\ge 0$, 
			\begin{equation}\label{bd_II_2_epsM}
				\P\left\{
				\rII_2 \le   4 \|\bSigma_N\|_\op\sqrt{r}\left( t+  1 \right)
				\right\} \ge 1 - 4e^{-t^2}.
			\end{equation}
			In conjunction with \eqref{bd_II_1_epsM}, we obtain that forall $t_1,t_2\ge 0$,
			\begin{equation}\label{bd_II_epsM}
				\P\left\{
				\rII \lesssim   t_1 \sqrt{r^3\log(n)\over n}  +   \left( t_2 + 1  \right) \|\bSigma_N\|_\op\sqrt{r}
				\right\} \ge  1 - e^{-t_1^2 +\log(r)} - 4e^{-t_2^2}.
			\end{equation}
			
			It remains to bound $ \|\overline \bM_{(i, k)} - \bM_{(i, k)}\|_\op$. We use some results already derived in \cref{app_sec_init_rand}. Recall from \eqref{bd_P_k_diff} that on the event $\cE_A(k, \eps_n)$,
			\[
			\|  \wh \bP_{k-1}^{\perp}  -   \bP_{k-1}^{\perp} \|_\op \le 3\eps_n \sqrt{k-1}.
			\]
			It then follows that 
			\begin{align*}
				\|\overline \bM_{(i, k)} - \bM_{(i, k)}\|_\op &\le  2 \Bigl\|  \wh \bP_{k-1}^{\perp}  -   \bP_{k-1}^{\perp} \Bigr\|_\op \Bigl \|
				\kappa \sum_{j=1}^r \bar A_j \bar A_j^\T \bar A_j^\T \bG_{(i)} \bar A_j + \delta_i \bI_r 
				\Bigr\|_\op \\
				&\le 6 \eps_n\sqrt{k-1}  \left(\kappa\Bigl \|
				\sum_{j=1}^r \bar A_j \bar A_j^\T \bar A_j^\T \bG_{(i)} \bar A_j
				\Bigr\|_\op+|\delta_i|\right).
			\end{align*} 
			Note that \eqref{dev_G_tr} and \eqref{dev_G_Sigma_tr} imply 
			\[
			\P\left\{|\delta_i| \le {1\over 3}\left|\tr(\bG_{(i)})| + \tr(\bG_{(i)} \bR^\T \bSigma_N\bR )\right| \le {2\sqrt{2}\over 3}t\sqrt{r}
			\right\} \ge 1-2e^{-t^2}.
			\]
			By also using \cref{lem_M_op},
			we conclude that on $\cE_A(k, \eps_n)$, for all $t_1,t_2\ge 0$,
			\begin{align}\label{bd_diff_Mbar_M}
				\|\overline \bM_{(i, k)} - \bM_{(i, k)}\|_\op \lesssim \eps_n \sqrt{r} \left(
				t_1 + t_2\sqrt{r}
				\right)
			\end{align}
			holds with probability at least $  1-2e^{-t_2^2} - e^{-t_1^2 + \log(r)}.$

			Collecting  \eqref{bd_I_epsM}, \eqref{bd_II_epsM} and \eqref{bd_diff_Mbar_M} concludes that, on the event $\cE_A(k, \eps_n) \cap \cE_{\Omega}(\eta_n) \cap \cE_F$,  for all $t_1, t_2\ge 0$,			
			\[
			\|\wh \bM_{(i, k)} - \bM_{(i, k)}\|_\op \lesssim 	 t_1     \eps_n\sqrt{r(k-1)}   +t_2  r\eta_n  + t_1 \sqrt{r^3\log(n)\over n}  +   \left( t_2 + 1  \right) \|\bSigma_N\|_\op\sqrt{r}
			\]
			holds with probability at least 
			$
			1- 4e^{-t_1^2 +\log(r)} - 10e^{-t_2^2} - n^{-C'}.
			$
			The proof is completed by  using $\P(\cE_F^c) \le n^{-C}$, taking the union bounds over $i\in [N]$  and choosing $t_1 = t_2= C\sqrt{\log(N)}$.
		\end{proof}

%		\begin{remark}
%			Let $\bSigma_Y 
%			In \cref{rem_improve_init} where $\wh \bSigma_U$ estimates $\bSigma_U:= \bI_r + \bSigma_N$ in the $\omega_n$-rate, the  
%		\end{remark}

		\subsubsection{Other technical lemmas used in the proof of \cref{lem_M_op}}
			
		The following lemma bounds from above $\| \sum_{j=1}^r A_j A_j^\T A_j^\T \bG A_j\|_\op$ for  $\bG$ being an $r\times r$ matrix with i.i.d.  entries of $\cN(0,1)$. 
		
		\begin{lemma}\label{lem_M_op}
			Let $\bG \in \R^{r\times r}$ contain i.i.d. entries of $\cN(0,1)$.  For any fixed $\bA \in \bbO_{r\times r}$, one has that for all $t\ge 0$
			\[
				\P\left\{\Bigl\| \sum_{j=1}^r A_j A_j^\T A_j^\T \bG A_j \Bigr\|_\op \ge t
				\right\}  \le \exp\left(-t^2/2 + \log r \right).
			\]
		\end{lemma}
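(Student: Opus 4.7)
The plan is to exploit the fact that orthonormality of the columns $A_1,\dots,A_r$ makes the matrix in question already diagonal in the basis $\{A_1,\dots,A_r\}$, reducing the operator-norm control to a maximum of $r$ standard Gaussians.

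First I will compute $M A_k$ where $M := \sum_{j=1}^r A_j A_j^\T (A_j^\T \bG A_j)$. Using $A_j^\T A_k = \delta_{jk}$, this gives $M A_k = A_k (A_k^\T \bG A_k)$. Since $\bA \in \bbO_{r\times r}$ is an orthonormal basis of $\R^r$, this identifies the full eigendecomposition of $M$: its eigenvectors are $A_1,\dots,A_r$ with eigenvalues $W_k := A_k^\T \bG A_k$. Consequently, $\|M\|_\op = \max_{k\in[r]} |W_k|$, and it suffices to control the maximum of the scalars $W_k$.

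Next I will verify that $W_1,\dots,W_r$ are i.i.d.\ standard Gaussians. Since $\bG$ has i.i.d.\ $\cN(0,1)$ entries and $A_k$ is deterministic, $W_k$ is Gaussian with mean zero and variance $\sum_{i,j}[A_k]_i^2[A_k]_j^2 = \|A_k\|_2^4 = 1$. For independence, a direct computation of the covariance uses $\EE[G_{ij}G_{i'j'}] = \delta_{ii'}\delta_{jj'}$ to give
\begin{equation}\label{Gauss_AGA}
\Cov(W_k,W_\ell) = \sum_{i,j}[A_k]_i[A_\ell]_i[A_k]_j[A_\ell]_j = (A_k^\T A_\ell)^2 = \delta_{k\ell},
\end{equation}
and jointly Gaussian uncorrelated variables are independent.

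Finally, I apply the standard tail bound $\P(|W_k|\ge t) \le e^{-t^2/2}$ for each $k$ together with the union bound over $k\in[r]$ to obtain
\[
\P\Bigl\{\|M\|_\op \ge t\Bigr\} = \P\Bigl\{\max_{k\in[r]}|W_k|\ge t\Bigr\} \le r\,e^{-t^2/2} = \exp(-t^2/2 + \log r),
\]
as claimed (the factor $2$ in the two-sided Gaussian tail can be absorbed by a tightening of the Gaussian tail bound, or the conclusion is understood up to the harmless rewriting $e^{\log(2r)}$). There is no serious obstacle: the only subtlety is recognizing that the orthogonality of $\bA$ diagonalizes $M$ exactly, so the problem collapses to a maximum of $r$ i.i.d.\ standard normals.
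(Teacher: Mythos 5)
Your proof is correct, but it takes a different route from the paper at the key step. The paper also begins by showing that the scalars $W_j = A_j^\T \bG A_j$ are i.i.d.\ $\cN(0,1)$ (it does so via Kronecker-product identities, $A_j^\T\bG A_j = (A_j\otimes A_j)^\T\rvec(\bG)$, rather than your direct covariance computation --- both are fine), but it then finishes by invoking a matrix Gaussian series concentration inequality (Tropp's bound, the paper's \cref{lem_op_Gauss_mat}) applied to $\sum_j W_j A_jA_j^\T$ with matrix variance statistic $\nu = \|\sum_j A_jA_j^\T\|_\op = 1$, which yields $r\e^{-t^2/2}$ directly. You instead observe that orthonormality of the $A_j$ makes $\sum_j W_j A_jA_j^\T$ exactly diagonal in the basis $\{A_j\}$, so $\|M\|_\op = \max_j|W_j|$, and you conclude by a scalar union bound. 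Your route is more elementary and avoids matrix concentration entirely; it works precisely because the coefficient matrices $A_jA_j^\T$ commute (are simultaneously diagonalizable), whereas the paper's argument would survive without that structure. The one detail you flag --- the factor $2$ in the two-sided Gaussian tail --- is indeed harmless: the refinement $\P\{W\ge t\}\le \tfrac12\e^{-t^2/2}$ for $t\ge 0$ gives $\P\{|W_k|\ge t\}\le \e^{-t^2/2}$, so your union bound reproduces the stated constant $\exp(-t^2/2+\log r)$ exactly (and in any case a factor $2$ would be immaterial for how the lemma is used).
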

		\begin{proof}
			Note that by using parts (4) and (3) of \cref{fact_k_prod}, respectively,
			\begin{equation}\label{Gauss_AGA}
				A_j^\T \bG A_j =  (A_j \otimes A_j)^\T \rvec(\bG) \sim \cN(0,  (A_j \otimes A_j)^\T   (A_j \otimes A_j) ) = \cN(0, 1).
			\end{equation}
			By $\bA \in \bbO_{r\times r}$, we also know that $A_j^\T \bG A_j$ for $j\in [r]$ are i.i.d. $\cN(0,1)$.  The claim thus follows by an application of \cref{lem_op_Gauss_mat} with $d = r$ and 
			$
				\nu = \| \sum_{j=1}^r A_j A_j^\T \|_\op = 1.
			$ 
		\end{proof}

		\begin{lemma}\label{lem_diff_YGY_tilde}
			Grant model \eqref{model_Y_hat} and $n\ge r^2\log(n)$. Let $\bG\in \R^{r\times r}$ contain i.i.d. standard normal variables and $\bO\in \bbO_{r\times r}$ be any orthogonal matrix. On the event $\cE_{\Omega}(\eta_n) \cap \cE_F$ given by \eqref{def_event_Omega} and \eqref{def_event_1} with $\eta_n \le 1$, one has that for all $t\ge 0$,
			\[
			\left\|
			{1\over n}\sum_{t=1}^n  \left( \wt Y_t \wt Y_t^\T  \wt Y_t^\T \bO  \bG  \bO^\T \wt Y_t  -   Y_t   Y_t^\T    Y_t^\T \bO \bG  \bO^\T Y_t    \right)
			\right\|_\op \lesssim r \eta_n~ t 
			\]
			holds with probability at least $1-6e^{-t^2}$. 
		\end{lemma}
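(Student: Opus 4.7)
The plan is to write $\wt Y_t = Y_t + \Omega_t$ and set $\bM := \bO\bG\bO^\T$ (so that $\|\bM\|_\op = \|\bG\|_\op$). Expanding
\[
	\wt Y_t \wt Y_t^\T \bigl(\wt Y_t^\T \bM \wt Y_t\bigr) - Y_t Y_t^\T \bigl(Y_t^\T \bM Y_t\bigr)
\]
using $\wt Y_t = Y_t + \Omega_t$ in each of the four $\wt Y_t$-slots yields $2^4-1 = 15$ cross terms, each containing at least one $\Omega_t$ factor. The dominant contributions are the four terms with exactly one $\Omega_t$, which carry a single power of $\eta_n$; the remaining $11$ terms carry $\eta_n^k$ for some $k\ge 2$, hence are smaller since $\eta_n\le 1$.

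For a representative dominant term, say $T := \frac{1}{n}\sum_t Y_t Y_t^\T (Y_t^\T \bM \Omega_t)$, I would write its operator norm as $\sup_{\bar q, q\in\Sp^r}\frac{1}{n}\sum_t (\bar q^\T Y_t)(q^\T Y_t)(Y_t^\T \bM \Omega_t)$. Writing $\bM = [M_1,\ldots, M_r]$ by columns, expand $Y_t^\T \bM \Omega_t = \sum_{k=1}^r (M_k^\T Y_t)\Omega_{t,k}$ and apply the $4$-fold H\"older inequality to each summand in $k$:
\[
\frac{1}{n}\sum_t (\bar q^\T Y_t)(q^\T Y_t)(M_k^\T Y_t)\Omega_{t,k}
\le \prod_{j=1}^4 \Bigl(\frac{1}{n}\sum_t (\text{factor}_j)^4\Bigr)^{1/4}.
\]
On $\cE_F$, the bound \eqref{bd_obj_L4} together with $n \ge r^2\log(n)$ gives $\sup_{u\in\Sp^r} (\frac{1}{n}\sum_t (u^\T Y_t)^4)^{1/4} \le C$; on $\cE_\Omega(\eta_n)$, $(\frac{1}{n}\sum_t \Omega_{t,k}^4)^{1/4}\le \eta_n$; and $(\frac{1}{n}\sum_t (M_k^\T Y_t)^4)^{1/4} \le C\|M_k\|_2$. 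Summing over $k$ via $\sum_k \|M_k\|_2 \le r\max_k \|M_k\|_2\le r\|\bM\|_\op = r\|\bG\|_\op$ yields $\|T\|_\op \le Cr\|\bG\|_\op \eta_n$. An analogous analysis (permuting the position of $\Omega_t$ among the four slots, with the column or row decomposition of $\bM$ chosen accordingly) bounds the other three one-$\Omega_t$ terms by the same quantity. Higher-order terms in $\Omega_t$ are bounded similarly and contribute negligibly after multiplying by the extra powers $\eta_n^k$ with $k\ge 2$.

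Combining the $15$ terms by the triangle inequality gives a deterministic bound $\lesssim r\|\bG\|_\op \eta_n$ on $\cE_\Omega(\eta_n)\cap \cE_F$. Invoking the standard Gaussian matrix tail $\P\{\|\bG\|_\op \ge 2\sqrt{r} + t\} \le 2e^{-t^2/2}$ (Gordon's inequality) converts this into $\lesssim r(\sqrt{r}+t)\eta_n$ with probability at least $1 - 2e^{-t^2/2}$, which absorbs into the claimed $rt\eta_n$ in the meaningful regime $t\gtrsim \sqrt{r}$ where the stated probability $1-6e^{-t^2}$ is non-vacuous. The main technical care is to avoid a loose Cauchy--Schwarz bound of the form $|Y_t^\T \bM \Omega_t|\le \|\bM\|_\op\|Y_t\|_2\|\Omega_t\|_2$, which would require controlling averages $\frac{1}{n}\sum_t \|Y_t\|_2^a$ for large $a$ and would inflate the rate by extra factors of $r$; the column-wise decomposition together with $4$-fold H\"older keeps the multiplier of $\|\bG\|_\op \eta_n$ to just $r$.
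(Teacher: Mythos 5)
There is a genuine gap: your final bound is $\lesssim r(\sqrt{r}+t)\,\eta_n$, not the claimed $\lesssim r\,\eta_n\, t$ for all $t\ge 0$ at confidence $1-6e^{-t^2}$. The loss comes from the very first probabilistic move: you freeze $\bG$ through its operator norm, and $\|\bG\|_\op\asymp\sqrt r$ already at $t=0$, so the extra $r^{3/2}\eta_n$ term cannot be removed. Your attempt to absorb it by claiming the statement is "non-vacuous only when $t\gtrsim\sqrt r$" is false: $1-6e^{-t^2}$ is already meaningful for constant $t$, and the paper invokes the lemma (inside the proof of \cref{lem_eps_M}) with $t\asymp\sqrt{\log N}$, which need not exceed $\sqrt r$. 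With your bound, the term $r\eta_n\sqrt{\log N}$ in \cref{lem_eps_M} would degrade to roughly $r^{3/2}\eta_n$, which then forces a correspondingly stronger condition than \eqref{cond_omega_n_msr} in \cref{thm_RA_mrs}. So the weakening is not cosmetic.

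The paper avoids this by never paying $\|\bG\|_\op$. It reduces the operator norm to the quadratic form $\sup_{\bq\in\Sp^r}\frac{1}{n}\sum_t\bigl[(\bq^\T\wt Y_t)^2\,\wt Y_t^\T\bO\bG\bO^\T\wt Y_t-(\bq^\T Y_t)^2\,Y_t^\T\bO\bG\bO^\T Y_t\bigr]$ (legitimate by symmetry of the matrix), splits off the difference of the scalar weights and of the $(\bq^\T\cdot)^2$ factors, and then uses that, conditionally on the data, each scalar $Y_t^\T\bO\bG\bO^\T\Omega_t$, $\Omega_t^\T\bO\bG\bO^\T\Omega_t$, $Y_t^\T\bO\bG\bO^\T Y_t$ is exactly Gaussian with standard deviation $\|Y_t\|_2\|\Omega_t\|_2$, $\|\Omega_t\|_2^2$, $\|Y_t\|_2^2$ respectively; this is where the factor $t$ (and the $6e^{-t^2}$, from three such bounds) comes from. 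The dimension factor $r$ then enters only through the deterministic inequalities $\sum_t\|\Omega_t\|_2^4\le r^2\sup_{q\in\Sp^r}\sum_t(q^\T\Omega_t)^4$ and $\sum_t\|Y_t\|_2^4\le r^2\sup_{q\in\Sp^r}\sum_t(q^\T Y_t)^4$, combined with Cauchy--Schwarz over the sample index and the events $\cE_\Omega(\eta_n)$, $\cE_F$. If you want to salvage your column-wise H\"older expansion, you would have to replace the step $\|\bM\|_\op\le\|\bG\|_\op\lesssim\sqrt r+t$ by a conditional-Gaussian treatment of the specific linear functionals of $\bG$ that actually appear, which essentially reproduces the paper's argument.
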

		\begin{proof}
			Let us write $\bar \bG = \bO \bG \bO^\T$. We bound from above  
			\begin{align*}
				\rI =  \sup_{\bq \in \Sp^r}
				{1\over 3n}\sum_{t=1}^n  \left[ (\bq^\T \wt Y_t)^2   \wt Y_t^\T\bar \bG  \wt Y_t  -   (\bq ^\T Y_t)^2 Y_t^\T \bar \bG  Y_t    \right].
			\end{align*}
			Fix any $\bq \in \Sp^r$. We bound from above 
			\begin{align*}
				\rI_1 &= {1\over n}\sum_{t=1}^n  (\bq^\T \wt Y_t)^2  \left( \wt Y_t^\T\bar \bG \wt Y_t - Y_t^\T \bar \bG  Y_t \right) \\
				\rI_2 &= {1\over n}\sum_{t=1}^n  (\bq^\T \wt Y_t +\bq^\T Y_t)(\bq^\T \wt Y_t -\bq^\T Y_t)Y_t^\T\bar \bG Y_t.
			\end{align*}
			Recall from \eqref{model_Y_tilde} that $\wt Y_t = Y_t + \Omega_t$. We have 
			\begin{align*}
				\rI_1 &\le {2\over n}\sum_{t=1}^n  (\bq^\T \wt Y_t)^2 |Y_t^\T\bar \bG  \Omega_t| + {1\over n}\sum_{t=1}^n  (\bq^\T \wt Y_t)^2 |\Omega_t^\T\bar \bG \Omega_t|\\
				&\le {2\over n}\sqrt{\sum_{t=1}^n (\bq^\T \wt Y_t)^4 }\sqrt{\sum_{t=1}^n (Y_t^\T \bar \bG \Omega_t)^2} + {1\over n}\sqrt{
					\sum_{t=1}^n (\bq^\T \wt Y_t)^4} \sqrt{\sum_{t=1}^n (\Omega_t^\T\bar \bG  \Omega_t)^2}.
			\end{align*}
			On the one hand, recall the  event $\cE_{\Omega}(\eta_n)$ from \eqref{def_event_Omega} and $\cE_F$ from \eqref{def_event_1}.  By using \eqref{bd_obj_L4}, we have that, on the event $\cE_{\Omega}(\eta_n)\cap \cE_F$, 
			\begin{align}\label{bd_q_Y_td_four}\nonumber
				{1\over n}\sum_{t=1}^n  (\bq^\T \wt Y_t)^4 &\lesssim
				{1\over n}\sum_{t=1}^n  (\bq^\T Y_t)^4 +  {1\over n}\sum_{t=1}^n  (\bq^\T  \Omega_t)^4\\\nonumber
				&\le \kappa + 4  + C\sqrt{r^2\log(n)\over n}+ \eta_n^4\\
				&\lesssim 1 &&\text{by $\eta_n \le 1$, $r^2\log(n)\le n$}.
			\end{align}
			On the other hand, since 
			$Y_t^\T \bar \bG \Omega_t = Y_t^\T \bO \bG  \bO^\T \Omega_t \sim \cN(0, \|Y_t\|_2^2 \|\Omega_t\|_2^2)$ (see, for instance, the arguments in \eqref{Gauss_AGA}), we have that for all $t\ge 0$,
			\begin{align}\label{bd_quad_Y_G}
				\P\left\{
				|Y_t^\T \bar \bG  \Omega_t|^2 \ge  2t^2 \|Y_t\|_2^2 \|\Omega_t\|_2^2  
				\right\} \le 2e^{-t^2}.
			\end{align}  
			Similar argument also yields that for all $t\ge 0$, 
			\[
			\P\left\{
			|\Omega_t^\T \bar \bG  \Omega_t|^2 \ge  2t^2   \|\Omega_t\|_2^4 
			\right\} \le 2e^{-t^2}.
			\]
			Finally, since 
			\begin{align}\label{bd_Omega_2_four}
				\sum_{t=1}^n \|\Omega_t\|_2^4 &=  \sum_{i,j  = 1}^r \sum_{t=1}^n  \Omega_{it}^2 \Omega_{jt}^2  \le \sum_{i,j=1}^r \left(\sum_{t=1}^n \Omega_{it}^4 \sum_{t=1}^n \Omega_{jt}^4\right)^{1/2}\le  r^2 \sup_{q \in \Sp^r} \sum_{t=1}^n  (\bq^\T \Omega_t)^4
			\end{align}
			and similarly 
			\begin{align}\label{bd_Y_2_four}
				\sum_{t=1}^n \|Y_t\|_2^4 \le r^2 \sup_{q \in \Sp^r} \sum_{t=1}^n (\bq^\T Y_t)^4,
			\end{align}
			we conclude that on the event $\cE_{\Omega}(\eta_n)\cap \cE_F$, for all $t\ge 0$, 
			\begin{align}\label{bd_I_1_epsM}\nonumber
				\rI_1 &\lesssim  t\left( {1\over n}\sum_{t=1}^n    \|\Omega_t\|_2^4  +   {1\over n}\sum_{t=1}^n \|\Omega_t\|_2^2\|Y_t\|_2^2   \right)^{1/2}\\\nonumber
				&\lesssim     r t \left( \eta_n^2 + {1\over n}\sqrt{\sum_t \|Y_t\|_2^4}\sqrt{\sum_t \|\Omega_t\|_2^4} \right) \\
				&\lesssim r t \eta_n 
			\end{align} 
			with probability at least $1-4e^{-t^2}$.  
			
			We proceed to bound $\rI_2$ as 
			\begin{align*}
				\rI_2 &\le   {2\over n}\sum_{t=1}^n \left|(\bq^\T \Omega_t)(\bq^\T  Y_t)Y_t^\T\bar \bG  Y_t\right|  + {1\over n}\sum_{t=1}^n (q^\T \Omega_t)^2 |Y_t^\T\bar \bG Y_t|\\
				&\le {1\over n} 
				\left(\sum_{t=1}^n (q^\T \Omega_t)^4 \right)^{1/2} \left(\sum_{t=1}^n (Y_t^\T\bar \bG Y_t)^2\right)^{1/2}\\
				& \quad + {2\over n} \left(
				\sum_{t=1}^n (\bq^\T \Omega_t)^4
				\right)^{1/4} \left(
				\sum_{t=1}^n (\bq^\T Y_t)^4
				\right)^{1/4}\left(
				\sum_{t=1}^n (Y_t^\T \bar \bG Y_t)^2
				\right)^{1/2}
			\end{align*}
			where we use H\"{o}lder's inequality twice in the last step.  
			Similar to \eqref{bd_quad_Y_G}, we have for all $t\ge 0$
			\[
			\P\left\{
			|Y_t^\T \bar \bG  Y_t|^2 \ge  2t^2 \|Y_t\|_2^4 
			\right\} \le 2e^{-t^2}
			\]
			so that in conjunction with  \eqref{bd_Y_2_four}, we have that on the event $\cE_{\Omega}(\eta_n)\cap \cE_F$, for all $t\ge 0$,
			\[
				\rII_2 \lesssim r t \eta_n 
			\]
			with probability at least $1-2e^{-t^2}$.   Together with \eqref{bd_I_1_epsM}, by observing that the arguments of bounding $\rI_1$ and $\rI_2$ hold uniformly over $\bq\in \Sp^r$, the proof is complete. 
		\end{proof}

		\section{Proofs of the decomposition of principal components and the results on estimating the loading matrix}

		Throughout the proof we write 
		\[
		\wh\bSigma_X := {1\over n}\bX^\T\bX,\qquad \bSigma_X := \EE[\wh\bSigma_X] = \sigma^2\bLambda\bLambda^\T + \sigma^2\epsilon^2\bSigma_E
		\]
		under Assumptions \ref{ass_Z}, \ref{ass_A_general} and \ref{ass_E_general}.

		\subsection{Proof of \cref{thm_PCs}: the decomposition of principal components}\label{app_proof_thm_PCs}
		
		We first state  two lemmas that are used in the proof of \cref{thm_PCs}. Their proofs are stated after the proof of \cref{thm_PCs}.
		The first lemma bounds from above  $\wh\bSigma_X - \bSigma_X$ in operator norm. 
		% under Assumptions \ref{ass_Z}, \ref{ass_A_general} and \ref{ass_E_general}.  
		
		\begin{lemma}\label{lem:cov_concentration}
			Under Assumptions  \ref{ass_Z}, \ref{ass_A_general} and \ref{ass_E_general}, assume $n\ge r\log(n)$. For some large constant $C\ge 1$, one has
			\begin{align*}
				\P\left\{\|\wh\bSigma_X- \bSigma_X\|_\op\lesssim  ~ \sigma^2 \left( \sqrt{\epsilon^2 p\log(n)\over n} +  {\epsilon^2 p\log(n)\over n} + \sqrt{r\log(n)\over n}\right) \right\} \ge 1-n^{-C}. 
			\end{align*}
		\end{lemma}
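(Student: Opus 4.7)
The plan is to decompose the error into three pieces,
\begin{align*}
\wh\bSigma_X - \bSigma_X = \bLambda\bigl(\tfrac{1}{n}\bZ\bZ^\T - \sigma^2\bI_r\bigr)\bLambda^\T + \tfrac{1}{n}\bigl(\bLambda\bZ\bE^\T + \bE\bZ^\T\bLambda^\T\bigr) + \bigl(\tfrac{1}{n}\bE\bE^\T - \sigma^2\epsilon^2\bSigma_E\bigr),
\end{align*}
and to match them to the three summands of the claimed bound: a signal-fluctuation term responsible for the $\sigma^2\sqrt{r\log n/n}$ rate, a cross term that will match $\sigma^2\sqrt{\epsilon^2 p\log n/n}$, and a noise-fluctuation term that produces the remaining $\sigma^2\epsilon^2 p\log n/n$ summand.

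For the signal fluctuation, since $\sigma_1(\bLambda) = 1$, it suffices to bound $\|\tfrac{1}{n}\bZ\bZ^\T - \sigma^2\bI_r\|_\op$. Under \cref{ass_Z} the columns of $\bZ$ are i.i.d.\ centered sub-Gaussian in $\R^r$ with covariance $\sigma^2\bI_r$, so a standard $\epsilon$-net plus Bernstein argument yields the $\sigma^2\sqrt{r\log n/n}$ rate on the event $\{n \gtrsim r\log n\}$ with probability $\ge 1-n^{-C}$. For the noise fluctuation, the i.i.d.\ Gaussian columns of $\bE$ with covariance $\sigma^2\epsilon^2\bSigma_E$ and $\|\bSigma_E\|_\op \le 1$ allow a direct application of the classical Gaussian sample-covariance concentration, giving an operator-norm error of order $\sigma^2\epsilon^2(\sqrt{p\log n/n} + p\log n/n)$. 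Using $\epsilon \le 1$, the first summand is majorized by $\sigma^2\sqrt{\epsilon^2 p\log n/n}$ and the second one is already the target middle term.

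The main obstacle is the cross term $\|\tfrac{1}{n}\bLambda\bZ\bE^\T\|_\op$; a naive Bernstein-on-sub-exponentials bound applied to $\tfrac{1}{n}\sum_t (v^\T\bLambda Z_t)(u^\T E_t)$ would only recover the desired $\sigma^2\epsilon\sqrt{p\log n/n}$ rate in the sub-Gaussian regime $\epsilon^2 p\log n \lesssim n$, while leaking an unwanted factor of $1/\epsilon$ in the complementary regime. My plan is to condition on $\bZ$ and leverage the Gaussianity of $\bE$: conditional on $\bZ$, $\tfrac{1}{n}\bLambda\bZ\bE^\T$ is a centered Gaussian $p\times p$ matrix with separable covariance $\sigma^2\epsilon^2 n^{-2}(\bSigma_E \otimes \bLambda\bZ\bZ^\T\bLambda^\T)$, so a Chevet-type Gaussian operator-norm inequality combined with Gaussian-Lipschitz concentration delivers
\begin{align*}
\bigl\|\tfrac{1}{n}\bLambda\bZ\bE^\T\bigr\|_\op \;\lesssim\; \tfrac{\sigma\epsilon}{n}\Bigl(\|\bLambda\bZ\|_F + \|\bLambda\bZ\|_\op\sqrt{p+\log n}\Bigr)
\end{align*}
with probability at least $1 - n^{-C}$, conditionally on $\bZ$. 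Pairing this with the high-probability bounds $\|\bLambda\bZ\|_\op \lesssim \sigma\sqrt{n}$ and $\|\bLambda\bZ\|_F \lesssim \sigma\sqrt{nr}$ (which follow from $\sigma_1(\bLambda) = 1$ and sub-Gaussian concentration on $\bZ$ under $n \gtrsim r\log n$) yields $\sigma^2\epsilon\sqrt{(p+r+\log n)/n}$ up to the $\sqrt{\log n}$ factor, which fits cleanly under $\sigma^2\sqrt{\epsilon^2 p\log n/n} + \sigma^2\sqrt{r\log n/n}$. The delicate point throughout is carefully tracking the $\sqrt{\log n}$ factors across the union bounds and the conditional Gaussian tail so that the three pieces combine into the $1 - n^{-C}$ probability stated in the lemma.
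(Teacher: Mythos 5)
Your decomposition and your handling of the two quadratic terms coincide with the paper's proof: after normalizing $\sigma^2=1$, it bounds $\|\tfrac1n\bZ\bZ^\T-\bI_r\|_\op$ and $\|\tfrac1n\bE\bE^\T-\epsilon^2\bSigma_E\|_\op$ by the same sub-Gaussian sample-covariance concentration (\cref{lem_op_diff}), obtaining exactly your $\sqrt{r\log n/n}$ and $\epsilon^2(\sqrt{p\log n/n}+p\log n/n)$ rates. The genuine difference is the cross term $\tfrac1n\|\bZ\bE^\T\|_\op$. The paper argues by cases: when $p\log n\gtrsim n$ it avoids Bernstein entirely via the Cauchy--Schwarz bound $\tfrac1n\|\bZ\bE^\T\|_\op\le(\tfrac1n\|\bZ\bZ^\T\|_\op\cdot\tfrac1n\|\bE\bE^\T\|_\op)^{1/2}$, which in that regime is already $\lesssim\sqrt{\epsilon^2p\log n/n}$; when $p\log n\lesssim n$ it runs precisely the $\eps$-net plus Bernstein argument you call naive, with $t=C\sqrt{p\log n}$, where the sub-exponential part of the tail is harmless. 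Your alternative---conditioning on $\bZ$, writing the cross term as a Gaussian matrix with separable covariance, and invoking a Chevet-type bound plus Gaussian Lipschitz concentration, then $\|\bLambda\bZ\|_\op\lesssim\sigma\sqrt n$ and $\|\bLambda\bZ\|_F\lesssim\sigma\sqrt{nr}$---is correct and handles all regimes in a single stroke (note $\epsilon\sqrt{r/n}\le\sqrt{\epsilon^2p\log n/n}$ since $r\le p$, so no extra assumption is needed there); it exploits the Gaussianity of $E$ more fully than the paper's Bernstein step, at the price of a less elementary toolbox. Two small calibrations: the regime where the plain Bernstein bound degrades is $p\log n\gtrsim n$, not $\epsilon^2p\log n\gtrsim n$ (the $\epsilon$ scaling cancels between the sub-exponential parameter and the target rate), and that is exactly the case the paper patches with Cauchy--Schwarz; and your appeal to $\epsilon\le1$ to absorb $\epsilon^2\sqrt{p\log n/n}$ into $\sqrt{\epsilon^2p\log n/n}$ mirrors an absorption the paper itself makes implicitly (and which is harmless in all its applications, where $\epsilon^2\le c$), so it is not a gap specific to your argument.
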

	 	\cref{lem:cov_concentration} together with  $\omega_n \le 1$  ensures that 
	 	the event 
	 	\begin{equation}\label{def_event_X}
	 		\cE_X = \left\{
	 		\|\wh\bSigma_X - \bSigma_X\|_\op \lesssim ~ \sigma^2 \omega_n
	 		\right\}
	 	\end{equation}
	  	holds with probability $1-n^{-1}$.  Recall that $\bV_{(r)} \in \bbO_{p\times r}$ contains the  first $r$ eigenvectors of $\wh\bSigma_X$. Further recall  from \eqref{svd_A} that
		$
		\bLambda = \bL \bS \bA 
		$
		with  $\bL\in \bbO_{p\times r}$ containing the left $r$ singular vectors.  
		The following lemma gives upper bounds of $\|\bV_{(r)} - \bL\bR\|_\op$ for some $\bR\in\bbO_{r\times r}$. % Its proof is an application of the Davis-Kahan theorem \cite{DavisKahan}.  and we provide its proof for completeness. 

		\begin{lemma}\label{lem_Vr_VA}
			Under Assumptions \ref{ass_Z}, \ref{ass_A_general}  and  \ref{ass_E_general}, assume 
			\begin{equation}\label{cond_Sigma_A}
				\sigma_r^2(\bLambda) \ge C\left(\omega_n + \epsilon^2\|\bSigma_E - \bI_p\|_\op\right)
			\end{equation}
			for sufficiently large constant $C>0$. 
			On the event $\cE_X$,
			there exists some $\bR \in \bbO_{r\times r}$ such that  
			\begin{align*}
				\norm{\bV_{(r)} -\bL\bR }{\op} \lesssim ~   {\omega_n +  \epsilon^2 \|\bSigma_E - \bI_p\|_\op \over \sigma_r^2(\bLambda)}.
			\end{align*} 
		\end{lemma}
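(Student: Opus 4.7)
The plan is to apply the Davis--Kahan $\sin\Theta$ theorem (\cref{lem_sin_theta}) after absorbing the harmless scalar shift $\sigma^2\epsilon^2\bI_p$ into the unperturbed reference matrix. First, decompose
\[
\bSigma_X \;=\; \underbrace{\sigma^2\bL\bS^2\bL^\T + \sigma^2\epsilon^2\bI_p}_{=:\bM_0} \;+\; \sigma^2\epsilon^2(\bSigma_E-\bI_p),
\]
and observe that $\bM_0$ and $\sigma^2\bL\bS^2\bL^\T$ share the same top-$r$ eigenvectors, namely the columns of $\bL$. The top-$r$ eigenvalues of $\bM_0$ are $\sigma^2\sigma_j^2(\bLambda)+\sigma^2\epsilon^2$ for $j\le r$ while the remaining $p-r$ eigenvalues are all equal to $\sigma^2\epsilon^2$; hence the $r$-th/$(r+1)$-th eigengap of $\bM_0$ equals $\sigma^2\sigma_r^2(\bLambda)$.

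Next, view $\wh\bSigma_X$ as a perturbation of $\bM_0$ with perturbation matrix
\[
\bP \;:=\; \wh\bSigma_X - \bM_0 \;=\; (\wh\bSigma_X-\bSigma_X) \;+\; \sigma^2\epsilon^2(\bSigma_E-\bI_p),
\]
whose operator norm on the event $\cE_X$ defined in \eqref{def_event_X} obeys $\|\bP\|_\op \lesssim \sigma^2\omega_n + \sigma^2\epsilon^2\|\bSigma_E-\bI_p\|_\op$. Under assumption \eqref{cond_Sigma_A} with the constant $C$ chosen sufficiently large, this guarantees $\|\bP\|_\op \le \tfrac12\sigma^2\sigma_r^2(\bLambda)$, so by Weyl's inequality the top-$r$ eigenspace of $\wh\bSigma_X$ (spanned by $\bV_{(r)}$) is separated from its orthogonal complement by a gap of order $\sigma^2\sigma_r^2(\bLambda)$.

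Then invoke \cref{lem_sin_theta} with $\bM_0$ playing the role of the unperturbed matrix and $\wh\bSigma_X$ that of the perturbed one, to obtain
\[
\|\sin\Theta(\bV_{(r)},\bL)\|_\op \;\lesssim\; \frac{\|\bP\|_\op}{\sigma^2\sigma_r^2(\bLambda)} \;\lesssim\; \frac{\omega_n+\epsilon^2\|\bSigma_E-\bI_p\|_\op}{\sigma_r^2(\bLambda)}.
\]
Finally, let $\bL^\T\bV_{(r)}=\bU_1\bSigma_1\bU_2^\T$ be its SVD and set $\bR := \bU_1\bU_2^\T \in \bbO_{r\times r}$; the standard identity $\|\bV_{(r)}-\bL\bR\|_\op \lesssim \|\sin\Theta(\bV_{(r)},\bL)\|_\op$ yields the claimed bound. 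No step in this argument is a real obstacle: the only conceptual subtlety is to recognize at the outset that the isotropic component $\sigma^2\epsilon^2\bI_p$ of the noise covariance does not perturb eigenvectors and can be absorbed into $\bM_0$, so that only the structured deviation $\bSigma_E-\bI_p$ contributes to the bias term in the final bound.
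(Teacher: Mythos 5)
Your proposal is correct and follows essentially the same route as the paper: the paper also absorbs the isotropic term $\sigma^2\epsilon^2\bI_p$ into the reference matrix (its $\bW$ is exactly your $\bM_0$, with eigengap $\sigma^2\sigma_r^2(\bLambda)$), treats $(\wh\bSigma_X-\bSigma_X)+\sigma^2\epsilon^2(\bSigma_E-\bI_p)$ as the perturbation bounded on $\cE_X$, and concludes via the Davis--Kahan theorem of \cref{lem_sin_theta}. The only cosmetic difference is that the paper does not need your Weyl-inequality step, since the cited version of Davis--Kahan only requires the eigengap of the unperturbed matrix.
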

		
		\begin{proof}[Proof of \cref{thm_PCs}]
			We work on the event  $\cE_X$ in \eqref{def_event_X} which, according to  \cref{lem_Vr_VA}, ensures that 
			\begin{align}\label{bd_Vr_AR} 
				\|\bV_{(r)} - \bL \bR\|_\op   
				&\lesssim {\omega_n + \epsilon^2\|\bSigma_E-\bI_p\|_\op \over \sigma_r^2(\bLambda)}
			\end{align}  
			 for some $\bR\in \bbO_{r\times r}$.
			Write the $r\times r$ matrix
			\[
				\bM := \sigma ~ \bS^{1/2}\bR\bD_{(r)}^{-1/2}\bR^\T \bS^{1/2}.
			\]
			By adding and subtracting terms, the matrix $\bU_{(r)}$  in \eqref{def_PCs} satisfies
			\begin{align*}\nonumber
				\bU_{(r)} &= \bD_{(r)}^{-1/2}\bV_{(r)}^\T \bX \\\nonumber
				&= {1\over \sigma} \bR^\T\bS^{-1/2}  \bM \bS^{-1/2}\bR \bV_{(r)}^\T \bX\\\nonumber
				&= {1\over \sigma}\bR^\T \bS^{-1}\bR \bV_{(r)}^\T \bX + \bR^\T\bS^{-1/2}\left(\bM - \bI_r \right){1\over \sigma}\bS^{-1/2}\bR \bV_{(r)}^\T \bX\\\nonumber
				&= {1\over \sigma}\bR^\T \bS^{-1}\bL^\T \bX+{1\over \sigma}\bR^\T\bS^{-1}\bR(\bV_{(r)}  - \bL \bR)^\T\bX\\\nonumber
				&\qquad + \bR^\T\bS^{-1/2}\left(\bM - \bI_r\right){1\over \sigma}\bS^{-1/2} \bL^\T \bX\\ 
				&\qquad + 
				\bR^\T\bS^{-1/2}\left(\bM- \bI_r\right){1\over \sigma}\bS^{-1/2}\bR(\bV_{(r)}   - \bL \bR)^\T\bX.
			\end{align*}
			By further writing 
			\begin{equation}\label{def_Y_prime}
				\bX' := {1\over \sigma }\bS^{-1}\bL^\T \bX = \bA {1\over \sigma }\bZ + {1\over \sigma }\bS^{-1}\bL^\T \bE := bA {1\over \sigma }\bZ  + \bN
			\end{equation}
			and
			\begin{equation}\label{eq_Delta_DY}
				\begin{split}
					\bDelta &=  \bS^{-1}\bR(\bV_{(r)}  - \bL\bR)^\T\bL\bS  + \bS^{-1/2} \left(\bM - \bI_r\right)\bS^{1/2} \\ 
					&\qquad + 
					\bS^{-1/2}\left(\bM - \bI_r\right) \bS^{-1/2}\bR(\bV_{(r)} - \bL \bR)^\T \bL\bS
				\end{split}
			\end{equation}
			we obtain 
			\begin{equation}\label{eq_DY}
				\bU_{(r)}  = \bR^\T \left(\bX' + \bDelta \bX'\right).
			\end{equation}
			Furthermore, we have 
			\[
				 N_t \sim \cN_r\left(0,  \epsilon^2 \bSigma_N\right),\quad \text{for all }t\in [n],
			\]
			with 
			\begin{equation}\label{def_Sigma_prime} 
				\bSigma_N = \bS^{-1}\bL^\T \bSigma_E \bL \bS^{-1}\quad \text{and}\quad 
				\left\|\bSigma_N\right\|_\op \le  {\|\bSigma_E\|_\op \over  \sigma_r^2(\bLambda)} \le {1\over c_\Lambda^2}.
			\end{equation}
			In view of \eqref{bd_Vr_AR},  
			it remains to  study the term $\bM - \bI_r$ for bounding  from above $\|\bDelta\|_\op$. 
			Since 
			\begin{equation}\label{eq_Dr}
				\begin{split} 
					\bD_{(r)} &= {1\over n}\bV_{(r)}^\T \bX^\T \bX\bV_{(r)} \\
					&=  \bV_{(r)}^\T \bSigma_X \bV_{(r)} + \bV_{(r)}^\T \left(\wh\bSigma_X -\bSigma_X\right)\bV_{(r)} \\
					&= \bV_{(r)}^\T \left(\sigma^2 \bL\bS^2\bL^\T + \epsilon^2\sigma^2\bSigma_E\right) \bV_{(r)} + \bV_{(r)}^\T \left(\wh\bSigma_X -\bSigma_X\right)\bV_{(r)}\\
					&= \sigma^2 \bR^\T \bS^2 \bR + \sigma^2 (\bV_{(r)} - \bL \bR)^\T \bL\bS^2\bL^\T (\bV_{(r)} - \bL \bR) \\
					&\quad  +\sigma^2 (\bV_{(r)} - \bL \bR)^\T \bL\bS^2 \bR+ \sigma^2 \bR^\T \bS^2 \bR \bL^\T (\bV_{(r)} - \bL \bR)\\
					&\quad + \bV_{(r)}^\T \left(\wh\bSigma_X -\bSigma_X\right)\bV_{(r)} + \epsilon^2\sigma^2\bV_{(r)}^\T\bSigma_E \bV_{(r)} ,
				\end{split} 
			\end{equation} 
			rearranging terms yields 
			\begin{align}\label{eq_decomp}
				{1\over \sigma^2}\bS^{-1}\bR \bD_{(r)} \bR^\T \bS^{-1} := \bI_r + {\rm\bf Rem}
			\end{align}
			where  the remainder matrix ${\rm \bf Rem}$ satisfies 
			\begin{align}\label{bd_rem}\nonumber
				\|{\rm\bf Rem}\|_\op &\le   
				\|\bV_{(r)}-\bL\bR\|_\op {\sigma_1(\bLambda) \over \sigma_r(\bLambda)}\left(
				2 + \|\bV_{(r)}-\bL\bR\|_\op {\sigma_1(\bLambda) \over \sigma_r(\bLambda)}
				\right)\\\nonumber
				&\quad   + {\|\wh\bSigma_X - \bSigma_X\|_\op\over  \sigma^2\sigma_r^2(\bLambda)}  +  \epsilon^2  \left\|\bS^{-1}\bR \bV_{(r)}^\T\bSigma_E \bV_{(r)}\bR^\T \bS^{-1}\right\|_\op \\\nonumber
				&\lesssim {\omega_n + \epsilon^2\|\bSigma_E - \bI_p\|_\op \over \sigma_r^3(\bLambda)}  + {\omega_n \over \sigma_r^2(\bLambda)}+  {\epsilon^2 \over \sigma_r^2(\bLambda)}\\
				&\lesssim {\omega_n + \epsilon^2 \over \sigma_r^3(\bLambda)} 
				&&\text{by $\sigma_r(\bLambda)\le 1$}.
			\end{align}
			In the penultimate step, we used $\cE_X$, $\|\bSigma_E-\bI_p\|_\op \le 1$, $\sigma_1(\bLambda)=1$ and 
			% \[
			%    \|\bV_{(r)}-\bL\bR\|_\op {\sigma_1(\bLambda) \over \sigma_r(\bLambda)} \lesssim {\omega_n + \epsilon^2 \over \sigma_r^3(\bLambda)} \lesssim \omega_n + \epsilon^2
			% \]
			\eqref{bd_Vr_AR}. 
			Let $\lambda_1\ge \lambda_2\ge \cdots \ge \lambda_r$ denote the eigenvalues of the left hand side in \eqref{eq_decomp}. 
			Since the eigenvalues of $\bM$ satisfies that
			\begin{align*}
				\lambda_k\left(
				\bM
				\right) = {1\over \sqrt{\lambda_k}},\qquad \text{for all $k\in [r]$.}
			\end{align*}
			By Weyl's inequality and \eqref{eq_decomp}, we further have
			\begin{equation*}
				|1-\lambda_k| \le \|{\rm \bf Rem}\|_\op \lesssim {\omega_n + \epsilon^2 \over \sigma_r^3(\bLambda)},\qquad \lambda_k \ge  1 - \|{\rm \bf Rem}\|_\op \ge 1-C ~{\omega_n + \epsilon^2 \over \sigma_r^3(\bLambda)} \ge c
			\end{equation*}
			using $\sigma_r(\bLambda)\ge c_\Lambda$ and $\omega_n + \epsilon^2 \le c'$.
			The above two displays together with  Weyl's inequality yield
			\begin{align*}
				\left\|\bM  - \bI_r\right\|_\op & = \max_{k\in [r]}\abs{{1\over \sqrt{\lambda_k}}-1}  \le \max_{k\in [r]}{|1-\lambda_k|\over  \lambda_k} \lesssim {\omega_n + \epsilon^2 \over \sigma_r^3(\bLambda)}.
			\end{align*}
			Finally, in view of \eqref{eq_Delta_DY}, we conclude that
			\begin{align*}
				\|\bDelta\|_\op &\le {\|\bV_{(r)} - \bL \bR\|_\op \over \sigma_r(\bLambda)} + {\|\bM - \bI_r\|_\op \over \sigma_r^{1/2}(\bLambda)} + {\|\bV_{(r)} - \bL \bR\|_\op \|\bM - \bI_r\|_\op \over \sigma_r(\bLambda)}\\ 
				& \lesssim  {\omega_n + \epsilon^2\|\bSigma_E - \bI_p\|_\op \over \sigma_r^3(\bLambda)} + {\omega_n + \epsilon^2 \over \sigma_r^{7/2}(\bLambda)} + {\omega_n + \epsilon^2 \over \sigma_r^3(\bLambda)} {\omega_n + \epsilon^2 \|\bSigma_E - \bI_p\|_\op\over \sigma_r^3(\bLambda)}\\
				&\lesssim \omega_n + \epsilon^2.
			\end{align*}
			The proof is complete.
		\end{proof}

		\subsubsection{Proof of \cref{lem:cov_concentration}} 
		
		\begin{proof} 
			Assume $\sigma^2=1$ for simplicity. By the identity 
			\begin{equation*}
				\bSigma_X =  \bLambda\bLambda^\T + \epsilon^2 \bSigma_E,
			\end{equation*}
			and using $\sigma_1(\bLambda) = 1$, we find that
			\begin{align*}
				\norm{\wh\bSigma_X- \bSigma_X}{\op} &\le \norm{\frac{1}{n}\mb E\mb E^\T- \epsilon^2 \bSigma_E}{\op} + \norm{\bLambda\left(\frac{1}{n}\bZ\bZ^\T-  \bI_r\right)\bLambda^\T}{\op}  + {2\over n}\norm{\bLambda\bZ\mb E^\T}{\op}\\
				&\le \norm{\frac{1}{n}\mb E\mb E^\T- \epsilon^2 \bSigma_E}{\op} + \norm{\frac{1}{n}\bZ\bZ^\T-  \bI_r}{\op}  + {2\over n}\norm{\bZ\mb E^\T}{\op}.
			\end{align*}
			Provided that $r\log(n) \le  n$, applying \cref{lem_op_diff} twice with 
			$t = \sqrt{r\log(n)/n}$ and $t = \sqrt{p\log(n)/n}$, separately, gives that, with probability at least $1-n^{-Cr}-n^{-C'p}$, the first two terms in the right hand side of the above display are bounded from above by 
			\begin{equation}\label{bd_diff_X_N_op}
				\epsilon^2 \left(\sqrt{p\log(n)\over n} +  {p\log(n)\over n}\right) + \sqrt{r\log(n)\over n}.
			\end{equation}
			It remains to show that  
			\[
			\P\left\{
			{1\over n}\norm{\bZ\mb E^\T}{\op}  \lesssim \sqrt{\epsilon^2p\log(n)\over n}
			\right\} \ge 1-n^{-C}.
			\]
			When $p\log(n) \gtrsim n$, this follows by noting that 
			\begin{align*}
				{1\over n}\|\bZ\mb E^\T\|_\op^2 &\le  {1\over n}\|\bZ\bZ^\T\|_\op {1\over n}\|\bE\bE^\T\|_\op\\ 
				&\le  \left(1 + \left\| {1\over n}\bZ\bZ^\T - \bI_r\right\|_\op\right) \left(\epsilon^2\|\bSigma_E\|_\op + \left\|{1\over n} \bE\bE^\T - \epsilon^2 \bSigma_E\right\|_\op \right)
			\end{align*}
			together with the bounds in \eqref{bd_diff_X_N_op}, $r\log(n)\lesssim n$ and $\|\bSigma_E\|_\op =1$.
			
			When $p\log(n)\lesssim n$, we have 
			\begin{align*}
				{1\over n}\|\bZ\mb E^\T\|_\op = \sup_{ u\in \Sp^r,  v\in \Sp^p} {1\over n}  u^\T \bZ\bE^\T  v  &\le \max_{ u\in \cN_\eps(r),  v\in \cN_\eps(p)} {2\over n}  u^\T \bZ\bE^\T  v\\
				& = \max_{ u\in \cN_\eps(r),  v\in \cN_\eps(p)} {2\over n}\sum_{t=1}^n  u^\T   Z_t   E_t^\T  v 
			\end{align*}
			where $\cN_\eps(r)$ is the $\eps$-net of $\Sp^r$. Since $ u^\T   Z_t$ 
			is sub-Gaussian with sub-Gaussian constant $1$ and  $ v^\T   E_t\sim \cN(0, \epsilon^2 v^\T\bSigma_E v$) with $ v^\T\bSigma_E v\le \|\bSigma_E\|_\op = 1$ for all $1\le t\le n$, applying the Bernstein inequality (for instance, \cite[Corollary 5.17]{vershynin_2012}) together with union bounds over $\cN_\eps(r)$ and $\cN_\eps(p)$ yields that, for any $t>0$,
			\begin{align*}
				\P\left\{ {1\over n}\norm{\bZ\mb E^\T}{\op} \lesssim   t  \sqrt{\epsilon^2 \over n}\right\}
				& \ge 1 - \abs{\cN_\eps(r)}\abs{\cN_\eps(p)} 2 \exp\left\{
				-c\min\left(
				{t^2 \over n}, {t\over \sqrt n}
				\right)n
				\right\}\\
				&\ge 1 -   2 \exp\left\{
				-c\min\left(
				{t^2 \over n}, {t\over \sqrt n}
				\right)n + (p+r)\log(3/\eps)
				\right\}.
			\end{align*}
			Taking $t = C\sqrt{p\log(n)}$ for some sufficiently large constant $C>0$ completes the proof. 
		\end{proof}

		\subsubsection{Proof of \cref{lem_Vr_VA}}

		\begin{proof}  
			We work on the event $\cE_X$. Start with
			\begin{equation}\label{decomp_Sigma_Y}
				\begin{split}
					\bSigma_X 
					&=  \sigma^2 \bLambda\bLambda^\T + \epsilon^2\sigma^2 \bI_p + \epsilon^2\sigma^2( \bSigma_E - \bI_p)\\
					&=  \bL \left( \sigma^2 \bS^2 + \epsilon^2\sigma^2\bI_r\right) \bL^\T + \epsilon^2\sigma^2  \bL_{\perp}\bL_{\perp}^\T + \epsilon^2\sigma^2(\bSigma_E - \bI_p)
				\end{split}
			\end{equation}
			where we write $\bL_{\perp}$ such that $(\bL,  \bL_{\perp})\in \bbO_{p\times p}$. Write 
			\[
				\bW := \sigma^2 \bL \left(  \bS^2 + \epsilon^2 \bI_r\right) \bL^\T + \epsilon^2\sigma^2  \bL_{\perp}\bL_{\perp}^\T
			\]
			and observe that $\bL$ coincides with the first $r$ eigenvectors of 
			$\bW$. Also note that 
			\[
				\lambda_r(\bW) = \sigma^2\sigma_r^2(\bLambda) + \epsilon^2\sigma^2,\qquad \lambda_{r+1}(\bW) =  \epsilon^2\sigma^2.
			\]
			% On the other hand, recall that $\bV_{(r)}$ contains  the first $r$ eigenvectors of $\wh\bSigma_X$. We also have
			% \begin{align*}
			% 	\lambda_{r+1}(\wh\bSigma_X) &\le \lambda_{r+1}(\bW) + \|
			% 	\wh\bSigma_X - \bSigma_X + \bSigma_X  - \bW 
			% 	\|_\op &&\text{by Weyl's inequality}\\
			% 	&\le \epsilon^2\sigma^2 + C\sigma^2 \omega_n + \epsilon^2\sigma^2 \|\bSigma_E - \bI_p\|_\op  &&\text{by }\cE_X
			% \end{align*} 
			% such that, by condition \eqref{cond_Sigma_A}
			% \[
			% \lambda_r(\bW) - \lambda_{r+1}(\wh\bSigma) \ge \sigma^2\sigma_r^2(\bLambda) - C\sigma^2 \omega_n - \epsilon^2\sigma^2 \|\bSigma_E - \bI_p\|_\op \ge {\sigma^2\over 2}\sigma_r^2(\bLambda).
			% \]
			An application of the Davis-Kahan theorem in \cref{lem_sin_theta} thus yields
			\begin{align*}
				\norm{\bV_{(r)} -\bL \bR}{\op}
				&\le  {2^{3/2}\over  \sigma^2\sigma_r^2(\bLambda)}\|
				\wh\bSigma_X  - \bSigma_X  + \epsilon^2\sigma^2  (\bSigma_E - \bI_p)
				\|_\op\\
				& \le  {2 \over \sigma_r^2(\bLambda)}\left(C \omega_n +  \epsilon^2 \|\bSigma_E - \bI_p\|_\op\right),
			\end{align*}  
			completing the proof. 
		\end{proof}
		
		\subsection{Proof of \cref{thm_A_general}: upper bounds of the estimation error of the proposed PCA deflation varimax estimator}\label{app_proof_thm_A_general}
		
		\begin{proof}
			Without loss of generality, we assume $\bP = \bI_r$. Write 
			\[
			\wt\bLambda := {1\over \sigma}\bV_{(r)}\bD_{(r)}^{1/2}\wc\bQ,\quad \text{such that}\quad  \wh \bLambda = {\wt\bLambda \over \|\wt\bLambda\|_\op}.
			\]
			Recall that $\|\bLambda\|_\op = 1$. By adding and subtracting terms, we find 
			\begin{align*}
				\|\wh\bLambda - \bLambda\|_F &\le  {\left|
					\|\bLambda\|_\op - \|\wt\bLambda\|_\op
					\right| \over \|\bLambda\|_\op\|\wt\bLambda\|_\op}\|\wt\bLambda\|_F  + {\left\|\wt\bLambda - \bLambda \right\|_F\over \|\bLambda\|_\op}\\
				&\le \left|
				1 - \|\wt\bLambda\|_\op
				\right|\sqrt{r} + \|\wt\bLambda - \bLambda\|_F &&\text{by }\|\wt\bLambda\|_F\le \|\wt\bLambda\|_\op\sqrt{r}.
			\end{align*}
			To bound the two terms on the right hand side of the above display, we work on the event that $\cE_X$ in \eqref{def_event_X} and either \cref{thm_RA_rand} or \cref{thm_RA_mrs} holds.

			Regarding the first term, we have 
			\begin{align*}
				\|\wt\bLambda\|_\op = {1\over \sigma}\|\bR\bD_{(r)}^{1/2}\bR^\T\|_\op =  \left\|\bS^{1/2}\left({1\over \sigma}\bS^{-1/2}\bR\bD_{(r)}^{1/2}\bR^\T \bS^{-1/2} - \bI_r\right)\bS^{1/2} + \bS\right\|_\op. 
			\end{align*}
			By $\|\bS\|_\op = 1$ and Weyl's inequality, we conclude 
			\[
			\abs{1- \|\wt\bLambda\|_\op} ~ \le   \left\|{1\over \sigma}\bS^{-1/2}\bR\bD_{(r)}^{1/2}\bR^\T \bS^{-1/2} - \bI_r\right\|_\op.
			\]
			Recall from \eqref{eq_decomp} and \eqref{bd_rem} that, on the event $\cE_X$, 
			\[
			\max_{1\le k\le r}\left|
			1 - \lambda_k\right| \lesssim \omega_n + \epsilon^2
			\]
			where $\lambda_1\ge \cdots \ge \lambda_r$ are the eigenvalues of 
			$
			\sigma^{-1}\bS^{-1}\bR \bD_{(r)}\bR^\T\bS^{-1}.
			$
			We obtain that 
			\begin{equation}\label{bd_diff_A_op}
				\abs{1- \|\wt\bLambda\|_\op} \le \max_{1\le k\le r}  \left|1-\sqrt{\lambda_k}\right| \le  \max_{1\le k\le r}  \left|1-\lambda_k\right| \lesssim \omega_n + \epsilon^2.
			\end{equation}

			For the second term $\|\wt\bLambda - \bLambda\|_F$, by adding and subtracting terms, we have
			\begin{align*}
				\wt\bLambda - \bLambda & = {1\over \sigma}\bV_{(r)}\bD_{(r)}^{1/2}\wc\bQ -  \bL\bS\bA   &&\text{by \eqref{svd_A}}\\
				& = {1\over \sigma}(\bV_{(r)}-\bL\bR)\bD_{(r)}^{1/2}\wc\bQ + 
				{1\over \sigma}\bL\bR \bD_{(r)}^{1/2} (\wc\bQ - \bR^\T \bA)\\
				&\quad +  \bL\bS^{1/2}\left({1\over \sigma}\bS^{-1/2}\bR \bD_{(r)}^{1/2}\bR^\T\bS^{-1/2} -  \bI_r\right)\bS^{1/2} \bA\\
				&\le \|\bV_{(r)}-\bL\bR\|_\op {1\over \sigma}\|\bD_{(r)}^{1/2}\|_F + {1\over \sigma}\|\bD_{(r)}^{1/2}\|_\op \|\wc\bQ - \bR^\T \bA\|_F\\
				&\quad + \|\bS\|_F \max_{1\le k\le r}  \left|1-\sqrt{\lambda_k}\right|
			\end{align*}
			Note that \eqref{bd_diff_A_op} implies
			\[
			{1\over \sigma}\|\bD_{(r)}^{1/2}\|_\op ~  \le  {1\over \sigma_r(\bLambda)}\left(1 + \max_{1\le k\le r}  \left|1-\sqrt{\lambda_k}\right| \right) \lesssim 1.
			\]
			By further invoking \eqref{bd_Vr_AR} and \cref{thm_RA_rand} (or \cref{thm_RA_mrs}) together with 
			$\|\bS\|_F \le \sqrt{r}$ and $\|\bD_{(r)}^{1/2}\|_F \le \|\bD_{(r)}^{1/2}\|_\op \sqrt{r}$, we conclude that
			\begin{align*}
				\|\wt\bLambda -  \bLambda\|_F  \lesssim  (\omega_n + \epsilon^2)\sqrt{r}.
			\end{align*}
			%holds with probability at least $1-2n^{-1}-r^2\delta -re^{-2r}$. 
			Combining the above display with \eqref{bd_diff_A_op} completes the proof. 
		\end{proof}

		\subsection{Proof of \cref{thm_lowerbounds}: the minimax lower bounds of estimating the loading matrix}\label{app_proof_thm_lowerbounds}

		\begin{proof}
			By taking $\epsilon^2= 0$, the lower bound $\sqrt{r^2 / n}$ is proved in \cite{auddy2023}. It remains to prove the other lower bound $\sqrt{\epsilon^2p / n}$.  
			
			Since the result is invariant to $\sigma^2$, we assume $\sigma^2 = 1$ without loss of generality.  We fix any $\epsilon^2 \ge 0$ below and write $\P_{\bLambda}$  for $\P_{\bLambda,\epsilon}$. 
			Let $c'>0$ be some fixed, absolute constant. Write
			$
			\bD = \diag (d_1,\ldots, d_r)
			$
			with 
			$
			c' \le d_j \le 1
			$
			for all $j\in [r]$. We fix the $\ell_2$-norm of columns in $\bLambda$ by considering the following subspace of $\cA$:
			\[
			\bar\cA := \bar\cA (\bD) = \left\{
			\bLambda \in \cA:  \|\Lambda_i\|_2 = d_i,  \text{ for all }i\in [r]
			\right\}.
			\]
			For any sequence $\beta_n>0$ to be specified later, we have  
			\begin{align*}
				&\inf_{\bar\bLambda \in \bar\cA}\sup_{\bLambda\in \bar\cA}
				\P_{\bLambda}\left\{
				\|\bar\bLambda - \bLambda\|_F \ge {2 \beta_n}
				\right\}\\
				& ~ \le  ~ \inf_{\wh\bLambda}\inf_{\bar\bLambda \in \bar\cA}\sup_{\bLambda\in \bar\cA}
				\P_{\bLambda}\left\{
				\|\bar\bLambda - \wh \bLambda\|_F + \|\wh\bLambda -  \bLambda\|_F \ge {2 \beta_n}
				\right\}\\
				& ~ \le  ~ \inf_{\wh\bLambda}\sup_{\bLambda\in \bar\cA}
				\P_{\bLambda}\left\{
				\|\wh\bLambda - \bLambda\|_F \ge  \beta_n  
				\right\} &&\text{by }\bar\cA\subseteq\cA\\
				& ~ \le  ~ \inf_{\wh\bLambda}\sup_{\bLambda\in \cA}
				\P_{\bLambda}\left\{
				\|\wh\bLambda - \bLambda\|_F \ge \beta_n  
				\right\} &&\text{by $\bar\cA\subseteq\cA$ again}.
			\end{align*}
			It thus suffices to bound from below the first line in the above display. To this end, define the following parameter space 
			\[
			{\mc L} := \left\{
			\bL \in \R^{p\times r}: \rank(\bL) = r,~  \sigma_r(\bL) \ge c, ~ \|L_i\|_2 = 1,  \text{ for all }i\in [r]
			\right\}.
			\]
			Notice that 
			$
			\mc L =  \{\bLambda\bD^{-1}: \bLambda \in \bar\cA\}.
			$
			We find that 
			\begin{align}\label{lb_Lambda_L}\nonumber
				\inf_{\bar\bLambda \in \bar\cA}\sup_{\bLambda\in \bar\cA}
				\P_{\bLambda}\left\{
				\|\bar\bLambda - \bLambda\|_F \ge 2\beta_n
				\right\} & =   \inf_{\bar\bLambda \in \bar\cA}\sup_{\bLambda\in \bar\cA}
				\P_{\bLambda}\left\{
				\left\|\left(\bar\bLambda \bD^{-1}- \bLambda \bD^{-1}\right)\bD \right\|_F \ge 2 \beta_n
				\right\}\\\nonumber
				&\ge \inf_{\bar\bLambda \in \bar\cA}\sup_{\bLambda\in \bar\cA}
				\P_{\bLambda}\left\{
				\left\|\bar\bLambda \bD^{-1}- \bLambda \bD^{-1}\right\|_F \ge {2\beta_n \over   \sigma_r(\bD)}
				\right\}\\
				&= \inf_{\wh\bL \in \mc L}\sup_{\bL \in \mc L}
				\P_{\bL}\left\{
				\|\wh\bL- \bL\|_F \ge {2\beta_n \over   \sigma_r(\bD)}
				\right\}.
			\end{align}
			The parameter space $\mc L$ is considered in \cite{Jung2016}\footnote{Although \cite{Jung2016} did not put $\sigma_r(\bL)\ge c$ in the parameter space, the set of hypotheses $\bD_0,\bD_1,\ldots,\bD_L$ constructed in their proof does satisfy this constraint, see, for instance, display (47) and their upper bound for $\|\bD_l - \bD_0\|_F^2$ in page 1509}. By adopting to our notation,  invoking their Theorem \rom{3.1} with $\mb \bSigma_x = \bD \Cov(Z) \bD  = \sigma^2\bD^2$, $p = r$, $m = p$, $N = n$ and $\sigma^2 = \epsilon^2\sigma^2$ gives
			\[
			\inf_{\wh\bL \in \mc L~}\sup_{\bL \in \mc L}\EE \left[\|\wh\bL-\bL\|_F^2 \right] ~ \gtrsim  ~ 
			{\epsilon^2 pr \over n \sigma_1^2(\bD)}.
			\]
			In fact, by inspecting their proofs (displays (78) -- (79) in conjunction with  display (2.9) in \cite{tsybakov2008introduction}), we also have 
			\[
			\inf_{\wh\bL \in \mc L}\sup_{\bL \in \mc L}\P \left\{\|\wh\bL-\bL\|_F^2  ~ \ge   ~ 
			c {\epsilon^2 pr \over n \sigma_1^2(\bD)}\right\} \ge c'.
			\]
			for some absolute constants $c>0$ and $c'\in (0,1)$. In view of \eqref{lb_Lambda_L}, we complete the proof by taking $\beta_n = \epsilon^2pr / (2n\sigma_1^2(\bD))$ and using $c\le \sigma_r(\bD) \le \sigma_1(\bD) \le 1$.
		\end{proof}

		\subsection{Proof of \cref{thm_PCs_hat}: the decomposition of improved low-dimensional representation}\label{app_proof_thm_PCs_hat}
		
		\begin{proof}
			We work on the event $\cE_X$ in \eqref{def_event_X}. Recall that $d_1\ge \cdots \ge d_p$ are eigenvalues of $\wh\bSigma_X$ and $\epsilon^2\sigma^2$ is the $j$th largest eigenvalue of $\bSigma_X$  for $j > r$. 
			By Weyl's inequality, we have 
			\[
			\left|d_j - \epsilon^2\sigma^2\right| \le \|\wh\bSigma_X - \bSigma_X\|_\op \lesssim \sigma^2 \omega_n,\quad \text{for all }j>r.
			\]
			As a result, we have 
			\begin{equation}\label{rate_epsilon_sigma}
				\left|\wh{\epsilon^2\sigma^2} - \epsilon^2\sigma^2\right| \le \max_{j>r} \left|d_j - \epsilon^2\sigma^2\right| \lesssim \sigma^2\omega_n.   
			\end{equation}
			The rest of the proof largely follows from that of \cref{thm_PCs} with slight modifications. First note that, due to $\bSigma_E =\bI_p$, \eqref{bd_Vr_AR} gets improved to 
			\[
			\|\bV_{(r)} - \bL \bR\|_\op   
			~ \lesssim {\omega_n \over \sigma_r^2(\bLambda)}.
			\]
			Second, display \eqref{eq_Dr} yields
			\begin{align*}
				\wh \bD_{(r)}  
				& = \bD_{(r)} - \wh\epsilon^2\bI_r \\  
				&=   \sigma^2\bR^\T \bS^2 \bR +  \sigma^2 (\bV_{(r)} - \bL \bR)^\T \bL\bS^2\bL^\T (\bV_{(r)} - \bL \bR)  +  \sigma^2(\bV_{(r)} - \bL \bR)^\T \bL\bS^2 \bR\\
				&\quad +   \sigma^2\bR^\T \bS^2 \bR \bL^\T (\bV_{(r)} - \bL \bR)+ \bV_{(r)}^\T \left(\wh\bSigma_X -\bSigma_X\right)\bV_{(r)}  + (\sigma^2\epsilon^2 - \wh{\epsilon^2\sigma^2})\bI_r
			\end{align*}
			so that 
			\begin{equation}\label{eigen_lambda_hat}
				{1\over \sigma^2}\bS^{-1}\bR\wh \bD_{(r)} \bR^\T \bS^{-1} = \bI_r + \wh {\rm\bf Rem}
			\end{equation} 
			where, by using the arguments of proving \eqref{bd_rem}, we have 
			\begin{equation}\label{bd_rem_hat}
				\|\wh {\rm\bf Rem}\|_\op ~ \lesssim ~ {\omega_n \over \sigma_r^3(\bLambda)} + {|\sigma^2\epsilon^2 - \wh{\epsilon^2\sigma^2}|\over \sigma^2} \lesssim \omega_n.
			\end{equation}
			Let $\wh\lambda_1\ge \cdots \ge  \wh\lambda_r$ denote the eigenvalues of the left hand side of \eqref{eigen_lambda_hat}. Then for any $k\in [r]$,
			\[
			|1-\wh\lambda_k|  ~ \lesssim \omega_n,\qquad 
			\wh\lambda_k \ge 1 - \omega_n \ge c.
			\]
			Next, by writing  
			\[
			\wh\bM = \sigma\bS^{1/2}\bR\wh\bD_{(r)}^{-1/2}\bR^\T \bS^{1/2},
			\]
			we similarly have  
			\begin{align*}
				\wh\bU_{(r)} &= \bR^\T(\bX'+\wh \bDelta\bX')
			\end{align*}
			with $\bX' = \bA\bZ/\sigma+\bN$ and 
			\begin{equation}\label{eq_Delta_DY_hat}
				\begin{split}
					\wh \bDelta &=  \bS^{-1}\bR(\bV_{(r)}  - \bL\bR)^\T\bL\bS  + \bS^{-1/2} \left(\wh\bM - \bI_r\right)\bS^{1/2} \\ 
					&\qquad + 
					\bS^{-1/2}\left(\wh\bM - \bI_r\right) \bS^{-1/2}\bR(\bV_{(r)} - \bL \bR)^\T \bL\bS.
				\end{split}
			\end{equation}
			To bound $\|\wh\bDelta\|_\op$, similar arguments of bounding $\|\bDelta\|_\op$ together with   
			\begin{align*}
				\|\wh \bM  - \bI_r\|_\op & \le \max_{k\in[r]} {|1-\wh\lambda_k|\over \wh\lambda_k} \lesssim \omega_n 
			\end{align*}
			yield 
			$\|\wh \bDelta\|_\op  \lesssim \omega_n$, completing the proof. 
		\end{proof}

		\subsection{Proof of \cref{thm_RA_hat}: guarantees of the improved deflation varimax rotation}\label{app_proof_thm_RA_hat}
		
		\begin{proof}
			We first link any iterate  $\{\wh\bq^{(\ell)}\}_{\ell\ge 0}$ in \eqref{iter_PGD_corrected} to any iterate  $\{\wt\bq^{(\ell)}\}_{\ell\ge 0}$ from
			\begin{equation}\label{iter_PGD_corrected_td}
				\wt  Q_k^{(\ell+1)}= P_{\Sp^r}\left(\wt  Q_k^{(\ell)} -\gamma~ \wh\grad  F(\wt  Q_k^{(\ell)}; \wt\bY)\right),\quad \text{for }\ \ell = 0,1,2,\ldots 
			\end{equation}
			where, for any $\bq\in\Sp^r$, 
			\begin{equation*} 
				\wh\grad F(\bq; \wt \bY) := \grad F(\bq; \wt \bY) + \left(
				1 + \bq^\T \bR \wh\bSigma_N \bR^\T \bq 
				\right)\bP_{\bq}^\perp\bR \wh\bSigma_N\bR^\T \bq.
			\end{equation*} 
            Write $$\bar \bSigma_N:=\bR \wh\bSigma_N\bR^\T$$ for simplicity.
			Suppose $\wh\bq^{(\ell)} = \bR^\T \wt \bq^{(\ell)}$ for any $\ell\ge 0$. Recall that 
			\begin{align*}
				\wh\bq^{(\ell+1)}  
				&= {1\over \sqrt{1 + \gamma^2 \|\wh\grad  F(\wh\bq^{(\ell)};\wh\bY)\|_2^2}} \left(\wh\bq^{(\ell)}  - \gamma~ \wh\grad  F(\wh\bq^{(\ell)};\wh\bY)\right),\\
				\wt\bq^{(\ell+1)}  
				&= {1\over \sqrt{1 + \gamma^2 \|\wh\grad F(\wt\bq^{(\ell)};\wt\bY)\|_2^2}} \left(\wt\bq^{(\ell)}  - \gamma~ \wh\grad  F(\wt \bq^{(\ell)};\wt\bY)\right)
			\end{align*}
			In the proof of \cref{lem_connection}, we have argued that 
			\[
			\grad  F(\wh\bq^{(\ell)}; \wh\bY) = \bR^\T \grad   F(\wt\bq^{(\ell)};\wt\bY)
			\]
			so that 
			\begin{align*}
				&\wh \grad  F(\wh\bq^{(\ell)}; \wh\bY)\\ &= \bR^\T \grad F(\wt\bq^{(\ell)};\wt\bY)+ \bP_{\wh\bq^{(\ell)}}^\perp \left(
				1 + \wh\bq^{(\ell)\T} \wh\bSigma_N \wh\bq^{(\ell)}
				\right)\wh\bSigma_N\wh\bq^{(\ell)}\\
				&= \bR^\T \grad F(\wt\bq^{(\ell)};\wt\bY)+ \bP_{\wh\bq^{(\ell)}}^\perp \left(
				1 + \wt \bq^{(\ell)\T} \bar \bSigma_N\wt\bq^{(\ell)}
				\right)\wh\bSigma_N\bR^\T \wt\bq^{(\ell)} &&\text{by $\wh\bq^{(\ell)} = \bR^\T \wt \bq^{(\ell)}$}\\
				&= \bR^\T \grad F(\wt\bq^{(\ell)};\wt\bY) + \bR^\T\bP_{\wt\bq^{(\ell)}}^\perp \left(
				1 + \wt \bq^{(\ell)\T} \bar \bSigma_N\wt\bq^{(\ell)}
				\right)\bar \bSigma_N\wt\bq^{(\ell)} &&\text{by \eqref{ident_P_perp}}\\
				&= \bR^\T \wh\grad F(\wt\bq^{(\ell)};\wt\bY).
			\end{align*} 
			This implies that $\wh\bq^{(\ell+1)} = \bR^\T \wt \bq^{(\ell+1)}$. Therefore, it suffices to analyze the solution obtained from \eqref{iter_PGD_corrected_td}. By inspecting the proof of \cref{app_thm_critical}, we see that the same arguments can be used provided that, for any $\bq\in\Sp^r$,
			\begin{equation}\label{def_event_tilde_bar} 
				\|\wh \grad F(\bq; \wt \bY) - \grad h(\bq)\|_2    ~ \lesssim ~\eta_n+\alpha_n +   \|\grad F(\bq; \bY) - \grad f(\bq)\|_2.
		\end{equation} 
			Furthermore, the same arguments of proving Theorems \ref{thm_one_col} \& \ref{thm_RA} can be used with  $ \grad F(\bq; \wt \bY)$ and $ \grad  F(\bq)$ replaced by  $ \wh\grad F(\bq; \wt \bY)$ and $ \wh\grad  F(\bq)$, respectively. 
			
			It remains to verify \eqref{def_event_tilde_bar}. Pick any $\bq\in\Sp^r$. Recall $\grad f$ and $\grad h$ from \eqref{eq_grad_f} -- \eqref{eq_grad_h}. Observe that 
			\begin{align*}
				&\|\wh \grad F(\bq; \wt \bY) - \grad h(\bq)\|_2\\
				&\le \|\grad F(\bq; \wt \bY) - \grad f(\bq)\|_2 +\|\bP_{\bq}^\perp \left(
				1 + \bq^\T \bar \bSigma_N \bq 
				\right)\bar \bSigma_N\bq + \grad f(\bq)- \grad h(\bq)\|_2\\
				&\le  \|\grad F(\bq; \wt \bY) - \grad f(\bq)\|_2  +\left\|\bP_{\bq}^\perp \left(
				1 + \bq^\T \bar \bSigma_N \bq 
				\right)\bar \bSigma_N \bq - \left( 1 +  \norm{\bq}{\bSigma_N}^2\right)   \bP_{\bq}^\perp \bSigma_N \bq \right\|_2.
			\end{align*}
			In the proof of \cref{lem_bd_grad_F_tilde} we have shown that, on the event $\cE_\Omega(\eta_n)$,
			\[
		 		\|\grad F(\bq; \wt \bY) - \grad f(\bq)\|_2 \lesssim \eta_n +  \|\grad F(\bq;\bY) - \grad f(\bq)\|_2.
			\] 
			It remains to bound the second term from above by 
			\begin{align*}
				&\left\| \left(
				1 + \bq^\T \bar \bSigma_N \bq 
				\right)\bP_{\bq}^\perp \bar \bSigma_N  \bq - \left( 1 + \bq^\T  \bSigma_N \bq \right) \bP_{\bq}^\perp  \bSigma_N  \bq \right\|_2\\
				&\le \left|\bq^\T \left(\bar \bSigma_N   - \bSigma_N\right)\bq\right| 
				\left\|\bP_{\bq}^\perp \bar \bSigma_N \bq\right\|_2  + \left( 1 + \bq^\T  \bSigma_N \bq \right) \left\|  \bP_{\bq}^\perp (\bar \bSigma_N  -   \bSigma_N) \bq \right\|_2\\
				&\le \left\|
				\wh\bSigma_N  - \bar \bSigma_N
				\right\|_\op \|\wh\bSigma_N\|_\op  + 2 \left\|  \wh\bSigma  - \bar \bSigma_N\right\|_\op\\
				&\le 4 \alpha_n
			\end{align*}
			where in the last step we use  the event $\cE_{\bSigma}(\alpha_n)$ together with
			\[
			\|\wh\bSigma_N\|_\op \le \alpha_n +  \|\bSigma_N\|_\op \le 2.
			\]  
			The proof is complete.
		\end{proof}

		\subsection{Proof of \cref{thm_A_corr}: improved estimation of the loading matrix}\label{app_proof_thm_A_corr}
		
		\begin{proof}
			In view of \cref{thm_RA_hat}, it suffices to show 
			$
				\|\wh\bSigma_N - \bR^\T  \bSigma_N \bR
				\|_\op  \lesssim \omega_n 
			$
			 on the event $\cE_X$ given by \eqref{def_event_X}.
			Start with the decomposition
			\begin{align*}
				\wh\bSigma_N - \bR^\T  \bSigma_N \bR & = \wh{\epsilon^2\sigma^2} \wh\bD_{(r)}^{-1} -  \bR^\T\epsilon^2 \bS^{-2}\bR\\
				& = \left(\wh{\epsilon^2\sigma^2}- \epsilon^2\sigma^2\right)\wh\bD_{(r)}^{-1} + \epsilon^2 \left(
				\sigma^2\wh\bD_{(r)}^{-1} - \bR^\T \bS^{-2}\bR
				\right)\\
				& = {1\over \sigma^2}\left(\wh{\epsilon^2\sigma^2}- \epsilon^2\sigma^2\right)\sigma^2\wh\bD_{(r)}^{-1} + \epsilon^2\bR^\T \bS^{-1}\left(
				\sigma^2\bS \bR \wh\bD_{(r)}^{-1} \bR^\T \bS - \bI_r
				\right)\bS^{-1} \bR.
			\end{align*} 
			Recall that \eqref{eigen_lambda_hat} and \eqref{bd_rem_hat} give
			\[
			{\sigma_r(\wh\bD_{(r)})\over \sigma^2} \ge 1 - \|\wh {\rm\bf Rem}\|_\op \gtrsim 1- \omega_n\ge c.
			\]
			Using \eqref{rate_epsilon_sigma}, \eqref{eigen_lambda_hat} and \eqref{bd_rem_hat}  yields 
			$\|\wh\bSigma_N - \bR^\T  \bSigma_N \bR
			\|_\op  \lesssim \omega_n$, completing the proof. 
%			Next, by using similar arguments of proving \cref{thm_UA}, the rate of $\|\wc \bQ - \bR^\T \bA\mb P\|_{F}$ follows by invoking  \cref{thm_RA_hat} with
%			\[
%			\wh\bY = \wh\bU_{(r)}, \quad \bA = \bA,\quad \bZ = \bZ', \quad \bN = \bE', \quad \bOmega = \bDelta( \bA\bZ' + \bE')
%			\]
%			as well as 
%			$
%			\epsilon^2\bSigma = \epsilon^2 \bLambda^{-2}$,
%			$\wh{\epsilon^2\bSigma} = \wh{\epsilon^2\sigma^2} \wh\bD_{(r)}^{-1}$ and  
%			$\eta_n \asymp \alpha_n \asymp \omega_n$.
%			
%			Finally, the second claim can be derived by repeating the arguments of proving \cref{thm_A_general} together with the proof of \cref{thm_PCs_hat}.
		\end{proof}

		\section{Concentration inequalities related with the $\ell_4$ minimization}
		
		Recall from \eqref{model_Y_hat} that $\bY = \bA\bZ+\bN$ with $\bA$, $\bZ$ and $\bN$ satisfying the specifications therein. Columns of $\bY$, $\bZ$ and $\bN$ are i.i.d. copies of the random vectors $Y, Z$ and $N$, respectively. 
		Throughout this section, we write
		\begin{equation}\label{def_Sigma_Y}
			\bSigma_Y := \Cov(Y) = \Cov(Z) + \Cov(N) =  \bI_r + \bSigma_N.
		\end{equation}

		\subsection{A calculation of the 4th order moments of columns in $\bY$}
		
			For  any fixed $\bM \in \R^{r\times r}$, the following lemma computes the expectation of $Y^\T \bM Y  Y Y^\T$ under \eqref{model_Y_hat}. In particular, it implies $f(q) = \EE[F(q)]$ with $f$ and $F$ given in \eqref{eq_f} and \eqref{obj_L4_Y}, respectively. 
		
			\begin{lemma}\label{lem:obj_exp} 
			Grant model \eqref{model_Y_hat} with $\bSigma_Y$ given by  \eqref{def_Sigma_Y}. For any fixed matrix $\bM \in \R^{r\times r}$, we have that 
			\begin{align*}
				\EE\left[ Y^\T \bM  Y   ~  Y  Y^\T   \right] &= 3\kappa  \sum_{i=1}^r  A_i^\T \bM  A_i  A_i  A_i^\T    +    \tr(\bM \bSigma_Y)  \bSigma_Y +     \bSigma_Y    (\bM + \bM^\T)\bSigma_Y.
			\end{align*}
			Furthermore, for any fixed $\bq\in \Sp^r$, we have $f(\bq) = \EE[F(\bq;\bY)]$.
			%If \cref{ass_A_orth} holds additionally, then $\EE[F( q)]$ is maximized over $\bq\in \Sp^p$ by any one column of $\bA$. 
		\end{lemma}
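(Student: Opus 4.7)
The plan is to expand $Y = \bA Z + N$ directly and compute the fourth-order moment termwise, exploiting independence of $Z$ and $N$ together with the assumed moment structure. Since $Y^\T \bM Y = Z^\T \bA^\T \bM \bA Z + Z^\T \bA^\T \bM N + N^\T \bM \bA Z + N^\T \bM N$ and $Y Y^\T = \bA Z Z^\T \bA^\T + \bA Z N^\T + N Z^\T \bA^\T + N N^\T$, the product $(Y^\T \bM Y)(Y Y^\T)$ produces sixteen terms. By independence of $Z$ and $N$ and the fact that both have zero mean, every term containing an odd total power of $Z$-entries or an odd total power of $N$-entries vanishes in expectation, leaving exactly eight surviving terms: the purely quartic pieces in $Z$ and in $N$, the four mixed products of the form $(\text{quadratic in }Z)\times(\text{quadratic in }N)$, and the bilinear--bilinear cross terms.

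The purely $Z$ piece will be handled using \cref{ass_Z}: since the $Z_j$ are independent with $\EE[Z_j^2]=1$ and $\EE[Z_j^4] = 3(1+\kappa)$, the standard identity
\[
\EE[Z_i Z_j Z_k Z_l] = \delta_{ij}\delta_{kl}+\delta_{ik}\delta_{jl}+\delta_{il}\delta_{jk} + 3\kappa ~ \delta_{ijkl}
\]
applied to $\EE[Z^\T (\bA^\T \bM \bA) Z ~ \bA Z Z^\T \bA^\T]$ will yield, after using $\bA\bA^\T = \bI_r$, a contribution of $\tr(\bM)\bI_r + \bM + \bM^\T + 3\kappa\sum_i (A_i^\T \bM A_i) A_i A_i^\T$, which produces the $3\kappa$-term as well as the ``diagonal'' portion of the remaining expressions. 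The purely $N$ piece will follow by Isserlis/Wick for the centered Gaussian $N\sim\cN_r(0,\bSigma_N)$, giving $\tr(\bM\bSigma_N)\bSigma_N + \bSigma_N \bM \bSigma_N + \bSigma_N \bM^\T \bSigma_N$. The two mixed quadratic--quadratic terms factor by independence to $\tr(\bM)\bSigma_N$ and $\tr(\bM\bSigma_N)\bI_r$, and the four bilinear--bilinear terms evaluate, again via $\bA\bA^\T = \bI_r$, to $\bM \bSigma_N + \bSigma_N\bM^\T + \bM^\T\bSigma_N + \bSigma_N \bM$.

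Adding these eight contributions and collecting by grouping $\bI_r$ with $\bSigma_N$ into $\bSigma_Y = \bI_r + \bSigma_N$, the scalar coefficients assemble into $\tr(\bM\bSigma_Y)\bSigma_Y$ (from the four trace pieces $\tr(\bM)\bI_r, \tr(\bM)\bSigma_N, \tr(\bM\bSigma_N)\bI_r, \tr(\bM\bSigma_N)\bSigma_N$), and the remaining eight matrix pieces telescope to $\bSigma_Y(\bM+\bM^\T)\bSigma_Y$, yielding the stated identity. The only real obstacle is the algebraic bookkeeping in this final recombination step; nothing probabilistic is delicate here.

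For the second claim, I will apply the identity with the choice $\bM = qq^\T$ and then compute $q^\T \EE[Y^\T \bM Y~YY^\T] q = \EE[(q^\T Y)^4]$. Under this substitution $\tr(\bM)=1$, $\tr(\bM\bSigma_N) = \|q\|_{\bSigma_N}^2$, $A_i^\T \bM A_i = (A_i^\T q)^2$, and $\bM+\bM^\T = 2qq^\T$; sandwiching between $q^\T$ and $q$ collapses the three matrix terms to $3\kappa\|\bA^\T q\|_4^4$, $(q^\T \bSigma_Y q)^2$, and $2(q^\T \bSigma_Y q)^2$ respectively. Using $q^\T \bSigma_Y q = 1 + \|q\|_{\bSigma_N}^2$ and dividing by $-12$ as in the definition $F(q;\bY) = -\frac{1}{12n}\|\bY^\T q\|_4^4$, the expression \eqref{eq_f} for $f(q)$ is recovered.
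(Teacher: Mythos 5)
Your proposal is correct and follows essentially the same route as the paper's proof: expand $Y=\bA Z+N$, discard terms with odd powers of $Z$ or $N$ by independence and zero means, compute the quartic $Z$-piece via the kurtosis moment identity (using $\bA\bA^\T=\bI_r$), the quartic $N$-piece via Wick, and the mixed and bilinear cross terms, then regroup around $\bSigma_Y=\bI_r+\bSigma_N$; your verification of $f(\bq)=\EE[F(\bq;\bY)]$ by taking $\bM=\bq\bq^\T$ and sandwiching with $\bq$ is exactly the intended specialization. The only cosmetic difference is that the paper carries out the bookkeeping through the bilinear form $u^\T(\cdot)v$ rather than directly in matrix form.
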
      
		\begin{proof} 
			We only need to prove the first statement as the second one follows immediately. %by the i.i.d. assumption over $t\in [n]$ and taking $ u =  v=\bq$, $\bM = \bq\bq^\T$. 
			Pick any $u,v \in \Sp^r$.
			By $Y = \bA   Z +  N$, the independence between $  Z$ and $ N$ as well as $\EE[  Z] =  0 =\EE[ N]$, we find that
			\begin{align*}
				\EE\left[ Y^\T \bM  Y   u^\T  Y  Y^\T  v \right] &= \rI + \rII + \rIII + \rIV 
			\end{align*}
			where 
			\begin{align*}
				& \rI  = \EE\left[  Z^\T \bA^\T \bM \bA   Z   u^\T \bA  Z   Z^\T \bA^\T  v \right]+\EE\left[ N^\T \bM  N   u^\T  N  N^\T  v \right]\\
				&\rII = \EE[  Z^\T \bA^\T \bM \bA   Z  u^\T  N N^\T  v] + \EE[ N^\T \bM N  u^\T \bA   Z   Z^\T \bA^\T  v]\\
				&\rIII =  \EE[  Z^\T \bA^\T \bM  N  u^\T \bA   Z  N^\T  v] + \EE[ N^\T \bM \bA   Z  u^\T   \bA   Z  N^\T  v]\\
				&\rIV =   \EE[  Z^\T \bA^\T \bM  N  u^\T  N   Z^\T \bA^\T  v] + \EE[ N^\T \bM \bA   Z  u^\T    N   Z Z^\T \bA^\T  v].
			\end{align*}
			We proceed to calculate each term separately.
			
			For $\rII$, using $\bA \in \bbO_{r\times r}$, $\EE[  Z  Z^\T]= \bI_r$ and $\EE[ N N^\T] =   \bSigma_N$ from \cref{ass_Z} and \cref{ass_E_general} gives that 
			\[
			\rII =   u^\T \bSigma_N  v  ~ \tr(\bM)+    u^\T  v ~ \tr(\bM\bSigma_N).
			\]
			The terms $\rIII$ and $\rIV$ can be calculated similarly. Indeed,  since
			\begin{align*}
				\EE[  Z^\T \bA^\T \bM  N  u^\T \bA   Z  N^\T  v]  &= \EE[  Z^\T \bA^\T \bM  N  N^\T  v   u^\T \bA   Z] = 
				u^\T \bM  \bSigma_N  v,
			\end{align*}
			using analogous arguments gives 
			\[
			\rIII + \rIV =  
			u^\T \left( \bM  \bSigma_N + \bSigma_N \bM + \bM^\T \bSigma_N + \bSigma_N \bM^\T\right)  v.
			\]
			Finally, to calculate $\rI$, by writing $\bM' = \bA^\T \bM \bA$, $ u' = \bA^\T  u$ and $ v' = \bA^\T  v$, we have 
			\begin{align*}
				& \EE\left[  Z^\T \bA^\T \bM \bA   Z   u^\T \bA  Z   Z^\T \bA^\T  v \right]\\
				&= \sum_{i=1}^r M_{ii}' u_i' v_i' \EE[Z_i^4] + \sum_{i\ne j} M_{ii}' u_j' v_j' \EE[Z_i^2Z_j^2]  + \sum_{i\ne j} M_{ij}' u_i' v_j' \EE[Z_i^2Z_j^2] + \sum_{i\ne j} M_{ij}' u_j' v_i' \EE[Z_i^2Z_j^2]\\
				&= 3\kappa  \sum_{i=1}^r  A_i^\T \bM  A_i  u^\T  A_i  A_i^\T  v + \tr(\bM)  u^\T  v +  u^\T \bM  v +  v^\T \bM  u.
			\end{align*}
			In the last step, we used the excess kurtosis of $  Z_{it}$ in \cref{ass_Z}. 
			By repeating the same arguments and by writing $\bM' = \bSigma_N^{1/2} \bM \bSigma_N^{1/2}$, $ u' = \bSigma_N^{1/2} u$ and $ v' = \bSigma_N^{1/2}  v$, we have 
			\begin{align*}
				\EE\left[ N^\T \bM  N   u^\T  N  N^\T  v \right] &=    \tr(\bM \bSigma_N )  u^\T \bSigma_N  v +  u^\T \bSigma_N  \bM \bSigma_N  v +  v^\T \bSigma_N  \bM \bSigma_N  u,
			\end{align*}
			implying that 
			\begin{align*}
				\rI &= 3\kappa  \sum_{i=1}^r  A_i^\T \bM  A_i  u^\T  A_i  A_i^\T  v + \tr(\bM)  u^\T  v +  u^\T \bM  v +  v^\T \bM  u\\
				&\quad +   \tr(\bM \bSigma_N )  u^\T \bSigma_N  v +  u^\T \bSigma_N  \bM \bSigma_N  v +  v^\T \bSigma_N  \bM \bSigma_N  u.
			\end{align*} 
			Combining the expressions of $\rI$, $\rII$, $\rIII$ and $\rIV$ yields
			\begin{align*}
				\EE\left[ Y^\T \bM  Y   u^\T  Y  Y^\T  v \right] &= 3\kappa  \sum_{i=1}^r  A_i^\T \bM  A_i  u^\T  A_i  A_i^\T  v + \tr(\bM)  u^\T  v +  u^\T \bM  v +  v^\T \bM  u\\
				&\quad +   \tr(\bM \bSigma_N )  u^\T \bSigma_N  v +  u^\T \bSigma_N  \bM \bSigma_N  v +  v^\T \bSigma_N  \bM \bSigma_N  u \\
				&\quad +     u^\T \bSigma_N  v  ~ \tr(\bM)+   u^\T  v ~ \tr(\bM\bSigma_N)\\
				&\quad +  
				u^\T \left( \bM  \bSigma_N + \bSigma_N \bM + \bM^\T \bSigma_N + \bSigma_N \bM^\T\right)  v
			\end{align*}
			thereby completing the proof after rearranging terms. 
		\end{proof}

		\subsection{Concentration inequalities of quantities related with the 4th moments of $Y$}\label{app_sec_concentration}
		
		In this section we state several important  concentration inequalities related with the $4$th moments of columns in $\bY = (Y_1, \ldots, Y_t)\in \R^{r\times n}$.  They in particular imply various controls of  the Riemannian gradient $\|\grad F(q) - \grad f(q)\|_2$ and the objective functions $|F(q) - f(q)|$.  Throughout this section, with $\gamma_Z$ defined in \cref{ass_Z}, let 
		\begin{equation}\label{def_sigma_Y}
			\sigma_Y^2 = \gamma_Z^2 +   \|\bSigma_N\|_\op
		\end{equation} 
		denote the sub-Gaussian constant of columns of $\bY$.

		The following lemma bounds from above $\|\sum_{t=1}^n[ (\bq^\T  Y_t)^3   Y_t  - \EE[(\bq^\T  Y_t)^3   Y_t  ]]\|_2$ for any fixed $\bq \in \Sp^r$, thereby providing a bound of $\norm{\grad F(\bq) - \grad f(\bq)}2$. 
		
		\begin{lemma}\label{lem_bd_grad_pointwise}
			Grant model \eqref{model_Y_hat}.   For any fixed $\bq \in \Sp^r$, with probability at least $1- n^{-C}$,  
			\begin{align*}
				{1\over    n}\left\|\sum_{t=1}^n\left[ (\bq^\T  Y_t)^3   Y_t  - \EE[(\bq^\T  Y_t)^3   Y_t  ]\right] \right\|_2 & \lesssim  \sqrt{r\log(n)\over n}.
			\end{align*}
			% If, additionally, \eqref{cond_n_r} holds, then 
			% \[
			% 	 \sup_{\bq\in \Sp^r} \norm{\grad F(\bq) - \grad f(\bq)}2 \lesssim   \sqrt{r\log(n)\over n} + \sqrt{\epsilon^2 p \log(np) \over n}.
			% \]
		\end{lemma}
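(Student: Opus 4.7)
The plan is to reduce this vector concentration to a scalar tail bound via an $\varepsilon$-net on the sphere, and then exploit the sub-Gaussian tails of $Y_t$ to control the quartic product $(\bq^\T Y_t)^3 (\bu^\T Y_t)$.

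Write $W_t := (\bq^\T Y_t)^3 Y_t - \EE[(\bq^\T Y_t)^3 Y_t]$, so the quantity of interest is $\| \sum_{t=1}^n W_t\|_2$. First, I would fix a standard $1/4$-net $\cN$ of $\Sp^r$ with $|\cN| \le 9^r$, and use the well-known bound $\|v\|_2 \le 2\max_{\bu \in \cN} |\bu^\T v|$. Thus it suffices to establish a scalar tail bound, uniformly over $\bu \in \cN$, for
\[
T_n(\bu) ~=~ \sum_{t=1}^n \Bigl[ (\bq^\T Y_t)^3 (\bu^\T Y_t) - \EE\bigl[(\bq^\T Y_t)^3 (\bu^\T Y_t)\bigr]\Bigr].
\]

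Next, because $Y_t = \bA Z_t + N_t$ with $\bA \in \bbO_{r\times r}$, $Z_t$ sub-Gaussian by \cref{ass_Z}, and $N_t \sim \cN_r(0,\bSigma_N)$ with $\|\bSigma_N\|_\op \le 1$, both $\bq^\T Y_t$ and $\bu^\T Y_t$ are sub-Gaussian with constant of order $\sigma_Y$ from \eqref{def_sigma_Y}. Using the sub-Gaussian moment identity $\EE[|\bq^\T Y_t|^{m}] \le (C\sigma_Y)^{m} m^{m/2}$ and Hölder's inequality with exponents $(4/3,4)$, the scalar random variable $X_t := (\bq^\T Y_t)^3 (\bu^\T Y_t)$ satisfies $\EE[|X_t|^p]^{1/p} \le C\sigma_Y^4 \, p^2$ for every $p\ge 1$. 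In other words, $X_t$ is sub-Weibull of index $1/2$ with $\psi_{1/2}$-norm bounded by $C\sigma_Y^4$.

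Then I would invoke a Bernstein-type inequality for i.i.d.\ sums of centered sub-Weibull variables (standard versions are in Vershynin's textbook and in the literature on generalized Bernstein bounds), which gives, for all $s\ge 1$,
\[
\P\Bigl\{\,|T_n(\bu)| \ge C_1 \sigma_Y^4 \sqrt{n s} + C_1 \sigma_Y^4\, s^2 \,\Bigr\} \le 2e^{-s}.
\]
Taking $s = C_2(r + \log n)$ with $C_2$ sufficiently large and union-bounding over $\bu\in\cN$ (which contributes at most $\log|\cN| \le r\log 9$ to the exponent) yields, with probability at least $1-n^{-C}$,
\[
\frac{1}{n}\Bigl\|\sum_{t=1}^n W_t\Bigr\|_2 ~\lesssim~ \sigma_Y^4\left(\sqrt{\frac{r+\log n}{n}} + \frac{(r+\log n)^2}{n}\right).
\]
In the regime of interest (consistent with the standing hypothesis $n\ge r(r+\log n)$ announced just after \eqref{def_event_grad_a}), the Gaussian term dominates and delivers the stated rate $\sqrt{r\log n/n}$.

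The main technical point is getting the sub-Weibull moment constant right; once the identity $\EE[|X_t|^p]^{1/p}\lesssim \sigma_Y^4 p^2$ is verified, the remainder of the argument is routine concentration plus an $\varepsilon$-net bound. The only nuisance is keeping track of the quartic factor of $\sigma_Y$, which stays bounded because $\|\bSigma_N\|_\op \le 1$ and $\gamma_Z$ is absolute.
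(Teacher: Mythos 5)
Your reduction to a scalar bound and the moment computation are fine: $(\bq^\T Y_t)^3(\bu^\T Y_t)$ is indeed sub-Weibull of index $1/2$ with $\psi_{1/2}$-norm $\lesssim \sigma_Y^4$, and the union bound over a $1/4$-net of size $9^r$ is handled correctly. The genuine gap is in the last step, where you discard the heavy-tail term. Because the net forces you to take $s \asymp r+\log n$ in the sub-Weibull Bernstein inequality, that inequality's second term contributes $\sigma_Y^4 (r+\log n)^2/n$ after dividing by $n$, and this is \emph{not} dominated by $\sqrt{r\log(n)/n}$ throughout the regime in which the lemma is used (nor does the lemma's statement carry any restriction that would make it so). Concretely, take $r \asymp \sqrt{n}/\log n$, which satisfies both $n \ge r(r+\log n)$ and the main-theorem condition $r^2\log n \le cn$: your bound is of order $(\log n)^{-2}$, while the claimed rate is $n^{-1/4}$, so your estimate is weaker by a polynomial factor in $n$. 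The requirement $(r+\log n)^4 \lesssim n\,r\log n$ that would rescue the argument is strictly stronger than the paper's standing assumptions once $r \gg (\log n)^2$.

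The paper's proof avoids this loss by not using a net at all: it truncates on the events $\{|\bq^\T Y_t|\le 2\sigma_Y\sqrt{\log n}\}$ and $\{\|Y_t\|_2\le \sigma_Y\sqrt{r+4\log n}\}$, bounds the bias of the truncated expectation by Cauchy--Schwarz (order $\sigma_Y^4/\sqrt n$), and applies a vector-valued Bernstein inequality (\cref{lem_bernstein_vector}) directly to the truncated, centered vector sum with variance proxy $\sigma^2 \asymp n r\sigma_Y^8$. Since the deviation level there only needs to scale like $\log n$ (not $r+\log n$), the bounded-term contribution is far smaller than your $s^2$ term, and the Gaussian-type term $\sqrt{r\log(n)/n}$ genuinely dominates. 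To repair your argument you would either have to combine the net with a truncation (and even then the bounded term $Us/n$ with $s\asymp r+\log n$ is only borderline acceptable), or, more cleanly, replace the net-plus-scalar-concentration step with a vector-valued concentration inequality as the paper does.
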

		\begin{proof}
			Fix $\bq \in \Sp^r$. For all $t\in [n]$, we know that 
			$ Y_t$ is sub-Gaussian  with sub-Gaussian constant $\sigma_Y$ given by \eqref{def_sigma_Y}.  
			Consider the event $ \cE = \bigcap_{t=1}^n \cE_t$ with $ \cE_t := \cE_{t1} \cap \cE_{t2}$ and 
			\begin{equation}\label{def_event_Y}
				\cE_{t1} = \left\{
				| \bq^\T  Y_t | \le 2\sigma_Y\sqrt{\log (n)}
				\right\}, \quad \cE_{t2} = \left\{\| Y_t\|_2 \le   \sigma_Y\sqrt{r+4\log(n)}\right\}.
			\end{equation}
			Invoking \cref{lem_quad} and taking the union bounds over $t\in [n]$ ensure that
			$\P(\cE) \ge 1 - 1/n$. (Note that this probability can be made $1-n^{-C}$ for some large constant $C>0$ with $\log(n)$ in \eqref{def_event_Y} replaced by $C\log(n)$). 
			
			In the rest of the proof, we work on  the event $\cE$ and bound from above  
			\[
			{1\over n}\left\|\sum_{t=1}^n\left[ (\bq^\T  Y_t)^3   Y_t 1\{\cE_t\}   - \EE[(\bq^\T  Y_t)^3   Y_t  1\{\cE_t\} ]\right] \right\|_2 + {1\over n}\left\|\sum_{t=1}^n \EE[(\bq^\T  Y_t)^3   Y_t  1\{\cE_t^c\}] \right\|_2.
			\]
			Note that the second term is no greater than 
			\begin{align}\label{bd_grad_pointwise_small_event}\nonumber
				\max_{t\in [n]}\sup_{ u \in \Sp^r} \EE\left[(\bq^\T  Y_t)^3   u^\T  Y_t  1\{\cE_t^c\}\right] &\le 
				\sup_{ u \in \Sp^r}\sqrt{\EE[(\bq^\T  Y_t)^6 ( u^\T  Y_t)^2]} ~  \max_{t\in [n]} \sqrt{\P(\cE_t^c)}\\
				&\lesssim  {\sigma_Y^4 \over \sqrt n}
			\end{align}
			where the last step invokes \cref{lem_moment} with $W = W_t$ and $\sigma_W = \sigma_Y$. We proceed to  apply the vector-valued Bernstein inequality in \cref{lem_bernstein_vector} to bound the first term. Note that 
			$$
			\| (\bq^\T  Y_t)^3   Y_t 1\{\cE_t\} \|_2 ~ \lesssim~  \sigma_Y^4 \sqrt{r + \log(n)}
			$$
			almost surely. Further notice that 
			\begin{align*}
				\EE \left[  \left\|(\bq^\T  Y_t)^3   Y_t  1\{\cE_t\}\right\|_2^2 \right] &\le \EE\left[  (\bq^\T  Y_t)^6  Y_t^\T  Y_t \right]\\
				&\le \sqrt{\EE[ (\bq^\T  Y_t)^{12}]}\sqrt{\EE[\| Y_t\|_2^4]}\\
				&\lesssim \sigma_Y^8  ~ r.
			\end{align*}
			The last step applies \cref{lem_moment} with $W = Y_t$, $\sigma_W = \sigma_Y$, $a = 12$,  $b = 0$ and uses
			\begin{equation}\label{bd_Y_four}
					\EE[\| Y_t\|_2^4] = \sum_{i,j = 1}^r \EE[Y_{it}^2 Y_{jt}^2] \lesssim \sigma_Y^4 r^2.
			\end{equation}
			Apply \cref{lem_bernstein_vector} with $\sigma^2 = n r \sigma_Y^8$, $U = \sigma_Y^4 \sqrt{r + \log(n)}$ and 
			$$
			t = {1\over 6}\sqrt{\log(n)}(U + \sqrt{U^2 + 36\sigma^2}) \asymp \sigma_Y^4 \sqrt{n r\log(n)}  
			$$ 
			to obtain 
			\[
			{1\over n}\left\|\sum_{t=1}^n\left[ (\bq^\T  Y_t)^3   Y_t 1\{\cE_t\}   - \EE[(\bq^\T  Y_t)^3   Y_t  1\{\cE_t\} ]\right] \right\|_2 \le \sigma_Y^4\sqrt{r\log(n)\over n}
			\]
			with probability at least 
			\[
			1 - 28\exp\left(
			- {t^2/2\over  n r \sigma_Y^8 + \sigma_Y^4 t \sqrt{r + \log(n)}  / 3}
			\right)  \ge 1 - n^{-C}.
			\]
			In conjunction with \eqref{bd_grad_pointwise_small_event}, the proof is complete. 
		\end{proof}

		 The lemma below bounds from above $\|\sum_{t=1}^n( \bq^\T  Y_t \bar \bq^\T Y_t  Y_t   Y_t^\T  - \EE[\bq^\T  Y_t \bar \bq^\T Y_t    Y_t  Y_t^\T ])\|_\op$ for any fixed $\bq, \bar \bq \in \Sp^r$. 
		 
		 \begin{lemma}\label{lem_bd_YY_pointwise}
		 	Grant model \eqref{model_Y_hat}. Assume $n \ge  r$.   For any fixed $\bq, \bar \bq\in \Sp^r$, with probability at least $1- n^{-C}$,  
		 	\begin{align*}
		 		{1\over    n}\left\|\sum_{t=1}^n\left(  \bq^\T  Y_t \bar \bq^\T  Y_t   Y_t Y_t^\T  - \EE\left[ \bq^\T  Y_t \bar \bq^\T  Y_t   Y_t Y_t^\T  \right] \right)\right\|_\op & \lesssim  \sqrt{r\log(n)\over n}.
		 	\end{align*}
		 	% If, additionally, \eqref{cond_n_r} holds, then 
		 	% \[
		 	% 	 \sup_{\bq\in \Sp^r} \norm{\grad F(\bq) - \grad f(\bq)}2 \lesssim   \sqrt{r\log(n)\over n} + \sqrt{\epsilon^2 p \log(np) \over n}.
		 	% \]
		 \end{lemma}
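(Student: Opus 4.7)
The plan is to mimic the structure of the proof of \cref{lem_bd_grad_pointwise}, replacing the vector Bernstein inequality with its matrix analogue. Fix $\bq,\bar\bq\in\Sp^r$ and set $W_t = (\bq^\T Y_t)(\bar\bq^\T Y_t)\, Y_t Y_t^\T$. Define the truncation event
\[
\cE_t = \left\{|\bq^\T Y_t|\le 2\sigma_Y\sqrt{\log n}\right\}\cap \left\{|\bar\bq^\T Y_t|\le 2\sigma_Y\sqrt{\log n}\right\}\cap \left\{\|Y_t\|_2\le \sigma_Y\sqrt{r+4\log n}\right\}
\]
for each $t\in[n]$. Using the sub-Gaussianity of $Y_t$ with constant $\sigma_Y$ from \eqref{def_sigma_Y} together with \cref{lem_quad} and a union bound, the intersection $\cE=\bigcap_t\cE_t$ holds with probability at least $1-n^{-C}$.

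Then I would split the target quantity as
\[
\frac{1}{n}\Bigl\|\sum_{t=1}^n \bigl(W_t\mathbf{1}\{\cE_t\} - \EE[W_t\mathbf{1}\{\cE_t\}]\bigr)\Bigr\|_\op + \frac{1}{n}\Bigl\|\sum_{t=1}^n\EE[W_t\mathbf{1}\{\cE_t^c\}]\Bigr\|_\op.
\]
The tail (second) term can be bounded by the variational characterization of the operator norm: for any $u\in\Sp^r$, Cauchy--Schwarz gives
\[
\EE\bigl[(\bq^\T Y_t)(\bar\bq^\T Y_t)(u^\T Y_t)^2\mathbf{1}\{\cE_t^c\}\bigr] \le \sqrt{\EE[(\bq^\T Y_t)^2(\bar\bq^\T Y_t)^2(u^\T Y_t)^4]}\sqrt{\P(\cE_t^c)}.
\]
The first factor is $\cO(\sigma_Y^4)$ via \cref{lem_moment}, and $\P(\cE_t^c) \lesssim n^{-C}$, so the residual is of order $\sigma_Y^4 n^{-C/2}$, which is absorbed into the desired rate.

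For the truncated centered sum, I would apply the matrix Bernstein inequality (e.g. \cite{tropp2012user}) to the zero-mean, symmetric $r\times r$ summands $\wt W_t := W_t\mathbf{1}\{\cE_t\}-\EE[W_t\mathbf{1}\{\cE_t\}]$. On $\cE_t$ one has the almost-sure bound
\[
\|W_t \mathbf{1}\{\cE_t\}\|_\op \le |\bq^\T Y_t|\,|\bar\bq^\T Y_t|\,\|Y_t\|_2^2 \;\lesssim\; \sigma_Y^4 \log(n)\,(r+\log n) =: U,
\]
and the predictable variance admits the bound
\[
\Bigl\|\sum_t \EE[\wt W_t^2]\Bigr\|_\op \le \sum_t \EE\bigl[(\bq^\T Y_t)^2(\bar\bq^\T Y_t)^2 \|Y_t\|_2^2\, Y_t Y_t^\T\bigr]_{\op} \lesssim n\,\sigma_Y^8\, r,
\]
using Cauchy--Schwarz on a fixed test direction together with \cref{lem_moment} and the moment bound $\EE\|Y_t\|_2^4\lesssim \sigma_Y^4 r^2$ from \eqref{bd_Y_four}. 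Choosing the deviation parameter $t\asymp \sigma_Y^4\sqrt{nr\log n}$ in matrix Bernstein yields
\[
\frac{1}{n}\Bigl\|\sum_{t=1}^n \wt W_t\Bigr\|_\op \;\lesssim\; \sigma_Y^4\sqrt{\frac{r\log n}{n}}
\]
with probability at least $1-n^{-C}$, since the failure bound is $2r\exp(-t^2/2 / (n r\sigma_Y^8 + Ut/3))\le n^{-C}$ provided $n\ge r$. Combining the two contributions completes the proof.

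The only mild subtlety is the variance computation: when the summand is a rank-one positive matrix times a random scalar, the ``intrinsic dimension'' $r$ enters only through $\EE\|Y_t\|_2^2\lesssim\sigma_Y^2 r$, which cancels one factor of $r$ in $\EE[\wt W_t^2]$, giving the clean $\sqrt{r\log n/n}$ rate rather than $\sqrt{r^2\log n/n}$. This is the main (but routine) step to verify carefully.
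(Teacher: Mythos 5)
Your proposal is correct and rests on the same core machinery as the paper's proof (truncation of the linear forms and of $\|Y_t\|_2$ at the $\sqrt{\log n}$ and $\sqrt{r+\log n}$ levels, a Cauchy--Schwarz-plus-moments bound for the contribution of the complement event, and the matrix Bernstein inequality with variance proxy $\sigma^2\asymp n\,r\,\sigma_Y^8$ and deviation $t\asymp\sigma_Y^4\sqrt{nr\log n}$), but you take a slightly different route to get there. The paper first symmetrizes: writing $H_t=Y_t\otimes Y_t$ and using part (3) of \cref{fact_k_prod}, it expresses the target as $\sup_{u}\sum_t(\bq\otimes u)^\T(H_tH_t^\T-\EE H_tH_t^\T)(\bar\bq\otimes u)$ and invokes a Cauchy--Schwarz step to reduce to the case $\bar\bq=\bq$, i.e.\ to $\|\sum_t((\bq^\T Y_t)^2Y_tY_t^\T-\EE[\cdot])\|_\op$, and only then truncates and applies \cref{lem_bernstein_mat}. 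You skip this Kronecker reduction and apply truncation plus matrix Bernstein directly to the mixed summand $(\bq^\T Y_t)(\bar\bq^\T Y_t)Y_tY_t^\T$, truncating both linear forms; this is arguably cleaner, and it sidesteps the paper's bilinear Cauchy--Schwarz step, which is delicate since the centered matrix $\sum_t(H_tH_t^\T-\EE H_tH_t^\T)$ is not positive semidefinite. The only price is your honest almost-sure bound $U\lesssim\sigma_Y^4\log(n)(r+\log n)$, which carries an extra $\log n$ relative to the paper's stated $U$; this does not change the $\sqrt{r\log(n)/n}$ rate, though (as with the paper's own constant tracking) making the tail probability $n^{-C}$ literally under the bare condition $n\ge r$ requires a word when $r$ is within a polylogarithmic factor of $n$ -- in the regimes where the lemma is actually invoked ($n\gtrsim r^2\log n$) this is immaterial.
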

		 	\begin{proof}
		 		Let $\otimes$ denote the Kronecker product and define 
		 		$$
		 		H_t :=  Y_t \otimes  Y_t \in \R^{r^2},\quad \text{for each $t\in [n]$.}
		 		$$   
		 		In  \cref{fact_k_prod} we review some basic properties of the Kronecker product. 
		 		By using the fact that 
		 		\begin{equation}\label{eq_kronecker}
		 			\bq^\T  Y_t  ~ \bar \bq^\T  Y_t = (\bq \otimes \bar\bq)^\T ( Y_t\otimes  Y_t) = (\bq \otimes \bar\bq)^\T H_t
		 		\end{equation}
	 			from part (3) of \cref{fact_k_prod},
		 		we obtain that 
		 		\begin{align*}
		 			&\left\|\sum_{t=1}^n\left(  \bq^\T  Y_t \bar \bq^\T  Y_t   Y_t Y_t^\T  - \EE\left[ \bq^\T  Y_t \bar \bq^\T  Y_t   Y_t Y_t^\T  \right] \right)\right\|_\op \\
		 			&= \sup_{ u \in \Sp^r} \sum_{t=1}^n\left(  \bq^\T  Y_t \bar \bq^\T Y_t  (u^\T Y_t)^2     - \EE\left[ \bq^\T  Y_t \bar \bq^\T  Y_t   (u^\T Y_t )^2 \right] \right)\\
		 			&= \sup_{ u \in \Sp^r}   \sum_{t=1}^n (\bq \otimes u)(  H_t H_t^\T - \EE[H_tH_t^\T])  (\bar \bq \otimes u).
		 		\end{align*}
	 			Thus by Cauchy Schwarz inequality, it remains to bound from above 
	 			\begin{align*}
	 				&\sup_{ u \in \Sp^r}   \sum_{t=1}^n (\bq \otimes u)(  H_t H_t^\T - \EE[H_tH_t^\T])  (\bq \otimes u) \\ &= \sup_{ u \in \Sp^r} \sum_{t=1}^n\left(  (\bq^\T  Y_t)^2  (u^\T Y_t)^2     - \EE\left[(\bq^\T  Y_t)^2  (u^\T Y_t)^2  \right] \right)\\
	 				&= \left\|\sum_{t=1}^n\left(  (\bq^\T  Y_t)^2  Y_t Y_t^\T  - \EE\left[ (\bq^\T  Y_t)^2 Y_t Y_t^\T  \right] \right)\right\|_\op
	 			\end{align*}
		 		To this end, we use a similar truncation argument as the proof of \cref{lem_bd_grad_pointwise}. Recall $\cE_t$ from \eqref{def_event_Y}. We work on $\cap_{t=1}^n \cE_t$ to bound from above 
		 		  \begin{align*}
		 		  		& {1\over    n}\left\|\sum_{t=1}^n\left(  (\bq^\T  Y_t)^2   Y_t Y_t^\T 1\{\cE_t\} - \EE\left[(\bq^\T  Y_t)^2   Y_t Y_t^\T 1\{\cE_t\} \right] \right)\right\|_\op \\
		 		  		&\qquad + 
		 		  		{1\over    n}\left\|\sum_{t=1}^n  \EE\left[(\bq^\T  Y_t)^2   Y_t Y_t^\T 1\{\cE_t^c\} \right]\right\|_\op.
		 		  \end{align*} 
	 		  	By similar argument in the proof of \eqref{bd_grad_pointwise_small_event}, 
	 		  	the second term is no greater than 
	 		  	\begin{align}\label{bd_YY_pointwise_small_event}
	 		  		\sup_{ u \in \Sp^r}\sqrt{\EE[(\bq^\T  Y_t)^4 ( u^\T  Y_t)^4]} \max_{t\in [n]} \sqrt{\P(\cE_t^c)} ~ \lesssim  ~ {\sigma_Y^4 \over \sqrt n}.
	 		  	\end{align}
 		  		We proceed to apply the matrix-valued Bernstein inequality  in \cref{lem_bernstein_mat}  to bound the first term. Note that
 		  		$$
 		  		\| (\bq^\T  Y_t)^2   Y_t  Y_t^\T 1\{\cE_t\} \|_2 ~ \lesssim~  \sigma_Y^4 ( r + \log(n))
 		  		$$
 		  		almost surely. Further notice that 
 		  		\begin{align*}
 		  			& \left\| \EE \left[ (\bq^\T  Y_t)^4   Y_t  Y_t^\T Y_t Y_t^\T 1\{\cE_t^c\}  \right]\right\|_\op\\  &\le \sup_{ u \in \Sp^r} \EE\left[  (\bq^\T  Y_t)^4  (u^\T  Y_t)^2 \|Y_t\|_2^2 \right]\\
 		  			&\le  \sqrt{\EE\left[\| Y_t\|_2^4\right]} \sup_{ u \in \Sp^r}\sqrt{\EE[ (\bq^\T  Y_t)^8 (u^\T  Y_t)^4]} \\
 		  			&\lesssim \sigma_Y^8  ~ r &&\text{by \cref{lem_moment} \& \eqref{bd_Y_four}}.
 		  		\end{align*}
 	  			Thus, using $n \ge  r $ and applying \cref{lem_bernstein_mat} with $d = r$, $\sigma^2 = n r \sigma_Y^8$, $U = \sigma_Y^4 (r + \log(n))$ and 
 		  		$ 
 		  		t = C\sigma_Y^4 \sqrt{n r\log(n)}  
 		  		$
 		  		yield 
 		  		\[
 		  		{1\over n}\left\|\sum_{t=1}^n\left( (\bq^\T  Y_t)^2   Y_t Y_t^\T 1\{\cE_t\}   - \EE\left[(\bq^\T  Y_t)^2   Y_t  Y_t 1\{\cE_t\} \right]\right) \right\|_2 \lesssim  \sigma_Y^4\sqrt{r\log(n)\over n}
 		  		\]
 		  		with probability at least 
 		  		\[
 		  		1 - 14\exp\left(
 		  		- {C^2\sigma_Y^8 n r \log(n) \over \sigma_Y^8  n r  + \sigma_Y^8 (r + \log(n)) 
 		  		C \sqrt{nr\log(n)}} + \log(r)
 		  		\right)  \ge 1 - n^{-C}.
 		  		\]
 		  		In conjunction with \eqref{bd_grad_pointwise_small_event}, the proof is complete. 
		 	\end{proof}

		The following lemma bounds from above $ \sum_{t=1}^n  ((\bq ^\T Y_t)^4 - \EE[(\bq ^\T Y_t)^4
		])$ uniformly over $q\in \Sp^r$, thereby establishing uniform upper bounds of $|F(q) - f(q)|$.

		\begin{lemma}\label{lem_HH}
			Under model \eqref{model_Y_hat}, assume $n\ge  r(r + \log n)$. With probability at least $1- n^{-1}$, we have 
			\[
				\sup_{q \in \Sp^r}{1\over n}\left| \sum_{t=1}^n \left(
				(\bq ^\T Y_t)^4 - \EE\left[(\bq ^\T Y_t)^4
				\right]\right)\right|~  \lesssim \sqrt{r^2 \log(n)\over n}.
			\] 
		\end{lemma}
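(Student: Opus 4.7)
The plan is a standard truncate--net--Bernstein argument. First, since every column $Y_t = \bA Z_t + N_t$ is sub-Gaussian with a constant $\sigma_Y$ depending only on $\gamma_Z$ and $\|\bSigma_N\|_\op \le 1$, I will use \cref{lem_quad} to introduce truncation events $\cE_t=\{\|Y_t\|_2 \le B\}$ with $B := C\sigma_Y\sqrt{r+\log n}$. A union bound gives $\P(\bigcap_{t}\cE_t)\ge 1-n^{-C}$. For any $q\in\Sp^r$ the untruncated tail $\EE[(q^\T Y_t)^4 \mathbf 1\{\cE_t^c\}]$ is controlled by Cauchy--Schwarz, $\sqrt{\EE[(q^\T Y_t)^8]\,\P(\cE_t^c)} \lesssim \sigma_Y^4/n^{3/2}$ (using \cref{lem_moment}), which is negligible after summing and dividing by $n$. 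So it suffices to work with the truncated summands
\[
\widetilde Z_t(q) := (q^\T Y_t)^4 \mathbf 1\{\cE_t\} - \EE\bigl[(q^\T Y_t)^4 \mathbf 1\{\cE_t\}\bigr].
\]

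Next, for each fixed $q\in\Sp^r$, I will apply the scalar Bernstein inequality to $\sum_t \widetilde Z_t(q)$. The variance proxy is $\EE[(q^\T Y_t)^8]\lesssim \sigma_Y^8$, which is $q$-independent by the sub-Gaussianity of $q^\T Y_t$, and the pointwise bound is $|\widetilde Z_t(q)|\le 2B^4$. Choosing deviation level $s\asymp \sigma_Y^4\sqrt{r^2\log(n)/n}$, the condition $n\ge r(r+\log n)$ forces the variance term $n\sigma_Y^8$ to dominate the subexponential term $B^4 s$, so the tail is at most $\exp(-C'r^2\log n)$. I will then discretize $\Sp^r$ by an $\epsilon$-net $\cN_\epsilon$ of cardinality $(3/\epsilon)^r$ with $\epsilon = 1/n^2$, so that $\log|\cN_\epsilon|\lesssim r\log n$; the union bound tail over $\cN_\epsilon$ is still $\exp(-C''r^2\log n)\le n^{-1}$.

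Finally, I need a Lipschitz bound to transfer control from $\cN_\epsilon$ to $\Sp^r$. On the truncated event,
\[
\bigl|(q^\T Y_t)^4-(q'^{\T}Y_t)^4\bigr|\le 4\max_{s\in\{q,q'\}}|s^\T Y_t|^3\|Y_t\|_2\,\|q-q'\|_2 \le 4B^4\|q-q'\|_2,
\]
and the same inequality holds for the expectations (which are polynomial in $q$). With $\|q-q'\|_2\le \epsilon = 1/n^2$ the discretization error is at most $4B^4/n^2 \lesssim (r+\log n)^2/n^2$, which is again dominated by $\sqrt{r^2\log(n)/n}$ under $n\ge r(r+\log n)$. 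Combining the truncation bias, the net bound, and the Lipschitz approximation error yields the claim on the intersection of the truncation event and the net event, which has probability at least $1-n^{-1}$.

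The main obstacle is purely bookkeeping: verifying that the Bernstein regime is ``variance-dominated'', i.e.\ that $B^4\sqrt{r^2\log(n)/n}\lesssim n\sigma_Y^8$ under the mild assumption $n\ge r(r+\log n)$, and keeping track of the truncation bias and the net approximation so that neither exceeds the target rate $\sqrt{r^2\log(n)/n}$. No new probabilistic ideas beyond the standard combination of \cref{lem_quad}, \cref{lem_moment}, Bernstein's inequality, and an $\epsilon$-net over $\Sp^r$ are required.
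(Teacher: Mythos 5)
Your sketch has a genuine gap precisely at the step you dismiss as bookkeeping. After truncating on $\{\|Y_t\|_2\le B\}$ with $B\asymp\sigma_Y\sqrt{r+\log n}$, the only almost-sure bound valid uniformly in $q$ is $|\widetilde Z_t(q)|\le 2B^4\asymp\sigma_Y^4(r+\log n)^2$, and at the sum-level deviation $s=n\cdot\sigma_Y^4\sqrt{r^2\log(n)/n}=\sigma_Y^4\sqrt{nr^2\log n}$ the Bernstein exponent is of order $\min\bigl\{s^2/(n\sigma_Y^8),\ s/B^4\bigr\}=\min\bigl\{r^2\log n,\ r\sqrt{n\log n}/(r+\log n)^2\bigr\}$. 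To survive the union bound over a net of cardinality $e^{Cr\log n}$ you need this exponent to exceed a constant times $r\log n$, which forces $n\gtrsim(r+\log n)^4\log n$; your variance-domination condition is even stronger and, as written, mis-normalized (the correct per-sample comparison is $B^4\sqrt{r^2\log(n)/n}$ against $\sigma_Y^4$, equivalently $B^4s$ against $n\sigma_Y^8$, which requires $n\gtrsim r^2(r+\log n)^4\log n$). Neither requirement is implied by the hypothesis $n\ge r(r+\log n)$: that hypothesis permits, e.g., $r\asymp n^{1/3}$ or $r\asymp\sqrt{n}/\log n$, regimes in which $(r+\log n)^4\log n\gg n$, so the pointwise tail $\exp\bigl(-c\,r\sqrt{n\log n}/(r+\log n)^2\bigr)$ cannot beat the $e^{Cr\log n}$ net cardinality. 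The truncation-bias and Lipschitz-transfer parts of your plan are fine; what breaks is the entropy-versus-boundedness tradeoff, because the quartic summand is only bounded by $\|Y_t\|_2^4\asymp(r+\log n)^2$ on the truncation event while the net entropy is $r\log n$.

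The paper's proof avoids paying this entropy altogether, which is why it gets the rate under the weak condition $n\ge r(r+\log n)$. It lifts to the Kronecker space: with $H_t=Y_t\otimes Y_t$ and $\bar H_t=H_t-\EE[H_t]$, one has $(\bq^\T Y_t)^4=(\bq\otimes\bq)^\T H_tH_t^\T(\bq\otimes\bq)$, so the supremum over $\bq\in\Sp^r$ is bounded by the operator norm $\frac1n\bigl\|\sum_t(\bar H_t\bar H_t^\T-\EE[\bar H_t\bar H_t^\T])\bigr\|_\op$ plus a cross term equal to $2\|\frac1n\bY\bY^\T-\bSigma_Y\|_\op$, the latter handled by \cref{lem_op_diff}. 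The main term is then treated with the same truncation but the matrix Bernstein inequality (\cref{lem_bernstein_mat}) in dimension $r^2$, whose dimensional cost is only $\log(r^2)$ rather than $r\log n$; the price is an inflated variance proxy of order $nr^2\sigma_Y^8$, which is exactly compatible with the target rate $\sqrt{r^2\log(n)/n}$. If you insist on a net-based argument you must either strengthen the sample-size assumption to roughly $n\gtrsim(r+\log n)^4\log n$, or replace the crude uniform truncation by a genuinely finer device (a $q$-dependent truncation of $|\bq^\T Y_t|$ with separate uniform control of the peeled-off part, or chaining), none of which is routine bookkeeping.
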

		\begin{proof}
			Recall that 
			$
			H_t :=  Y_t \otimes  Y_t \in \R^{r^2}
			$  
			for each $t\in [n]$.  Write its centered version as 
			 \begin{equation}\label{def_bar_Ht}
			 	\bar H_t = H_t - \EE[H_t].
			\end{equation}
			 By adding and subtracting terms, it is easy to verify that for any $\bq \in \Sp^r$,
			 \begin{align*}
			 	&	{1\over n} \sum_{t=1}^n \left\{
			 		(\bq ^\T Y_t)^4 - \EE\left[(\bq ^\T Y_t)^4
			 		\right]\right\}\\
			 		&=   {1\over  n}  \sum_{t=1}^n  (\bq \otimes \bq)^\T \left( H_t  H_t^\T   - \EE[ H_t  H_t^\T]\right)(\bq \otimes \bq)\\
			 		&\le {1\over  n}  \sum_{t=1}^n  (\bq \otimes \bq)^\T \left( \bar H_t \bar H_t^\T   - \EE[ \bar H_t \bar H_t^\T]\right)(\bq \otimes \bq) + {2\over n} \left|
			 		 (\bq \otimes \bq)^\T \EE[H_t] 
			 		\right| \left|
			 			\sum_{t=1}^n  (\bq \otimes \bq)^\T \bar H_t
			 		\right|
			 \end{align*}
			 so that it suffices to bound from above 
			 \begin{align*}
			 	&\rI = {1\over  n} \left\| \sum_{t=1}^n   \left( \bar H_t \bar H_t^\T   - \EE[ \bar H_t \bar H_t^\T]\right)\right\|_\op,\\
			 &	\rII  = \sup_{q \in \Sp^r} {1\over n} \left|
			 	(\bq \otimes \bq)^\T \EE[H_t] 
			 	\right| \left|
			 	\sum_{t=1}^n  (\bq \otimes \bq)^\T \bar H_t
			 	\right|.
			 \end{align*}
			 
			 To bound term $\rII$, recall $\bSigma_Y =\Cov(Y_t)$ from \eqref{def_Sigma_Y}.
			 By part (4) of \cref{fact_k_prod}, we note that 
			 \begin{equation}\label{exp_Ht}
			 	\EE[H_t] = \EE[Y_t \otimes Y_t] = \rvec(\bSigma_Y) \in \R^{r^2}.
			 \end{equation}
			 By further using  part (3) of \cref{fact_k_prod} and $\|\bSigma_Y\|_\op \le 1 + \|\bSigma_N\|_\op \le 2$, we find
			 \[
			 	\sup_{\bq \in \Sp^r}	(\bq \otimes \bq)^\T \EE[H_t]  = 		\sup_{\bq \in \Sp^r} (\bq \otimes \bq)^\T\rvec(\bSigma_Y) = 
			 		 	\sup_{\bq \in \Sp^r} \bq^\T  \bSigma_Y  \bq  \le 2.
			 \]
			 Moreover, for any $\bq \in \Sp^r$, we have 
			 \[
			 	 (\bq \otimes \bq)^\T \bar H_t = (\bq \otimes \bq)^\T\left(
			 	 Y_t\otimes Y_t - \rvec(\bSigma_Y)
			 	 \right) = q^\T \left(
			 	 	Y_t Y_t^\T - \bSigma_Y
			 	 \right)q
			 \]
			 by part (3) of \cref{fact_k_prod} again. 
			 The above two displays imply that 
			 \[
			 	\rII \le    \sup_{q \in \Sp^r}  {2\over n} \sum_{t=1}^n q^\T \left(
			 	Y_t Y_t^\T - \bSigma_Y
			 	\right)q  = 2 \left\|  \bY \bY^\T - \bSigma_Y \right\|_\op.
			 \]
			 By $n \ge r\log(n)$, invoking \cref{lem_op_diff} with  $\bG = \bY^\T$,  $\bSigma_G = \bSigma_Y$, $d = r$ and $t = C\sqrt{r\log(n)}$  gives that , with probability at least $1- 2n^{-cr}$, 
			 \begin{equation}\label{bd_II_HH} 
			 		\rII \lesssim \sqrt{r\log(n) \over n}.
			 \end{equation}
			 
			To bound the term $\rI$, recall the event $\cE_{t2}$ from \eqref{def_event_Y} and $\P(\cap_{t=1}^n \cE_{t2}) \ge 1-1/n$.   Using part (3) of \cref{fact_k_prod} gives the identity 
			 \begin{align}\label{eq_Ht_Yt}
			 	\| H_t\|_2 = \sqrt{( Y_t \otimes  Y_t)^\T ( Y_t \otimes  Y_t)} = \sqrt{( Y_t^\T  Y_t) \otimes ( Y_t^\T  Y_t)} = \| Y_t\|_2^2. 
			 \end{align}
			 We immediately have that, on the event $\cE_{t2}$,
			 \begin{equation}\label{bd_Ht}
			 	\|\bar H_t\|_2\le 2\| H_t\|_2 \le 2 \sigma_Y^2(r+4\log(n))
			 \end{equation}
			 almost surely. The rest of the proof works on the event $\cap_{t=1}^n \cE_{t2}$ to bound from above
			 \begin{align}\label{decomp_quad_H} 
			 	{1\over n}\left\| \sum_{t=1}^n  \left( \bar H_t  \bar H_t^\T 1\{\cE_{t2}\}    - \EE\left[\bar H_t  \bar H_t^\T 1\{\cE_{t2}\}\right]\right) \right\|_\op + {1\over n}\left\| \sum_{t=1}^n   \EE\left[ \bar H_t  \bar H_t^\T1\{\cE_{t2}^c\} \right]\right\|_\op.
			 \end{align}
			 The second term is bounded from above by  
			 \begin{align}\label{bd_term_2_H}\nonumber
			 	\sup_{ u\in \Sp^{r^2}}{1\over n}  \sum_{t=1}^n   \EE\left[( u^\T  \bar H_t)^2 1\{\cE_{t2}^c\} \right] & \le   \sup_{ u\in \Sp^{r^2}} \sqrt{\EE\left[( u^\T  \bar H_t)^4\right]}\max_{t\in [n]}\sqrt{\P(\cE_{t2}^c)}\\
			 	&\lesssim  ~  {\sigma_Y^4 \over \sqrt n} &&\text{by \cref{lem_u_Ht_four}}.
			 \end{align}
			 To bound the first term in \eqref{decomp_quad_H}, we apply the matrix-valued Bernstein inequality in \cref{lem_bernstein_mat}. From \eqref{bd_Ht}, we find that 
			 \[
			 \| \bar H_t  \bar H_t^\T 1\{\cE_{t2}\}\|_\op   \le \| \bar H_t\|_2^2 1\{\cE_{t2}\} \le  \sigma_Y^4(d+4\log(n))^2 
			 \]
			 almost surely. Also note that 
			 \begin{align*}
			 	\left\|\EE[ \bar H_t  \bar H_t^\T \bar H_t  \bar H_t^\T 1\{\cE_{t2}\}]\right\|_\op &\le \sup_{ u\in \Sp^{r^2}}\EE\left[
			 	( u^\T  \bar H_t)^2 \| \bar H_t\|_2^2
			 	\right]\\
			 	&\le \sup_{ u\in \Sp^{r^2}} \sqrt{\EE[( u^\T  \bar H_t)^4]}\sqrt{\EE[\| \bar H_t\|_2^4]}\\
			 	&\lesssim  \sigma_Y^4 \sqrt{\EE[\|H_t\|_2^4]} &&\text{by \cref{lem_u_Ht_four}}.
			 \end{align*}
		 	Since
			 \begin{equation}\label{bd_H_four}
			 	  \EE[\| H_t\|_2^4] \overset{\eqref{eq_Ht_Yt}}{=} \EE[\|Y_t\|_2^8] = \sum_{i,j,k,\ell \in [r]} \EE[Y_{it}^2 Y_{jt}^2 Y_{kt}^2 Y_{\ell t}^2] \lesssim \sigma_Y^8 r^4.
			 \end{equation}  
			 we conclude that 
			 \[
			 \left\|\EE\left[ \bar H_t  \bar H_t^\T \bar H_t  \bar H_t^\T 1\{\cE_{t2}\}\right]\right\|_\op \lesssim ~  \sigma_Y^8 r^2.
			 \]
			 Using $n \ge    r^2$ and applying \cref{lem_bernstein_mat} with $d = r^2$, $U = 4\sigma_Y^4(r+\log(n))^2$, $\sigma^2 = C \sigma_Y^8 n  r^2$ and $t = C \sigma_Y^4 \sqrt{n r^2 \log(n)}$
			 gives 
			 \[ 
			 {1\over n}\left\| \sum_{t=1}^n  \left( \bar H_t  \bar H_t^\T 1\{\cE_{t2}\}    - \E\left[ \bar H_t  \bar H_t^\T 1\{\cE_{t2}\}\right]\right) \right\|_\op \ge C\sigma_Y^4\sqrt{r^2\log(n) \over n}
			 \]
			 with probability at most 
			 $$
			 14\exp\left(
			 -{C^2 \sigma_Y^8 n r^2 \log(n) \over C \sigma_Y^8 n  r^2 + 16 C\sigma_Y^8(r+\log(n))^2 \sqrt{n r^2\log(n)}} + 2\log(r)
			 \right) \le n^{-C'}.
			 $$
			 Together with \eqref{bd_II_HH}, the proof is complete. 
		\end{proof}

			\cref{lem_HH} can be used to bound  from above $\|\sum_{t=1}^n[ (\bq^\T  Y_t)^2  (\bar\bq^\T  Y_t)  Y_t - \EE[(\bq^\T  Y_t)^2 (\bar\bq^\T  Y_t)  Y_t  ]] \|_2$ uniformly over $\bq, \bar \bq \in \Sp^r$. This is stated in the following lemma. 
			
			\begin{lemma}\label{lem_tensor_bounds}
				Under model \eqref{model_Y_hat}, assume $n\ge  r(r+\log n)$. With probability at least $1- n^{-C}$, one has 
				\[  
				\sup_{\bq,\bar\bq\in \Sp^r} {1\over   n}\left\|\sum_{t=1}^n\left\{(\bq^\T  Y_t)^2  (\bar\bq^\T  Y_t)  Y_t - \EE\left[(\bq^\T  Y_t)^2 (\bar\bq^\T  Y_t)  Y_t  \right]\right\}  \right\|_2   \lesssim ~ \sqrt{r^2\log(n)\over n}.
				\]
			\end{lemma}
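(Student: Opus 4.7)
The plan is to dualize the $\ell_2$ norm, recognize the resulting quantity as the supremum of a symmetric centered $4$-linear form, and then reduce to the diagonal case handled in \cref{lem_HH} via polarization.

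First, I would turn the $\ell_2$ norm into a supremum over an auxiliary unit vector $u\in\Sp^r$, rewriting the quantity to be bounded as
\[
\sup_{\bq,\bar\bq,u\in\Sp^r}\left|\, \frac{1}{n}\sum_{t=1}^n\left\{(\bq^\T Y_t)^2(\bar\bq^\T Y_t)(u^\T Y_t)-\EE[(\bq^\T Y_t)^2(\bar\bq^\T Y_t)(u^\T Y_t)]\right\}\right|.
\]
This expression equals $|T(\bq,\bq,\bar\bq,u)|$, where the symmetric $4$-linear form $T:(\R^r)^4\to\R$ is defined by
\[
T(q_1,q_2,q_3,q_4)\;:=\;\frac{1}{n}\sum_{t=1}^n\prod_{i=1}^4(q_i^\T Y_t)\;-\;\EE\!\left[\prod_{i=1}^4(q_i^\T Y_t)\right].
\]
The key point is that $T$ is symmetric in its four arguments, so bounding $\sup_{q_1,\ldots,q_4\in\Sp^r}|T(q_1,q_2,q_3,q_4)|$ uniformly over $\bq,\bar\bq,u$ suffices and is in fact a stronger statement.

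Second, I would apply the classical polarization inequality for symmetric multilinear forms on a Hilbert space: for any symmetric $k$-linear form $T$, one has
\[
\sup_{q_1,\ldots,q_k\in\Sp^r}|T(q_1,\ldots,q_k)|\;\le\;\frac{k^k}{k!}\sup_{q\in\Sp^r}|T(q,q,\ldots,q)|.
\]
For $k=4$ this gives a universal constant factor $C=4^4/4!$, so
\[
\sup_{\bq,\bar\bq,u\in\Sp^r}|T(\bq,\bq,\bar\bq,u)|\;\le\;\sup_{q_1,\ldots,q_4\in\Sp^r}|T(q_1,q_2,q_3,q_4)|\;\le\;C\sup_{q\in\Sp^r}|T(q,q,q,q)|.
\]

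Third, the right-hand side is precisely the quantity controlled by \cref{lem_HH}: on the event of probability at least $1-n^{-1}$ provided by that lemma (which requires $n\gtrsim r(r+\log n)$), we have
\[
\sup_{q\in\Sp^r}|T(q,q,q,q)|\;=\;\sup_{q\in\Sp^r}\frac{1}{n}\left|\sum_{t=1}^n\bigl((q^\T Y_t)^4-\EE[(q^\T Y_t)^4]\bigr)\right|\;\lesssim\;\sqrt{\frac{r^2\log(n)}{n}}.
\]
Combining the two displays yields the claimed bound with probability at least $1-n^{-1}$ (strengthened to $1-n^{-C}$ by invoking the high-probability version of \cref{lem_HH}).

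The only non-routine step is the symmetric polarization identity; after that, everything reduces to the already-established diagonal bound in \cref{lem_HH}, so no new concentration inequality is needed. A naive alternative, writing the $4$-form via the Kronecker product $H_t=Y_t\otimes Y_t$ and applying the matrix Bernstein inequality to $\sum_t(H_tH_t^\T-\EE[H_tH_t^\T])$, loses a factor of $\sqrt{r}$ through the cross-term $\mu_H\sum_t\bar H_t^\T$, which is precisely why exploiting the symmetry of $T$ via polarization is essential.
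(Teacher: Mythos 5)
Your proposal is correct, and it rests on the same core idea as the paper's proof: reduce the mixed fourth-order empirical process to the diagonal quartic process $\sup_{q\in\Sp^r}\frac1n|\sum_t((q^\T Y_t)^4-\EE[(q^\T Y_t)^4])|$ and then invoke \cref{lem_HH}, whose condition $n\gtrsim r(r+\log n)$ and rate $\sqrt{r^2\log(n)/n}$ you carry over verbatim. Where you differ is in how the off-diagonal-to-diagonal reduction is justified. The paper first dualizes the $\ell_2$ norm exactly as you do, then works with the Kronecker vectors $H_t=Y_t\otimes Y_t$, uses the identity $(\bq\otimes\bar\bq)^\T H_tH_t^\T(\bq\otimes\bar\bq)=(\bq^\T Y_t)^2(\bar\bq^\T Y_t)^2=(\bq\otimes\bq)^\T H_tH_t^\T(\bar\bq\otimes\bar\bq)$ from \cref{fact_k_prod}, and two Cauchy--Schwarz-type steps to land on the diagonal form; you instead observe that the centered empirical fourth moment is a symmetric $4$-linear form $T$ and apply the standard polarization inequality $\sup|T(q_1,\dots,q_4)|\le (4^4/4!)\sup_q|T(q,\dots,q)|$. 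Your route is arguably cleaner: it gives the reduction in one stroke with an explicit universal constant and sidesteps the slightly delicate point that the centered matrix $\sum_t(H_tH_t^\T-\EE[H_tH_t^\T])$ is not positive semi-definite, which makes the paper's Cauchy--Schwarz steps require a little care to spell out; the paper's route, on the other hand, stays entirely within the Kronecker-product toolkit it has already set up and needs no external multilinear-form fact. Your closing remark about why one should not redo a matrix Bernstein argument on $\sum_t(H_tH_t^\T-\EE[H_tH_t^\T])$ is consistent with how the paper itself handles \cref{lem_HH} (it centers $H_t$ and treats the cross term separately precisely to avoid that loss), and the probability bookkeeping ($1-n^{-1}$ versus $1-n^{-C}$) is the same minor slack that appears in the paper.
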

			\begin{proof}
				By definition, it suffices to bound from above 
				\[
				\Delta  := \sup_{\bq,\bar{\bq},\bq'\in \Sp^r}  {1\over  n} \sum_{t=1}^n\left\{ (\bq^\T  Y_t)^2  \bar\bq^\T  Y_t  Y_t^\T \bq'   - \EE\left[(\bq^\T  Y_t)^2  \bar\bq^\T  Y_t  Y_t^\T \bq'\right]\right\}. 
				\]
				By \eqref{eq_kronecker}, 
				we have   
				\begin{align*}
					\Delta  &\le \sup_{\bq,\bar{\bq},\bq'\in \Sp^r}  {1\over  n} \sum_{t=1}^n (\bq' \otimes \bar\bq)^\T\left(H_t H_t^\T   - \EE[H_t H_t^\T]\right)(\bq \otimes \bq)  \\
					&\le  \sup_{\bq,\bar{\bq}\in \Sp^r}   {1\over  n} \sum_{t=1}^n  (\bq \otimes \bar\bq)^\T \left(H_t H_t^\T   - \EE[H_t H_t^\T]\right)(\bq \otimes \bar \bq). 
					%			 &\le {1\over  n}\left\| \sum_{t=1}^n  \left(H_t H_t^\T   - \EE[H_t  H_t^\T]\right) \right\|_\op.}
			\end{align*}
			Finally,  since using part (3) of \cref{fact_k_prod} twice gives 
			\begin{align*}
				(\bq \otimes \bar\bq)^\T H_t H_t^\T  (\bq \otimes \bar\bq) &= (\bq^\T Y_t)^2 (\bar \bq^\T Y_t)^2  = (\bq \otimes  \bq)^\T H_t H_t^\T  (\bar\bq \otimes \bar\bq),
			\end{align*}
			we conclude that 
			\begin{align*}
				\Delta & \le  \sup_{\bq \in \Sp^r}   {1\over  n} \sum_{t=1}^n  (\bq \otimes \bq)^\T \left(H_t H_t^\T   - \EE[H_t H_t^\T]\right)(\bq \otimes  \bq)
				\\
				& = \sup_{\bq \in \Sp^r}  {1\over n}\sum_{t=1}^n \left\{
				(\bq ^\T Y_t)^4 - \EE\left[(\bq ^\T Y_t)^4
				\right]\right\} &&\text{by part  (3) of \cref{fact_k_prod}}.
			\end{align*}
			Invoking \cref{lem_HH} proves the claim. 
		\end{proof}   
			
			Applications of \cref{lem_tensor_bounds} are used to  establish  the lipschitz continuity of $\|\sum_{t=1}^n[ (\bq^\T  Y_t)^3  Y_t - \EE[(\bq^\T  Y_t)^3  Y_t]]
			\|_2$ for any $\bq \in \Sp^r$ 
			to further control $\|\grad F(\cdot) - \grad f(\cdot)\|_2$. 
			
			\begin{lemma}\label{lem_grad_lip}
				Grant conditions in \cref{lem_tensor_bounds}. Fix any $\bq_0 \in \Sp^r$. With probability at least $1-n^{-C}$, the following holds for all $\bq\in \Sp^r$,  
				\begin{align*}
					{1\over  n} \left\|\sum_{t=1}^n\left[ (\bq^\T  Y_t)^3  Y_t - \EE[(\bq^\T  Y_t)^3  Y_t]\right]
					\right\|_2   \lesssim \sqrt{r\log(n) \over n} +  \|\bq-\bq_0\|_2 \sqrt{r^2\log(n)\over n} .
				\end{align*}
			\end{lemma}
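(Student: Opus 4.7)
The plan is to combine the pointwise bound at the anchor $\bq_0$ from Lemma~\ref{lem_bd_grad_pointwise} with a uniform Lipschitz estimate of the centered process $\bq \mapsto n^{-1}\sum_t[(\bq^\T Y_t)^3 Y_t - \EE(\bq^\T Y_t)^3 Y_t]$, where the Lipschitz constant is controlled by Lemma~\ref{lem_tensor_bounds}. By triangle inequality, writing $G_n(\bq) := n^{-1}\sum_t[(\bq^\T Y_t)^3 Y_t - \EE(\bq^\T Y_t)^3 Y_t]$, we have $\|G_n(\bq)\|_2 \le \|G_n(\bq_0)\|_2 + \|G_n(\bq)-G_n(\bq_0)\|_2$. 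The first term is handled by Lemma~\ref{lem_bd_grad_pointwise} applied to the single vector $\bq_0$, giving $\|G_n(\bq_0)\|_2 \lesssim \sqrt{r\log(n)/n}$ with probability at least $1-n^{-C}$.

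For the difference, I would use the algebraic identity $a^3 - b^3 = (a-b)(a^2 + ab + b^2)$ with $a = \bq^\T Y_t$ and $b = \bq_0^\T Y_t$, and set $\bar\bq := (\bq-\bq_0)/\|\bq-\bq_0\|_2 \in \Sp^r$ (the case $\bq = \bq_0$ being trivial). This yields
\[
G_n(\bq) - G_n(\bq_0) = \|\bq - \bq_0\|_2 \cdot \bigl(T_1 + T_2 + T_3\bigr)
\]
where each $T_j$ is the centered empirical average of a trilinear-in-$Y_t$ term of the form $(\bar\bq^\T Y_t)(\bu^\T Y_t)(\bv^\T Y_t)\, Y_t$ with $\bu,\bv \in \{\bq,\bq_0\}\subset \Sp^r$. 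The two diagonal terms ($\bu=\bv$) already match the form treated in Lemma~\ref{lem_tensor_bounds} and are $\lesssim \sqrt{r^2\log(n)/n}$ uniformly. For the mixed term $(\bar\bq^\T Y_t)(\bq^\T Y_t)(\bq_0^\T Y_t)\, Y_t$, I would polarize via $2(\bq^\T Y_t)(\bq_0^\T Y_t) = ((\bq+\bq_0)^\T Y_t)^2 - (\bq^\T Y_t)^2 - (\bq_0^\T Y_t)^2$, rescale $\bq+\bq_0$ to unit norm (picking up a factor bounded by $\|\bq+\bq_0\|_2^2 \le 4$), and again invoke Lemma~\ref{lem_tensor_bounds}.

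Combining these three uniform bounds gives $\|T_1\|_2 + \|T_2\|_2 + \|T_3\|_2 \lesssim \sqrt{r^2\log(n)/n}$ on a single event of probability at least $1-n^{-C}$ (the event from Lemma~\ref{lem_tensor_bounds}, which is already uniform over $\bq,\bar\bq\in\Sp^r$). Substituting back yields $\|G_n(\bq)-G_n(\bq_0)\|_2 \lesssim \|\bq-\bq_0\|_2 \sqrt{r^2\log(n)/n}$ uniformly in $\bq$, and summing with the pointwise estimate at $\bq_0$ gives the claim.

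The main obstacle is a conceptual rather than a technical one: Lemma~\ref{lem_tensor_bounds} only provides a uniform bound for the specific symmetric form $(\bq^\T Y_t)^2(\bar\bq^\T Y_t)Y_t$, so one must verify that the cross term arising from the cubic identity can be reduced to this symmetric form via polarization without losing the $r^{-1/2}$ improvement, i.e. without blowing up $\sqrt{r^2\log(n)/n}$ by an extra $\sqrt{r}$. This is ensured because polarization only introduces bounded numerical constants and an $O(1)$ factor from $\|\bq+\bq_0\|_2 \le 2$, so the rate $\sqrt{r^2\log(n)/n}$ is preserved in the Lipschitz slope.
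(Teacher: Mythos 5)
Your proposal is correct and takes essentially the same route as the paper: anchor the process at $\bq_0$ via \cref{lem_bd_grad_pointwise} and bound the increment uniformly in $\bq$ by reducing it to the trilinear form controlled in \cref{lem_tensor_bounds}. The only difference is algebraic — the paper expands $(\bq_0^\T Y_t + \bdelta^\T Y_t)^3$ binomially with $\bdelta=\bq-\bq_0$, which produces only terms already of the form in \cref{lem_tensor_bounds} (using $\|\bdelta\|_2\le 2$ to absorb the higher powers of $\|\bdelta\|_2$), whereas your difference-of-cubes factorization requires the additional, correctly executed polarization step for the mixed term.
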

			\begin{proof}
				Pick any $\bq \in \Sp^r$. 
				Adding and subtracting terms gives 
				\begin{align*}
					{1\over  n} \left\|\sum_{t=1}^n\left[ (\bq^\T  Y_t)^3  Y_t - \EE[(\bq^\T  Y_t)^3  Y_t]\right]
					\right\|_2   \le  \rI+\rII+\rIII + \rIV
				\end{align*}
				where, by writing $\bdelta = \bq - \bq_0$, 
				\begin{align*}
					&\rI = {1\over  n} \left\|\sum_{t=1}^n\left[ (\bq_0^\T  Y_t)^3  Y_t - \EE[(\bq_0^\T  Y_t)^3  Y_t]\right]
					\right\|_2\\
					&\rII = {1\over  n}\left\| 
					\sum_{t=1}^n\left[ (\bdelta^\T  Y_t)^3  Y_t - \EE[(\bdelta^\T  Y_t)^3  Y_t] 
					\right]\right\|_2\\
					&\rIII = {3\over n} \left\|\sum_{t=1}^n\left[ (\bdelta^\T  Y_t)^2 (\bq_0^\T  Y_t)  Y_t - \EE[(\bdelta^\T  Y_t)^2 (\bq_0^\T  Y_t)  Y_t] 
					\right]\right\|_2 \\
					&\rIV = {3\over n} \left\|\sum_{t=1}^n\left[ (\bdelta^\T  Y_t) (\bq_0^\T  Y_t)^2  Y_t - \EE[(\bdelta^\T  Y_t) (\bq_0^\T  Y_t)^2  Y_t] 
					\right]\right\|_2.
				\end{align*}
				Notice that \cref{lem_bd_grad_pointwise} can be used to bound from above the term $\rI$ as 
				\[
					\P\left\{
					\rI \lesssim \sqrt{r\log(n) \over n}
					\right\} \ge 1-n^{-C}.
				\]
				For $\rII$, we have that 
				\[
				\rII \le \|\bdelta\|_2^3 ~ \sup_{ u \in \Sp^r} {1\over  n}\left\| 
				\sum_{t=1}^n\left[ ( u^\T  Y_t)^3  Y_t - \EE[( u^\T  Y_t)^3  Y_t] 
				\right]\right\|_2. 
				\]
				Similarly, we have 
				\[
				\rIII \le \|\bdelta\|_2^2 ~  \sup_{ u \in \Sp^r} {3\over n} \left\|\sum_{t=1}^n\left[ ( u^\T  Y_t)^2 (\bq_0^\T  Y_t)  Y_t - \EE[( u^\T  Y_t)^2 (\bq_0^\T  Y_t)  Y_t] 
				\right]\right\|_2
				\]
				and 
				\[
				\rIV \le \|\bdelta\|_2  ~  \sup_{ u \in \Sp^r} {3\over n} \left\|\sum_{t=1}^n\left[ ( u^\T  Y_t) (\bq_0^\T  Y_t)^2  Y_t - \EE[( u^\T  Y_t) (\bq_0^\T  Y_t)^2  Y_t] 
				\right]\right\|_2.
				\]
				Invoking \cref{lem_tensor_bounds} and using the fact that $\|\bdelta\|_2  \le 2$ gives that 
				\[
				\P\left\{\rII + \rIII  + \rIV \lesssim \|\bdelta\|_2 \sqrt{r^2\log(n) \over n}
				\right\} \ge 1-3 n^{-C},
				\]
				completing the proof. 
			\end{proof}

			The following lemma provides concentration inequality of $\sum_{t=1}^n Y_t   Y_t^\T   (Y_t\otimes Y_t)^\T\rvec(\bR\bG\bR^\T)$ around its mean.  
			
			\begin{lemma}\label{lem_diff_YGY}
				Grant \eqref{model_Y_hat}. Let $\bG\in\R^{r\times r}$ contain i.i.d. standard normal variables and $\bR$ be any $r\times r$ orthogonal matrix that is independent of $\bG$. Then for all $t\ge 0$,  with probability at least $1- \exp(
				-t^2 +\log r) - r^2 n^{-C}$,
				\begin{align*}
					& \left\|
					\sum_{t=1}^n \left( Y_t   Y_t^\T   (Y_t\otimes Y_t)^\T\rvec(\bR\bG\bR^\T)- 
					\EE\left[Y_t   Y_t^\T   (Y_t\otimes Y_t)^\T\right]\rvec(\bR\bG\bR^\T)
					\right)\right\|_\op \\
					&\lesssim t \sqrt{n r^3\log(n)}.
				\end{align*} 
			\end{lemma}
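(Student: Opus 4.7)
The plan is to bound $\|S\|_{\op}$, where $S := \sum_t V_t$ with $V_t := Y_t Y_t^\T (Y_t^\T \bar\bG Y_t) - \EE_Y[Y_t Y_t^\T (Y_t^\T \bar\bG Y_t)]$ and $\bar\bG := \bR\bG\bR^\T$, by conditioning on $\bG$ and applying matrix Bernstein to the i.i.d., mean-zero, symmetric $r\times r$ summands $V_t$, then removing the conditioning via a $\chi^2$ tail for $\|\bG\|_F$. Orthogonality of $\bR$ is used throughout: $\|\bar\bG\|_F = \|\bG\|_F$ and $\|\bar\bG\|_{\op} = \|\bG\|_{\op}$, so every $\bG$-dependent quantity below depends only on these two norms.

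For the variance proxy, I would bound
\[
\|\EE[V_t^2 \mid \bG]\|_{\op} \;\le\; \sup_{u \in \Sp^{r-1}} \EE\!\left[(Y_t^\T \bar\bG Y_t)^2\, \|Y_t\|_2^2\, (u^\T Y_t)^2\right]
\]
and split via Cauchy--Schwarz. For the first factor I would invoke a Hanson--Wright type moment bound on the quadratic form $Y_t^\T \bar\bG Y_t$, giving $\EE[(Y_t^\T \bar\bG Y_t)^4] \lesssim \sigma_Y^8 \|\bG\|_F^4$. The second factor is at most $\sigma_Y^8 r^2$ by the $8$th-moment estimate for sub-Gaussian vectors already used after \eqref{bd_Y_four}. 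Combined, this gives $\|\EE[V_t^2 \mid \bG]\|_{\op} \lesssim \|\bG\|_F^2 \, r$ and hence variance proxy $\sigma_V^2 \lesssim n\, r\, \|\bG\|_F^2$. The crucial point is that $\bG$ enters via $\|\bG\|_F$ rather than $\|\bG\|_{\op}$; this is what makes the subsequent $\chi^2$ concentration produce the right exponent of $r$.

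For the almost-sure bound, I would work on the truncation event $\cE_Y := \bigcap_t \{\|Y_t\|_2 \le \sigma_Y \sqrt{r + C\log n}\}$, which has probability at least $1 - r^2 n^{-C}$ by the usual sub-Gaussian union bound as in \eqref{def_event_Y}; the truncation bias is absorbed by the same device used in the proof of \cref{lem_bd_grad_pointwise}. On $\cE_Y$, $\|V_t\|_{\op} \lesssim \sigma_Y^4 (r + \log n)^2 \|\bG\|_{\op}$. Invoking \cref{lem_bernstein_mat} conditional on $\bG$ and setting the deviation to $x = C\|\bG\|_F \sqrt{n r (t^2 + \log r)}$ gives
\[
\P\!\left(\|S\|_{\op} \ge x \,\big|\, \bG\right) \;\le\; \exp(-t^2 + \log r),
\]
in the regime $n \gtrsim (r+\log n)^2$ so that the variance term dominates the $U \cdot x$ term in the Bernstein denominator.

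It remains to uncondition on $\bG$: Lipschitz Gaussian concentration of $\|\bG\|_F$ on $\R^{r^2}$ yields $\|\bG\|_F \lesssim r$ with overwhelming probability, so the bound simplifies to $\lesssim r\sqrt{nr(t^2+\log r)} \lesssim t\sqrt{nr^3\log n}$ in the relevant regime $t^2 \ge \log r$ (using $\log r \le \log n$). Collecting the truncation event, the Bernstein tail, and the $\chi^2$ tail gives the stated failure probability $\exp(-t^2+\log r) + r^2 n^{-C}$. The main obstacle is the variance proxy: the naive bound $|Y_t^\T \bar\bG Y_t| \le \|\bG\|_{\op}\|Y_t\|_2^2$ would make everything depend on $\|\bG\|_{\op} \sim \sqrt{r}$ rather than $\|\bG\|_F \sim r$, producing the suboptimal $\sqrt{nr^4}$ rate; it is precisely Hanson--Wright that captures the correct Frobenius-norm scaling and delivers the target $\sqrt{nr^3}$.
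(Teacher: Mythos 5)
Your route is genuinely different from the paper's. The paper conditions on $\bY$ (and $\bR$) rather than on $\bG$: by orthogonal invariance the entries of $\bR\bG\bR^\T$ are i.i.d.\ $\cN(0,1)$, so the sum is rewritten as a matrix Gaussian series $\sum_{a,b}[\bR\bG\bR^\T]_{ab}\,\bA_{ab}$ with $\bA_{ab}=\sum_t\bigl(Y_tY_t^\T Y_{at}Y_{bt}-\EE[Y_tY_t^\T Y_{at}Y_{bt}]\bigr)$, and \cref{lem_op_Gauss_mat} is applied; the matrix variance statistic is bounded by $r^2\max_{a,b}\|\bA_{ab}\|_\op^2\lesssim nr^3\log n$ via \cref{lem_bd_YY_pointwise} and a union bound over $(a,b)$, which is exactly where both the $r^2n^{-C}$ term and the factor $r^3\log n$ come from, and the conditional tail is exactly Gaussian for every $t\ge 0$. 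Your plan (condition on $\bG$, matrix Bernstein over the sample index via \cref{lem_bernstein_mat}, Hanson--Wright for the quadratic form, then a $\chi^2$ bound on $\|\bG\|_F$) is more self-contained and correctly identifies that the Frobenius norm of $\bG$, not its operator norm, must drive the variance proxy; the price is that your Gaussian tail only holds in the variance-dominated Bernstein regime, so for very large $t$ you prove a slightly weaker tail than stated (harmless in the paper's application, where $t\asymp\sqrt{\log N}$, but worth acknowledging).

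There is, however, a concrete gap in your variance proxy: Hanson--Wright controls the \emph{centered} quadratic form, while $Y_t^\T\bar\bG Y_t$ has conditional mean $m:=\EE[Y_t^\T\bar\bG Y_t\mid\bG]=\tr(\bR\bG\bR^\T\bSigma_Y)$, which deterministically is only bounded by $\|\bSigma_Y\|_F\|\bG\|_F\lesssim\sqrt{r}\,\|\bG\|_F$. Since the only event you impose on $\bG$ is $\|\bG\|_F\lesssim r$, your claim $\EE[(Y_t^\T\bar\bG Y_t)^4\mid\bG]\lesssim\sigma_Y^8\|\bG\|_F^4$ is not justified: the mean contributes $m^4\lesssim r^2\|\bG\|_F^4$, which inflates the per-summand bound to $\|\EE[V_t^2\mid\bG]\|_\op\lesssim \sigma_Y^8 r^2\|\bG\|_F^2$, hence $\sigma_V^2\lesssim nr^2\|\bG\|_F^2\sim nr^4$, and the final bound degrades to $t\sqrt{nr^4}$, missing the target $t\sqrt{nr^3\log n}$ whenever $r\gtrsim\log n$ (a regime the paper explicitly allows, since $r$ may grow). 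The fix is routine but must be part of the argument: either (i) split $Y_t^\T\bar\bG Y_t$ into its centered part plus $m$, and treat $m\sum_t(Y_tY_t^\T-\bSigma_Y)$ separately, using that $m\sim\cN(0,\|\bR^\T\bSigma_Y\bR\|_F^2)$ so $|m|\lesssim t\sqrt{r}$ with the right probability, together with \cref{lem_op_diff}, which gives a contribution of order $t\sqrt{nr^2\log n}$, well within budget; or (ii) add the event $|\tr(\bR\bG\bR^\T\bSigma_Y)|\lesssim\sqrt{r\log n}$ to your conditioning on $\bG$. Relatedly, replace the loose ``$\|\bG\|_F\lesssim r$ with overwhelming probability'' by an explicit $\chi^2$ tail at level $n^{-C}$ (the natural failure probability $e^{-cr^2}$ is not small for small $r$), so that all auxiliary events can be absorbed into the stated $r^2n^{-C}$ term.
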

			\begin{proof}
				By conditioning on $\bR$ and $\bY$,  observe that 
				\begin{align*}
					\rvec(\bR \bG\bR^\T ) & = (\bR \otimes \bR) \rvec(\bG)&&\text{by part (4) of \cref{fact_k_prod}}\\
					&\sim \cN_{r^2}\left(0,  (\bR \otimes \bR)^\T  (\bR \otimes \bR)\right).
				\end{align*}
				Since  
				\[
				(\bR \otimes \bR)^\T  (\bR \otimes \bR) = (\bR^\T \bR) \otimes  (\bR^\T \bR)  = \bI_r \otimes \bI_r = \bI_{r^2}
				\]
				we conclude that $ [\bR \bG\bR^\T]_{ab}$ for $a,b\in [r]$ are i.i.d. $\cN(0,1)$.  By further noting that the quantity we want to bound equals to 
				\[
				{1\over n}  \left\|\sum_{a,b=1}^r  \bigl[\bR  \bG\bR^\T \bigr]_{ab}  ~  \sum_{t=1}^n \left( Y_t   Y_t^\T   Y_{at} Y_{bt}   - 
				\EE[Y_t   Y_t^\T Y_{at} Y_{bt}  ]
				\right)\right\|_\op,
				\]
				applying \cref{lem_op_Gauss_mat} with $d =r$, $\bA_k =   n^{-1} \sum_{t=1}^n \left( Y_t   Y_t^\T   Y_{at} Y_{bt}   - 
				\EE[Y_t   Y_t^\T Y_{at} Y_{bt} ]\right)$ and 
				\[
				\nu =  {1\over n^2}\left\| \sum_{a,b=1}^r   \left[ \sum_{t=1}^n \left( Y_t   Y_t^\T   Y_{at} Y_{bt}   - 
				\EE[Y_t   Y_t^\T Y_{at} Y_{bt} ]\right)\right]^2 \right\|_\op
				\]
				yields that  for all $t\ge 0$,  
				\[
				{1\over  n}\left\|
				\sum_{t=1}^n \left( Y_t   Y_t^\T   (Y_t\otimes Y_t)^\T\rvec(\bR\bG\bR^\T)- 
				\EE\left[Y_t   Y_t^\T   (Y_t\otimes Y_t)^\T\right]\rvec(\bR\bG\bR^\T)
				\right)\right\|_\op  \ge t   
				\]
				with probability no greater than $r \exp[
				-t^2  /  (2 \nu)]$.
				Finally, by invoking \cref{lem_bd_YY_pointwise} with  $q = \mb e_a$, $\bar q = \mb e_b$ and taking the union bounds over $a,b\in [r]$, we have that with probability at least $1-r^2 n ^{-C}$,	
				\begin{align*}
					\nu &\le {r^2\over n^2} \max_{a,b \in [r]} \left\|\sum_{t=1}^n \left( Y_t   Y_t^\T   Y_{at} Y_{bt}   - 
					\EE[Y_t   Y_t^\T Y_{at} Y_{bt} ]\right)  \right\|_\op^2 \lesssim ~ {r^3\log(n) \over n},
				\end{align*} 
				thereby completing the proof. 
			\end{proof}

			\subsection{Technical lemmas used in the proof of \cref{app_sec_concentration}}

			The following states simple higher moments bounds of sub-Gaussian random variables. We include it here for completness.

			\begin{lemma}\label{lem_moment} 
				Assume that $W\in \R^r$ is a zero-mean,  sub-Gaussian random vector with sub-Gaussian constant $\sigma_W$. 
				For any $a,b\in \Z^+$, we have 
				\[
				\sup_{ u, v\in\Sp^r}  \EE\left[
				| u^\T   W |^a 	| v^\T   W|^b
				\right] \le  C(a,b)  \sigma_W^{a+b}
				\]
				where $C(a,b)>0$ is some absolute constant depending on $a$ and  $b$ only. 
			\end{lemma}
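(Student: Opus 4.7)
The plan is to reduce the joint moment bound to univariate moment bounds of sub-Gaussian variables via Cauchy--Schwarz and then invoke the standard moment estimate for sub-Gaussian random variables.

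First I would apply the Cauchy--Schwarz inequality to decouple the two factors:
\[
\EE\bigl[|u^\T W|^a |v^\T W|^b\bigr] \le \sqrt{\EE\bigl[|u^\T W|^{2a}\bigr]}\,\sqrt{\EE\bigl[|v^\T W|^{2b}\bigr]}.
\]
Since this bound depends on $u$ and $v$ separately, it suffices to show that $\sup_{u\in\Sp^r}\EE[|u^\T W|^{p}] \le C(p)\,\sigma_W^{p}$ for every positive integer $p$. For any fixed $u\in\Sp^r$, the scalar random variable $u^\T W$ is zero-mean sub-Gaussian with sub-Gaussian constant $\sigma_W$, by definition of the sub-Gaussian constant of a random vector.

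Next I would invoke the standard equivalence between sub-Gaussian tails and polynomial moments: if $X$ is a zero-mean random variable with $\EE[\exp(tX)]\le \exp(t^2\sigma^2/2)$ for all $t\ge 0$, then for every integer $p\ge 1$,
\[
\EE[|X|^{p}] \le (2\sigma^2)^{p/2}\,p\,\Gamma(p/2) \;\le\; C(p)\,\sigma^{p},
\]
which follows by integrating the sub-Gaussian tail bound $\P(|X|\ge t) \le 2\exp(-t^2/(2\sigma^2))$ against $pt^{p-1}$. Applying this with $\sigma=\sigma_W$ and $p\in\{2a,2b\}$ gives $\EE[|u^\T W|^{2a}]\le C(2a)\sigma_W^{2a}$ and $\EE[|v^\T W|^{2b}]\le C(2b)\sigma_W^{2b}$, uniformly over $u,v\in\Sp^r$. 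Plugging these back into the Cauchy--Schwarz bound produces
\[
\sup_{u,v\in\Sp^r}\EE\bigl[|u^\T W|^a|v^\T W|^b\bigr] \;\le\; \sqrt{C(2a)C(2b)}\,\sigma_W^{a+b},
\]
which is the desired estimate with $C(a,b):=\sqrt{C(2a)C(2b)}$.

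There is no real obstacle here; the lemma is essentially a textbook consequence of the definition of a sub-Gaussian vector plus Cauchy--Schwarz. The only minor point is to make sure the constant depends only on $a$ and $b$ (and not on $r$ or the distribution of $W$), which is automatic since the tail integration above yields an explicit constant in terms of $p$ only.
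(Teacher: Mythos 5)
Your proposal is correct and follows essentially the same route as the paper's proof: Cauchy--Schwarz to decouple the two factors, then the standard moment bound for the sub-Gaussian scalar $u^\T W$ (whose sub-Gaussian constant is $\sigma_W$ by definition), applied with exponents $2a$ and $2b$. The only difference is that you spell out the tail-integration step behind the moment bound, which the paper leaves implicit.
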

			\begin{proof}
				Pick any  $ u,  v \in \Sp^r$. 
				By Cauchy-Schwarz inequality, 
				\[
				\EE\left[
				| u^\T   W |^a 	| v^\T   W |^b
				\right]  \le \sqrt{
					\EE\left[
					| u^\T   W |^{2a} \right]}
				\sqrt{
					\EE\left[
					| v^\T   W |^{2b} \right]
				}.
				\]
				Note that $ u^\T   W$ is also sub-Gaussian with sub-Gaussian constant $\sigma_W$. This implies 
				\begin{align*}
					\EE\left[
					| u^\T   W|^{2a} \right] 
					% &=\EE \left[\EE \left[
					%  \left(
					%  \sum_{j=1}^r u_j B_{jt} Z_{jt}
					%  \right)^{2a} \mid \mb B_t
					% \right]\right]\\
					% &\le C(a) \EE\left[
					% 	\left(
					% 	 \sum_{j=1}^r u_j^2 B_{jt}^2 \sigma^2
					% 	\right)^a
					% \right]\\
					% &\le C(a) \EE\left[	\sum_{j=1}^r u_j^2 B_{jt}^2 \sigma^2\right]  
					% \left( \sigma^2
					% \right)^{a-1} \\
					&\le C(a)  \sigma_W^{2a}.
				\end{align*}
				Repeating the same argument for $	\EE[
				| v^\T   W|^{2b} ]$ completes the proof. 
			\end{proof}

			The following lemma contains the bound of the $4$th moment of the kronecker product between two sub-Gaussian random vectors.

			\begin{lemma}\label{lem_u_Wt_four}
				Let $ W = \bSigma_W^{1/2} \wt W\in\R^d$ where $\wt W$ is a $\gamma$ subGaussian random vector containing independent entries and $\bSigma_W$ is any symmetric semi-positive definite matrix. Assume $\EE[\wt W] = 0$ and $\Cov(\wt W) = \bI_d$.  Then there exists   some absolute constant $C>0$ such that 
				\[
				\sup_{ u\in \Sp^{r^2}} \EE\left[
				\left( u^\T (W \otimes W  - \EE[W\otimes W])    \right)^4
				\right] \le C\gamma^8 \|\bSigma_W\|_\op^4.
				\]
			\end{lemma}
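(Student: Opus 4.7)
The plan is to reduce the quantity inside the supremum to a centered quadratic form in a sub-Gaussian vector, and then invoke the Hanson--Wright inequality together with tail-to-moment conversion.

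First I would exploit the vectorization identity: since $W\otimes W \in \R^{d^2}$, for any $u\in\Sp^{d^2}$ we can write $u = \rvec(U)$ for a unique $U \in \R^{d\times d}$ with $\|U\|_F = \|u\|_2 = 1$, and then (by part (4) of \cref{fact_k_prod})
\[
u^\T(W\otimes W) = \rvec(U)^\T\rvec(WW^\T) = \tr(U^\T WW^\T) = W^\T U W,
\]
so that $u^\T\bigl(W\otimes W - \EE[W\otimes W]\bigr) = W^\T U W - \tr(U\bSigma_W)$. It therefore suffices to bound $\EE[(W^\T U W - \tr(U\bSigma_W))^4]$ uniformly over all $U$ with $\|U\|_F\le 1$. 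Writing $W = \bSigma_W^{1/2}\wt W$ and $B = \bSigma_W^{1/2} U \bSigma_W^{1/2}$, the quantity becomes $\wt W^\T B \wt W - \EE[\wt W^\T B \wt W]$, where $\wt W$ has independent, mean-zero, $\gamma$-sub-Gaussian coordinates by hypothesis. The key matrix norms of $B$ are controlled by
\[
\|B\|_F \le \|\bSigma_W^{1/2}\|_\op^2\,\|U\|_F \le \|\bSigma_W\|_\op,\qquad \|B\|_\op \le \|\bSigma_W\|_\op\,\|U\|_\op \le \|\bSigma_W\|_\op.
\]

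Next I would apply the Hanson--Wright inequality for sub-Gaussian quadratic forms: there exists an absolute constant $c>0$ such that for all $t\ge 0$,
\[
\P\Bigl\{ |\wt W^\T B \wt W - \EE[\wt W^\T B \wt W]| \ge t \Bigr\} \le 2\exp\!\left(-c\min\!\left(\frac{t^2}{\gamma^4 \|B\|_F^2},\ \frac{t}{\gamma^2 \|B\|_\op}\right)\right).
\]
Substituting the two bounds above and setting $K := \gamma^2\|\bSigma_W\|_\op$, this simplifies to the sub-exponential tail $2\exp\bigl(-c\min(t^2/K^2, t/K)\bigr)$, uniformly in $U$ with $\|U\|_F\le 1$.

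Finally I would convert this tail estimate to a fourth-moment bound via the layer-cake identity $\EE[X^4] = \int_0^\infty 4t^3 \P(|X|\ge t)\,dt$. Splitting the integral at $t=K$, the sub-Gaussian regime contributes $\int_0^K 4t^3 \cdot 2e^{-c t^2/K^2}\,dt \lesssim K^4$, and the sub-exponential regime contributes $\int_K^\infty 4t^3 \cdot 2e^{-c t/K}\,dt \lesssim K^4$. Hence $\EE[(W^\T U W - \tr(U\bSigma_W))^4] \lesssim K^4 = \gamma^8 \|\bSigma_W\|_\op^4$, uniformly over $U$, which is the desired bound. The argument contains no real obstacle: the main content is the clean reduction to a quadratic form, after which Hanson--Wright does all the work; the only minor care required is tracking that $\|U\|_F=1$ (rather than $\|U\|_\op=1$) is what controls $\|B\|_F$, which is exactly what the Hanson--Wright variance proxy needs.
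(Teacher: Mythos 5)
Your proposal is correct and follows essentially the same route as the paper: reduce $u^\T(W\otimes W - \EE[W\otimes W])$ to a centered quadratic form via vectorization, invoke Hanson--Wright, and convert the sub-exponential tail to a fourth-moment bound. The only cosmetic difference is that you absorb $\bSigma_W^{1/2}$ directly into the quadratic-form matrix $B=\bSigma_W^{1/2}U\bSigma_W^{1/2}$ and track $\|B\|_F,\|B\|_\op$, whereas the paper first factors out $\|\bSigma_W\|_\op^4$ using $\|(\bSigma_W^{1/2}\otimes\bSigma_W^{1/2})u\|_2\le\|\bSigma_W\|_\op$ and then applies Hanson--Wright to $\wt W^\T U\wt W$ with $\|U\|_F=1$; these are equivalent.
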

			\begin{proof}
				For any $u\in \Sp^{r^2}$, we write $\bU \in \R^{r\times r}$ as the matrix such that $u = \rvec(\bU)$.  By part (4) of \cref{fact_k_prod}, we find that   
				\[
				W \otimes W  = (\bSigma_W^{1/2} \wt W)\otimes (\bSigma_W^{1/2} \wt W) =  (\bSigma_W^{1/2} \otimes \bSigma_W^{1/2}) (\wt W \otimes \wt  W),
				\]
				hence,  
				\begin{align*}
					&u^\T (W \otimes W  - \EE[W\otimes W])\\ &= u^\T (W \otimes W  - \rvec(\bSigma_W)) &&\text{by \eqref{exp_Ht}}\\
					& =  u^\T  (\bSigma_W^{1/2} \otimes \bSigma_W^{1/2}) 
					\left(
					\wt W \otimes \wt W  - \rvec(\bI_d)
					\right) &&\text{by part (4) of \cref{fact_k_prod}}\\
					&=  u^\T  (\bSigma_W^{1/2} \otimes \bSigma_W^{1/2}) 
					\left(
					\wt W \otimes \wt W  - \EE[\wt W \otimes \wt W]
					\right) &&\text{by \eqref{exp_Ht}}.
				\end{align*} 
				Furthermore, we also have
				\[
				u^\T  (\bSigma_W^{1/2} \otimes \bSigma_W^{1/2})  (\bSigma_W^{1/2} \otimes \bSigma_W^{1/2})^\T u = u^\T  (\bSigma_W \otimes \bSigma_W)  u \le \|\bSigma_N\|_\op^2
				\] 
				where the last step uses part (6) of \cref{fact_k_prod}.  It then follows that 
				\[
				\sup_{ u\in \Sp^{r^2}} \EE\left[
				\left( u^\T (W \otimes W  - \EE[W\otimes W])    \right)^4
				\right]  \le  \|\bSigma_W\|_\op^4  \sup_{ u\in \Sp^{r^2}} \EE\left[
				\left( u^\T (\wt W \otimes \wt W  - \EE[\wt W\otimes \wt W])    \right)^4
				\right].
				\]
				Note that part (3) of \cref{fact_k_prod} gives 
				\[
				u^\T (\wt W \otimes \wt W  - \EE[\wt W\otimes \wt W)] = \wt W^\T \bU \wt W -  \EE[ \wt W^\T \bU \wt W].
				\]
				An application of the Hanson-Wright inequality gives that, for any $t>0$,
				\[
				\P\left\{
				\left |\wt W^\T \bU \wt W -  \EE[ \wt W^\T \bU \wt W]\right| > t
				\right\} \le 2\exp\left\{
				-c \min\left\{
				{t^2 \over \gamma^4 \|\bU\|_F^2}, ~ {t\over \gamma^2 \|\bU\|_\op}
				\right\}
				\right\}.
				\]
				Together with $\|\bU\|_F^2 = \|u\|_2^2 = 1$  and $ \|\bU\|_\op\le \|\bU\|_F \le 1$, we conclude that $(\wt W^\T \bU \wt W -  \EE[ \wt W^\T \bU \wt W])$ is sub-Exponential with sub-Exponential parameters $C \gamma^2$ for some absolute constant $C>0$. Therefore, for any $u\in \Sp^{r^2}$,    
				\begin{align*}  
					\EE\left[ \left(\wt W^\T \bU \wt W -  \EE[ \wt W^\T \bU \wt W] 
					\right)^4 \right]  \lesssim    \gamma^8.
				\end{align*}
				This completes the proof.  
			\end{proof}

			The following lemma bounds from above the fourth moment of $u^\T (Y_t\otimes Y_t - \EE[Y_t\otimes Y_t])$ for any fixed $u\in \Sp^{r^2}$ and $t\in [n]$.

			\begin{lemma}\label{lem_u_Ht_four}
				Under model \eqref{model_Y_hat}, there exists some absolute constant $C>0$ such that 
				\[
					\max_{t\in [n]} \sup_{ u\in \Sp^{r^2}} \EE\left[\left( u^\T (Y_t\otimes Y_t - \EE[Y_t\otimes Y_t]) \right)^4\right] \lesssim \sigma_Y^4.
				\]
			\end{lemma}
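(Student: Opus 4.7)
The plan is to reduce the statement to a centered quadratic form in a vector with independent sub-Gaussian coordinates, for which the Hanson--Wright inequality directly yields the desired fourth-moment bound. This mirrors the strategy of \cref{lem_u_Wt_four}, but requires a little extra care because $Y_t$ is not itself of the form ``deterministic matrix times an i.i.d.\ sub-Gaussian vector''; it is the sum of $\bA Z_t$ and $N_t$.

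I would first represent $Y_t$ as a linear image of an i.i.d.\ sub-Gaussian vector. Write $N_t = \bSigma_N^{1/2}\eta_t$ with $\eta_t\sim\cN_r(0,\bI_r)$, stack $W_t := (Z_t^\T,\eta_t^\T)^\T \in \R^{2r}$, and set $B := [\bA,\bSigma_N^{1/2}] \in \R^{r\times 2r}$, so that $Y_t = B W_t$. By \cref{ass_Z} and \cref{ass_E_general}, the coordinates of $W_t$ are independent, mean zero, and sub-Gaussian with sub-Gaussian constant $\lesssim \sigma_Y$ (recall \eqref{def_sigma_Y}). Next, for any $u\in \Sp^{r^2}$ I let $U\in\R^{r\times r}$ be the unique matrix with $u = \rvec(U)$; then $\|U\|_F=1$ and, by part (3) of \cref{fact_k_prod},
\[
u^\T\bigl(Y_t\otimes Y_t\bigr) = Y_t^\T U Y_t = W_t^\T M W_t, \qquad M := B^\T U B \in \R^{2r\times 2r}.
\]
Therefore $u^\T(Y_t\otimes Y_t - \EE[Y_t\otimes Y_t]) = W_t^\T M W_t - \EE[W_t^\T M W_t]$.

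Now I would apply Hanson--Wright to this centered quadratic form in the independent sub-Gaussian vector $W_t$: for all $s\ge 0$,
\[
\P\Bigl\{\bigl|W_t^\T M W_t - \EE[W_t^\T M W_t]\bigr| > s\Bigr\} \;\le\; 2\exp\!\left\{ -c\min\!\left(\frac{s^2}{\sigma_Y^4\|M\|_F^2},\;\frac{s}{\sigma_Y^2\|M\|_\op}\right)\right\}.
\]
It remains to bound $\|M\|_F$ and $\|M\|_\op$. Since $\|U\|_F = 1$ and $\|U\|_\op\le 1$,
\[
\|M\|_F \le \|B\|_\op^2\|U\|_F = \|B\|_\op^2, \qquad \|M\|_\op \le \|B\|_\op^2,
\]
and $\|B\|_\op^2 = \|BB^\T\|_\op = \|\bA\bA^\T + \bSigma_N\|_\op = \|\bSigma_Y\|_\op \le 1 + \|\bSigma_N\|_\op \lesssim \sigma_Y^2$. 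Hence $W_t^\T M W_t - \EE[W_t^\T M W_t]$ is sub-Exponential with parameter $\lesssim \sigma_Y^2\|B\|_\op^2 \lesssim \sigma_Y^4$, and its fourth moment is therefore $\lesssim \sigma_Y^{16}$; more tightly, tracking the scalings above yields a bound of order $\sigma_Y^8$, which we take the supremum of over $u\in\Sp^{r^2}$ and the maximum over $t\in[n]$ to conclude.

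The main (very mild) obstacle is purely bookkeeping: correctly propagating the sub-Gaussian constant $\sigma_Y$ through the construction of $W_t$ and $B$, and making sure the bound $\|B\|_\op^2 \le \|\bSigma_Y\|_\op$ is combined with Hanson--Wright in a way that produces the intended power of $\sigma_Y$ on the right-hand side. (The statement as written quotes $\sigma_Y^4$; the natural scaling from the argument above is actually $\sigma_Y^8$, which is what is needed in the only place \cref{lem_u_Ht_four} is invoked, in \eqref{bd_term_2_H}. I would verify the exponent and state the lemma with the correct power.) No genuinely new idea is required beyond the Hanson--Wright reduction.
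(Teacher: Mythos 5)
Your argument is correct, and it reaches the conclusion by a somewhat different route than the paper. The paper expands $Y_t\otimes Y_t$ bilinearly into four pieces, $(\bA Z_t)\otimes(\bA Z_t)$, $N_t\otimes N_t$ and the two cross terms, bounds the first two by applying \cref{lem_u_Wt_four} (itself a Hanson--Wright argument), and disposes of the cross terms by an exact computation using the conditional Gaussianity of $N_t$ given $Z_t$. You instead stack $W_t=(Z_t^\T,\eta_t^\T)^\T$ with $N_t\overset{d}{=}\bSigma_N^{1/2}\eta_t$ (no invertibility of $\bSigma_N$ is needed for this representation, and the independence of $Z_t$ and $N_t$ in model \eqref{model_Y_hat} makes the coordinates of $W_t$ independent and sub-Gaussian with constant $\lesssim\sigma_Y$), write $u^\T(Y_t\otimes Y_t)=W_t^\T B^\T \bU B W_t$ with $B=[\bA,\bSigma_N^{1/2}]$ and $u=\rvec(\bU)$, and invoke Hanson--Wright once. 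Your norm bookkeeping is right: $\|B^\T\bU B\|_F\le\|B\|_\op^2\|\bU\|_F$ with $\|\bU\|_F=1$ and $\|B\|_\op^2=\|BB^\T\|_\op=\|\bI_r+\bSigma_N\|_\op=\|\bSigma_Y\|_\op\le 2$. Your route is shorter and avoids the cross-term computation, at the price of not reusing \cref{lem_u_Wt_four}; both approaches ultimately rest on sub-exponential concentration of centered quadratic forms.

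Your remark about the exponent is also well taken. Your reduction gives a fourth-moment bound of order $\sigma_Y^8$, and so does the paper's own argument: \cref{lem_u_Wt_four} yields $C\gamma^8\|\bSigma_W\|_\op^4$, so the displayed $\gamma_Z^4$ in the paper's proof should read $\gamma_Z^8$, and the lemma's stated $\sigma_Y^4$ should be $\sigma_Y^8$. This is harmless in context, since $\gamma_z$ is treated as an absolute constant and $\|\bSigma_N\|_\op\le 1$, so $\sigma_Y\asymp 1$; moreover the lemma is invoked twice in the proof of \cref{lem_HH} (in \eqref{bd_term_2_H} and in bounding $\|\EE[\bar H_t\bar H_t^\T\bar H_t\bar H_t^\T 1\{\cE_{t2}\}]\|_\op$), not once as you say, but both uses only require $\sup_u(\EE[(u^\T\bar H_t)^4])^{1/2}\lesssim\sigma_Y^4$, which is exactly what the $\sigma_Y^8$ bound delivers. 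Stating the lemma with the corrected power, as you propose, is the right fix.
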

			\begin{proof}
				Recall $Y_t = \bA Z_t + N_t$. For any $u\in \Sp^{r^2}$, using parts (1) and (2) of \cref{fact_k_prod} gives 
				\begin{align*}
					&u^\T (Y_t\otimes Y_t - \EE[Y_t\otimes Y_t])\\
					 &= u^\T \left(
					(\bA Z_t  + N_t) \otimes 	(\bA Z_t  + N_t)
					\right) - u^\T (\rvec(\bSigma_Y)) &&\text{by \eqref{exp_Ht}}\\
					&= u^\T \left(
					(\bA Z_t)  \otimes 	(\bA Z_t) - \rvec(\bI_r) 
					\right)  + u^\T \left(
					 N_t  \otimes  N_t  - \rvec(\bSigma_N) 
					\right)\\
					&\quad 
					+ 
					u^\T \left(
					(\bA Z_t) \otimes  N_t
					\right)  + u^\T \left(
					 N_t \otimes (\bA Z_t) 
					\right).
				\end{align*} 
				Note that $N_t = \bSigma_N^{1/2} \wt N_t$ with $\wt N_t \sim N(0, \bI_r)$. Applying \cref{lem_u_Wt_four} twice gives that 
				\begin{align*}
					&\max_{t\in [n]} \sup_{ u\in \Sp^{r^2}} \EE\left[
					\left( u^\T \left(
					(\bA Z_t)  \otimes 	(\bA Z_t) - \rvec(\bI_r) 
					\right)\right) ^4
					\right] \lesssim \gamma_Z^4\\
					&\max_{t\in [n]} \sup_{ u\in \Sp^{r^2}} \EE\left[
					\left( u^\T \left(
					 N_t  \otimes  N_t- \rvec(\bSigma_N) 
					\right)\right)^4
					\right] \lesssim  \|\bSigma_N\|_\op^4.
				\end{align*}
				Finally, it remains to bound from above 
				\[
					\max_{t\in [n]} \sup_{ u\in \Sp^{r^2}} \EE\left[
					\left( u^\T \left(
					(\bA Z_t)  \otimes 	N_t
					\right)\right) ^4\right].
				\] 
				To this end, recall that for any $u\in \Sp^{r^2}$ we have $\bU = \rvec(u)$ and
				\[
					u^\T \left(
					(\bA Z_t)  \otimes 	N_t\right) =  N_t^\T \bU \bA Z_t.
				\]
				Conditioning on $Z_t$,  the fact that $N_t^\T \bU \bA Z_t \sim \cN(0, Z_t^\T \bA^\T \bU^\T\bU \bA Z_t)$ gives
				\begin{align*}
					\EE\left[
					\left( u^\T \left(
					(\bA Z_t)  \otimes 	N_t
					\right)\right) ^4\right] &= 3 \EE\left[
					 \left(Z_t^\T \bA^\T \bU^\T \bU \bA Z_t\right)^2 \right]\\
					 &= 3 \left(
					 \EE[Z_t^\T \bA^\T \bU^\T \bU \bA Z_t]
					 \right)^2 + 3 \Var(Z_t^\T \bA^\T \bU^\T \bU \bA Z_t)\\
					 &= 3\left(\tr(\bA^\T \bU^\T \bU \bA)\right)^2 + 3\tr(\bA^\T \bU^\T \bU \bA)\\
					 &=6. 
				\end{align*}
				The proof is complete. 
			\end{proof}

		\section{Auxiliary lemmas}
		
		\subsection{Basic properties on the operation of  kronecker product}
		The following fact contains some basic properties of the kronecker product. 
		
		\begin{fact}\label{fact_k_prod}
			Let $\bA$, $\bB$, $\bC$  and $\bD$ be commensurate matrices whose dimenions may vary across each property. Then 
			\begin{enumerate}
				\item[(1)] $\bA \otimes (\bB+ \bC) = \bA \otimes \bB + \bA \otimes \bC$;
				\item[(2)] $ (\bB+ \bC)  \otimes   \bA   = \bB \otimes \bA  + \bC \otimes \bA $;
				\item[(3)] $(\bA\otimes \bB)  (\bC \otimes \bD) = (\bA \bC) \otimes (\bB \bD)$;
				\item[(4)] $(\bA \otimes \bB) {\rm vec}(\bM) = {\rm vec}(\bB \bM \bA^\T)$ with ${\rm vec}$ being the vectorization operation which stacks  columns of a given matrix.
				\item[(5)] There exists permutation matrices $\bP$ and $\bQ$ such that $\bA\otimes \bB = \bP (\bB \otimes \bA) \bQ$.
				\item[(6)] $\|\bA \otimes \bB\|_\op = \|\bA\|_\op \|\bB\|_\op$. See, for instance, \cite{kron_norm}.
			\end{enumerate}
		\end{fact}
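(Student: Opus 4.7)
The plan is to verify each of the six properties by direct entry-wise computation from the defining formula
\[
(\bA \otimes \bB)_{(i-1)r + k,\, (j-1)s + \ell} = A_{ij}B_{k\ell},
\]
where $\bA$ is taken $p \times q$ and $\bB$ is taken $r \times s$. Bilinearity properties (1) and (2) are then immediate from distributivity of scalar addition, since each entry of the left-hand side reduces to $A_{ij}(B_{k\ell} + C_{k\ell})$ or $(A_{ij} + B_{ij})C_{k\ell}$, which is the corresponding entry on the right.

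For the mixed-product identity (3), I would expand the $\bigl((i-1)r + k,\,(j-1)t + \ell\bigr)$ entry of $(\bA \otimes \bB)(\bC \otimes \bD)$ by matrix multiplication through a pair of block indices $(m,n)$, and observe that it factors as $\bigl(\sum_m A_{im}C_{mj}\bigr)\bigl(\sum_n B_{kn}D_{n\ell}\bigr) = (\bA\bC)_{ij}(\bB\bD)_{k\ell}$, which is the corresponding entry of $(\bA\bC)\otimes(\bB\bD)$.

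Property (4) I would verify first on the rank-one matrices $\bM = e_j e_i^\T$: on the one hand, $\bB\bM\bA^\T = (\bB e_j)(\bA e_i)^\T$ is an outer product whose $\mathrm{vec}$ is a specific column of $\bA \otimes \bB$; on the other, $(\bA \otimes \bB)\mathrm{vec}(e_j e_i^\T)$ picks out exactly that column by the defining formula. The general case follows by linearity in $\bM$. For (5), I would invoke the commutation (vec-permutation) matrix $\mathbf{K}_{p,r}$ characterized by $\mathbf{K}_{p,r}\mathrm{vec}(\bX) = \mathrm{vec}(\bX^\T)$ for any $p \times r$ matrix $\bX$; two applications of (4) yield $\mathbf{K}_{r,p}(\bA \otimes \bB)\mathbf{K}_{q,s}^\T = \bB \otimes \bA$, from which the claimed identity follows by taking $\bP = \mathbf{K}_{r,p}^\T$ and $\bQ = \mathbf{K}_{q,s}^\T$ (both are permutation matrices).

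For (6), the cleanest route is through the singular value decomposition combined with (3). Writing $\bA = \bU_A \bSigma_A \bV_A^\T$ and $\bB = \bU_B \bSigma_B \bV_B^\T$, property (3) gives
\[
\bA \otimes \bB = (\bU_A \otimes \bU_B)(\bSigma_A \otimes \bSigma_B)(\bV_A \otimes \bV_B)^\T.
\]
Applying (3) once more shows that Kronecker products of matrices with orthonormal columns retain orthonormal columns, so the displayed factorization exhibits the singular values of $\bA \otimes \bB$ as the pairwise products of those of $\bA$ and $\bB$ (on the diagonal of $\bSigma_A \otimes \bSigma_B$, up to reordering). The largest such product is $\sigma_1(\bA)\sigma_1(\bB) = \|\bA\|_{\op}\|\bB\|_{\op}$. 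None of these verifications presents a genuine obstacle---the statements are textbook---so the only care required is bookkeeping: keeping the row/column block indexing and the $\mathrm{vec}$ convention (column-stacking) consistent across (4)--(6).
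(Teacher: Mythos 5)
Your proposal is correct. Note, however, that the paper does not prove this statement at all: it is recorded as a \emph{Fact} collecting textbook properties of the Kronecker product, with a citation given only for item (6), so there is no in-paper argument to compare against. Your verifications are the standard ones and are sound: entrywise computation from $(\bA\otimes\bB)_{(i-1)r+k,(j-1)s+\ell}=A_{ij}B_{k\ell}$ handles (1)--(3); checking (4) on $\bM=e_je_i^\T$ (where $\mathrm{vec}\bigl((\bB e_j)(\bA e_i)^\T\bigr)=(\bA e_i)\otimes(\bB e_j)=(\bA\otimes\bB)(e_i\otimes e_j)$) and extending by linearity is exactly right for the column-stacking convention; the commutation-matrix argument gives (5), since commutation matrices and their transposes are permutation matrices; and the SVD factorization $\bA\otimes\bB=(\bU_A\otimes\bU_B)(\bSigma_A\otimes\bSigma_B)(\bV_A\otimes\bV_B)^\T$ via (3) identifies the singular values of $\bA\otimes\bB$ as pairwise products, yielding (6). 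The only care needed is the index bookkeeping in (5) (which $\mathbf K_{m,n}$ sits on which side), but any consistent convention produces permutation matrices $\bP,\bQ$ as claimed, so the argument is complete.
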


		\subsection{Auxiliary deviation inequalities on random variables, vectors and matrices}
		Our proof has used the following anti-concentration inequality for standard Gaussian random variable, (see, for instance, \cite{carbery2001distributional}).
		
		\begin{lemma}\label{lem:Anti_concen_gauss}
			Let $X\sim \cN(0,1)$. Then, for any $t\ge 0$,
			\[ 
			\P\left\{X^2\leq t^2 \right\} \leq  t\sqrt{e}.
			\]
		\end{lemma}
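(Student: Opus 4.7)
The plan is to prove this anti-concentration bound by a Chernoff-type argument applied to $-X^2$, using the closed-form Laplace transform of a chi-squared variate. First I would dispose of the trivial regime: when $t \ge 1/\sqrt{e}$, the right-hand side satisfies $t\sqrt{e} \ge 1 \ge \P\{X^2 \le t^2\}$, so the inequality holds automatically. Likewise the case $t=0$ is immediate. Hence it suffices to treat $t \in (0, 1/\sqrt{e})$.

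For such $t$, I would pick any $\lambda > 0$ and apply Markov's inequality to the decreasing function $x \mapsto e^{-\lambda x}$:
\[
\P\{X^2 \le t^2\} = \P\{e^{-\lambda X^2} \ge e^{-\lambda t^2}\} \le e^{\lambda t^2}\, \EE[e^{-\lambda X^2}] = \frac{e^{\lambda t^2}}{\sqrt{1+2\lambda}},
\]
where the last equality uses the standard identity $\EE[e^{-\lambda X^2}] = (1+2\lambda)^{-1/2}$ for $\lambda > -1/2$ with $X \sim \cN(0,1)$. Optimizing the right-hand side, the first-order condition $t^2 - (1+2\lambda)^{-1} = 0$ yields the choice $\lambda^\star = (1-t^2)/(2t^2)$, which is strictly positive precisely because $t < 1$. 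With this $\lambda^\star$ one has $1+2\lambda^\star = 1/t^2$ and $\lambda^\star t^2 = (1-t^2)/2$, so
\[
\P\{X^2 \le t^2\} \le t\, e^{(1-t^2)/2} \le t\sqrt{e},
\]
as claimed.

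There is essentially no obstacle: the only thing to verify is that the optimal $\lambda^\star$ lies in the admissible range $(0,\infty)$, which follows from $t < 1$ in the non-trivial regime. An alternative (even simpler) route would be to bound the Gaussian density pointwise by $\phi(0) = 1/\sqrt{2\pi}$ and integrate to get $\P\{|X| \le t\} \le t\sqrt{2/\pi} \le t\sqrt{e}$; I would prefer the Chernoff argument above because it reveals that the bound can in fact be strengthened to $t\sqrt{e}\,e^{-t^2/2}$, making transparent why the $\sqrt{e}$ constant appears.
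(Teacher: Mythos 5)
Your proof is correct. The paper itself gives no argument for this lemma; it simply cites it (to Carbery--Wright type anti-concentration), so your self-contained derivation is a genuinely different, and more elementary, route. The Chernoff computation is sound: Markov applied to $e^{-\lambda X^2}$ with the identity $\EE[e^{-\lambda X^2}]=(1+2\lambda)^{-1/2}$, the optimizer $\lambda^\star=(1-t^2)/(2t^2)>0$ for $t<1$, and the resulting bound $t\,e^{(1-t^2)/2}\le t\sqrt{e}$, with the regime $t\ge 1/\sqrt{e}$ (where $t\sqrt{e}\ge 1$) handled trivially. Your alternative one-line argument, bounding the Gaussian density by $1/\sqrt{2\pi}$ to get $\P\{|X|\le t\}\le t\sqrt{2/\pi}\le t\sqrt{e}$, is in fact the most direct proof and yields a better constant; it is essentially the argument one would extract from the cited reference specialized to degree-one polynomials. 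One small caution on your closing remark: the ``strengthened'' bound $t\sqrt{e}\,e^{-t^2/2}$ is only valid in the regime where your Chernoff optimization applies (say $t\le 1$); for instance at $t=2$ it equals $2e^{-3/2}\approx 0.45$ while $\P\{|X|\le 2\}\approx 0.95$, so it should not be asserted for all $t\ge 0$. This does not affect the correctness of your proof of the stated lemma.
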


		The following lemma is the tail inequality for a quadratic form of sub-Gaussian random vectors. We refer to \citet[Lemma 16]{bing2020prediction} for its proof.  
		%Also, see Lemma 30 in \cite{Hsu2014}. 

		\begin{lemma} \label{lem_quad}
			Let $\xi\in \R^d$ be a $\gamma_\xi$ sub-Gaussian random vector. Then, for all symmetric positive semi-definite matrices $H$, and all $t\ge 0$, 
			\[
			\P\left\{
			\xi^\T H\xi > \gamma_\xi^2\left(
			\sqrt{{\rm tr}(H)}+ \sqrt{2 t \|H\|_{\rm op} }
			\right)^2
			\right\} \le e^{-t}.
			\] 
		\end{lemma}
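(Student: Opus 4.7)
Let $Y := \|H^{1/2}\xi\|_2$ so that $\xi^\T H \xi = Y^2$, and let $s^2$ denote the right-hand side threshold. The plan is to control the moment generating function of $Y^2$ by a Gaussian decoupling step and then apply Markov's inequality with an appropriately chosen tuning parameter.

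The first step is to linearize the quadratic form. Let $g\sim \cN(0,\bI_d)$ be independent of $\xi$. Conditionally on $\xi$, $g^\T H^{1/2}\xi \sim \cN(0, Y^2)$, and therefore for any $a>0$
\[
\EE\bigl[\exp\bigl(a^2 Y^2/2\bigr)\bigr] \;=\; \EE_{\xi}\bigl[\EE_g\bigl[\exp\bigl(a\,g^\T H^{1/2}\xi\bigr)\bigm|\xi\bigr]\bigr].
\]
Swapping the order of expectation and viewing $(H^{1/2}g)^\T \xi$ as a linear functional of $\xi$ with coefficient vector $H^{1/2}g$, the sub-Gaussian hypothesis on $\xi$ yields
\[
\EE_{\xi}\bigl[\exp\bigl(a\,g^\T H^{1/2}\xi\bigr)\bigm|g\bigr] \;\le\; \exp\bigl(a^2\gamma_\xi^2 \|H^{1/2}g\|_2^2/2\bigr) \;=\; \exp\bigl(a^2\gamma_\xi^2 g^\T H g/2\bigr).
\]

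The second step integrates out the auxiliary Gaussian. Using the standard Gaussian chaos identity $\EE_g[\exp(\mu\, g^\T H g)] = \det(\bI_d - 2\mu H)^{-1/2}$, valid for $\mu<1/(2\|H\|_{\op})$, together with the elementary bound $(1-x)^{-1/2}\le e^{x}$ for $x\in[0,1/2]$ applied to each eigenvalue of $H$, one obtains
\[
\EE\bigl[\exp\bigl(a^2 Y^2/2\bigr)\bigr] \;\le\; \prod_{i=1}^d \bigl(1-a^2\gamma_\xi^2\lambda_i(H)\bigr)^{-1/2} \;\le\; \exp\bigl(a^2\gamma_\xi^2\, \tr(H)\bigr)
\]
provided $a^2\gamma_\xi^2\|H\|_{\op}\le 1/2$.

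The final step combines the MGF bound with Markov's inequality: for every admissible $a>0$,
\[
\P\bigl\{Y^2 > s^2\bigr\} \;\le\; \exp\Bigl(-\tfrac{a^2}{2}s^2 + a^2\gamma_\xi^2\tr(H)\Bigr).
\]
Writing the target threshold $s^2$ as $\gamma_\xi^2(\sqrt{\tr(H)}+\sqrt{2t\|H\|_{\op}})^2$ and choosing $a^2$ proportional to $(\gamma_\xi^2\|H\|_{\op})^{-1}$ scaled by the ratio $\sqrt{2t\|H\|_{\op}/\tr(H)}$ (so as to balance the two contributions in the exponent) produces the tail $e^{-t}$. The main technical care is in the MGF bound of Step~2, where the elementary $(1-x)^{-1/2}\le e^{x}$ inequality is tight enough to recover the ``$\sqrt{\tr(H)}+\sqrt{t\|H\|_{\op}}$'' Gaussian-type scaling; tightening the Taylor remainder and tuning $a$ carefully recovers the stated constants, and this optimization step — rather than the decoupling itself — is the only delicate point of the argument.
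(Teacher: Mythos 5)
Your overall strategy (Gaussian decoupling of the quadratic form, an MGF bound, then Chernoff with an optimized parameter) is the standard Hsu--Kakade--Zhang-type argument, and Step 1 is fine. The genuine gap is in Step 2 and in the claim that the elementary inequality $(1-x)^{-1/2}\le e^{x}$ is ``tight enough.'' It is not: it replaces the exact exponent $-\tfrac12\log(1-x)=\tfrac{x}{2}+\tfrac{x^2}{4}+\cdots$ by $x$, i.e.\ it doubles the linear (trace) term. Your MGF bound then reads $\EE\exp(a^2 Y^2/2)\le \exp\bigl(a^2\gamma_\xi^2\,\tr(H)\bigr)$, whereas the Chernoff threshold only supplies $\tfrac{a^2}{2}\gamma_\xi^2\tr(H)$ to cancel against. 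The resulting exponent is
\[
a^2\gamma_\xi^2\Bigl(\tfrac12\tr(H)-\sqrt{2t\,\tr(H)\|H\|_{\op}}-t\|H\|_{\op}\Bigr),
\]
which is \emph{positive for every} $a>0$ whenever $\tr(H)>(2+\sqrt2)^2\,t\|H\|_{\op}$ (e.g.\ $H=\bI_d$, $t=1$, $d$ large), so no choice of $a$ can produce the bound $e^{-t}$ in the small-$t$ / large-effective-rank regime. The lemma is claimed for all $t\ge 0$, so the argument as written fails precisely where the statement is nontrivial but the deviation is moderate.

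The fix is standard but is exactly the point you wave away: keep the linear coefficient at $1/2$ and isolate a genuinely quadratic remainder, e.g.\ use $-\tfrac12\log(1-x)\le \tfrac{x}{2}+\tfrac{x^2}{4(1-x)}$ for $0\le x<1$, applied to $x_i=a^2\gamma_\xi^2\lambda_i(H)$. This gives
\[
\EE\exp\bigl(a^2Y^2/2\bigr)\le \exp\Bigl(\tfrac{a^2\gamma_\xi^2}{2}\tr(H)+\tfrac{a^4\gamma_\xi^4\,\tr(H^2)}{4(1-a^2\gamma_\xi^2\|H\|_{\op})}\Bigr),
\]
so the $\tr(H)$ contributions cancel exactly against the threshold, and after bounding $\tr(H^2)\le \tr(H)\|H\|_{\op}$ the remaining Bernstein-type optimization over $a$ (with $a^2\gamma_\xi^2\|H\|_{\op}$ bounded away from $1$) delivers $-t$ in the exponent for all $t\ge 0$. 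With that replacement your proof goes through and is essentially the argument behind the cited reference; without it, Step 3 cannot be salvaged by tuning $a$ alone.
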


		The following lemmas state the well-known vector-valued and matrix-valued Bernstein inequalities (see, for instance, \citet[Theorem 3.1, Corollary 3.1 and Corollary 4.1]{Minsker2017}). 
		\begin{lemma}[Vector-valued Bernstein inequality]\label{lem_bernstein_vector}
			Let $X_1,\ldots, X_n$ be independent random vectors with zero mean and $\max_{i\in [n]}\|X_i\|_2 \le U$ almost surely. Denote $\sigma^2 := \sum_{i=1}^n \EE[\|X_i\|_2^2]$. Then for all $t \ge {1\over 6}(U + \sqrt{U^2 + 36\sigma^2})$, 
			\[
			\P\left(
			\left\|\sum_{i=1}^n  X_i\right\|_2 > t
			\right) \le 28\exp\left(
			- {t^2 / 2\over \sigma^2 + Ut/3}
			\right).
			\]
		\end{lemma}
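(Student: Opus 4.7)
My plan is to prove this as a consequence of the (self-adjoint) matrix Bernstein inequality applied to the Hermitian dilation of the vectors $X_i$. This is the standard route to sharp vector-valued Bernstein bounds with explicit constants, and it is the line of argument used by \cite{Minsker2017} to obtain the constant $28$ that appears in the statement.

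Concretely, for each $i\in[n]$ I would form the $(d+1)\times(d+1)$ self-adjoint matrix
\[
Y_i \;=\; \begin{pmatrix} 0 & X_i \\ X_i^\T & 0 \end{pmatrix},
\]
where $X_i\in\R^d$. A short computation gives $\|Y_i\|_{\op}=\|X_i\|_2$ and, for the partial sum $S_n=\sum_{i=1}^n X_i$, $\|S_n\|_2 = \|\sum_i Y_i\|_{\op}$. Moreover, $\EE[Y_i^2]$ is block-diagonal with blocks $\EE[X_iX_i^\T]$ and $\EE[X_i^\T X_i]$, so that the matrix variance parameter satisfies $\|\sum_i \EE[Y_i^2]\|_{\op}\le \sum_i \EE\|X_i\|_2^2=\sigma^2$, while $\max_i\|Y_i\|_{\op}\le U$ almost surely. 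These two moment reductions are the only facts I need from the problem data.

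Next, I would apply a Bernstein-type bound for sums of bounded, centered, self-adjoint independent matrices (for example the Tropp or Minsker form) to $\sum_i Y_i$. The standard matrix Bernstein inequality gives, for all $t\ge 0$,
\[
\P\!\left\{\bigl\|\sum_i Y_i\bigr\|_{\op}>t\right\} \;\le\; (d+1+1)\exp\!\left(-\frac{t^2/2}{\sigma^2+Ut/3}\right).
\]
The dimension prefactor is the only place where this naive application is loose: it depends on $d$, whereas the claim is dimension-free with the constant $28$. The main obstacle, and the key idea in \cite{Minsker2017}, is to remove this $d$-dependence by replacing the dimension-dependent intrinsic-dimension factor $\mathrm{tr}(\EE[Y_i^2])/\|\EE[Y_i^2]\|_{\op}$ with an effective-dimension bound that is $O(1)$ once $t$ is at least the Bernstein threshold $\tfrac{1}{6}(U+\sqrt{U^2+36\sigma^2})$. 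I would invoke that refined inequality directly (as a black box from \cite{Minsker2017}) to land on the constant $28$ in the stated form.

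The only step that requires genuine care is verifying that the dilation moment identities produce precisely the parameters $U$ and $\sigma^2$ appearing in the statement, rather than looser surrogates such as $\|\sum_i\EE X_iX_i^\T\|_{\op}+\|\sum_i\EE X_i^\T X_i\|$; the block-diagonal structure of $\EE[Y_i^2]$ collapses both blocks into the scalar trace bound $\sum_i\EE\|X_i\|_2^2$, which is exactly $\sigma^2$. Everything else is bookkeeping, and the threshold $t\ge \tfrac16(U+\sqrt{U^2+36\sigma^2})$ is precisely the range in which the effective-dimension factor used by \cite{Minsker2017} can be absorbed into the constant $28$.
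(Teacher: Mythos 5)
Your route is correct and essentially coincides with the paper's, which does not prove this lemma but quotes it from Minsker (2017): the stated bound is exactly the vector-valued corollary there, obtained—as you outline—by applying the dimension-free self-adjoint Bernstein inequality (constant $14$) to the Hermitian dilation, whose variance proxy $\sum_i \EE[Y_i^2]$ has trace $2\sigma^2$ and operator norm $\sigma^2$, so the intrinsic-dimension factor is exactly $2$ and $14\cdot 2=28$. One small correction to your last paragraph: that effective-dimension factor equals $2$ identically rather than being ``absorbed'' for large $t$; the restriction $t \ge \tfrac{1}{6}\bigl(U+\sqrt{U^2+36\sigma^2}\bigr)$ is simply a hypothesis of Minsker's refined inequality (it guarantees the bound is nontrivial), not the mechanism that removes the dimension dependence.
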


		\begin{lemma}[Matrix-valued Bernstein inequality]\label{lem_bernstein_mat}
			Let $\bX_1,\ldots, \bX_n \in \R^{d\times d}$ be independent, symmetric random matrices with zero mean and $\max_{i\in [n]}\|\bX_i\|_\op \le U$ almost surely. Denote $\sigma^2 := \|\sum_{i=1}^n \EE[\bX_i^2]\|_\op$. Then for all $t \ge {1\over 6}(U + \sqrt{U^2 + 36\sigma^2})$, 
			\[
			\P\left(
			\left\|\sum_{i=1}^n \bX_i\right\|_\op > t
			\right) \le 14   \exp\left(
			- {t^2 / 2\over \sigma^2 + Ut/3} + \log(d)
			\right).
			\]
		\end{lemma}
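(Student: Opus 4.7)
The plan is to follow the classical Laplace-transform/master-tail-bound strategy for matrix concentration. First I would symmetrize: for any symmetric matrix $\bS$, $\|\bS\|_\op = \max\{\lambda_{\max}(\bS),\lambda_{\max}(-\bS)\}$, so by a union bound it suffices to control $\P(\lambda_{\max}(\bS_n)>t)$ where $\bS_n=\sum_{i=1}^n \bX_i$ (and analogously for $-\bS_n$, which satisfies the same hypotheses). Then I would apply the Markov inequality in exponential form:
\[
\P(\lambda_{\max}(\bS_n)>t) \le e^{-\theta t}\,\EE\bigl[\trace\exp(\theta \bS_n)\bigr],\qquad \theta>0.
\]

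Next, invoke the Tropp-style master tail bound derived from Lieb's concavity theorem for the trace exponential: since the $\bX_i$ are independent and symmetric,
\[
\EE\bigl[\trace\exp(\theta \bS_n)\bigr] \le \trace\exp\Bigl(\textstyle\sum_{i=1}^n \log \EE[\exp(\theta \bX_i)]\Bigr) \le d\cdot \exp\Bigl(\lambda_{\max}\bigl(\textstyle\sum_i \log \EE[\exp(\theta \bX_i)]\bigr)\Bigr).
\]
For each summand, the scalar inequality $e^x \le 1+x+(e^{\theta U}-1-\theta U)\,x^2/(\theta U)^2$ valid for $|x|\le \theta U$ lifts to the operator inequality
\[
\EE[\exp(\theta \bX_i)] \preceq \exp\!\left(\frac{e^{\theta U}-1-\theta U}{U^2}\,\EE[\bX_i^2]\right)
\]
using $\EE[\bX_i]=\0$ and $\|\bX_i\|_\op\le U$. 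Taking logarithms, summing, and invoking the monotonicity of $\lambda_{\max}$ yields
\[
\P(\lambda_{\max}(\bS_n)>t) \le d\cdot \exp\!\left(-\theta t + \frac{e^{\theta U}-1-\theta U}{U^2}\,\sigma^2\right).
\]

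The remaining step is the standard Bernstein optimization over $\theta$. Choose $\theta = \log(1+Ut/\sigma^2)/U$ so that $\theta U \le \log(1+Ut/\sigma^2)$; a short computation using $(e^x-1-x)/x^2 \le (1+x)/2$ bounds the quadratic correction and gives $-\theta t + (e^{\theta U}-1-\theta U)\sigma^2/U^2 \le -t^2/(2(\sigma^2+Ut/3))$. Doubling for the symmetrization step and rewriting $d = e^{\log d}$ gives the displayed bound; the absolute constant $14$ absorbs the factor $2$ from symmetrization together with a slack in the threshold regime. The main technical obstacle is exactly this final calibration: the lower bound $t \ge \frac16(U+\sqrt{U^2+36\sigma^2})$ is precisely the regime where the Bernstein optimization is valid (so that $\theta>0$ and the quadratic regime of $g(\theta U)=e^{\theta U}-1-\theta U$ dominates), and one must verify carefully that in this regime the constant $14$ suffices—this is the sharpening due to Minsker that replaces a naive appeal to Lieb with a smoother argument, and it is where most of the bookkeeping lives.
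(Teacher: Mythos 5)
Your route is essentially the classical Tropp argument (Chernoff bound for $\lambda_{\max}$, Lieb's concavity theorem to subadditivize the matrix cumulant generating functions, the Bernstein-type MGF bound, then optimize in $\theta$ and symmetrize), and it does prove the statement — in fact something stronger. The paper does not prove this lemma at all: it is quoted with a citation to Minsker (2017, Theorem 3.1 and its corollaries). Minsker's argument deliberately avoids Lieb's theorem and is designed to replace the ambient dimension $d$ by the intrinsic dimension $\tr(\bSigma)/\|\bSigma\|_\op$ of the variance matrix; the price is the restriction $t \ge \tfrac16\bigl(U+\sqrt{U^2+36\sigma^2}\bigr)$ and the constant $14$. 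Since the paper states the bound with $\log d$ in the exponent anyway, your Lieb-based argument yields the cleaner conclusion $\P\bigl(\|\sum_i \bX_i\|_\op > t\bigr) \le 2d\,\exp\bigl(-\tfrac{t^2/2}{\sigma^2+Ut/3}\bigr)$ for \emph{all} $t\ge 0$, which trivially implies the lemma as stated ($2\le 14$, and the threshold becomes superfluous). So your interpretation of the threshold as "the regime where the Bernstein optimization is valid" is not accurate for your route — no restriction on $t$ is needed there; the threshold and the constant $14$ are artifacts of Minsker's different proof, not of the Laplace-transform calibration.

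One concrete repair: the scalar inequality you invoke in the last step, $(e^x-1-x)/x^2 \le (1+x)/2$, is false for moderately large $x$ (at $x=5$ the left side is about $5.69$ while the right side is $3$), so the calibration as written does not go through verbatim. The standard fix is either (i) bound $e^{\theta U}-1-\theta U \le \frac{(\theta U)^2/2}{1-\theta U/3}$ for $0<\theta U<3$ and choose $\theta = t/(\sigma^2+Ut/3)$, which plugs directly into your exponent and gives $-\tfrac{t^2/2}{\sigma^2+Ut/3}$; or (ii) optimize exactly to get the Bennett exponent $-\tfrac{\sigma^2}{U^2}h\bigl(Ut/\sigma^2\bigr)$ with $h(u)=(1+u)\log(1+u)-u$ and then use $h(u)\ge \tfrac{u^2/2}{1+u/3}$. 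With either of these substituted for the flawed inequality, your proof is complete and self-contained.
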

	
%		\begin{lemma}\label{lem_bernstein_mat_asym}
%			Let $\bX_1,\ldots, \bX_n \in \R^{d_1\times d_2}$ be independent, symmetric random matrices with zero mean and $\max_{i\in [n]}\|\bX_i\|_\op \le U$ almost surely. Denote 
%			$$
%				\sigma^2 := \max\left\{\left\|\sum_{i=1}^n \EE[\bX_i \bX_i^\T ]\right\|_\op,  \left\|\sum_{i=1}^n \EE[\bX_i^\T \bX_i]\right\|_\op\right\}.$$ 
%			Then for all $t \ge {1\over 6}(U + \sqrt{U^2 + 36\sigma^2})$, 
%			\[
%			\P\left(
%			\left\|\sum_{i=1}^n \bX_i\right\|_\op > t
%			\right) \le 28   \exp\left(
%			- {t^2 / 2\over \sigma^2 + Ut/3} + \log(d_1 + d_2)
%			\right).
%			\]
%		\end{lemma}

		The following lemma bounds from above the operator norm of the summation of a  matrix Gaussian series, see, \citet[Theorem 4.6.1]{tropp2015introduction}.
		
		\begin{lemma}[Matrix Gaussian Series]\label{lem_op_Gauss_mat}
			Consider a finite sequence $\bA_k$ of fixed symmetric matrices with dimension $d$, and let $\gamma_k$ be a finite sequence of independent standard normal variables. Introduce the matrix Gaussian series 
			\[
				\bG = \sum_k \gamma_k \bA_k. 
			\]
			Let $\nu(\bG)$ be the matrix variance statistic of the sum:
			$
				\nu(\bG) = \|\EE[\bG^2]\|_\op = \|\sum_k \bA_k^2\|_\op.
			$
			Then for all $t\ge 0$, 
			\[
				\P\left\{
					\|\bG\|_\op \ge t
 				\right\} \le d\exp\left\{
 				-{t^2 \over 2\nu(\bG)}
 				\right\}.
			\]
			
		\end{lemma}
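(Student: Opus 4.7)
The plan is to establish this bound via the matrix Laplace transform method, following the framework developed by Ahlswede--Winter and refined by Tropp. The starting point is that for any $\theta > 0$, since $\bG$ is symmetric,
\[
e^{\theta \lambda_{\max}(\bG)} \le \tr \exp(\theta \bG),
\]
so Markov's inequality yields $\P\{\lambda_{\max}(\bG) \ge t\} \le e^{-\theta t}\, \EE[\tr \exp(\theta \bG)]$. The problem thus reduces to controlling the trace moment generating function of the matrix sum $\theta \bG = \sum_k (\theta \gamma_k)\bA_k$, and then optimizing in $\theta$.

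The main obstacle is that matrix exponentials do not factor across sums in general. The resolution is Lieb's concavity theorem, which states that for any fixed symmetric $\bH$, the map $\bM \mapsto \tr \exp(\bH + \log \bM)$ is concave on the positive-definite cone. Combining Lieb's theorem with Jensen's inequality and iterating over the independent summands yields the subadditivity of matrix cumulant generating functions:
\[
\EE[\tr \exp(\theta \bG)] \le \tr \exp\left(\sum_k \log \EE\left[\exp(\theta \gamma_k \bA_k)\right]\right).
\]
I would grant this deep operator-theoretic fact; it is the only nontrivial ingredient.

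Each factor on the right is then computed in closed form. Diagonalizing $\bA_k = \bU_k \bLambda_k \bU_k^\T$ and applying the scalar identity $\EE[e^{\theta \gamma x}] = e^{\theta^2 x^2/2}$ eigenvalue-by-eigenvalue gives the exact equality $\EE[\exp(\theta \gamma_k \bA_k)] = \exp(\theta^2 \bA_k^2 / 2)$, hence $\log \EE[\exp(\theta \gamma_k \bA_k)] = \theta^2 \bA_k^2 / 2$. Substituting and using the elementary bound $\tr \exp(\bM) \le d\,\exp(\lambda_{\max}(\bM))$ yields
\[
\EE[\tr \exp(\theta \bG)] \le \tr \exp\left(\frac{\theta^2}{2} \sum_k \bA_k^2\right) \le d\,\exp\left(\frac{\theta^2}{2} \nu(\bG)\right).
\]
Inserting this into the Laplace bound and optimizing over $\theta > 0$ at $\theta_* = t/\nu(\bG)$ produces $\P\{\lambda_{\max}(\bG) \ge t\} \le d\,\exp(-t^2/(2\nu(\bG)))$.

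To conclude, observe that $\|\bG\|_\op = \max\{\lambda_{\max}(\bG),\, \lambda_{\max}(-\bG)\}$, and $-\bG = \sum_k \gamma_k(-\bA_k)$ has the same distribution as $\bG$ by symmetry of the standard Gaussian, with the same variance parameter $\nu(\bG)$ since $(-\bA_k)^2 = \bA_k^2$. A union bound yields the two-sided tail with an extra factor of $2$, which can be absorbed into the leading $d$ in the stated form (or retained as an explicit constant, as in some versions of the result). The strategy is entirely standard once Lieb's concavity is granted; the remainder is the Gaussian scalar MGF and a one-dimensional optimization.
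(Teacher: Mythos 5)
Your argument is the standard matrix Laplace-transform proof via Lieb's concavity theorem, which is precisely the proof of the result the paper invokes here: the paper offers no independent argument and simply cites Tropp (2015, Theorem 4.6.1), whose proof is the one you reconstruct, so in substance you and the paper take the same route. The one genuine imprecision is your final step: the union bound over $\lambda_{\max}(\bG)$ and $\lambda_{\max}(-\bG)$ yields the prefactor $2d$, and a factor of $2$ cannot be ``absorbed into the leading $d$'' --- $2d\exp\{-t^2/(2\nu(\bG))\}$ is not dominated by $d\exp\{-t^2/(2\nu(\bG))\}$. The honest statements are either the one-sided bound $\P\{\lambda_{\max}(\bG)\ge t\}\le d\exp\{-t^2/(2\nu(\bG))\}$ (Tropp's Hermitian case, which is where your Laplace argument actually lands) or the two-sided operator-norm bound with prefactor $2d$ (equivalently, Tropp's rectangular Theorem 4.6.1 applied with $d_1+d_2=2d$). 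The same looseness is present in the lemma as stated relative to the cited theorem, and it is immaterial in every place the lemma is used in the paper since the constant is absorbed; but your write-up should either restrict the conclusion to $\lambda_{\max}$ or carry the factor $2$ explicitly rather than claim it disappears.
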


		Another useful concentration inequality of the operator norm of the random matrices with i.i.d. sub-Gaussian rows is stated in the following lemma \cite[Lemma 16]{bing2020prediction}. This is an immediate result of \citet[Remark 5.40]{vershynin_2012}.
		
		\begin{lemma}\label{lem_op_diff} %[\citet[Lemma 16]{bing2020prediction}]
			Let $\bG$ be $n$ by $d$ matrix whose rows are i.i.d. $\gamma$ sub-Gaussian  random vectors with covariance matrix $\bSigma_G$. Then, for every $t\ge 0$, with probability at least  $1-2e^{-ct^2}$,
			\[
			\left\|	{1\over n}\bG^\T \bG - \bSigma_G\right\|_{\op}\le \max\left\{\delta, \delta^2\right\} \left\|\bSigma_G\right\|_{\op},
			\]
			with $\delta = C\sqrt{d/n}+ t/\sqrt n$ where $c,C$ are positive constants depending on $\gamma$.
		\end{lemma}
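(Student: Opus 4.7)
The plan is to reduce the operator norm to a supremum over the unit sphere, control the supremum pointwise via a sub-exponential Bernstein inequality, and then pass to a uniform bound by a standard covering argument. Writing the rows of $\bG$ as $g_1,\ldots,g_n\in\R^d$, the symmetry of the matrix yields
\[
\Bigl\|\tfrac{1}{n}\bG^\T\bG-\bSigma_G\Bigr\|_{\op}=\sup_{v\in\Sp^{d-1}}\left|\frac{1}{n}\sum_{i=1}^n\Bigl[(v^\T g_i)^2-\EE(v^\T g_i)^2\Bigr]\right|,
\]
so it suffices to control the right-hand side.

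For fixed $v\in\Sp^{d-1}$, I would first observe that $v^\T g_i$ is $\gamma$-sub-Gaussian with variance at most $\|\bSigma_G\|_{\op}$, hence $(v^\T g_i)^2-\EE(v^\T g_i)^2$ is sub-exponential with parameter of order $\gamma^2\|\bSigma_G\|_{\op}$. A direct application of Bernstein's inequality for i.i.d.\ sub-exponential sums then gives, for every $s\ge 0$,
\[
\P\left\{\left|\frac{1}{n}\sum_{i=1}^n\Bigl[(v^\T g_i)^2-\EE(v^\T g_i)^2\Bigr]\right|\ge c_1\|\bSigma_G\|_{\op}\max\!\left\{\tfrac{s}{\sqrt n},\,\tfrac{s^2}{n}\right\}\right\}\le 2e^{-s^2},
\]
with constants $c_1$ depending only on $\gamma$.

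Second, I would discretize the sphere with a $(1/4)$-net $\cN\subset\Sp^{d-1}$ of cardinality at most $9^d$. The standard approximation inequality $\|\bM\|_{\op}\le 2\max_{v\in\cN}|v^\T\bM v|$ for symmetric $\bM$ reduces the supremum over $\Sp^{d-1}$ to a maximum over $\cN$. Applying the pointwise estimate with $s=C(\sqrt d+t)$ and taking a union bound absorbs the factor $9^d\le e^{d\log 9}$ into the exponent, yielding the claimed probability $1-2e^{-ct^2}$. The choice $\delta=C\sqrt{d/n}+t/\sqrt n$ corresponds precisely to $s/\sqrt n$, and the $\max\{\delta,\delta^2\}$ on the right-hand side reflects the two regimes of the Bernstein tail.

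The only technical step requiring care is tracking the correct dependence on $\|\bSigma_G\|_{\op}$ through the sub-exponential parameter; this is what allows the final bound to be stated in the relative form $\max\{\delta,\delta^2\}\|\bSigma_G\|_{\op}$ rather than in absolute scale. Beyond this bookkeeping, both the pointwise Bernstein step and the $\epsilon$-net step are standard and follow \cite[Remark 5.40]{vershynin_2012}, so I do not anticipate any genuine obstacle.
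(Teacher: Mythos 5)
Your proposal is correct and is essentially the paper's own route: the paper proves this lemma by citation (Lemma 16 of \cite{bing2020prediction}, which is \cite[Remark 5.40]{vershynin_2012}), and the argument behind that result is exactly your pointwise sub-exponential Bernstein bound combined with a $(1/4)$-net over $\Sp^{d-1}$ and a union bound with $s\asymp\sqrt d+t$. The one point you rightly flag—getting the relative factor $\|\bSigma_G\|_\op$—is handled in the cited source by applying the isotropic statement to the whitened rows $\bSigma_G^{-1/2}g_i$, which is equivalent to your bookkeeping of the sub-exponential parameter.
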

		
		 The following lemma provides an upper bound on the operator norm of $\bG \bH \mb G^\T$ where  $\bG\in \R^{n\times d}$ is a random matrix and its rows are independent sub-Gaussian random vectors. It is proved in Lemma 22 of \cite{bing2020prediction}.
		\begin{lemma}\label{lem_op_norm}
			Let $\bG$ be a $n\times d$ matrix with rows that are independent $\gamma$ sub-Gaussian  random vectors with identity covariance matrix. Then, for all symmetric positive semi-definite matrices $\bH$, 
			\[
			\P\left\{{1\over n}\| \bG \bH \bG^\T \|_{{\rm op}} \le \gamma^2\left( \sqrt{{\rm tr}(\bH) \over n} + \sqrt{6\|\bH\|_{\op}}
			\right)^2\right\} \ge  1 -  e^{-n}
			\]
		\end{lemma}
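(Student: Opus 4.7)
The plan is to reduce the operator norm of the $n\times n$ matrix $\bG\bH\bG^\T$ to a supremum of quadratic forms in a fixed $d$-dimensional sub-Gaussian vector, then apply the existing quadratic-form tail bound (\cref{lem_quad}) combined with a covering argument on $\Sp^n$.

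First, I would rewrite
\[
\|\bG\bH\bG^\T\|_{\op} \;=\; \sup_{u\in\Sp^n} u^\T \bG \bH \bG^\T u \;=\; \sup_{u\in\Sp^n} \bigl\|\bH^{1/2}\bG^\T u\bigr\|_2^2.
\]
For a \emph{fixed} $u\in\Sp^n$, the vector $\xi_u := \bG^\T u = \sum_{i=1}^n u_i G_i \in \R^d$ is a sum of independent $\gamma$ sub-Gaussian vectors. Since $\|u\|_2=1$ and the rows $G_i$ have identity covariance, $\xi_u$ is itself $\gamma$ sub-Gaussian with $\Cov(\xi_u)=\bI_d$. Applying \cref{lem_quad} to the semi-definite matrix $\bH$ then yields, for every $t\ge 0$,
\[
\P\Bigl\{\xi_u^\T \bH \xi_u \;>\; \gamma^2\bigl(\sqrt{\tr(\bH)} + \sqrt{2t\|\bH\|_{\op}}\bigr)^2\Bigr\} \;\le\; e^{-t}.
\]

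Next, the sup over $u\in\Sp^n$ would be handled by a standard $\eps$-net argument. Take an $\eps$-net $\cN_\eps$ of $\Sp^n$ with $|\cN_\eps|\le (3/\eps)^n$; for a symmetric quadratic form one has $\sup_{u\in\Sp^n} u^\T M u \le (1-2\eps)^{-1}\sup_{u\in\cN_\eps}u^\T M u$ with $M = \bG\bH\bG^\T$. Fix $\eps$ (for instance $\eps=1/4$) so that the net cardinality is at most $e^{cn}$ for a known constant $c$. A union bound over $\cN_\eps$ combined with the fixed-$u$ inequality above gives, for any $t\ge 0$,
\[
\P\Bigl\{\sup_{u\in\cN_\eps}\xi_u^\T\bH\xi_u > \gamma^2\bigl(\sqrt{\tr(\bH)}+\sqrt{2t\|\bH\|_{\op}}\bigr)^2\Bigr\}\;\le\;e^{cn-t}.
\]
Choosing $t = (c+1)n$ and absorbing the resulting constant into the coefficient multiplying $\|\bH\|_{\op}$ (this is where the specific constant $6$ in the statement comes from, after the $(1-2\eps)^{-1}$ discretization factor is absorbed) yields, with probability at least $1-e^{-n}$,
\[
\|\bG\bH\bG^\T\|_{\op} \;\le\; \gamma^2\Bigl(\sqrt{\tr(\bH)}+\sqrt{6n\|\bH\|_{\op}}\Bigr)^2.
\]
Dividing by $n$ produces the stated bound.

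I do not foresee any substantive obstacle: the only care points are (i) verifying that $\xi_u$ indeed has identity covariance (which uses $\|u\|_2=1$) so that \cref{lem_quad} applies cleanly, and (ii) choosing the net parameter $\eps$ and the tail parameter $t$ jointly to obtain precisely the constant $6$ and the probability $1-e^{-n}$ in the statement; both are routine tuning. Since the intermediate step reduces everything to a single application of the sub-Gaussian quadratic form inequality already available in the paper, no new concentration machinery is required.
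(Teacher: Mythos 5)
Your reduction to \cref{lem_quad} via $\|\bG\bH\bG^\T\|_{\op}=\sup_{u\in\Sp^n}\xi_u^\T\bH\xi_u$ with $\xi_u=\bG^\T u$ is sound (and $\xi_u$ is indeed $\gamma$ sub-Gaussian when $\|u\|_2=1$), and a net-plus-union-bound argument of this type does prove a bound of the right \emph{form}. Note, for calibration, that the paper itself does not prove this lemma; it is quoted from Lemma 22 of \cite{bing2020prediction}, so the only question is whether your argument delivers the inequality exactly as stated.

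It does not, and the failure is in the step you dismiss as ``routine tuning.'' The discretization inequality $\sup_{u\in\Sp^n}u^\T M u\le(1-2\eps)^{-1}\max_{u\in\cN_\eps}u^\T M u$ multiplies the \emph{entire} bound, including the $\tr(\bH)$ part, so what your argument yields is
\[
\frac1n\|\bG\bH\bG^\T\|_{\op}\;\le\;\frac{\gamma^2}{1-2\eps}\left(\sqrt{\tr(\bH)/n}+\sqrt{2(t/n)\|\bH\|_{\op}}\right)^2 ,
\]
with $t\ge n\bigl(1+\log(3/\eps)\bigr)$ forced by the union bound. The prefactor $(1-2\eps)^{-1}>1$ cannot be absorbed into the coefficient of $\|\bH\|_{\op}$: take $\bH=\bI_d$ with $d\gg n$, so that $\tr(\bH)/n\gg\|\bH\|_{\op}$; then the dominant term in your bound is $(1-2\eps)^{-1}\tr(\bH)/n$, which exceeds the stated bound's dominant term $\tr(\bH)/n$ by a fixed factor, no matter how you choose $\eps$ and $t$. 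There is also a secondary tension: with $\eps=1/4$ the union bound forces $2t/n\approx 2(1+\log 12)\approx 7>6$, while pushing $\eps$ up to $3e^{-2}$ (which makes $2t/n=6$ feasible) inflates $(1-2\eps)^{-1}$ to about $5$. So your route proves the lemma only up to an absolute multiplicative constant, i.e.\ $C\gamma^2\bigl(\sqrt{\tr(\bH)/n}+\sqrt{C'\|\bH\|_{\op}}\bigr)^2$ with $C>1$, not the displayed inequality with unit coefficient on $\sqrt{\tr(\bH)/n}$. Recovering the exact constants requires an argument that does not pay a multiplicative net factor on the trace term --- for instance, writing $\|\bG\bH\bG^\T\|_{\op}=\|\bG\bH^{1/2}\|_{\op}^2$ and, in the Gaussian case, combining Chevet's inequality $\EE\|\bG\bH^{1/2}\|_{\op}\le\sqrt{\tr(\bH)}+\sqrt{n\|\bH\|_{\op}}$ with Lipschitz concentration of the operator norm (deviation $\sqrt{2n\|\bH\|_{\op}}$ at level $e^{-n}$), which gives $\bigl(\sqrt{\tr(\bH)}+\sqrt{6n\|\bH\|_{\op}}\bigr)$ after squaring --- or simply invoking the cited Lemma 22 of \cite{bing2020prediction}. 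If the downstream use only needs the bound up to constants your argument suffices, but as a proof of the statement as written it has a genuine gap.
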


		\subsection{Auxiliary lemmas on matrix perturbation theory}

		   We first state the Weyl's inequality, widely used for perturbation on singular values of a given matrix \cite{Weyl}. 
		   
		   \begin{lemma}\label{lem_weyl}
		   		Let $\bM,\bDelta\in \R^{p\times q}$ be any matrix. The singular values of $\bM$ and $\bDelta$ satisfy
		   		 \[
		   		 		\max_{1\le k \le \min(p,q)} |\sigma_k(\bM + \bDelta) - \sigma_k(\bM) | \le  \sigma_1( \bDelta).
		   		 \]
		   		 Here $\sigma_1(\bM)\ge \cdots \ge \sigma_{\min(p,q)}(\bM)$ are singular values of $\bM$ in non-increasing order.
		   \end{lemma}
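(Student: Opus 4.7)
The plan is to derive the inequality from the Courant--Fischer min--max characterization of singular values, combined with the triangle inequality and the definition of the operator norm $\sigma_1(\bDelta)=\|\bDelta\|_{\op}$. The key identity I will use is
\[
\sigma_k(\bM) \;=\; \min_{\substack{S\subseteq \R^q \\ \dim S = q-k+1}} ~ \max_{x\in S,\, \|x\|_2=1} \|\bM x\|_2,
\qquad 1\le k\le \min(p,q),
\]
which is standard (it follows from the SVD of $\bM$ by taking $S$ to be the span of the last $q-k+1$ right singular vectors for the outer minimizer, and any $(q-k+1)$-dimensional $S$ necessarily meets the span of the top $k$ right singular vectors).

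Given this, the first step is to upper bound $\sigma_k(\bM+\bDelta)$. For any $(q-k+1)$-dimensional subspace $S$ and any unit $x\in S$, the triangle inequality gives $\|(\bM+\bDelta)x\|_2 \le \|\bM x\|_2 + \|\bDelta x\|_2 \le \|\bM x\|_2 + \sigma_1(\bDelta)$. Taking $\max$ over unit $x\in S$ and then $\min$ over $S$ yields
\[
\sigma_k(\bM+\bDelta) \;\le\; \sigma_k(\bM) + \sigma_1(\bDelta).
\]
For the matching lower bound, write $\bM = (\bM+\bDelta) + (-\bDelta)$ and apply the exact same argument with the roles of $\bM$ and $\bM+\bDelta$ swapped, using $\sigma_1(-\bDelta)=\sigma_1(\bDelta)$, to get $\sigma_k(\bM) \le \sigma_k(\bM+\bDelta) + \sigma_1(\bDelta)$. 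Combining the two one-sided bounds and then taking the maximum over $k\in\{1,\dots,\min(p,q)\}$ gives the claim.

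There is essentially no obstacle: the only subtlety is stating the min--max formula correctly for rectangular matrices (one may equivalently argue via $\sigma_k(\bM)=\sqrt{\lambda_k(\bM^\T\bM)}$ and the Weyl inequality for Hermitian eigenvalues applied to $\bM^\T\bM$ and $(\bM+\bDelta)^\T(\bM+\bDelta)$, but the direct min--max argument above is cleaner and avoids the extra cross terms). I would present the proof in the compact min--max form in three or four lines.
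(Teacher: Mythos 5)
Your proof is correct. The paper does not actually prove this lemma—it states it as a standard cited fact (Weyl's perturbation inequality for singular values)—so there is no in-paper argument to compare against; your Courant--Fischer min--max derivation, with the triangle inequality and $\|\bDelta x\|_2 \le \sigma_1(\bDelta)$ for unit $x$, followed by the symmetric argument applied to $\bM = (\bM+\bDelta) + (-\bDelta)$, is the standard complete proof and fills that gap cleanly.
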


			The following is a variant of the well-known Davis-Kahan sin$\theta$ theorem for symmetric matrices (see, \citet[Corollary 1]{yu2015useful} and also \cite{DavisKahan,stewart1990matrix}).  To state it, let $\bV, \bU \in \bbO_{p\times r}$ have orthonormal columns with principal angles defined as $(\cos^{-1}\sigma_1, \ldots, \cos^{-1}\sigma_d)^\T$ where $\sigma_1 \ge \cdots \ge \sigma_d$ are the singular values of $\bV^\T \bU$. Let 
			$
				\mb\Theta(\bU, \bV) = \diag(\cos^{-1}\sigma_1, \ldots, \cos^{-1}\sigma_d)\in \R^{d\times d}
			$
			and $\sin \mb\Theta(\bU, \bV)$ be defined entrywise. 
			
			\begin{lemma}[Davis-Kahan sin$\theta$ theorem]\label{lem_sin_theta}
				Suppose that $\bSigma, \wh\bSigma\in \R^{p\times p}$ be symmetric with eigenvalues $\lambda_1 \ge \cdots \ge \lambda_p$ and $\wh \lambda_1 \ge \cdots \ge \wh \lambda_p$, respectively. Fix $1\le r\le s \le \rank(\bA)$ and assume that $\min(\lambda_{r-1}- \lambda_r, \lambda_s - \lambda_{s+1}) > 0$, where we define $\lambda_0 = \infty$ and $\lambda_{p+1}= -\infty$. Let $d = s-r+1$, and let $\bV = (v_r, v_{r+1},\ldots, v_s)\in \R^{q\times d}$ and  $\wh \bV = (\wh v_r, \wh v_{r+1},\ldots, \wh v_s)\in \R^{q\times d}$ have orthonormal columns satisfying $\bSigma v_j = \lambda_j v_j$ and $\wh \bSigma  \wh v_j = \wh \lambda_j \wh v_j$  for $j = r, r+1, \ldots, s$. Then 
				\[
				\|\sin \mb\Theta(\wh \bV, \bV)\|_F \le {2 \min(
					d^{1/2} \|\wh \bSigma - \bSigma\|_\op,  \|\wh \bSigma - \bSigma\|_F)
					\over\min(\lambda_{r-1} - \lambda_r, \lambda_s - \lambda_{s+1}) }.
				\]
				Moreover, there exists an orthogonal matrix $\wh \bO \in \bbO_{d\times d}$ such that 
				\[
				\|\wh \bV \wh \bO - \bV  \|_F \le {2^{3/2}\min(
					d^{1/2} \|\wh \bSigma - \bSigma\|_\op,  \|\wh \bSigma - \bSigma\|_F)
					\over\min(\lambda_{r-1} - \lambda_r, \lambda_s - \lambda_{s+1}) }.
				\]
			\end{lemma}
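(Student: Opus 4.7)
The plan is to reduce the statement to the classical Davis--Kahan $\sin\Theta$ theorem in the form proved by \cite{yu2015useful}, whose argument I would mimic. The starting point is the residual identity
\[
(\wh\bSigma - \bSigma)\bV = \wh\bSigma \bV - \bV \bLambda_{r:s},
\]
where $\bLambda_{r:s} = \diag(\lambda_r,\ldots,\lambda_s)$. Multiplying on the left by the projector $\wh\bV_\perp \wh\bV_\perp^\T$ onto the orthogonal complement of $\wh\bV$ and rearranging gives a Sylvester-type equation whose solution controls $\wh\bV_\perp^\T \bV$ in terms of $\|\wh\bSigma - \bSigma\|_F$ (or $d^{1/2}\|\cdot\|_{\op}$) divided by the relevant eigenvalue gap. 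The key identity I would invoke is $\|\sin \mb\Theta(\wh\bV, \bV)\|_F = \|\wh\bV_\perp^\T \bV\|_F = \|\bV_\perp^\T \wh\bV\|_F$, which turns the problem of bounding angles into a problem about matrix blocks.

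The subtle point, and the main obstacle, is that the one-sided gap condition $\min(\lambda_{r-1}-\lambda_r,\, \lambda_s-\lambda_{s+1})>0$ is weaker than the two-sided gap condition in the original Davis--Kahan paper. To get the sharper denominator, I would follow the trick of \cite{yu2015useful}: split $\wh\bV_\perp$ into the part corresponding to eigenvalues $\ge \lambda_{r-1}$ (if any) and the part corresponding to eigenvalues $\le \lambda_{s+1}$, bound each separately using the gap on its own side, and combine by orthogonality of the two subspaces. This is where the constant $2$ (rather than $\sqrt{2}$) in the bound comes from.

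For the second assertion, the plan is to apply the standard orthogonal Procrustes fact: if $\bV^\T \wh\bV = \bU_1 \bSigma_{12} \bU_2^\T$ is an SVD, then taking $\wh\bO = \bU_2 \bU_1^\T$ yields
\[
\|\wh\bV \wh\bO - \bV\|_F^2 = 2\,\tr(\bI_d - \bSigma_{12}) \le 2\,\tr(\bI_d - \bSigma_{12}^2) = 2\|\sin\mb\Theta(\wh\bV,\bV)\|_F^2,
\]
where the inequality uses $1 - \sigma \le 1 - \sigma^2$ for $\sigma \in [0,1]$. Combining this with the first bound loses only a factor of $\sqrt{2}$, producing the $2^{3/2}$ constant in the displayed inequality. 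Since both steps are standard, I would simply cite \cite{yu2015useful} for the full argument; the only real work is checking that the Yu--Wang--Samworth one-sided-gap reformulation matches the way the lemma is stated here (with the convention $\lambda_0=\infty$, $\lambda_{p+1}=-\infty$), which it does verbatim.
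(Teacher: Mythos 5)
The paper itself offers no proof of this lemma: it is quoted verbatim from Yu, Wang and Samworth (2015) (cited there as Corollary 1), so your decision to ultimately cite that reference is exactly what the paper does. Your Procrustes step for the second display is also the standard and correct deduction: writing $\bV^\T\wh\bV=\bU_1\bSigma_{12}\bU_2^\T$ and taking $\wh\bO=\bU_2\bU_1^\T$ gives $\|\wh\bV\wh\bO-\bV\|_F^2=2\tr(\bI_d-\bSigma_{12})\le 2\tr(\bI_d-\bSigma_{12}^2)=2\|\sin\mb\Theta(\wh\bV,\bV)\|_F^2$, since the singular values of $\bV^\T\wh\bV$ are the cosines of the principal angles and lie in $[0,1]$; this costs exactly the extra $\sqrt2$ that turns $2$ into $2^{3/2}$.

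Your sketch of the first display, however, misidentifies the mechanism. After projecting the residual identity onto $\wh\bV_\perp$, the Sylvester-type separation is between the \emph{sample} eigenvalues $\wh\lambda_j$, $j\notin\{r,\ldots,s\}$, and the \emph{population} eigenvalues $\lambda_r,\ldots,\lambda_s$, whereas the hypothesis only provides a gap among population eigenvalues. Splitting $\wh\bV_\perp$ into its top and bottom blocks and ``using the gap on its own side'' therefore does not close the argument: the sample eigenvalues $\wh\lambda_1,\ldots,\wh\lambda_{r-1}$ (or $\wh\lambda_{s+1},\ldots,\wh\lambda_p$) could a priori sit inside $[\lambda_s,\lambda_r]$, and the orthogonal splitting is in any case already implicit in the classical Davis--Kahan statement, so it is not where the constant $2$ comes from. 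The actual Yu--Wang--Samworth argument bridges the sample/population mismatch with Weyl's inequality and a case split: if $\|\wh\bSigma-\bSigma\|_\op\le\delta/2$, with $\delta=\min(\lambda_{r-1}-\lambda_r,\lambda_s-\lambda_{s+1})$, then every complementary sample eigenvalue is separated from $[\lambda_s,\lambda_r]$ by at least $\delta/2$ and the classical theorem applies with denominator $\delta/2$; if instead $\|\wh\bSigma-\bSigma\|_\op>\delta/2$, the claimed bound is trivial because $\|\sin\mb\Theta(\wh\bV,\bV)\|_F\le d^{1/2}$. That dichotomy is the source of the factor $2$. Since you defer to the citation, this is not fatal, but as written your sketch would not assemble into a complete proof without the Weyl step.
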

			
			The next lemma contains the Davis-Kahan sin$\theta$ theorem for general rectangle matrices (see, \citet[Theorem 3]{yu2015useful})
			
			\begin{lemma}[Davis-Kahan sin$\theta$ theorem for general rectangle matrices]\label{lem_sin_theta_asym}
				Suppose that $\wh\bA,\bA\in \R^{p\times q}$ have singular values $\sigma_1 \ge \cdots \ge \sigma_{\min(p,q)}$ and $\wh \sigma_1 \ge \cdots \ge \wh \sigma_{\min(p,q)}$, respectively. Fix $1\le r\le s \le \rank(\bA)$ and assume that $\min(\sigma_{r-1}^2 - \sigma_r^2, \sigma_s^2 - \sigma_{s+1}^2) > 0$, where we define $\sigma_0^2 = \infty$ and $\sigma_{\rank(\bA)+1}^2 = -\infty$. Let $d = s-r+1$, and let $\bV = (v_r, v_{r+1},\ldots, v_s)\in \R^{q\times d}$ and  $\wh \bV = (\wh v_r, \wh v_{r+1},\ldots, \wh v_s)\in \R^{q\times d}$ have orthonormal columns satisfying $\bA v_j = \sigma_j u_j$ and $\wh \bA  \wh v_j = \wh \sigma_j \wh u_j$  for $j = r, r+1, \ldots, s$. Then 
				\[
				\|\sin \mb\Theta(\wh \bV, \bV)\|_F \le {2(2\sigma_1 + \|\wh \bA - \bA\|_\op )\min(
					d^{1/2} \|\wh \bA - \bA\|_\op,  \|\wh \bA - \bA\|_F)
					\over\min(\sigma_{r-1}^2 - \sigma_r^2, \sigma_s^2 - \sigma_{s+1}^2) }.
				\]
				Moreover, there exists an orthogonal matrix $\wh \bO \in \bbO_{d\times d}$ such that 
				\[
				\|\wh \bV \wh \bO - \bV  \|_F \le {2^{3/2}(2\sigma_1 + \|\wh \bA - \bA\|_\op )\min(
					d^{1/2} \|\wh \bA - \bA\|_\op,  \|\wh \bA - \bA\|_F)
					\over\min(\sigma_{r-1}^2 - \sigma_r^2, \sigma_s^2 - \sigma_{s+1}^2) }.
				\]
			\end{lemma}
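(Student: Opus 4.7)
The plan is to reduce the rectangular case to the symmetric case and invoke Lemma~\ref{lem_sin_theta}. The key observation is that the right singular vectors $v_j$ of $\bA$ are precisely the eigenvectors of the symmetric positive semi-definite matrix $\bS := \bA^\T \bA \in \R^{q\times q}$ with corresponding eigenvalues $\sigma_j^2$, and analogously $\wh v_j$ are eigenvectors of $\wh\bS := \wh\bA^\T \wh\bA$ with eigenvalues $\wh\sigma_j^2$. Under this re-indexing, the eigengap assumption required by Lemma~\ref{lem_sin_theta} applied to $\bS$ and $\wh\bS$ reads $\min(\sigma_{r-1}^2 - \sigma_r^2,~ \sigma_s^2 - \sigma_{s+1}^2) > 0$, which is exactly the hypothesis of the present lemma.

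Next, I would control the operator-norm and Frobenius-norm perturbation of $\bS$ in terms of that of $\bA$. By adding and subtracting $\bA^\T \wh \bA$, one writes
\begin{equation*}
\wh \bS - \bS = \wh\bA^\T (\wh\bA - \bA) + (\wh\bA - \bA)^\T \bA,
\end{equation*}
and using the sub-multiplicativity of the operator norm together with the triangle inequality $\|\wh\bA\|_{\op} \le \sigma_1 + \|\wh\bA - \bA\|_{\op}$, this yields
\begin{equation*}
\|\wh\bS - \bS\|_{\op} ~\le~ \bigl(2\sigma_1 + \|\wh\bA - \bA\|_{\op}\bigr)\, \|\wh\bA - \bA\|_{\op}.
\end{equation*}
The identical manipulation with the $\|\cdot\|_F$ inequality $\|\bM \bN\|_F \le \|\bM\|_{\op}\|\bN\|_F$ gives
\begin{equation*}
\|\wh\bS - \bS\|_F ~\le~ \bigl(2\sigma_1 + \|\wh\bA - \bA\|_{\op}\bigr)\, \|\wh\bA - \bA\|_F.
\end{equation*}

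Finally, I would invoke Lemma~\ref{lem_sin_theta} with $\bSigma = \bS$, $\wh\bSigma = \wh\bS$, eigenvalues $\lambda_j = \sigma_j^2$ and $\wh\lambda_j = \wh\sigma_j^2$, and the same choice of index range $[r,s]$. The bound produced by that lemma on $\|\sin\mb\Theta(\wh\bV, \bV)\|_F$ has denominator $\min(\sigma_{r-1}^2-\sigma_r^2, \sigma_s^2-\sigma_{s+1}^2)$ and numerator $2\min(d^{1/2}\|\wh\bS-\bS\|_{\op},~\|\wh\bS-\bS\|_F)$; substituting the two displayed perturbation bounds yields the first claim, and the second claim follows from the ``moreover'' part of Lemma~\ref{lem_sin_theta} applied to $\bS$ and $\wh\bS$.

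The reduction is essentially routine; the only mild subtlety is that the min inside the numerator must be preserved under substitution, which works because both of the above perturbation bounds share the common prefactor $(2\sigma_1 + \|\wh\bA - \bA\|_{\op})$, so it factors outside the min. No real obstacle is anticipated.
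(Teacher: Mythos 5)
Your reduction is correct in its core computations and is in fact the same route used by the source the paper cites for this lemma (Yu, Wang and Samworth, 2015, Theorem 3); the paper itself gives no proof beyond that citation. The identity $\wh\bA^\T\wh\bA-\bA^\T\bA=\wh\bA^\T(\wh\bA-\bA)+(\wh\bA-\bA)^\T\bA$, the two perturbation bounds with common prefactor $(2\sigma_1+\|\wh\bA-\bA\|_\op)$, and the observation that this prefactor factors out of the $\min$ are all exactly what is needed to transfer both conclusions of \cref{lem_sin_theta} to the rectangular setting.

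One point you gloss over is not quite right, however: your claim that the eigengap hypothesis of \cref{lem_sin_theta} applied to $\bA^\T\bA$ ``reads exactly'' the hypothesis of the present lemma fails in the corner case $s=\rank(\bA)<q$. The $(s{+}1)$-th eigenvalue of $\bA^\T\bA\in\R^{q\times q}$ is then $0$, not $-\infty$; the $-\infty$ convention in the symmetric lemma only applies at index $q+1$. So the denominator your argument actually produces is $\min(\sigma_{r-1}^2-\sigma_r^2,\,\sigma_s^2)$, which can be strictly smaller than the stated $\min(\sigma_{r-1}^2-\sigma_r^2,\,\sigma_s^2-\sigma_{s+1}^2)$ under the convention $\sigma^2_{\rank(\bA)+1}=-\infty$. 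This is not a defect you can repair: the literal statement is too strong in that corner case. For instance, with $\bA=\bigl(\begin{smallmatrix}1&0&0\\ 0&\epsilon&0\end{smallmatrix}\bigr)$, $\wh\bA=\bigl(\begin{smallmatrix}1&0&0\\ 0&0&\delta\end{smallmatrix}\bigr)$ and $r=s=2$, one has $v_2=e_2$, $\wh v_2=e_3$, hence $\|\sin\mb\Theta(\wh\bV,\bV)\|_F=1$, while the claimed right-hand side is of order $\epsilon+\delta$. For $s<\rank(\bA)$, or $s=\rank(\bA)=q$, your reduction recovers the stated bound verbatim, and the paper's own uses of the lemma (leading singular vector, with the true second singular value retained in the denominator) lie in the regime where the two versions coincide, so nothing downstream is affected; but you should state the conclusion with the $(s{+}1)$-th eigenvalue of $\bA^\T\bA$ rather than with the quoted $-\infty$ convention.
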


	\end{appendix}

\end{document}